\documentclass{article}

\usepackage{microtype}
\usepackage{graphicx}
\usepackage{subfigure}
\usepackage{tabularray}
\usepackage{booktabs} 
\usepackage{hyperref}
\usepackage{wrapfig}

 \usepackage[preprint]{neurips_2026}
 
 \usepackage[utf8]{inputenc} 
 \usepackage[T1]{fontenc}    
 \usepackage{hyperref}       
 \usepackage{url}            
 \usepackage{amsfonts}       
 \usepackage{nicefrac}       
 \usepackage{microtype}      
 \usepackage{xcolor}         

\usepackage{amsmath}
\usepackage{amssymb}
\usepackage{mathtools}
\usepackage{amsthm}
\usepackage{thmtools} 
\usepackage{bbm}

\usepackage{enumitem}
\usepackage[bottom]{footmisc}
\usepackage{bm}
\usepackage{algorithm2e}
\usepackage{arydshln}

\usepackage[capitalize,noabbrev]{cleveref}
\usepackage{tikz}
\usetikzlibrary{shapes,arrows,backgrounds,fit,math,positioning,tikzmark,graphs}

\theoremstyle{plain}
\newtheorem{theorem}{Theorem}[section]
\newtheorem{proposition}[theorem]{Proposition}
\newtheorem{lemma}[theorem]{Lemma}
\newtheorem{corollary}[theorem]{Corollary}
\theoremstyle{definition}
\newtheorem{definition}[theorem]{Definition}

\theoremstyle{remark}
\newtheorem{remark}[theorem]{Remark}

\DeclareMathOperator{\tup}{Tupl}

\usepackage[textsize=tiny]{todonotes}

\newcommand{\lijun}[1]{\ifthenelse{\boolean{showcomments}}
{ \textcolor{red}{(LC:  #1)}}{}}

\title{On the Architectural Complexity of Neural Networks}

\author{%
	Nicholas J. Cooper$^{\diamond,\dag,}$\thanks{Correspondence to: \texttt{nick@combinatoriallabs.com}} \\
	\And
	François G. Meyer$^{\dag}$\\
	\And
	Michael L. Roberts$^{\diamond}$\\
	\And
	Carlos Zapata-Carratalá$^{\S}$\\
	\And
	Lijun Chen$^{\dag}$\\
	\And
	Danna Gurari$^{\dag}$\\
}

\newcommand{\cX}{\mathcal{X}}
\newcommand{\cC}{\mathcal{C}}
\newcommand{\cO}{\mathcal{O}}
\newcommand{\cP}{\mathcal{P}}
\newcommand{\cI}{\mathcal{I}}
\newcommand{\cA}{\mathcal{A}}
\newcommand{\cM}{\mathcal{M}}
\newcommand{\cT}{\mathcal{T}}

\newcommand{\ES}{$\emptyset$}

\newcommand{\citenump}[1]{[\citenum{#1}]}

\newcommand{\NumSamples}{3,028} 

\begin{document}
	
\tikzset{hfit/.style={rounded rectangle, inner xsep=0pt, fill=#1!30},
	vfit/.style={rounded corners, fill=#1!30}}

\maketitle

\vspace{-2.25em}
\begin{center}
	$^{\diamond}$\href{https://www.combinatoriallabs.com/}{Combinatorial Labs} \hspace{1em} $^{\dag}$University of Colorado Boulder \hspace{1em} $^{\S}$\href{https://semf.org.es/}{SEMF}/Independent Researcher
\end{center}
\vspace{2.25em}

\begin{abstract}
	We introduce a unified theoretical framework for the rigorous analysis and systematic construction of deep neural networks (DNNs). This framework addresses a gap in existing theory by explicitly modeling the \textit{structure of tensor operations}---lower level information that is often abstracted. Our framework enables two novel objectives: (1) analysis of the evolution of \textit{architectural complexity} over deep learning history, and (2) automatic construction of novel architectures based on new types of tensor operations.  Our study of DNNs introduced over the past 40 years reveals a connection between groundbreaking architectures and increases in different types of architectural complexity. Moreover, we identify several large classes of higher complexity architectures that have not yet been explored. We then collect a dataset of 3,000+ higher complexity architectures, which we publicly release at: {\color{blue}\href{https://github.com/combinatoriallabs/ArchitecturalComplexity}{https://github.com/combinatoriallabs/ArchitecturalComplexity}}.
\end{abstract}
\section{Introduction}
Deep neural networks (DNNs) have proliferated across diverse applications \citenump{DLGood1, DLGood2}, demonstrating strong empirical performance. A key driver of

\begin{wrapfigure}[20]{r}{0.575\textwidth}
	\raisebox{0pt}[\dimexpr\height-1.5\baselineskip\relax]{\begin{tikzpicture}
			\node at (3,3) (E) {Elements};  \node at (7,3) {Base set $\mathcal X^0$};
			
			\node at (3,2) (S) {Slices};  \node at (7,2) {$\mathcal X^1 \subsetneq \mathcal P(\mathcal X^0)$};
			
			\node at (3,1) (M) {Modes};  \node at (7,1) {$\mathcal X^2 \subsetneq \mathcal P(\mathcal X^1)$};
			
			\node at (3,0) (T) {Tensors};
			\node at (1,0) (C) {Couplings};
			\node at (5,0) (MMC) {MMCs};  \node at (7,0) {$\mathcal X^3 \subsetneq \mathcal P(\mathcal X^2)$};
			
			\node at (1.75,-1.25) (TO) {Operations};
			\node at (4.25,-1.25) (MM) {Mode Maps};  \node at (7,-1.25) {$\mathcal X^4 \subsetneq \mathcal P(\mathcal X^3)$};
			
			\node at (3,-2.5) (TE) {Neural Networks};  \node at (7,-2.5) {$\mathcal X^5 \subsetneq \mathcal P(\mathcal X^4)$};
			
			\draw[blue, very thick, <->] (E.south) -- (S.north);
			
			\draw[black, very thick, ->] (S.south) -- (M.north);
			
			\draw[black, very thick, ->] (M.south) -- (T.north);
			\draw[black, very thick, ->] ([xshift=-15]M.south) -- (C.north);
			\draw[black, very thick, ->] ([xshift=15]M.south) -- (MMC.north);
			
			\draw[black, very thick, ->] (C.south) -- ([xshift=-5]TO.north);
			\draw[black, very thick, ->] ([xshift=-7]T.south) -- ([xshift=5]TO.north);
			\draw[blue, very thick, <->] ([xshift=7]T.south) -- ([xshift=-5]MM.north);
			\draw[black, very thick, ->] (MMC.south) -- ([xshift=5]MM.north);
			
			\draw[black, very thick, ->] ([xshift=5]TO.south) -- ([xshift=-5]TE.north);
			\draw[black, very thick, ->] ([xshift=-5]MM.south) -- ([xshift=5]TE.north);
			
			\begin{pgfonlayer}{background}
				\node[fit=(E), vfit=cyan!60!white] {};
				\node[fit=(S), vfit=cyan!60!white] {};
				\node[fit=(M), vfit=cyan!60!white] {};
				\node[fit=(C)(T)(MMC), vfit=cyan!60!white] {};
				\node[fit=(TO)(MM), vfit=cyan!60!white] {};
				\node[fit=(TE), vfit=cyan!60!white] {};
			\end{pgfonlayer}
			
			\coordinate (A) at (1.2, -0.2);
			\coordinate (B) at (2.5, -0.2);
			\coordinate (C) at (1.75, -0.9);
			\draw[fill=purple!60!white] (A) -- (B) -- (C) -- cycle;
			
			\coordinate (D) at (4.8, -0.2);
			\coordinate (E) at (3.5, -0.2);
			\coordinate (F) at (4.25, -0.9);
			\draw[fill=cyan!50!white] (D) -- (E) -- (F) -- cycle;
			
			\coordinate (G) at (2.1, -1.45);
			\coordinate (H) at (3.9, -1.45);
			\coordinate (I) at (3, -2.15);
			\draw[fill=orange!60!white] (G) -- (H) -- (I) -- cycle;
			
	\end{tikzpicture}}
	\caption{Proposed hierarchical framework. Arrows denote incidence relation type: ({\color{blue} $\leftrightarrow$}) many-to-many relation; ($\rightarrow$) many-to-one relation. \textit{Cell rank} indicated with superscripts. MMCs are \textit{mode map components} (Def. \ref{def:MMs})}
	\label{fig:RankDiagram}
\end{wrapfigure}
\vspace{-0.5em}
\noindent
this success has been the development of new types of models, leading to a growing diversity of complex architectures~\citenump{MultiLinOpNets, Mamba, DTTenNets}. In this paper, we introduce a unified framework which provides a hierarchical characterization of the \textit{architectural complexity} of neural networks, uncovering retrospective insights into their evolution and providing a foundation for constructing new models.

While unified frameworks for neural networks already exist~\citenump{GeometricDL, CategoricalDL, NeuroAlgebraicDL, CopresheafNNs}, all rely on \textit{high-level abstractions} of tensor operations. For example, Categorical Deep Learning \citenump{CategoricalDL} models architectures as abstract parameterized functions, while Copresheaf networks \citenump{CopresheafNNs} models layers as abstract linear transformations. 

We instead propose a framework that \textit{explicitly models} the structure of tensor operations, enabling the rigorous analysis of architectural complexity and the \textit{systematic construction of new architectures}.  Summarized in \cref{fig:RankDiagram}, it is based on hierarchical combinatorial complexes.  The construction starts with a novel generalization of tensors, which reinterprets multidimensional arrays as a specific type of rank $3$ cells.  These are built from a base set of real-valued variables, called \textit{elements}, and form the foundation for the framework.  {\color{purple}\textit{Tensor operations}} ({\color{purple}$\blacktriangledown$}), such as matrix multiplication, Hadamard product, and multi-head projection, form one type of rank $4$ cells, while tensor transformations such as flattening, unfolding, and patch-ifying, form a second type that we call {\color{cyan}\textit{mode maps}} ({\color{cyan}$\blacktriangledown$}). Neural networks follow immediately as rank $5$ cells composed of mode maps and operations.  This framework parameterizes a highly expressive class of architectures and formalizes many intuitive concepts; e.g., an {\color{orange}\textit{architecture diagram}} ({\color{orange}$\blacktriangledown$}) corresponds to the incidence relationships between operations and mode maps. The framework also highlights the boundary of known architectures; {\color{cyan}mode maps} have received relatively little attention \citenump{DefCNNs, SpiralMLP} and \textit{higher arity}\footnote{A higher arity operation is one that requires more than $2$ operands as input.} {\color{purple}tensor operations} have been largely ignored by both practitioners and theorists alike \citenump{Plexes}.

We apply our hierarchical framework in two ways. \textit{First}, we formally analyze eight types of architectures introduced over the past 40 years. We find that groundbreaking architectures corresponded to increases in various types of architectural complexity and that many classes of higher complexity architectures have not yet been explored. \textit{Second}, we systematically generate a dataset of $\NumSamples$ novel higher complexity architectures built from new types of tensor operations. We find that many exhibit remarkable parameter and depth efficiency. For example, a $5$-layer model outperforms the $30$-layer efficiency-focused MobileNetV2 \citenump{MobileNet} using just $10\%$ of the parameters.

Our contributions are summarized below:
\vspace{-0.5em}
\begin{enumerate}
	\item We construct a hierarchical combinatorial framework for neural networks which explicitly models the structure of tensor operations and parameterizes a broad space of architectures.
	\item We apply our framework to analyze the architectural complexity of neural network designs over the past 40 years. Our analysis provides new insights into the connection between groundbreaking architectures and increases in different types of architectural complexity.
	\item We apply our framework to determine the boundary of known architectural complexity classes. Leveraging our hierarchical representation of neural networks, we then collect a dataset of 3,028 new architectures built from higher complexity tensor operations.
\end{enumerate}
\section{Related Work}
\paragraph{Unified Frameworks for Deep Neural Networks.}
Numerous mathematical frameworks characterize existing architectures, providing insight into their strengths and weaknesses.  For example, the pioneering \textbf{Geometric deep learning} (GDL) ~\citenump{GeometricDL} explains the success of specific layer types by analyzing the \emph{equivariance constraints} they satisfy; e.g., convolutional layers satisfy translational equivariance --- they respond consistently regardless of a pattern's position in a grid~\citenump{EquivariantCNNs}.  \textbf{Categorical deep learning} (CDL) \citenump{CategoricalDL, CatTypesOfLayers, CatToposSurvey} generalizes GDL with more expressive machinery --- homomorphisms of monad algebras. This category-theoretic abstraction enables CDL to describe a broader range of constraints that layers should satisfy; e.g., invariance to non-invertible transformations such as down-sampling a feature map.  \textbf{Neuro-algebraic geometry} \citenump{NeuroAlgebraicDL} complements GDL and CDL by studying the geometric structure of the function spaces parameterized by neural networks. These function spaces are derived from fixed, existing architectures and are known as neuro-manifolds \citenump{SingularityBias, CriticalPointsinMLPs}. Neuro-manifolds are useful for analyzing the expressiveness of existing architectures \citenump{NAGExpress, NAGExpress2}.  Complementing prior work, we introduce a combinatorial framework which is \textit{the first to model the intricate hierarchical structure of neural networks}. Unlike prior work, our framework enables \textbf{exploring new spaces of architectures} built from novel tensor operations.
\paragraph{Neural Architecture Search.} Neural Architecture Search (NAS) aims to discover new architectures \citenump{DARTS, NASSurvey} and to collect datasets of architectures \citenump{NASBench101, NASBench201}. Currently, NAS algorithms are limited to the top level of the hierarchical structure of neural networks; they discover architectures by varying the connections between a fixed set of existing operations \citenump{DARTS}. As a result, NAS is a special case of our framework in the sense that any architecture discoverable by conventional NAS methods is expressible in our framework whereas the converse is false. We extend NAS by systematically generating the first dataset of architectures built from new high complexity tensor operations.
\paragraph{Higher Order Deep Learning.}
Two separate research areas incorporate higher order concepts into deep learning.  \textit{First}, the \textbf{model-centric} approach aims to improve generalization performance by enabling deep neural networks to express higher order polynomial functions of their inputs~\citenump{MultiLinOpNets, DTTenNets}. This area has produced architectures that do not require non-linear activation functions~\citenump{P-Nets, PolyFramework, HadamardSurvey}. \textit{Second}, the \textbf{data-centric} topological deep learning (TDL) \citenump{TDLBook, CopresheafNNs} extends standard deep learning architectures to support complex higher order data formats; e.g., simplicial complexes. To our knowledge, our work is the first to demonstrate that \textit{deep neural networks themselves} can be represented with higher order domains; we construct a combinatorial hierarchy which generalizes \textit{all DNNs} and enables new types of networks to be built from \textit{higher complexity tensor operations}. 
\paragraph{Higher Arity Tensor Algebra.} 
Outside of deep learning, a general tensor operation calculus has been developed based on a correspondence between tensor operations and hypergraphs \citenump{Plexes}. To our knowledge, our work is the first to apply this correspondence to the study of deep neural networks; we leverage it in our combinatorial model of tensor operations. This enables the first ever exploration of neural network architectures built from higher arity tensor operations.

\section{Preliminaries}
We start by introducing some background required to understand the hierarchical framework.

\paragraph{Notation.}

We denote sets by $\{ \}$ and \textit{ordered lists} (strict weak orders\footnote{Think of a strict weak order as a list which may have multiple elements with the same index.}) by $[ \ ]$. For example, $\{A, B, C\} = \{B, C, A\}$, but $[A, B, C] \neq [B, C, A]$.  We write $|S|$ for the cardinality of a set $S$, $\|L\|$ for the length of an ordered list $L$, and $S_1 \setminus S_2$ for the set difference of $S_1$ and $S_2$.

$\mathcal{P}(S)$ denotes the power-set (set of all subsets) and we indicate \textit{power-set rank} with superscripts; e.g., $s^1\in \mathcal{P}(S)$ is rank $1$, $s^2\in \mathcal{P}(\mathcal{P}(S))$ is rank $2$, and so on. A useful example of rank $2$ objects are \textit{partitions}---collections of subsets of a set $S$ that are pairwise disjoint and cover $S$. These subsets are called \textit{parts}, and a partition is \textit{binary} if it consists of exactly two parts; e.g., $P = \big\{ \{A, B\}, \{C, D\} \big\}$ is a binary partition of $\{A, B, C, D\}$. Partitions are always $2$ ranks above their base elements. We write $\{s^r_i\}_{i \in I} \sqsubset s^{r+2}$ to indicate that $\{s^r_i\}_{i \in I}$ is a \textit{transversal of the partition} $s^{r+2}$ --- a selection of exactly one $s^r_i$ from each part of $s^{r+2}$. For example, $\{A, D\}, \{B, C\}$ are transversals of $P$.
\paragraph{Hierarchical Combinatorial Complexes.}

\begin{wrapfigure}[13]{r}{0.41\textwidth}
	\centering
	\raisebox{0pt}[\dimexpr\height-1.25\baselineskip\relax]{\includegraphics[width=0.41\textwidth]{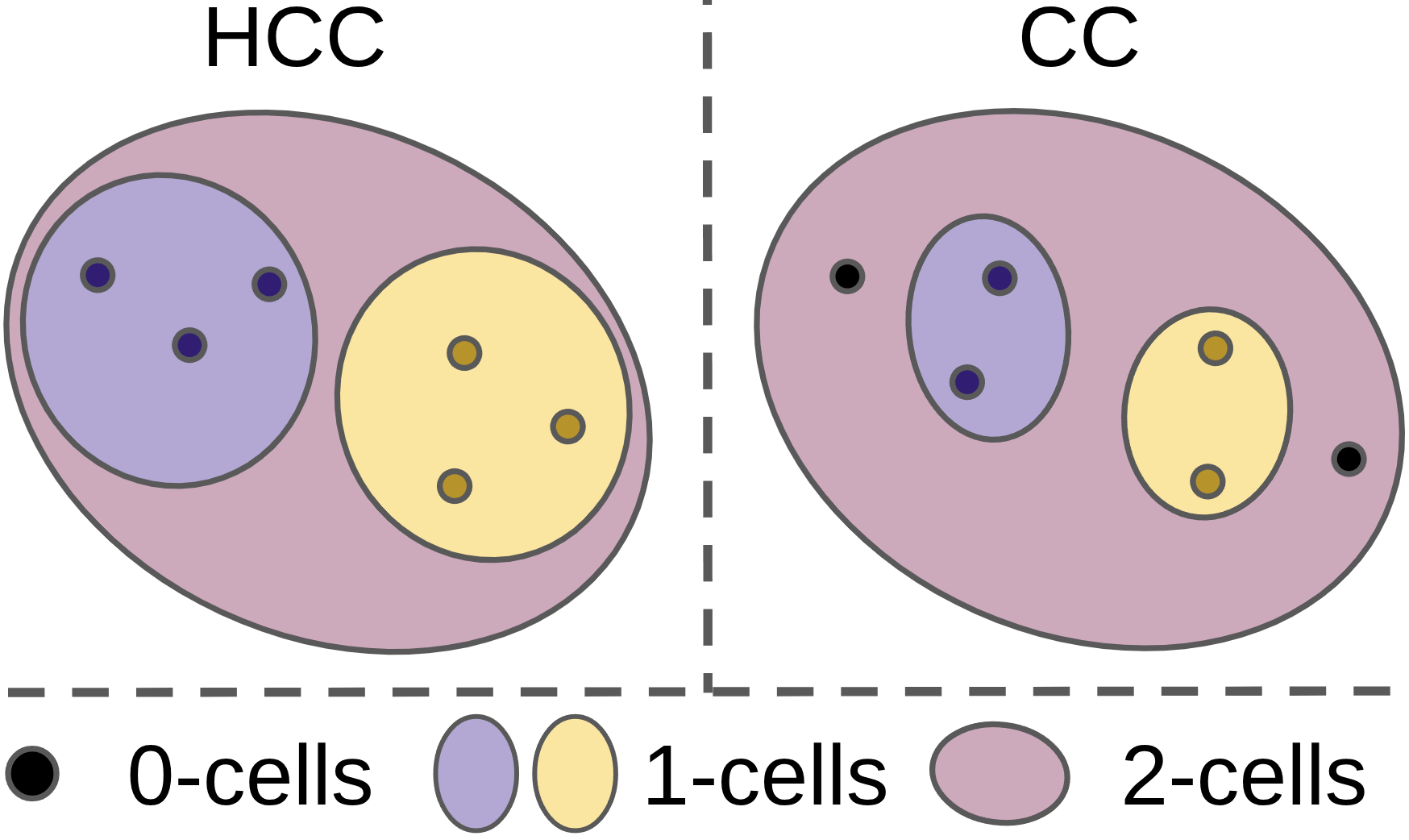}}
	\caption{HCCs vs. CCs. Right complex is not an HCC because the $2$-cell is composed of $1$-cells \textit{and} $0$-cells. Shading of $0$-cells indicates $\mathcal{X}^0|_{s^1_1}$ and $\mathcal{X}^0|_{s^1_2}$.}
	\label{fig:SubHCCEx}
\end{wrapfigure}
Our framework is built on \textit{hierarchical combinatorial complexes} (HCCs) --- a type of set hierarchy.

\begin{definition}
	\label{def:HCCs}
	A \textit{rank-}$r$ hierarchical combinatorial complex $\cX$ is a set $S$ along with a ranked family of collections of sets $\{\mathcal{X}^0, \mathcal{X}^1, ... \mathcal{X}^r\}$ defined recursively:
	\begin{equation*}
		\begin{aligned}
			&\mathcal{X}^0 = \{\{s\} \text{: } s \in S\}\\
			&\mathcal{X}^k \subseteq \mathcal{P}(\mathcal{X}^{k-1}) \text{, for } 0 < k \leq r
		\end{aligned}
	\end{equation*}
	where each $s^k \in \mathcal{X}^k$ is called a rank-$k$ cell, or a $k$-cell.
\end{definition}

HCCs are \textit{combinatorial complexes} (CCs)~\citenump{TDLBook} with an additional property: all $k$-cells are composed only of $(k-1)$-cells. This requires complexes to have strict hierarchies of cells, as shown in \cref{fig:SubHCCEx}. We denote by $\mathcal{X}^l|_{s^k}$ the set of $l$-cells contained in $s^k$ for $k > l$. Similarly, we indicate \textit{the sub-HCC defined by} $s^k$ by $\mathcal{X}|_{s^k} := \langle \cX^0|_{s^k}, ..., \cX^{k-1}|_{s^k}, s^k \rangle$.
\paragraph{Tensors.}
In this paper, tensors are multidimensional arrays (MDAs).
\begin{definition}
	\label{def:Tensor}
	A tensor $T$ is a function defined on a finite Cartesian product of finite index sets:
	\begin{equation*}
		T: [[M_1]] \times [[M_2]] \times ... \times [[M_{\cO}]] \rightarrow V
	\end{equation*}
	where $[[M_i]] = [1, 2, ..., M_i]$. $\cO$ is called the \textit{order} of the tensor, and $V$ is the \textit{value set} of the tensor.
\end{definition}

The value of the function $T$ at \textit{multi-index} $(i_1, i_2, ..., i_{\cO})$ can be seen as the value located at the $(i_1, i_2, ..., i_{\cO})^{th}$ position of an $\cO$-dimensional array. $[[M_i]]$ is called the $i^{th}$\textit{-mode} of the tensor.

\begin{wrapfigure}[8]{r}{0.195\textwidth}
	\centering
	\raisebox{0pt}[\dimexpr\height+0.25\baselineskip\relax]{\includegraphics[width=0.195\textwidth]{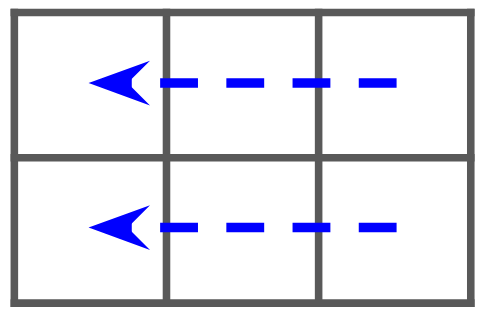}}
	\caption{A $1$-slice space of a matrix.}
	\label{fig:SSEx}
\end{wrapfigure}
Various partitions of tensors such as the pixel space of an image are \textbf{\textit{slice spaces}} --- \textit{higher order functions}\footnote{Higher order functions are functions $f$ of the form $f: S_1 \rightarrow (g: S_2 \rightarrow S_3).$} obtained from binary partitions of a tensor's modes.
\begin{definition}
	\label{def:SliceSpaces}
	A $k$-slice space of a tensor $T$ is a tensor-valued tensor:
	\begin{equation*}
		\begin{aligned}
			T_{\lambda}: [[M_1]] \times ... \times [[M_{\cO-k}]] \rightarrow \Big( T_{\sigma}: [[M_{\cO-k+1}]] \times ... \times [[M_{\cO}]] \rightarrow V \Big).
		\end{aligned}
	\end{equation*}
\end{definition}
Slice spaces are function-valued functions which accept $(\cO-k)$ input arguments and return functions of $k$ input arguments. The image of each $T_{\sigma}$ is called a $k$\textit{-slice}. Given a subset of modes $M' = \{M_{i_1}, ..., M_{i_k}\}$ the $M'$-\textit{slice space} is the slice space for which each $T_{\sigma}$ is a function defined on $\bigotimes_{j = 1}^{k}[[M_{i_j}]]$. We write $\sigma_k \vartriangleleft_{M'} T$ to indicate that $\sigma_k$ is a $k$-slice of the $M'$-slice space of $T$. For example, the pixel space of an $[[M_H]]\times [[M_W]] \times [[M_C]]$ color image is the $1$-slice space defined by $M' = \{M_C\}$; the attention maps of a multi-head self-attention layer (with $M_{head}$ heads) form a $2$-slice space of an order $3$ tensor, where $T_{\lambda}$ is defined on $[[M_{head}]]$. Different slice spaces provide different perspectives on the information encoded in a tensor.
\section{Theoretical Framework\label{sec-proposed-framework}}

We now describe our hierarchical framework for deep neural networks. All proofs are provided in the appendix. \cref{fig:RankDiagram} displays the types of $k$-cells we will construct in order to model neural networks as rank $5$ HCCs. We start with a base set of \textit{elements}, which are real-valued variables. We then construct a generalized version of multidimensional arrays as $3$-cells built from these elements. Next, we construct {\color{cyan}\textit{mode maps}} and {\color{purple}\textit{tensor operations}} as $4$-cells. Finally, we formalize the intuitive concept of {\color{orange}\textit{architecture diagrams}} as $5$-cells. These $5$-cells describe a broad class of architectures.


\subsection{Generalized Tensors}

We now express multidimensional arrays (\cref{def:Tensor}) as rank-$3$ cells. This will provide a more general definition that encompasses \textit{jagged tensors} --- incomplete arrays used in popular libraries \citenump{Pytorch, TensorFlow}. The definitional translation relies on an important property of multidimensional arrays: \textit{each element is organized with respect to multiple ordered sets simultaneously}. This is how tensors encode such rich information about their elements. We define tensors in the language of HCCs as follows:
\begin{definition}
	\label{def:GenTensor}
	A \textit{generalized tensor} is a $3$-cell $s^3$ of an HCC $\mathcal{X}$ that partitions $\mathcal{X}^1|_{s^3}$ and satisfies:
	\begingroup\setlength{\jot}{-1ex}
	\begin{equation*}
		\begin{aligned}
			&\textit{1.} \ \text{Each } 1\text{-cell } s^1 \in \mathcal{X}^1|_{s^3} \text{ is equipped with a strict weak order.}\\
			&\textit{2.} \ \text{For each } x \in \mathcal{X}^0|_{s^3}, \text{ there exists a transversal } \{s^1_i\}_{i\in I} \sqsubset s^3 \text{ such that } x \in \bigcap_{i=1}^{|s^3|} s^1_i\\
			&\textit{3. The slice ordering compatibility conditions.}\\
		\end{aligned}.
	\end{equation*}
	\endgroup
\end{definition}
The slice ordering compatibility conditions are technical requirements of the ordered $1$-cells that ensure each element of a generalized tensor can be unambiguously assigned a multi-index. As they are not required to understand the key ideas of our framework, we provide full details in \cref{sec:GenTenSOCC_Apdx}.

\begin{wrapfigure}[10]{r}{0.47\textwidth}
	\centering
	\raisebox{0pt}[\dimexpr\height-0.75\baselineskip\relax]{\includegraphics[width=0.725\linewidth]{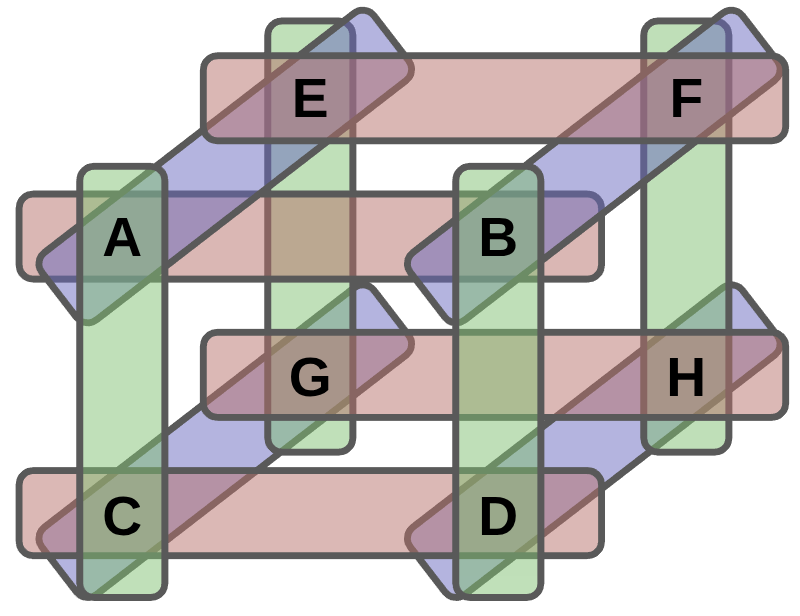}}
	\caption{Visualization of the generalized tensor associated to a $2\times 2\times 2$ multidimensional array. Colors indicate the partition of the $1$-cells.}
	\label{fig:GenTensorExample}
\end{wrapfigure}

Generalized tensors are effectively partitions of collections of ordered sets. Their $2$-cells correspond to the modes of a multidimensional array and their $1$-cells to $1$-slices. \cref{fig:GenTensorExample} provides an example. The HCC for this tensor is given below:
\begin{equation*}
	\begin{aligned}
		s^3 = \Big\{{\color{red}r} &= \big\{[A,B], [C,D], [E,F], [G,H]\big\},\\
		{\color{green}g} &= \big\{[A,C], [B,D], [E,G], [F,H]\big\},\\
		{\color{blue}b} &= \big\{[A,E], [B,F], [C,G], [D,H]\big\} \Big\}\\
	\end{aligned}
\end{equation*}
Note that the $2$-cells form a partition of the ordered $1$-cells and that each element is contained in the intersection of some transversal of this partition.

\paragraph{Generalized Tensors vs. Multidimensional Arrays.} Generalized tensors are strictly more expressive objects than injective multidimensional arrays (i.e., arrays with no duplicate elements). The following theorem precisely describes the connection between \cref{def:GenTensor,def:Tensor}.
\begin{theorem}
	All injective multidimensional arrays can be represented as generalized tensors. Moreover, if $s^3$ is a finite generalized tensor which also satisfies the following two conditions:
	\begin{equation*}
		\begin{aligned}
			&\textit{1. } \forall s^2_i \in s^3, \exists \text{ a positive constant } c_i \in \mathbb{N}_+ \text{ such that } ||s^1|| = c_i, \ \forall s^1 \in \cX^1|_{s^2_i},\\
			&\textit{2. } \text{For each transversal } \{s^1_i\}_{i\in I} \sqsubset s^3, \text{ we have that } \bigg| \bigcap_{i=1}^{|s^3|} s^1_i \bigg| \leq 1,\\
		\end{aligned}
	\end{equation*}
	
	then $s^3$ can be represented as an injective multidimensional array.
	\label{thrm:GenTensors<=>MDAs}
\end{theorem}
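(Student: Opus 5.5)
For the first claim, I will give an explicit construction. Let $T:[[M_1]]\times\cdots\times[[M_{\cO}]]\to V$ be injective. Take the base set $S$ to be the image of $T$; by injectivity each element corresponds to exactly one multi-index, and I identify $T(i_1,\dots,i_{\cO})$ with that multi-index. For each mode $j$ and each choice of the other indices $(i_1,\dots,\hat i_j,\dots,i_{\cO})$, I define the ordered $1$-cell
\[
\big[T(i_1,\dots,1,\dots,i_{\cO}),\ \dots,\ T(i_1,\dots,M_j,\dots,i_{\cO})\big],
\]
which is a strict weak order, in fact a total order. The $2$-cell $s^2_j$ collects all such $1$-cells for mode $j$, and $s^3=\{s^2_1,\dots,s^2_{\cO}\}$.

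I then verify the conditions of \cref{def:GenTensor}.
\begin{itemize}
\item The $s^2_j$ are pairwise disjoint: a $1$-cell varies exactly one coordinate, so it determines $j$, except in degenerate cases with $M_j=1$. There I can distinguish the cells as tagged ordered lists, or discard trivial modes.
\item Condition 2 holds: the element at $(i_1,\dots,i_{\cO})$ lies in the transversal consisting of the mode-$j$ fibres through that index.
\item The slice ordering compatibility conditions of \cref{sec:GenTenSOCC_Apdx} hold: the position of the element inside its mode-$j$ fibre is $i_j$, and this is independent of which fibres we use. This is exactly the consistency those conditions encode.
\end{itemize}

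For the converse, I start from a finite generalized tensor $s^3=\{s^2_1,\dots,s^2_{\cO}\}$ satisfying conditions 1 and 2. Set $M_j=c_j$. I define a map
\[
\phi:\prod_j\{\text{mode-}j\text{ 1-cells}\}\to \cX^0|_{s^3}\cup\{\bot\}
\]
sending a transversal to the unique element of its intersection, or to $\bot$ if the intersection is empty. Condition 2 makes $\phi$ well-defined, and condition 2 of \cref{def:GenTensor} makes it surjective onto the elements. Next I assign each element $x$ a multi-index: for any transversal containing $x$, the $j$th coordinate is the position of $x$ within its mode-$j$ $1$-cell. Condition 1 bounds this position by $c_j$, and the compatibility conditions make it independent of the transversal chosen.

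The main obstacle is the last step: showing that this multi-index map is a bijection $\cX^0|_{s^3}\to[[M_1]]\times\cdots\times[[M_{\cO}]]$. A complete, non-jagged array is needed for \cref{def:Tensor}.
\begin{itemize}
\item \emph{Injectivity.} If two elements shared a multi-index, I would use the compatibility conditions to show they lie in a common transversal intersection. Condition 2 then forces them to be equal.
\item \emph{Surjectivity.} I would use the constant lengths from condition 1 to count: each mode-$j$ $1$-cell contributes exactly $c_j$ positions, and the $1$-cells of mode $j$ partition the elements. Then I would argue that every combination of positions is realized, so there are no holes, which is what excludes jagged tensors.
\end{itemize}
I expect this to need the full strength of the slice ordering compatibility conditions. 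If they prove too weak, I would fall back on showing that the fibres of mode $j$ are indexed by the multi-indices in the remaining modes, and then induct on $\cO$.

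With the bijection in hand, the array is $T(i_1,\dots,i_{\cO})=x$ for the unique $x$ with that multi-index. It is injective because distinct multi-indices correspond to distinct elements.
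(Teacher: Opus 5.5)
Your construction of a generalized tensor from an injective array is the same as the paper's Part~1 and is fine. The gap is in the converse, at exactly the step you flag. The bijection is the whole content of the theorem, and the tools you invoke cannot deliver it, because your argument never uses finiteness.

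To see this, take the infinite staircase $\{(x,y)\in\mathbb{Z}^2 : y\le x\le y+1\}$. Its mode-$1$ slices are $[(k,k),(k+1,k)]$ and its mode-$2$ slices are $[(k,k-1),(k,k)]$. Every hypothesis except finiteness holds:
\begin{itemize}
\item it is connected, and all path distances are coordinate differences, so the slice ordering compatibility conditions hold;
\item every slice has length $2$, and a row meets a column in at most one element;
\item each mode's slices partition the elements.
\end{itemize}
Yet your positional multi-index sends every $(k,k)$ to $(1,2)$ and every $(k+1,k)$ to $(2,1)$, so it is neither injective nor onto $[[2]]\times[[2]]$. Hence ``the compatibility conditions put two elements with the same multi-index in a common transversal'' is false. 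Counting only tells you that the number of elements equals $c_j$ times the number of mode-$j$ slices, which does not force a Cartesian product.

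The missing idea is a rigidity statement for finite, connected, equal-length structures, phrased in the global coordinates of \cref{lem:CoTensors} rather than in local positions. The paper proves it through two claims:
\begin{itemize}
\item Claim~1, array connectivity;
\item Claim~2, a zero element that is minimal in every slice through it, together with the fact that moving along a mode-$i$ slice never changes the mode-$j$ index.
\end{itemize}
Each claim is proved by showing that a failure yields an ``exterior corner''. That corner propagates into an infinite path alternating between two modes, which the SOCCs forbid from closing up, contradicting finiteness. The array is then read off from the zero element: condition~1 gives existence and condition~2 gives uniqueness of the element at each multi-index.

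Your fallback induction on $\cO$ can also be made to work, but only by anchoring at an extremal element, which is exactly where finiteness enters. Take a connected component, in the first $\cO-1$ modes, of the elements with minimal $\cO$-th coordinate; by induction it is a box. The box of sides $c_1,\dots,c_{\cO}$ over it then lies in the tensor. Every slice through this box already has its $c_j$ elements inside it, so the box is closed under adjacency, and by connectivity it is everything.

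A second, smaller gap: you assume each mode's slices partition the elements, both to define ``the position of $x$ in its mode-$j$ slice'' and for your count. \Cref{def:GenTensor} allows overlapping slices, as in \eqref{eq:NonCanonicalRep}, where $B$ has position $2$ in $[A,B]$ and position $1$ in $[B,C]$. You must first pass to the canonical representative (\cref{thrm:CanonicalRepsExist}, available under condition~2) and read condition~1 there, as the paper's appendix restatement does. Without this the statement itself fails. Take the $4\times 4$ grid with a $2\times 2$ corner removed, and cover every row and column by overlapping consecutive pairs. This satisfies conditions~1 and~2 as written, but it is jagged.
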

Removing requirement (1) leads to the jagged tensors of PyTorch \citenump{Pytorch}. Removing requirement (2) leads to the notion of \textit{hyper-tensors} --- multidimensional arrays that map multi-indices to multiple elements. Hyper-tensors that map each multi-index to exactly $\alpha$ elements are called $\alpha$-\textit{regular}. Hyper-tensors will be useful for establishing important properties of tensor operations. As discussed next, mode maps are required to model non-injective multidimensional arrays.
\subsection{Mode Maps \label{sec:MMs}}
\begin{wrapfigure}[13]{r}{0.471\textwidth}
	\tikzset{
		CircleStyle/.style={
			circle,
			inner sep=0pt,
			text width=5mm,
			align=center,
			draw=black,
			fill=black!20
		}
	}
	\centering
	\raisebox{0pt}[\dimexpr\height-3.25\baselineskip\relax]{
		\includegraphics[width=0.45\textwidth,height=0.31\textwidth]{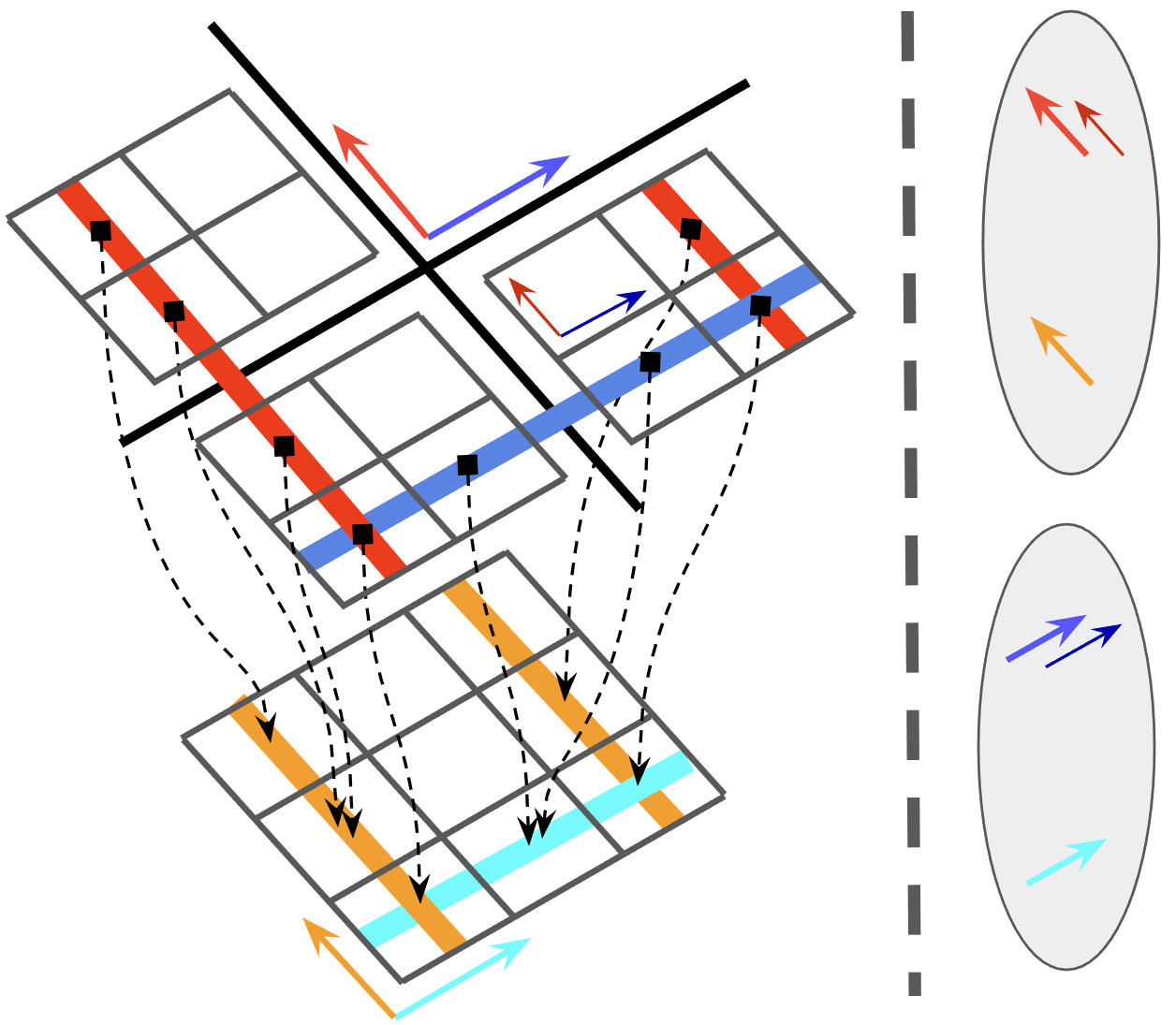}
	}
	\caption{Example of a mode map. Left side: a jagged tensor of order $4$ (top) maps into a matrix (bottom). Right side: the mode map components. Shaded arrows indicate the modes.}
	\label{fig:ModeMapsEx}
\end{wrapfigure}
Mode maps are functions between tensors that preserve their slice space structures. As tensors can have \textit{many} slice spaces, there is not a unique choice of structure to preserve, so mode maps must be defined with respect to collections of modes, i.e., $3$-cells. This makes mode maps themselves $4$-cells.

\vspace{0.75em}
\begin{definition}
	A \textit{mode map} is a $4$-cell $s^4$ of an HCC $\mathcal{X}$ that contains two generalized tensors $t_1, t_2 \in s^4$ and satisfies the following conditions:
	\begin{equation*}
		\begin{aligned}
			\textit{1. } &\text{There exists a unique } 2\text{-cell } f \in \cX^2|_{s^4}\\ 
			&\text{that defines a function } f: \cX^0|_{t_1} \rightarrow \cX^0|_{t_2}.\\
			\textit{2. } &\text{For each } s^3 \in s^4 \setminus \{t_1, t_2\}:\\
			&\textit{i. } \text{The set } \big\{ p_1 = s^3 \cap t_1, \ p_2 = s^3 \cap t_2 \big\} \text{ is a binary partition of } s^3,\\
			&\textit{ii. } \text{The image of each } k\text{-slice } \sigma_k \vartriangleleft_{p_1} t_1 \text{ under $f$ is an } l\textit{-sub-slice } \text{ of } t_2, \text{ that is, }\\
			&\quad f(\sigma_k) \subseteq \sigma_l \vartriangleleft_{p_2} t_2 \text{, where } k = |p_1|, \ l = |p_2|.\\
			&\ \text{The } 3 \text{-cells } s^3 \in s^4 \setminus \{t_1, t_2\} \text{ are called }\textit{mode map components}\text{ or MMCs.}
		\end{aligned}
	\end{equation*}
	\label{def:MMs}
\end{definition}
Mode map components describe which slice spaces are preserved by a function between two generalized tensors. \cref{fig:ModeMapsEx} provides an illustration of the familiar \textit{unfolding} mode map that is used in convolutional layers. It is important to notice that each $2$-slice of the {\color{red}red slice-space} maps into a $1$-slice of the {\color{orange}orange slice-space}. An analogous statement is true of the {\color{blue}dark} and {\color{cyan}light} blue slice-spaces.

Beyond unifying the tensor transformations necessary to construct neural networks, mode maps describe the structure of these transformations, explaining how they fit into the hierarchy of \cref{fig:RankDiagram}. Similarly, mode maps between generalized tensors enable the translation of non-injective multidimensional arrays into the hierarchical language of the HCC framework. 

\begin{lemma}
	Any multidimensional array can be represented as a mode map.
	\label{lem:MMs=>NonInjMDAs}
\end{lemma}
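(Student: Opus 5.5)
The plan is to split an arbitrary (possibly non-injective) multidimensional array $T: [[M_1]]\times\cdots\times[[M_{\cO}]]\rightarrow V$ into two pieces: an \emph{index tensor}, which carries the shape and is injective, and a \emph{value tensor}, which carries the distinct values. The mode map linking them then records exactly which positions share a value. I would carry this out in three steps.

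\textbf{Step 1 (source tensor).} Introduce one fresh element $x_{(i_1,\dots,i_{\cO})}$ for each multi-index. The resulting array $\tilde T$, sending each multi-index to its own fresh element, is injective by construction. By the first part of \cref{thrm:GenTensors<=>MDAs}, $\tilde T$ is represented by a generalized tensor $t_1$. Concretely, for each mode $j$ there is a $2$-cell whose $1$-cells are the ordered $1$-slices along mode $j$, ordered by $i_j$.

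\textbf{Step 2 (target tensor).} Let $\mathrm{Im}(T)=\{v_1,\dots,v_n\}$ be the set of distinct values. Realize $\mathrm{Im}(T)$ as an order-$1$ injective array $[[n]]\rightarrow V$ and apply \cref{thrm:GenTensors<=>MDAs} again to obtain a generalized tensor $t_2$. It has a single $2$-cell containing one ordered $1$-cell $[v_1,\dots,v_n]$. The $0$-cells of $t_2$ should be taken disjoint from those of $t_1$, since the HCC must be built over a common base set $S$ containing both families. The only constraint here is keeping every $k$-cell composed purely of $(k-1)$-cells, as \cref{def:HCCs} requires.

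\textbf{Step 3 (the map and its $4$-cell).} Define $f:\cX^0|_{t_1}\to\cX^0|_{t_2}$ by $f(x_{\mathbf i})=T(\mathbf i)$. To encode $f$ as a $2$-cell, use its graph: for each $x$, form the $1$-cell $\{x,f(x)\}$, ordered as $[x,f(x)]$, and let $f$ be the collection of these $1$-cells. To obtain a $3$-cell, wrap this as $\{f\}$. Then set $s^4=\{t_1,t_2\}$, possibly together with $\{f\}$ if the graph cell must sit inside a $3$-cell of $s^4$. The array $T$ is recovered by composing the multi-index assignment of $t_1$ with $f$.

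\textbf{Main obstacle: the definition of a mode map.} The hard part is checking \cref{def:MMs} with the right choice of components.

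Condition 2 constrains only the MMCs. The simplest route is to take \emph{no} MMCs, so that $s^4\setminus\{t_1,t_2\}$ is empty and condition 2 holds vacuously. The difficulty is then condition 1. It requires a unique $2$-cell of $\cX^2|_{s^4}$ that defines a function, yet $\cX^2|_{s^4}$ consists only of the mode $2$-cells of $t_1$ and $t_2$ unless the graph cell is itself placed under some $3$-cell of $s^4$.

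I would first try adding $\{f\}$ as an extra $3$-cell of $s^4$. This keeps the hierarchy strict, but it makes $\{f\}$ formally an MMC, so condition 2i would have to hold for it, and it does not, since $\{f\}\cap t_1=\emptyset$.

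My fallback is to use a genuine MMC instead: $s^3=\{m_j, m\}$, with $m_j$ a mode of $t_1$ and $m$ the unique mode of $t_2$. Condition 2i holds because $\{m_j\}$ and $\{m\}$ form a binary partition of this cell. Condition 2ii holds trivially because every image of $f$ lies in the single $1$-slice of $t_2$, which is all of $\mathrm{Im}(T)$. With this component in place, the graph cell $f$ must be admitted into $\cX^2|_{s^4}$ by whatever convention the appendix fixes, possibly one that distinguishes the function cell from the MMCs.

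Finally, I would argue that $f$ is the unique function-defining $2$-cell. The mode $2$-cells do not define functions between the element sets, because their $1$-cells are slices within a single tensor rather than pairs from $\cX^0|_{t_1}\times\cX^0|_{t_2}$. The graph $2$-cell is therefore the only candidate.
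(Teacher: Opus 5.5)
Your construction is the paper's own: an injective array on $[[M_1]]\times\cdots\times[[M_{\cO}]]$ as the source, an order-$1$ vector of the image as the target, $f$ induced by $T$, and a single mode map component for which condition~2.ii holds trivially because the target's only slice is the whole vector. The paper takes that component to be $t_1\cup t_2$, so the single $\cO$-slice of the source lands in the single slice of the target; your $\{m_j,m\}$ works equally well, and your construction also covers injective $T$ directly, where the paper calls that case vacuous. The paper's proof simply takes $f$ as given and never explains how a function-defining $2$-cell can lie in $\cX^2|_{s^4}$ when every component must satisfy condition~2.i, so the convention you defer to is not supplied there either, and your argument is as complete as the published one.
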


Mode maps are one of two types of $4$-cells that comprise neural networks. The second type are \textit{tensor operations}, $4$-cells which describe how multiple tensors can be combined to produce new tensors.
\subsection{Tensor Operations}
Tensor operations are at the core of neural networks. As tensor operations involve tensors, they are $4$-cells. Similar to mode maps, tensor operations involve a second type of $3$-cell called \textit{couplings}.
%

\begin{definition}
	\label{def:1SM}
	An $\alpha$\textit{-ary tensor operation} is a $4$-cell $s^4$ of an HCC $\mathcal{X}$ that contains generalized tensors $t_1, t_2, ..., t_{\alpha} \in s^4$ and satisfies:
	\begin{equation*}
		\begin{aligned}
			&\textit{1. } \text{The set } \{t_1, t_2, ..., t_{\alpha}\} \ \text{partitions } \mathcal{X}^2|_{s^4}.\\
			&\textit{2. } \text{For each } s^3_i \in s^4 \setminus \{t_1, t_2, ..., t_{\alpha}\}:\\
			&\qquad \textit{i. }|s^3_i \cap t_j| \leq 1, \ \forall j \in [[\alpha]],\\
			&\qquad \textit{ii. }\exists c_i \in \mathbb{N}_+ \text{ such that each } s^2 \in s^3_i \text{ has } \textit{tensor length } c_i.\\
			&\quad \text{The } 3 \text{-cells } s^3_i \in s^4 \setminus \{t_1, t_2, ..., t_{\alpha}\} \text{ are called }\textit{tensor couplings.}
		\end{aligned}
	\end{equation*}
\end{definition}
The precise definition of \textit{tensor length} is somewhat technical and is provided in the appendix. Intuitively, condition $2.ii$ is a generalization of the familiar rule that $m \times n_1$ matrices may be multiplied with $n_2 \times p$ matrices exactly when $n_1 = n_2$. The technicalities are necessary to ensure that jagged tensors (i.e., tensors which do not satisfy requirement 1 of \cref{thrm:GenTensors<=>MDAs}) can always be used as operands in tensor operations to produce well-defined results.

Given a tensor operation, the number of operands ($\alpha$) is called the operation's \textit{arity} and the size of the largest coupling is called the \textit{coupling-arity}. The number of modes (counted only once per coupling) is called the \textit{order complexity}. Sometimes (e.g., \citenump{ConeProduct}), the term \textit{arity} is used to mean \textit{coupling-arity} instead, as the two coincide for the majority of ternary (arity $3$) operations on order three tensors \citenump{Plexes}.

\begin{wrapfigure}[15]{r}{0.4\textwidth}
	\centering
	\raisebox{0pt}[\dimexpr\height-0.75\baselineskip\relax]{
	\includegraphics[width=0.4\textwidth]{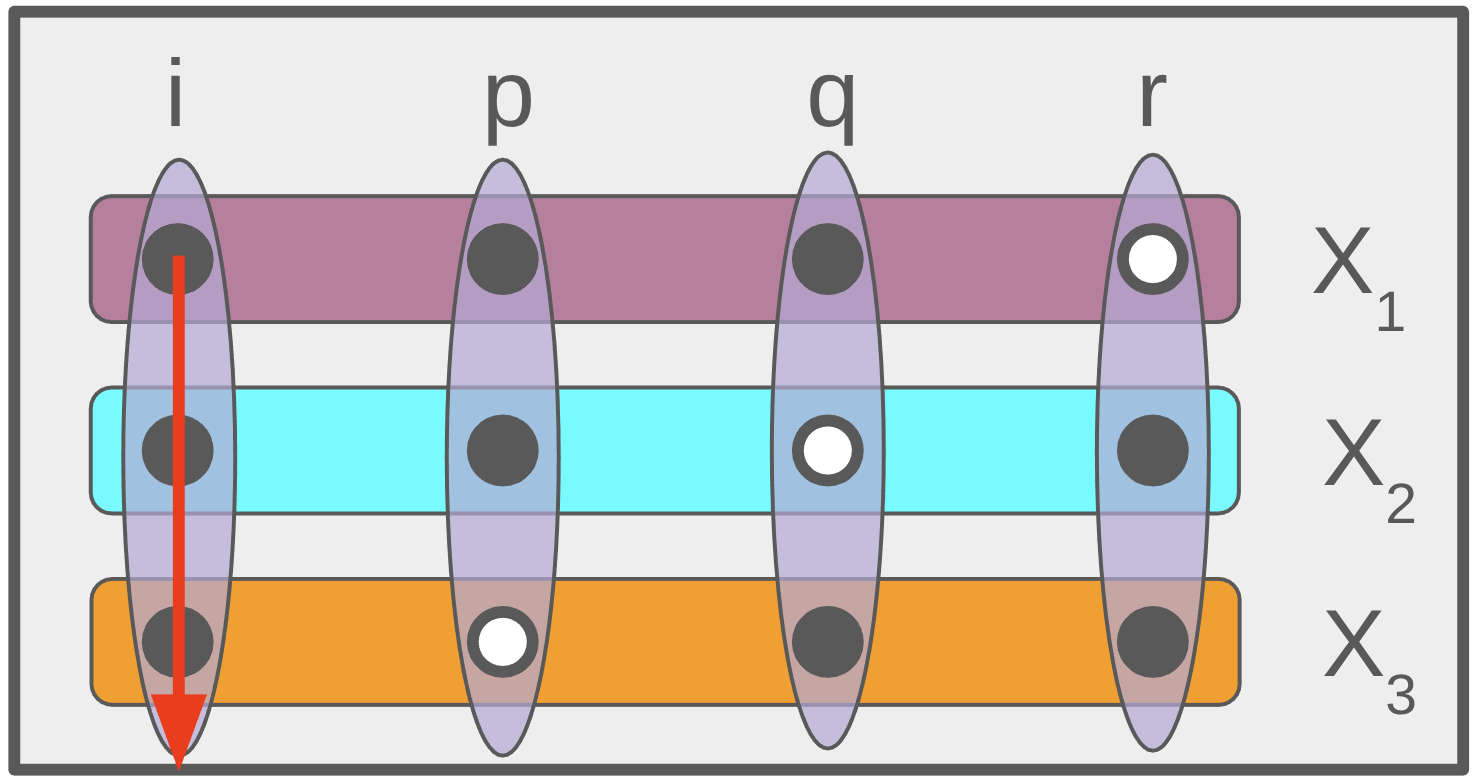}}
\caption{TOM representation of the cone product. Rows correspond to input tensors; columns correspond to modes of the output tensor. ({\color{darkgray}$\bm{\bullet}$/$\bm{\circ}$}) indicates which modes are/are not part of which inputs. ({\color{red}$\bm{\downarrow}$}) denotes a contracted mode.}
\label{fig:1SMsEX}
\end{wrapfigure}

Condition $2.i$ of \cref{def:1SM} ensures that tensor operations admit useful binary-valued matrix representations that we call \textit{tensor operation matrices} (TOMs). These representations are effectively incidence matrices (of arity many rows and order complexity many columns) that describe the relational structure between couplings and tensors ({\color{purple}purple triangle} of \cref{fig:RankDiagram}). As an example, the TOM for the \textit{Bhattacharya-Mesner product} \citenump{ConeProduct} is shown in \cref{fig:1SMsEX}. This ternary tensor operation is defined by:
\begin{equation*}
Y[p, q, r] = \sum_{i=1}^{n}{\color{purple}\bm{X_1}}[i, p, q]{\color{cyan}\bm{X_2}}[i, p, r]{\color{orange}\bm{X_3}}[i, q, r]
\end{equation*}
for order $3$ tensors $X_1, X_2, X_3$. All the algebraic information of the above equation, i.e., which indices are used for which input tensors, is encoded in the matrix representation shown in \cref{fig:1SMsEX}. Note that the $i$-mode is \textit{contracted}. Contracted modes are summations along the corresponding $1$-slice spaces of a tensor. Any tensor operation can be specified by its couplings (which describe how the input tensors align) and its contractions (which describe the summations).

\begin{figure}[t!]
\centering
\includegraphics[width=\textwidth]{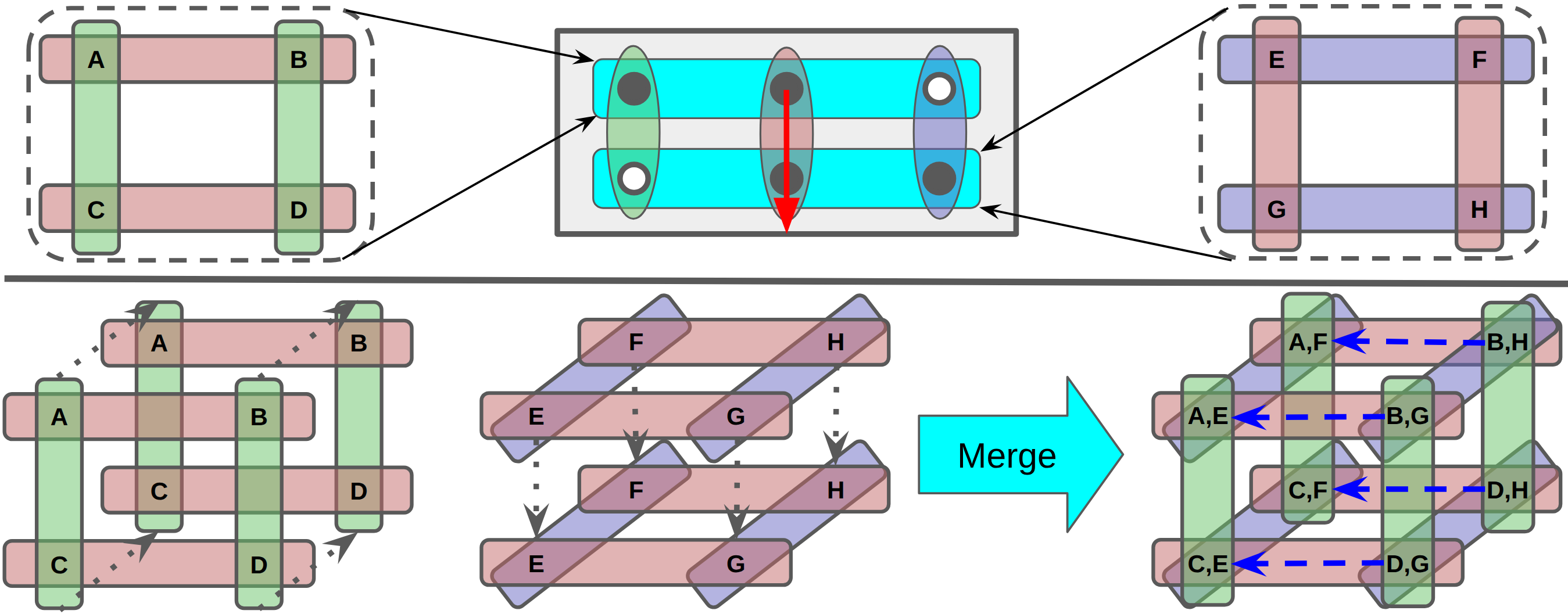}
\caption{Illustration of \cref{thrm:1-SMs=>Tensors}. \textbf{Top}: input tensors to the TOM for matrix multiplication; modes are color-coded. \textbf{Bottom}: operands are broadcasted ({\color{gray}dotted arrows}) according to the coupling structure and then merged to create a $2$-regular hyper-tensor (bottom right). The result is given by applying one base operation along the tuples of the hyper-tensor and then applying the second base operation along the $1$-slice space ({\color{blue}$\bm{\dashleftarrow}$}) defined by the contraction ({\color{red}$\bm{\downarrow}$}).}
\label{fig:TensorOpEx}
\end{figure}

When supplied with two \textit{base operations} --- functions $\times, +: \mathbb{R}^{2} \rightarrow \mathbb{R}$ --- tensor operations produce new tensors from their inputs by \textit{evaluating} the operation. The language of HCCs elucidates that this conversion of operations ($4$-cells) to tensors ($3$-cells) relies on an intermediate hyper-tensor.
\begin{theorem}
\label{thrm:1-SMs=>Tensors}
Tensor operations are evaluated in two distinct steps. First, every tensor operation determines a $k$-slice-space of a hyper-tensor, where $k$ is the number of contractions of the operation. Second, given \text{any} two base operations, every hyper-tensor slice-space then determines a non-hyper tensor. Moreover, if each operand satisfies the conditions of \cref{thrm:GenTensors<=>MDAs}, then the hyper-tensor is $\alpha$-regular, where $\alpha$ is the operation's arity.
\end{theorem}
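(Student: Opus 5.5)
The plan is to construct the intermediate hyper-tensor explicitly from the coupling structure, verify that it is a generalized tensor in the sense of \cref{def:GenTensor}, and then treat the reduction by the base operations separately. Fix an $\alpha$-ary tensor operation $s^4$ with operands $t_1,\dots,t_\alpha$ and couplings $s^3_1,\dots,s^3_m$. By condition $2.i$ of \cref{def:1SM}, each coupling meets each operand in at most one mode. This gives the TOM: a well-defined $\alpha\times m$ incidence matrix. I would first introduce the set of \emph{output modes}: one mode for each coupling, plus one for each uncoupled mode of each operand. Each output mode gets a strict weak order of length equal to the common tensor length $c_i$ guaranteed by condition $2.ii$. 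Among these, I would single out the $k$ modes marked as contractions.

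The first step is the \emph{broadcast-and-merge} construction illustrated in \cref{fig:TensorOpEx}. For each operand $t_j$, broadcast it along every output mode it does not participate in, so that each element of $t_j$ is placed at every multi-index agreeing with its own multi-index on the modes of $t_j$. The multi-index is available by the slice ordering compatibility conditions of \cref{def:GenTensor}. The merged object then has:
\begin{itemize}
\item base set $\bigcup_j \cX^0|_{t_j}$;
\item $1$-cells given by the $1$-slices of the broadcast operands, taken along each output mode (an ordered set, since coupled modes share the tensor length $c_i$);
\item $2$-cells given by the output modes.
\end{itemize}

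I would then check conditions 1--3 of \cref{def:GenTensor}. Condition 2 follows because every element sits in the intersection of the slices through its broadcast position. This gives a generalized tensor that is in general a hyper-tensor, since a multi-index is mapped to the collection of operand elements placed there. Taking the slice space whose $T_\sigma$ ranges over the $k$ contracted modes, in the sense of \cref{def:SliceSpaces}, yields the claimed $k$-slice space.

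The second step is evaluation. For each multi-index, fold $\times$ over the tuple of elements located there, producing a real number; this gives a non-hyper tensor indexed by all output modes. Then, for each $k$-slice of the contracted slice space, fold $+$ over its entries, which leaves a tensor indexed by the non-contracted modes only. A fixed folding order is needed, and it comes from the operand order $t_1,\dots,t_\alpha$ and the strict weak orders on the contracted $1$-cells. With that order fixed, the construction is well defined for arbitrary binary maps $\times,+$, with no associativity or commutativity required. That is why the statement can say ``any two base operations''.

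For the regularity claim, assume each operand satisfies conditions 1 and 2 of \cref{thrm:GenTensors<=>MDAs}. Then each operand is an injective MDA with rectangular modes, so every output multi-index restricts to exactly one valid multi-index of each $t_j$. Condition 2 then makes the corresponding element unique. Hence each multi-index of the hyper-tensor carries exactly one element from each of the $\alpha$ operands. These are distinct because the operands' base sets are disjoint, and disjointness holds because $\{t_1,\dots,t_\alpha\}$ partitions $\cX^2|_{s^4}$ (condition 1 of \cref{def:1SM}). This is exactly $\alpha$-regularity.

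The main obstacle I expect is the jagged case. There, some broadcast positions must be shown to be coherently defined using the technical notion of tensor length, and the merged $1$-cells must be shown to satisfy the slice ordering compatibility conditions. I would handle this by defining the hyper-tensor only on the multi-indices realized by all operands. I would then invoke the appendix definition of tensor length to show that these positions form a valid jagged index set along each coupling.
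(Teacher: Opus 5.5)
Your pipeline is the paper's: output modes are the modes modulo couplings, you broadcast and merge by union, take the slice space over the contracted modes, then fold the tuple operation in operand order and the slice operation along contracted slices. The step that fails is where you ``check conditions 1--3 of \cref{def:GenTensor}'' for the merged object on the base set $\bigcup_j \cX^0|_{t_j}$.

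Broadcasting places one element at many multi-indices. In matrix multiplication, $X[i,j]$ sits at $(i,j,k)$ for every $k$, so the $k$-mode $1$-slice would have to be $[\{X[i,j],W[j,1]\},\dots,\{X[i,j],W[j,K]\}]$. No strict weak order on a set expresses this: incomparability of $X[i,j]$ with every $W[j,k]$ forces $W[j,1]\sim W[j,2]$ by transitivity of incomparability. Dropping $X[i,j]$ from those slices does not help either. The coupled $j$-mode slices with $(i,k)$ fixed to $(i,1)$ and to $(i,2)$ then join $W[j,1]$ and $W[j,2]$ by a path through $X[i,j]$ of distance zero in every mode. The $k$-slice, however, puts them at $k$-distance $1$, so multi-indices are no longer path-independent (\cref{lem:TuplePathProp}).

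So there is no single ``broadcast position'' of an element. This happens already for rectangular operands, so the jagged case is not the real obstacle. The ingredient you are missing is \cref{lem:MMs=>NonInjMDAs}. The paper treats the merged object $\cA$ as a set-valued, non-injective array on $[[M'_1]]\times\dots\times[[M'_{\cC_O}]]$ and represents it as a mode map from an injective index tensor onto the elements. Only then does it take the contracted slice space via \cref{prop:HG-SSs}. Equivalently, you could work with fresh copies of the elements, one per position and operand.

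Your regularity argument also contains a false step. Condition 1 of \cref{def:1SM} partitions the $2$-cells (modes), not the $0$-cells, so it does not make the operands' base sets disjoint. For example, take a $2\times 2$ matrix on $\{A,B,C,D\}$ and its flattening $[A,B,C,D]$. They have disjoint sets of $2$-cells, both satisfy the hypotheses of \cref{thrm:GenTensors<=>MDAs}, and their outer product is a legitimate binary operation. Yet the entry indexed by the position of $A$ in both operands is $\{A\}$, so the hyper-tensor is not $2$-regular. Disjointness must be assumed. That is exactly the extra hypothesis (``each operand's base set is distinct'') in the appendix version, \cref{thrm:1-SMs=>Tensors_Appendix}.

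Finally, your jagged convention (keep only multi-indices realized by all operands) is not the paper's. The paper assigns $\emptyset$ to missing positions and takes unions. That is how its jagged product example gets the entry $AE+B$, where your convention would give $AE$.
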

This result describes how tensor operations are defined by two independent pieces of data: the tensor structure of the operation, and the choice of base operations on the underlying set $\mathbb{R}$. \cref{fig:TensorOpEx} demonstrates how \cref{thrm:1-SMs=>Tensors} works for the familiar case of matrix multiplication. It is important to observe how the operands are \textit{broadcasted} (i.e., copied) across the slice-spaces of the output hyper-tensor defined by the filled circles ({\color{darkgray}$\bm{\bullet}$}) of each row of the tensor operation matrix. \Cref{thrm:1-SMs=>Tensors} provides an algorithm for evaluating \textit{any tensor operation} because this broadcasting method of operation evaluation \textit{extends uniformly} to tensor operations of arbitrarily high arity and order complexity. Moreover, \cref{thrm:1-SMs=>Tensors} can be used to show (\cref{sec:ArityDecomp}) that when the base operations are multiplication and addition on $\mathbb{R}$, compatible sequences of binary tensor operations are equivalent to higher arity operations. We will use this fact frequently in our architectural complexity analysis.

\subsection{Neural Networks}
The foundation we have developed allows us to formalize neural networks.
\begin{definition}
A \textit{neural network} is a $5$-cell that is composed of mode maps and tensor operations.
\end{definition}

\begin{wrapfigure}[12]{r}{0.416\textwidth}
\centering
\raisebox{0pt}[\dimexpr\height-1.0\baselineskip\relax]{
	\includegraphics[width=0.416\textwidth]{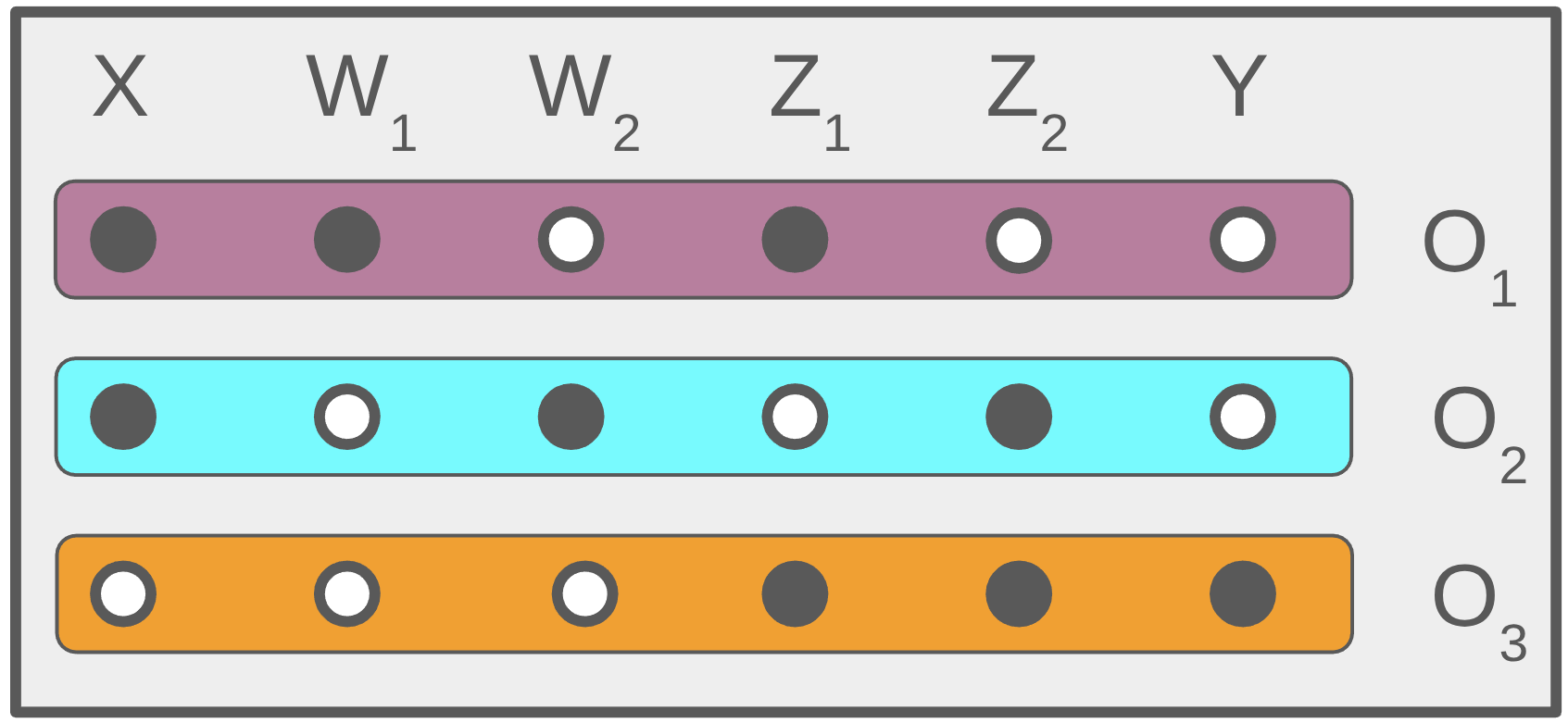}}
\caption{TEM representation of the PolyNets core block. Rows correspond to operations; columns correspond to tensors. ({\color{darkgray}$\bm{\bullet}$/$\bm{\circ}$}) indicates which tensors are/are not part of which operations.}
\label{fig:2SMEx}
\end{wrapfigure}

Analogously to how tensor operations can be represented with TOMs, the top level relational structure in a neural network ({\color{orange}orange triangle} of \cref{fig:RankDiagram}) can be represented with binary-valued matrices that we call \textit{tensor equation matrices} (TEMs). As an example, the TEM for the core building block of PolyNets \citenump{P-Nets} is visualized in \cref{fig:2SMEx}. We recall this system of operations:
\begin{equation*}
\begin{aligned}
	Z_1 &{\color{purple}\bm{=}} X \times W_1\\
	Z_2 &{\color{cyan}\bm{=}} X \times W_2\\
	Y &{\color{orange}\bm{=}} Z_1 \odot Z_2
\end{aligned}
\end{equation*}
where $\times$ is matrix multiplication and $\odot$ is Hadamard product. While TOMs encode modes along the columns and \textit{tensors along the rows}, TEMs are one rank up, so they instead encode \textit{tensors along the columns} and operations along the rows. \textit{Operation structure} is invisible from this level; we cannot discern between $\times$ and $\odot$. However, TEMs provide a useful representation of tensor \textit{equation structure}.

TOMs and TEMs (\cref{fig:1SMsEX,fig:2SMEx}) provide a convenient combinatorial representation of the complete hierarchical structure of neural networks. We leverage this parameterization to create the first ever \textit{hierarchical neural network engine} that is capable of processing \textit{any} system of \textit{any} tensor operations. For completeness, we define non-linear activations to be unary (arity $1$) tensor operations with particular choices of unary base operations.  Our publicly shared codebase contains an efficient GPU implementation of this neural network engine to facilitate future research.
\section{Applications}
We now apply our framework to analyze architectural complexity trends and to create novel models.
\subsection{\label{sec:ComplexityAnalysis}Architectural Complexity Analysis}
Our framework enables a variety of rigorous characterizations of the architectural complexity of neural networks via the cardinalities of various sub-HCCs:
\begin{itemize}[leftmargin=*, itemindent=0pt]
	\itemsep0em 
	\item \textbf{Operation complexity} ($\cC_{op}$) is the number of operations contained in a neural network.
	\item \textbf{Tensor complexity} ($\cC_{T}$) is the number of tensors contained in a neural network.
	\item \textbf{Arity complexity} ($\cC_{\alpha}$) is the maximum arity of any single operation.
	\item \textbf{Order complexity} ($\cC_{O}$) is the maximum order complexity of any single operation.
	\item \textbf{Coupling-arity complexity} ($\cC_{A}$) is the maximum size of any single coupling.
\end{itemize}
The above architectural complexity measures are organized by decreasing rank of the cells involved, with $\cC_{op}$ the number of $4$-cells in a $5$-cell, $\cC_T$ the number of $3$-cells in a $5$-cell, $\cC_{\alpha}$ the number of $3$-cells in a $4$-cell, $\cC_O$ the number of $2$-cells in a $4$-cell, and $\cC_A$ the number of $2$-cells in a $3$-cell. 

\begin{table*}[h!]
	\resizebox{\textwidth}{!}{
		\begin{tblr}{
				colspec={c||c|c|c|c||c|c|c|c||},
				column{1}={halign=r},
				row{6}={bg=cyan!30!white, fg=black},
				row{8}={bg=orange!30!white, fg=black}
			}
			\SetCell[r=2]{c}
			& \SetCell[c=4]{c} Fundamental Architectures 
			& & & & \SetCell[c=4]{c} Post-Transformer Architectures \\\cline{2-9}
			& FCNN & CNN & ResNet & Transformer & Poly-Net & MO-Net & ViM & TT-Net\\\hline\hline
			Year & 1986 & 1998, 2012 & 2016 & 2017 & 2020 & 2024 & 2024 & 2025\\\hline
			$\cC_{op}$ & $1$ & $1$ \tikzmark{op1} & \tikzmark{op2} $3$ & $3$ & $4$ & $4$ & $27$ & $5$ \\\hline
			$\cC_{T}$    & $2$ & $2$ \tikzmark{t1}  & \tikzmark{t2} $6$ & $7$ & $9$ & $9$ & $45$ & $11$ \\\hline
			$\cC_{\alpha}$    & $2$ & $2$ & $2$ \tikzmark{a1} & \tikzmark{a2} $3$ & $3$ & $4$ & $3$ & $3$ \\\hline
			$\cC_{O}$     & $3$ \tikzmark{oc1} & \tikzmark{oc2} $6$ & $6$ & $4$ & $6$ & $4$ & $4$ & $6$ \\\hline
			$\cC_A$ & $2$ & $2$ & $2$ & $2$ & $2$ & $2$ & $2$ \tikzmark{A1} & \tikzmark{A2} $3$ \\\hline
		\end{tblr}
	}
	\begin{tikzpicture}[overlay,remember picture]
		\draw[->,blue, very thick] ([yshift=3.9]pic cs:oc1) -- ([yshift=3.9,xshift=-4]pic cs:oc2);
		\draw[->,blue, very thick] ([yshift=3.0]pic cs:op1) -- ([yshift=3.0,xshift=-6]pic cs:op2);
		\draw[->,blue, very thick] ([yshift=3.3]pic cs:t1) -- ([yshift=3.3,xshift=-6]pic cs:t2);
		\draw[->,blue, very thick] ([yshift=3.6,xshift=-1]pic cs:a1) -- ([yshift=3.6,xshift=-7.5]pic cs:a2);
		\draw[->,blue, very thick] ([yshift=4.6,xshift=-8]pic cs:A1) -- ([yshift=4.6,xshift=-14]pic cs:A2);
	\end{tikzpicture}
	\vspace{-0.5em}
	\caption{Evolution of architectural complexity over the history of neural networks, highlighting the trend of increasingly complex designs (with each {\color{blue} $\bm{\rightarrow}$} indicating the first known increase in each of the five complexity types). All complexity values are for the `core blocks' of each architecture.} 
	\label{tab:ComplexityHistory}
	\vspace{-1em}
\end{table*}
To analyze the evolution of architectural complexity and to demonstrate the generality of our framework , we compute the \textit{complexity signature}---the collection $(\cC_{op}, \cC_T, \cC_{\alpha}, \cC_O, \cC_A)$---for eight types of architectures introduced over the past 40 years.  We consider four \textit{fundamental} architectures---fully-connected neural network (FCNN), convolutional neural network (CNN), Residual Network (ResNet), and Transformer---and four \textit{post-Transformer}---Poly-Net, MO-Net, ViM, and TT-Net.  We focus on the core aspects of each architecture rather than the entire model; e.g., the reported signatures for ResNet and Transformer are for residual blocks and self-attention layers, respectively. All complexities are computed from the simplest possible HCC encoding of each core block, where lower rank cells are considered simpler. Full derivations are provided in the appendix.
\paragraph{Results.}
The computed complexity signatures for all architectures are shown in \cref{tab:ComplexityHistory}.  

As visualized with blue arrows ({\color{blue} $\bm{\rightarrow}$}), there is an overall trend of increasing architectural complexity.  From left-to-right, fully-connected networks \citenump{BackpropMLPs} are the least complex, convolution \citenump{CNNs, AlexNet, VGG} is a higher order complexity operation, skip connections \citenump{ResNet} require additional tensors and operations, and the minimum complexity encoding of self-attention \citenump{Transformers} requires a ternary operation. Following transformers, the complexity of core blocks has exploded \citenump{P-Nets, MultiLinOpNets, VisionMamba, DTTenNets}.  Notably, this precise characterization of architectural complexity is possible because our framework provides a unified language for the consistent description of the `core blocks' used in all these architectures.

Examining the evolution of the ``Fundamental Architectures" (\cref{tab:ComplexityHistory}, left half), we observe a direct correspondence between major groundbreaking architectures and the first observed increase in each complexity type. For example, the step from ResNet's residual blocks to Transformer's self-attention corresponds to the first instance of a higher-arity ($\cC_{\alpha}$) tensor operation ({\color{cyan}blue row}). This finding suggests that previously overlooked types of complexity can provide guidance for developing new architectures, highlighting the value of exploring higher complexity signature neural networks.

Examining the evolution of the ``Post-Transformer Architectures" (\cref{tab:ComplexityHistory}, right half), we observe the boundary of known architectures.  For example, until 2025~\citenump{DTTenNets}, there was \textit{no exploration} of higher coupling arity ($\cC_A > 2$) architectures ({\color{orange} orange row}). Moreover, all known examples of $\cC_{\alpha} > 2$ architectures have been discovered \textit{accidentally} --- they are always described with HCCs of $\cC_{\alpha} = 2$, but happen to admit simpler HCC encodings with $\cC_{\alpha} > 2$. Such simplifications are possible because, as we prove in \cref{sec:ArityDecomp}, when tensor operations are evaluated with real number multiplication and addition, certain sequences of binary operations are equivalent to higher-arity operations.

\subsection{\label{sec:EmpiricalAnalysis}Generating New High Complexity Architectures}
Our hierarchical framework described in \cref{sec-proposed-framework} \textit{greatly simplifies the process of creating new architectures}. In particular, a core block of any desired complexity signature can be obtained by first sampling a $0/1$-valued matrix of size $\cC_{op} \times \cC_T$ to define a TEM, then sampling a sequence of $0/1$-valued matrices of size at most $\cC_{\alpha} \times \cC_O$ to define the TOMs. We leverage this expressive hierarchical representation of neural networks to extend the boundary of known architectures highlighted in \cref{sec:ComplexityAnalysis}, conducting the first systematic exploration of higher-complexity (up to $\cC_{\alpha}, \cC_A = 4, \cC_O \leq 11$) architectures.  Following \citenump{NASBench101,NASBench201}, we focus on the foundational image classification task. 

\begin{wrapfigure}[7]{r}{0.4\textwidth}
	\centering
	\raisebox{0pt}[\dimexpr\height-1.25\baselineskip\relax]{\includegraphics[width=0.4\textwidth]{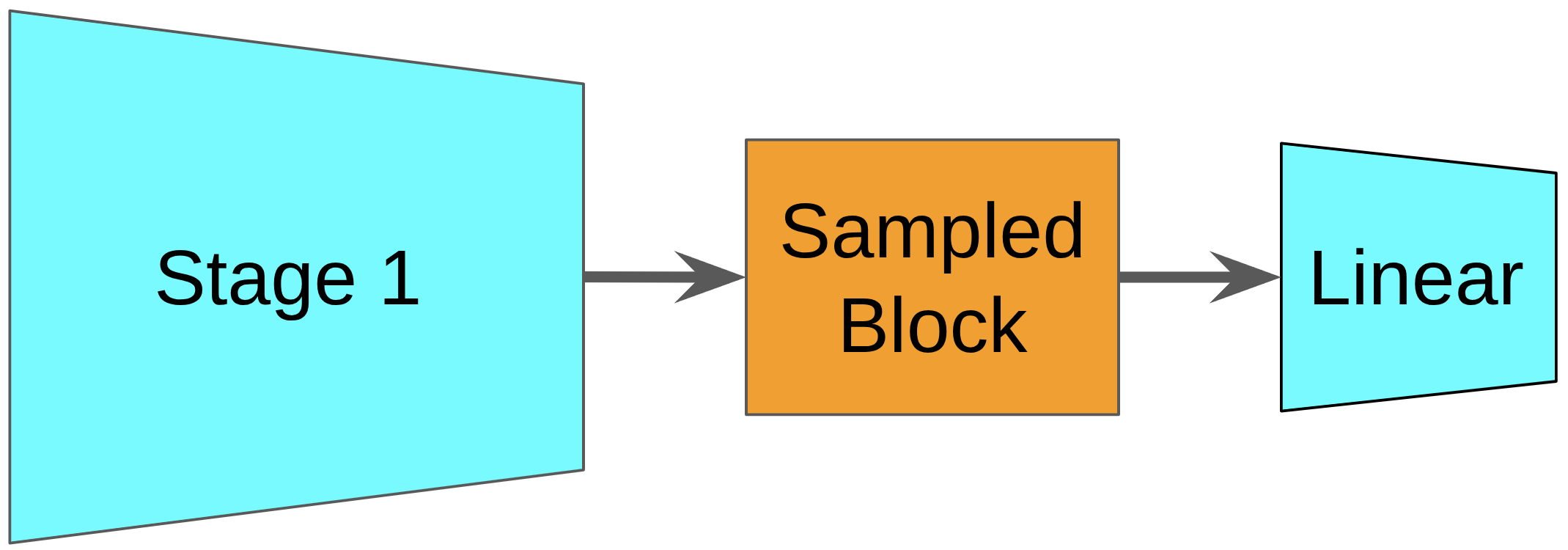}}
	\caption{Architecture search space.}
	\label{fig:NNSearch}
\end{wrapfigure}
\paragraph{Overview.}
We collect a dataset of $\NumSamples$ architectures sampled from spaces of the type illustrated in \cref{fig:NNSearch}. We include {\color{cyan}stage 1} (i.e, all initial layers up to and including the first pooling layer) to provide uniform baselines for the sampled blocks. We use the first stage of ResNet34 \citenump{ResNet} and Swin\_T \citenump{Swin} to collect samples in the context of the fundamental convolution and self-attention operations.
\paragraph{Baselines.}
We evaluate each stage 1 alone and report the parameter-accuracy curves (computed by scaling the channel/embedding dimensions) for the complete ResNet/Swin architectures.
\paragraph{Novel Architecture Sampling Strategy.}
To ensure dataset consistency, the {\color{orange}sampled blocks} are constrained to have approximately the same `depth' as a single `layer'. Given the lack of a consistent definition of `depth', we set complexity constraints for the sampled blocks as follows:
\begin{equation*}
	\begin{aligned}
		2 \leq \cC_O \leq 5, \qquad \qquad 5 \leq \cC_T \leq 16, \qquad \qquad 2 \leq \cC_{\alpha}, \cC_A \leq 4, \qquad \qquad 2 \leq \cC_O \leq 11
	\end{aligned}
\end{equation*}
These ranges ensure that each sampled block adds about as much architectural complexity as a small to average sized core block. As a result, each architecture in the dataset consists of around $5-7$ total `layers', depending on how one chooses to define `layer'.

\begin{wrapfigure}[26]{r}{0.435\textwidth}
	\centering
	\raisebox{0pt}[\dimexpr\height-2.0\baselineskip\relax]{\includegraphics[width=0.435\textwidth,]{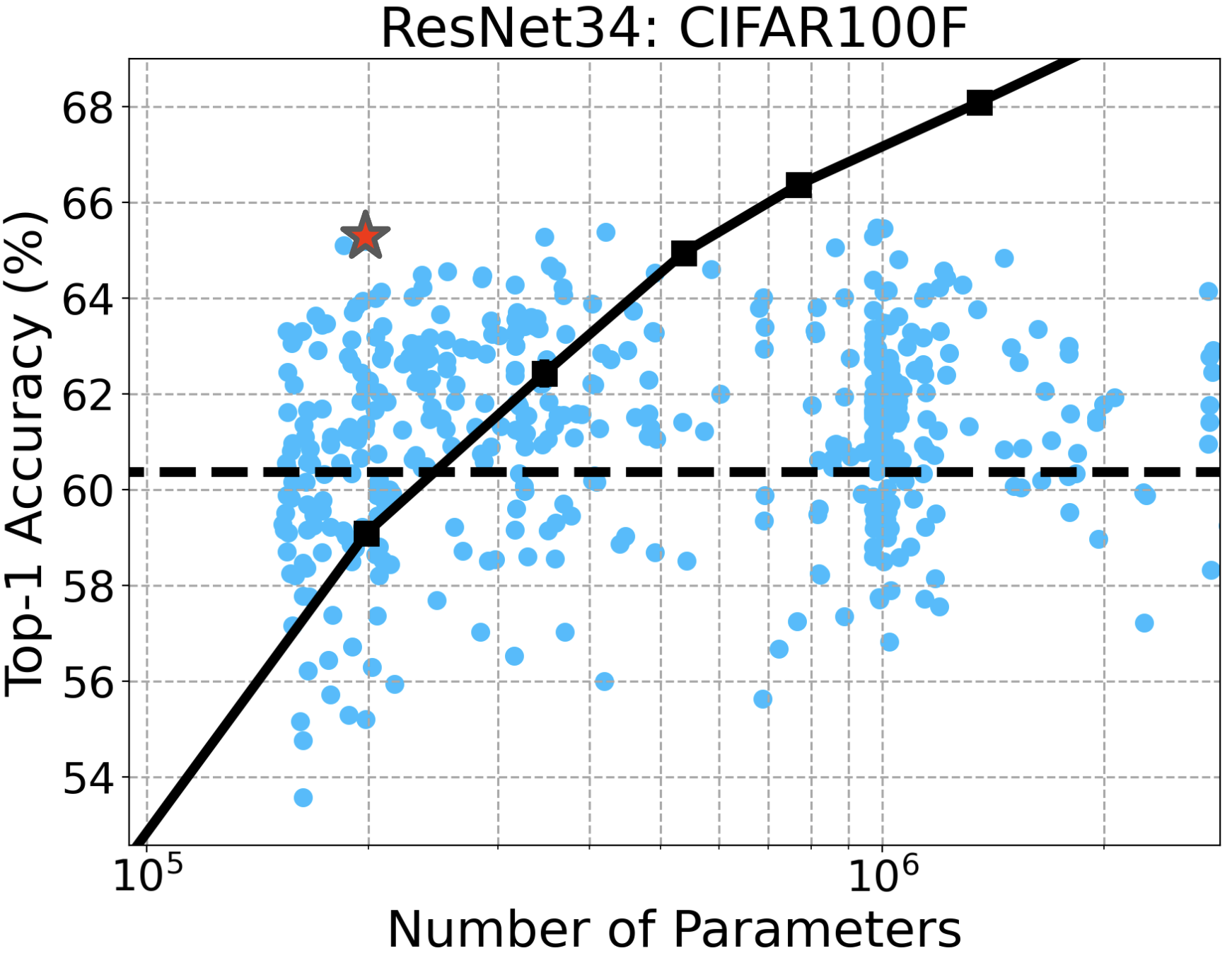}}\\\vspace{0.75em}
	\includegraphics[width=0.435\textwidth,]{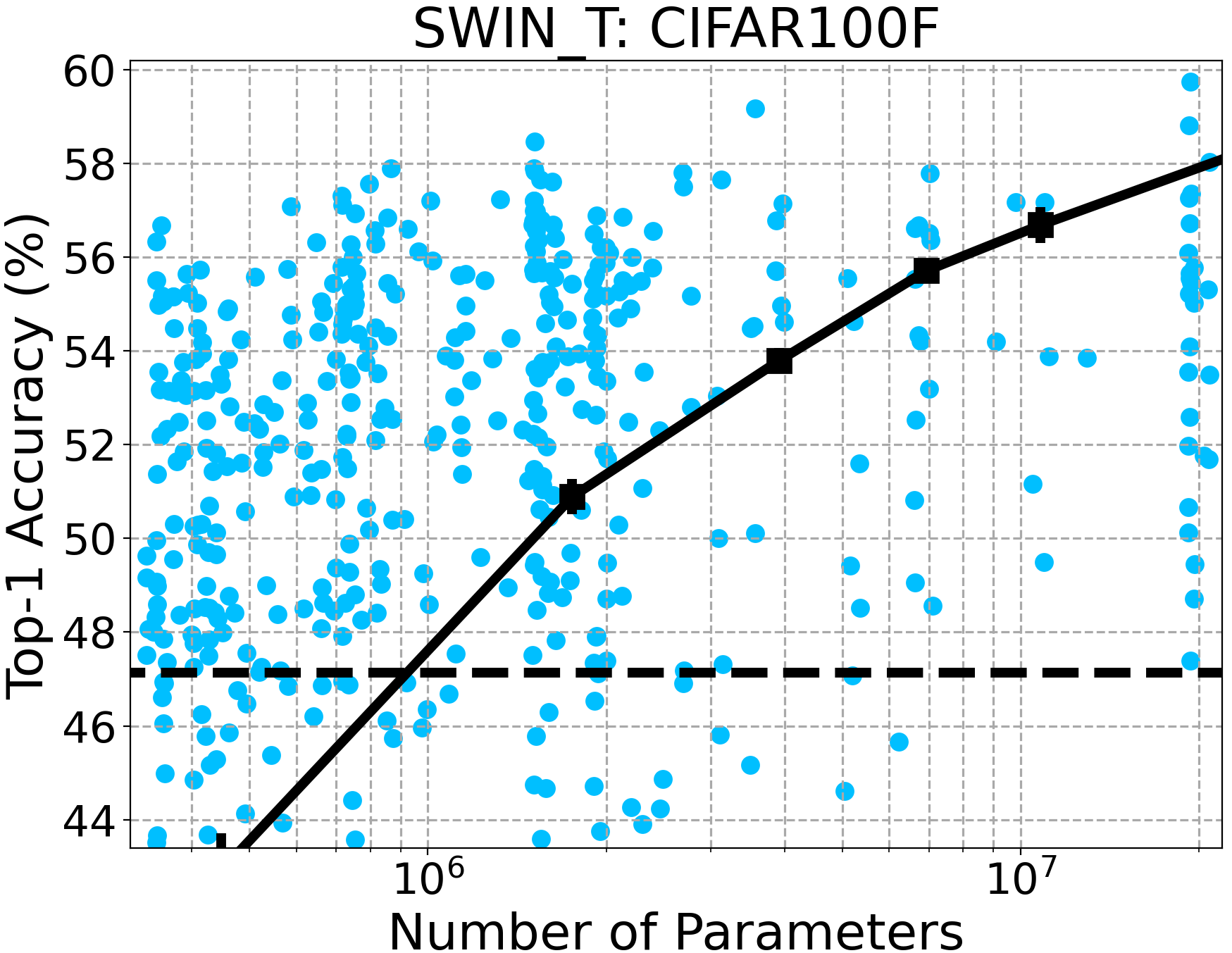}
	\caption{({\color{cyan} $\bullet$},{\color{red}$\bm{\star}$}) sampled models. ($\bm{- -}$) stage 1 baselines. ($\blacksquare$) full ResNet/Swin models at different width-scales.}
	\label{fig:C100Results}
\end{wrapfigure}
After each tensor operation, some randomly selected non-linear activations (from a pool of: Leaky ReLU, ReLU6, Layer Norm, and Softmax) are added with $50\%$ probability. Shapes for all intermediate tensors are randomly sampled.
\paragraph{Benchmarks.}

Following standard NAS practice \citenump{NASBench101,NASBench201}, we sample architectures for CIFAR-10, CIFAR-100 \citenump{CIFAR}, and Tiny Imagenet \citenump{TIM}. These datasets span a range of task difficulties with 10, 100, and 200 classes, respectively. Resolutions are: $32\times32$, $32\times32$, and $64\times64$, respectively.

\paragraph{Results.} 
The sampled CIFAR-100 architectures are shown in \cref{fig:C100Results}, with the remaining (similar) results shown in the appendix due to space constraints. 

The blue dots ({\color{cyan} $\bullet$}) show a wide spread in performance of the sampled architectures.  We suspect this spread stems from the immense number of possible higher-arity tensor operations; there are thousands of distinct operations at arity $3$ and tens of thousands at arity $4$ \citenump{Plexes}.  The size of these operation spaces prevents exploring \textit{every} architecture, highlighting the value of our diverse collection of samples for understanding which tensor operation structures lead to strong performance.  We publicly release all 3,028 models alongside comprehensive diagnostic data to facilitate future investigations here: {\color{blue}\href{https://github.com/combinatoriallabs/ArchitecturalComplexity}{https://github.com/combinatoriallabs/ArchitecturalComplexity}}.

The upper left quadrants (dots ({\color{cyan} $\bullet$}) above both the dashed and solid lines in Fig. \ref{fig:C100Results}) contain many architectures that achieve remarkable parameter efficiency. As all sampled architectures use \textit{exactly the same first stage} as the original models, these improvements over the baselines come from single blocks, many of which have only around $10,000$ parameters. In fact, some of the ResNet-based models even \textit{surpass} the performance of MobileNetV2 \citenump{MobileNet} ($64.29\%\pm 0.04$ on CIFAR-100).

\begin{wrapfigure}[10]{r}{0.4\textwidth}
	\centering
	\raisebox{0pt}[\dimexpr\height-1.25\baselineskip\relax]{\includegraphics[width=0.4\textwidth]{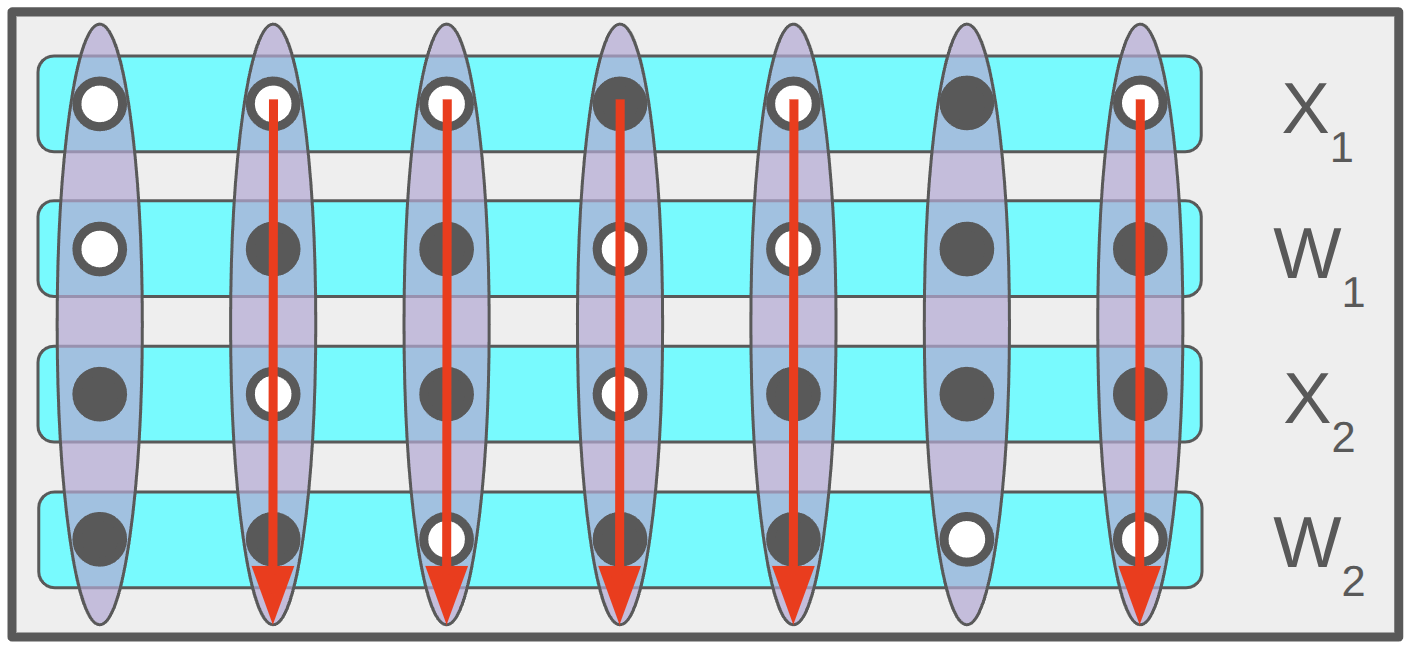}}
	\caption{A new tensor operation from the ({\color{red}$\bm{\star}$}) architecture. $X_1, X_2$ are input tensors; $W_1, W_2$ are learned weight tensors. $\cC_{\alpha} = 4$, $\cC_O = 7$, $\cC_A = 3$.}
	\label{fig:SpicyTOM}
\end{wrapfigure}
Specifically, the red star sample ({\color{red}$\bm{\star}$}) achieves $65.52\%\pm0.22$ with less than 200,000 parameters (152,000 are from the ResNet stage 1), an order of magnitude fewer than MobileNetV2's $2.5$ million. This is notable because MobileNetV2 is still widely used today as a lightweight architecture \citenump{LogitStandardizationKD, DivDataDistill, MNV2Numbers}. An example tensor operation matrix from the ({\color{red}$\bm{\star}$}) sample is shown in \cref{fig:SpicyTOM}. This high complexity operation is novel both in the context of deep learning and in the theory of tensor algebra.  Many other top performing samples utilize high complexity operations as well; more examples are provided in \cref{sec:TensorOps_Examples}.

\section{Limitations}
The hierarchical framework developed in this paper was purpose-built to study the \textit{architectural complexity} of neural networks. It does \textit{not} provide insight on their functional properties nor their training dynamics. Other frameworks, e.g., Categorical Deep Learning \citenump{CategoricalDL} and Neuro-algebraic Geometry \citenump{NeuroAlgebraicDL}, are better equipped to study these topics.

\section{Conclusion}
In this paper, we developed the first framework for neural networks that \textit{explicitly models the structure of tensor operations}.  We applied our framework to analyze architectural complexity, revealing new insights into the evolution of deep learning and highlighting the boundary of known architectures. We then used our framework to systematically construct thousands of novel architectures, demonstrating the potential of higher complexity tensor operations for constructing next-generation architectures.

\section*{Acknowledgments}
The authors extend their thanks to Keith Kearnes and Joshua Grochow for many insightful conversations and to Zhuoheng Li and Nolan Brady for their feedback on this manuscript.

\setcitestyle{numbers}
\bibliographystyle{plainnat}
\bibliography{main}

\begin{thebibliography}{48}
\providecommand{\natexlab}[1]{#1}
\providecommand{\url}[1]{\texttt{#1}}
\expandafter\ifx\csname urlstyle\endcsname\relax
  \providecommand{\doi}[1]{doi: #1}\else
  \providecommand{\doi}{doi: \begingroup \urlstyle{rm}\Url}\fi

\bibitem[Abadi et~al.(2015)Abadi, Agarwal, Barham, Brevdo, Chen, Citro, Corrado, Davis, Dean, Devin, Ghemawat, Goodfellow, Harp, Irving, Isard, Jia, Jozefowicz, Kaiser, Kudlur, Levenberg, Man\'{e}, Monga, Moore, Murray, Olah, Schuster, Shlens, Steiner, Sutskever, Talwar, Tucker, Vanhoucke, Vasudevan, Vi\'{e}gas, Vinyals, Warden, Wattenberg, Wicke, Yu, and Zheng]{TensorFlow}
Mart\'{i}n Abadi, Ashish Agarwal, Paul Barham, Eugene Brevdo, Zhifeng Chen, Craig Citro, Greg~S. Corrado, Andy Davis, Jeffrey Dean, Matthieu Devin, Sanjay Ghemawat, Ian Goodfellow, Andrew Harp, Geoffrey Irving, Michael Isard, Yangqing Jia, Rafal Jozefowicz, Lukasz Kaiser, Manjunath Kudlur, Josh Levenberg, Dandelion Man\'{e}, Rajat Monga, Sherry Moore, Derek Murray, Chris Olah, Mike Schuster, Jonathon Shlens, Benoit Steiner, Ilya Sutskever, Kunal Talwar, Paul Tucker, Vincent Vanhoucke, Vijay Vasudevan, Fernanda Vi\'{e}gas, Oriol Vinyals, Pete Warden, Martin Wattenberg, Martin Wicke, Yuan Yu, and Xiaoqiang Zheng.
\newblock {TensorFlow}: Large-scale machine learning on heterogeneous systems, 2015.
\newblock URL \url{https://www.tensorflow.org/}.
\newblock Software available from tensorflow.org.

\bibitem[Avval et~al.(2025)Avval, Eskue, Groves, and Yaghoubi]{NASSurvey}
Sasan Salmani~Pour Avval, Nathan~D. Eskue, Roger~M. Groves, and Vahid Yaghoubi.
\newblock Systematic review on neural architecture search.
\newblock \emph{Artificial Intelligence Review}, 2025.

\bibitem[Bassam et~al.(2025)Bassam, Zhu, and Bian]{MNV2Numbers}
Ejafa Bassam, Dawei Zhu, and Kaigui Bian.
\newblock Pld: A choice-theoretic list-wise knowledge distillation.
\newblock In \emph{NeurIPS}, 2025.

\bibitem[Bretto(2013)]{HGT}
Alain Bretto.
\newblock \emph{Hypergraph Theory, an Introduction.}
\newblock Springer, 2013.

\bibitem[Bronstein et~al.(2021)Bronstein, Bruna, Cohen, and Veličković]{GeometricDL}
Michael~M. Bronstein, Joan Bruna, Taco Cohen, and Petar Veličković.
\newblock Geometric deep learning: Grids, groups, graphs, geodesics, and gauges.
\newblock 2021.

\bibitem[Cheng et~al.(2024)Cheng, Chrysos, Georgopoulos, and Cevher]{MultiLinOpNets}
Yixin Cheng, Grigorios~G. Chrysos, Markos Georgopoulos, and Volkan Cevher.
\newblock Multilinear operator networks.
\newblock In \emph{ICLR}, 2024.

\bibitem[Chrysos et~al.(2020)Chrysos, Moschoglou, Bouritsas, Panagakis, Deng, and Zafeiriou]{P-Nets}
Grigorios~G. Chrysos, Stylianos Moschoglou, Giorgos Bouritsas, Yannis Panagakis, Jiankang Deng, and Stefanos Zafeiriou.
\newblock P-nets: Deep polynomial neural networks.
\newblock In \emph{CVPR}, 2020.

\bibitem[Chrysos et~al.(2022)Chrysos, Georgopoulos, Deng, Kossaifi, Panagakis, and Anandkumar]{PolyFramework}
Grigorios~G. Chrysos, Markos Georgopoulos, Jiankang Deng, Jean Kossaifi, Yannis Panagakis, and Anima Anandkumar.
\newblock Augmenting deep classifiers with polynomial neural networks.
\newblock In \emph{ECCV}, 2022.

\bibitem[Chrysos et~al.(2025)Chrysos, Wu, Pascanu, Torr, and Cevher]{HadamardSurvey}
Grigorios~G. Chrysos, Yongtao Wu, Razvan Pascanu, Philip~H.S. Torr, and Volkan Cevher.
\newblock Hadamard product in deep learning: Introduction, advances and challenges.
\newblock In \emph{IEEE PAMI}, 2025.

\bibitem[Dai et~al.(2017)Dai, Qi, Xiong, Li, Zhang, Hu, and Wei]{DefCNNs}
Jifeng Dai, Haozhi Qi, Yuwen Xiong, Yi~Li, Guodong Zhang, Han Hu, and Yichen Wei.
\newblock Deformable convolutional networks.
\newblock In \emph{ICCV}, 2017.

\bibitem[Dong and Yang(2020)]{NASBench201}
Xuanyi Dong and Yi~Yang.
\newblock Nas-bench-201: Extending the scope of reproducible neural architecture search.
\newblock In \emph{ICLR}, 2020.

\bibitem[Dosovitskiy et~al.(2021)Dosovitskiy, Beyer, Kolesnikov, Weissenborn, Zhai, Unterthiner, Dehghani, Minderer, Heigold, Gelly, Uszkoreit, , and Houlsby]{ViT}
Alexey Dosovitskiy, Lucas Beyer, Alexander Kolesnikov, Dirk Weissenborn, Xiaohua Zhai, Thomas Unterthiner, Mostafa Dehghani, Matthias Minderer, Georg Heigold, Sylvain Gelly, Jakob Uszkoreit, , and Neil Houlsby.
\newblock An image is worth 16x16 words: Transformers for image recognition at scale.
\newblock In \emph{ICLR}, 2021.

\bibitem[Fan et~al.(2023)Fan, Li, Wang, Lai, and Wang]{NAGExpress2}
Feng-Lei Fan, Mengzhou Li, Fei Wang, Rongjie Lai, and Ge~Wang.
\newblock On expressivity and trainability of quadratic networks.
\newblock In \emph{IEEE NNLS}, 2023.

\bibitem[Gavranović et~al.(2024)Gavranović, Lessard, Dudzik, von Glehn, Araujo, and Velicković]{CategoricalDL}
Bruno Gavranović, Paul Lessard, Andrew Dudzik, Tamara von Glehn, Joao~G.M. Araujo, and Petar Velicković.
\newblock Position: Categorical deep learning is an algebraic theory of all architectures.
\newblock In \emph{ICML}, 2024.

\bibitem[Gu and Dao(2024)]{Mamba}
Albert Gu and Tri Dao.
\newblock Mamba: Linear-time sequence modeling with selective state spaces, 2024.
\newblock URL \url{https://arxiv.org/abs/2312.00752}.

\bibitem[Hajij et~al.(2022)Hajij, Zamzmi, Papamarkou, Miolane, Guzmán-Sáenz, Ramamurthy, Birdal, Dey, Mukherjee, Samaga, Livesay, Walters, Rosen, and Schaub]{TDLBook}
Mustafa Hajij, Ghada Zamzmi, Theodore Papamarkou, Nina Miolane, Aldo Guzmán-Sáenz, Karthikeyan~Natesan Ramamurthy, Tolga Birdal, Tamal~K. Dey, Soham Mukherjee, Shreyas~N. Samaga, Neal Livesay, Robin Walters, Paul Rosen, and Michael~T. Schaub.
\newblock Topological deep learning: Going beyond graph data.
\newblock 2022.

\bibitem[Hajij et~al.(2025)Hajij, Bastian, Osentoski, Kabaria, Davenport, Dawood, Cherukuri, Kocheemoolayil, Shahmansouri, Lew, Papamarkou, and Birdal]{CopresheafNNs}
Mustafa Hajij, Lennart Bastian, Sarah Osentoski, Hardik Kabaria, John~L. Davenport, Sheik Dawood, Balaji Cherukuri, Joseph~G. Kocheemoolayil, Nastaran Shahmansouri, Adrian Lew, Theodore Papamarkou, and Tolga Birdal.
\newblock Copresheaf topological neural networks: A generalized deep learning framework.
\newblock In \emph{NeurIPS}, 2025.

\bibitem[He et~al.(2016)He, Zhang, Ren, , and Sun]{ResNet}
Kaiming He, Xiangyu Zhang, Shaoqing Ren, , and Jian Sun.
\newblock Deep residual learning for image recognition.
\newblock In \emph{CVPR}, 2016.

\bibitem[Jia et~al.(2025)Jia, Peng, Yang, and Chen]{CatToposSurvey}
Yiyang Jia, Guohong Peng, Zheng Yang, and Tianhao Chen.
\newblock Category-theoretical and topos-theoretical frameworks in machine learning: A survey.
\newblock \emph{Axioms}, 2025.

\bibitem[Kileel et~al.(2019)Kileel, Trager, and Bruna]{NAGExpress}
Joe Kileel, Matthew Trager, and Joan Bruna.
\newblock On the expressive power of deep polynomial neural networks.
\newblock In \emph{NeurIPS}, 2019.

\bibitem[Kingma and Ba(2015)]{Adam}
D.~Kingma and J.~Ba.
\newblock Adam: A method for stochastic optimization.
\newblock In \emph{ICLR}, 2015.

\bibitem[Krizhevsky et~al.(2009)Krizhevsky, Nair, and Hinton]{CIFAR}
A.~Krizhevsky, V.~Nair, and G.~Hinton.
\newblock Cifar-10 and cifar100 datasets.
\newblock 2009.
\newblock URL \url{https://www.cs.toronto.edu/kriz/cifar.html}.

\bibitem[Krizhevsky et~al.(2012)Krizhevsky, Sutskever, and Hinton]{AlexNet}
Alex Krizhevsky, Ilya Sutskever, and Geoffrey~E. Hinton.
\newblock Imagenet classification with deep convolutional neural networks.
\newblock In \emph{NeurIPS}, 2012.

\bibitem[Le and Yang(2015)]{TIM}
Ya~Le and Xuan Yang.
\newblock Tiny imagenet visual recognition challenge.
\newblock 2015.
\newblock URL \url{https://api.semanticscholar.org/CorpusID:16664790}.

\bibitem[Lecun et~al.(1998)Lecun, Bottou, Bengio, and Haffner]{CNNs}
Y.~Lecun, L.~Bottou, Y.~Bengio, and P.~Haffner.
\newblock Gradient-based learning applied to document recognition.
\newblock \emph{Proceedings of the IEEE}, 86\penalty0 (11):\penalty0 2278--2324, 1998.
\newblock \doi{10.1109/5.726791}.

\bibitem[Li et~al.(2025)Li, Zhou, Gu, Li, and Wang]{DivDataDistill}
Hongcheng Li, Yucan Zhou, Xiaoyan Gu, Bo~Li, and Weiping Wang.
\newblock Diversity-enhanced distribution alignment for dataset distillation.
\newblock In \emph{ICCV}, 2025.

\bibitem[Liu et~al.(2019)Liu, Simonyan, and Yang]{DARTS}
Hanxiao Liu, Karen Simonyan, and Yiming Yang.
\newblock Darts: Differentiable architecture search.
\newblock In \emph{ICLR}, 2019.

\bibitem[Liu et~al.(2021)Liu, Lin, Cao, Hu, Wei, Zhang, Lin, and Guo]{Swin}
Ze~Liu, Yutong Lin, Yue Cao, Han Hu, Yixuan Wei, Zheng Zhang, Stephen Lin, and Baining Guo.
\newblock Swin transformer: Hierarchical vision transformer using shifted windows.
\newblock In \emph{ICCV}, 2021.

\bibitem[Marchetti et~al.(2025)Marchetti, Shahverdi, Mereta, Trager, and Kohn]{NeuroAlgebraicDL}
Giovanni~Luca Marchetti, Vahid Shahverdi, Stefano Mereta, Matthew Trager, and Kathlen Kohn.
\newblock Algebra unveils deep learning an invitation to neuroalgebraic geometry.
\newblock In \emph{ICML}, 2025.

\bibitem[Mesner and Bhattacharya(1988)]{ConeProduct}
Dale~M. Mesner and Prabir Bhattacharya.
\newblock Association schemes on triples and a ternary algebra.
\newblock \emph{JOURNAL OF COMBINATORIAL THEORY}, 1988.

\bibitem[Mienye and Swart(2024)]{DLGood2}
Ibomoiye~Domor Mienye and Theo~G. Swart.
\newblock A comprehensive review of deep learning: Architectures, recent advances, and applications.
\newblock In \emph{Information}, 2024.

\bibitem[Mu et~al.(2024)Mu, Tayyab, and Chua]{SpiralMLP}
Haojie Mu, Burhan~Ul Tayyab, and Nicholas Chua.
\newblock Spiralmlp: A lightweight vision mlp architecture.
\newblock In \emph{WACV}, 2024.

\bibitem[Nguyen and Wu(2022)]{CatTypesOfLayers}
Minh Nguyen and Nicholas Wu.
\newblock Folding over neural networks, 2022.

\bibitem[Nie et~al.(2025)Nie, Chen, and Chen]{DTTenNets}
Chang Nie, Junfang Chen, and Yajie Chen.
\newblock Deep tree tensor networks for image recognition.
\newblock In \emph{NeurIPS}, 2025.

\bibitem[Noor and Ige(2025)]{DLGood1}
Mohd Halim~Mohd Noor and Ayokunle~Olalekan Ige.
\newblock A survey on state-of-the-art deep learning applications and challenges.
\newblock In \emph{Engineering Applications of Artificial Intelligence}, 2025.

\bibitem[Paszke et~al.(2017)Paszke, Gross, Chintala, Chanan, Yang, DeVito, Lin, Desmaison, Antiga, and Lerer]{Pytorch}
Adam Paszke, Sam Gross, Soumith Chintala, Gregory Chanan, Edward Yang, Zachary DeVito, Zeming Lin, Alban Desmaison, Luca Antiga, and Adam Lerer.
\newblock Automatic differentiation in pytorch.
\newblock In \emph{NIPS-W}, 2017.

\bibitem[Rumelhart et~al.(1986)Rumelhart, Hinton, and Williams]{BackpropMLPs}
David Rumelhart, Geoffrey Hinton, and Ronald Williams.
\newblock Learning representations by backpropagating errors., 1986.

\bibitem[Sandler et~al.(2018)Sandler, Howard, Zhu, Zhmoginov, and Chen]{MobileNet}
M.~Sandler, A.~G. Howard, M.~Zhu, A.~Zhmoginov, and L.~Chen.
\newblock Mobilenetv2: Inverted residuals and linear bottlenecks.
\newblock In \emph{CVPR}, 2018.

\bibitem[Shahverdi et~al.(2026)Shahverdi, Marchetti, and Kohn]{SingularityBias}
Vahid Shahverdi, Giovanni~Luca Marchetti, and Kathlen Kohn.
\newblock Learning on a razor’s edge: the singularity bias of polynomial neural networks.
\newblock In \emph{ICLR}, 2026.

\bibitem[Simonyan and Zisserman(2015)]{VGG}
K.~Simonyan and A.~Zisserman.
\newblock Very deep convolutional networks for large-scale image recognition.
\newblock In \emph{ICLR}, 2015.

\bibitem[Smith and Topin(2017)]{OneCycle}
Leslie~N. Smith and Nicholay Topin.
\newblock Super-convergence: Very fast training of neural networks using large learning rates, 2017.

\bibitem[Sun et~al.(2024)Sun, Ren, Li, Wang, and Cao]{LogitStandardizationKD}
Shangquan Sun, Wenqi Ren, Jingzhi Li, Rui Wang, and Xiaochun Cao.
\newblock Logit standardization in knowledge distillation.
\newblock In \emph{CVPR}, 2024.

\bibitem[Trager et~al.(2020)Trager, Kohn, and Bruna]{CriticalPointsinMLPs}
Matthew Trager, Kathlen Kohn, and Joan Bruna.
\newblock Pure and spurious critical points: A geometric study of linear networks.
\newblock In \emph{ICLR}, 2020.

\bibitem[Vaswani et~al.(2017)Vaswani, Shazeer, Parmar, Uszkoreit, Jones, Gomez, Kaiser, and Polosukhin]{Transformers}
Ashish Vaswani, Noam Shazeer, Niki Parmar, Jakob Uszkoreit, Llion Jones, Aidan~N. Gomez, Lukasz Kaiser, and Illia Polosukhin.
\newblock Attention is all you need.
\newblock In \emph{NeurIPS}, 2017.

\bibitem[Weiler et~al.(2023)Weiler, Forré, Verlinde, and Welling]{EquivariantCNNs}
Maurice Weiler, Patrick Forré, Erik Verlinde, and Max Welling.
\newblock Equivariant and coordinate independent convolutional networks, 2023.

\bibitem[Ying et~al.(2019)Ying, Klein, Real, Christiansen, Murphy, and Hutter]{NASBench101}
Chris Ying, Aaron Klein, Esteban Real, Eric Christiansen, Kevin Murphy, and Frank Hutter.
\newblock Nas-bench-101: Towards reproducible neural architecture search.
\newblock In \emph{ICML}, 2019.

\bibitem[Zapata-Carratalá et~al.(2024)Zapata-Carratalá, Arsiwalla, and Beynon]{Plexes}
Carlos Zapata-Carratalá, Xerxes~D. Arsiwalla, and Taliesin Beynon.
\newblock Diagrammatic calculus and generalized associativity for higher-arity tensor operations.
\newblock \emph{Theoretical Computer Science}, 2024.

\bibitem[Zhu et~al.(2024)Zhu, Liao, Zhang, Wang, Liu, and Wang]{VisionMamba}
Lianghui Zhu, Bencheng Liao, Qian Zhang, Xinlong Wang, Wenyu Liu, and Xinggang Wang.
\newblock Vision mamba: Efficient visual representation learning with bidirectional state space model.
\newblock In \emph{ICML}, 2024.

\end{thebibliography}

\newpage
\appendix
\onecolumn

\section*{Table of Contents}
\begin{enumerate}
	\item [A] \textbf{Theoretical Appendix.}
	\begin{enumerate}
		\item[\textbf{\ref{sec:GenTens_Apdx}}] Generalized Tensors.
		\begin{enumerate}
			\item[\ref{sec:Machinery_Apdx}] Technical Background.
			\item[\ref{sec:GenTenSOCC_Apdx}] Complete definition of the slice ordering compatibility conditions.
			\item[\ref{sec:GenTenIsos_Apdx}] Isomorphisms and canonical representatives of generalized tensors.
			\item[\ref{sec:GenTenHCCs_Apdx}] HCC Encodings of generalized tensors.
			\item[\ref{sec:GenTenDisc_Apdx}] Discussion of the HCC Construction of generalized tensors.
			\item[\ref{sec:GenTenProofs_Apdx}] Proofs of \cref{thrm:GenTensors<=>MDAs} and \cref{lem:MMs=>NonInjMDAs}.
		\end{enumerate}
		\item[\textbf{\ref{sec:TensorOps}}] Tensor Operations.
		\begin{enumerate}
			\item[\ref{sec:TensorOps_Defs}] Complete definition of tensor couplings and contractions.
			\item[\ref{sec:TensorOps_Eval}] Evaluating tensor operations with \cref{thrm:1-SMs=>Tensors}.
			\item[\ref{sec:TensorOps_Decomp}] Arity decomposition of tensor operations.
			\item[\ref{sec:TensorOps_Examples}] Additional examples of tensor operations.
			\item[\ref{sec:TensorOps_Connections}] Connections to other frameworks.
		\end{enumerate}
		\item[\textbf{\ref{sec:ArchDerivs}}] Complete derivations of the architectural complexity numbers in \cref{tab:ComplexityHistory}.
	\end{enumerate}
	
	\item [B] \textbf{Empirical Appendix.}
	\begin{enumerate}
		\item [\textbf{\ref{sec:DataCollection}}] Information on the dataset collection.
		\begin{enumerate}
			\item [\ref{sec:DataCollection_Sampling}] Details of sampling strategy.
			\item [\ref{sec:DataCollection_DiagData}] Diagnostic data provided.
			\item [\ref{sec:DataCollection_Optim}] Details of optimization recipe.
		\end{enumerate}
		\item [\textbf{\ref{sec:DataResults}}] Results and statistics of the complete architecture dataset.
		\begin{enumerate}
			\item [\ref{sec:DataResults_PA}] Parameter efficiency plots for all image classification datasets.
			\item [\ref{sec:DataResults_Stats}] Statistics of the sampled architectures.
		\end{enumerate}
		\item [\textbf{\ref{sec:RedStarArch}}] Complete descriptions of the red star architecture ({\color{red}$\bm{\star}$}).
	\end{enumerate}
\end{enumerate}

\clearpage
\section{\label{sec:TheoryAppendix}Theoretical Appendix}
In this appendix, we discuss in full theoretical detail the hierarchical framework for deep neural networks provided in the main paper. 

In part 1, we expand on the novel construction of tensors introduced in \cref{def:GenTensor}. This requires some graph-theoretic and order-theoretic machinery. We properly define the slice ordering compatibility conditions and prove \cref{thrm:GenTensors<=>MDAs,lem:MMs=>NonInjMDAs}. A discussion of how this novel construction relates to existing constructions of multidimensional arrays is also provided.

In part 2, we provide the full definition of tensor length and expand on the definition of tensor operation given in \cref{def:1SM}. We then prove \cref{thrm:1-SMs=>Tensors}, and provide additional examples of tensor operations including those involving jagged tensors. We also develop some machinery for decomposing and merging tensor operations. This is necessary for the complexity analysis of architectures. A discussion of connections to other frameworks is also provided.

In part 3, we provide complete derivations of all the architectural complexity numbers summarized in \cref{tab:ComplexityHistory}.

\clearpage

\subsection{\label{sec:GenTens_Apdx}Generalized Tensors}
\paragraph{Overview.} As articulated in the main paper, there are some necessary conditions --- the slice ordering compatibility conditions (SOCCs) --- that must be enforced of the $1$-cells of generalized tensors. The slice ordering compatibility conditions are necessary because they ensure that the definition of generalized tensor exhibits the correct theoretical properties. In particular, without the SOCCs, it is possible to assign orders to the $1$-cells in a manner which prevents well-defined multi-indices from being associated to the elements. Additionally, it is not obvious that the orders for the $1$-cells can be constructed within the set hierarchy of \cref{fig:RankDiagram}. In this part, we address these two topics in full. 

To demonstrate the necessity of the slice ordering compatibility conditions, consider the following example of what can go wrong:
\begin{equation}
	\begin{aligned}
		\cX^1 &= \big\{ [A, B], [C, D], [A, C], [D, B] \big\}\\
		\cX^2 &= \bigg\{ \big\{ [A, B], [C, D] \big\}, \big\{ [A, C], [D, B] \big\} \bigg\}
	\end{aligned}
	\label{eq:DegenerateOrdersEx}
\end{equation}
Any attempt to decode a multidimensional array from this HCC will result in a nonsensical object. Indeed, extracting multi-indices by starting at $A$ and counting the ``list distances" (we will formally define this notion shortly) of paths to the other elements results in the following:
\begin{equation*}
	\begin{bmatrix}
		A & D\\
		C & B
	\end{bmatrix}
\end{equation*}
A contradiction arises from the fact that $A$ and $B$ are supposed to belong to the same $1$-slice!

As mentioned in the main paper, the \textit{slice ordering compatibility conditions} (SOCCs) will prevent such organizational contradictions. To formulate these conditions, we must reinterpret the set hierarchical definition of generalized tensors in the language of graph theory. Doing so will lead us to the notion of a \textit{partitioned and weakly ordered hypergraph}. These hypergraphs greatly simplify the process of formalizing the SOCCs. However, the increased mathematical power of this reformulation comes at a cost; such hypergraphs are rank $5$ objects instead of rank $3$. To address this issue, we will prove that the axioms of generalized tensors are strong enough to guarantee that a rank $3$ encoding exists for all generalized tensors except a specific sub-type of hyper-tensors. This encoding can be extended to cover all generalized tensors, and is therefore sufficient to justify \cref{def:GenTensor} from the main paper.

Throughout this part, it is important to keep in mind that the construction of generalized tensors assumes nothing about the nature of the base set of elements. It is not until \cref{sec:TensorOps} that the base set must be a set of variables; here the base set may be a collection of \textit{any} mathematical objects.

\subsubsection{\label{sec:Machinery_Apdx}Order and Graph Theory Machinery}

\paragraph{Order Theory Background.} An order on a set is, in general, a binary relation. That is, an order $R$ is a subset of the cartesian product of a set with itself; $R \subseteq S \times S$. We say that $s \in S$ is \textit{$R$-minimal} if there exists no $t \in S$ such that $(t, s) \in R$. Similarly, $s$ is \textit{$R$-maximal} if there exists no $t$ with $(s, t) \in R$. It is important to recall that minimal and maximal elements are \textit{not} the same as minimum and maximum elements, which are those $s \in S$ such that for every other $t \in S$, $(s, t) \in R$ or $(t, s) \in R$, respectively. In this part, we will be frequently interested in minimal and maximal elements of various orders.

For completeness, we recall the definition of a \textit{strict weak order}.
\begin{definition}
	A \textit{strict weak order}, $<$, on a set $S$ is a binary relation $< \ \subsetneq S \times S$ satisfying the following four conditions:
	\begin{equation*}
		\begin{aligned}
			\textit{Irreflexivity.} \qquad &\forall a \in S \ , \ a \nless a\\
			\textit{Asymmetry.} \qquad &\forall a, b \in S \ , \ a < b \implies b \nless a\\
			\textit{Transitivity.} \qquad &\forall a, b, c \in S \ , \ (a < b) \wedge (b < c) \implies a < c\\
			\textit{Transitivity of Incomparability.} \qquad &\forall a, b, c \in S \ , \ (a \sim b) \wedge (b \sim c) \implies a \sim c\\
		\end{aligned}
	\end{equation*}
	where $\sim$ is the incomparability relation defined by the rule: $a \sim b \iff (a \nless b) \wedge (b \nless a)$.
\end{definition}
The fourth condition is the defining property of strict weak orders. It implies that $\sim$ is an \textit{equivalence relation}, i.e., $\sim$ is reflexive, symmetric, and transitive. We denote by $[a]$ the equivalence class of $a \in S$ with respect to $\sim$. Importantly, these equivalence classes are ``constant" with respect to the original order $<$, meaning that whenever $a \sim b$, we have that $x < a < y \iff x < b < y$. We therefore use $[a] < [b]$ to indicate that each $a \in [a]$ is less than each $b \in [b]$. Moreover, the order defined by $[a] < [b]$ is in fact a linear order on the equivalence classes of $\sim$.

For our purposes, all weak orders are finite, and therefore always admit (possibly non-injective) order preserving maps to finite subsets of $\mathbb{Z}$ with the standard $<$ relation. When necessary, we will refer to these maps as \textit{index} functions of the strict weak order in question. Moreover, for each finite weak order there is a special index function onto $\langle \{1, 2, ..., L\}, < \rangle$. We will refer to such index functions as \textit{the} index function of a strict weak order $\langle S, < \rangle$. $L$ is called the \textit{length} of the weak order, which is equivalent to the total number of equivalence classes with respect to $\sim$, i.e., $L$ is the cardinality of the quotient of $S$ by the incomparability relation:
\begin{equation*}
	L = |S / \sim| = |Im(index)|
\end{equation*}
where $Im(index)$ denotes the image of the index function. This definition of length coincides with the idea of the "length of a list" discussed in the main paper.

Weak orders will continue to be written as lists with (possibly) elements which are sets. For example:
\begin{equation*}
	\begin{aligned}
		&[A, B, C]\\
		&[A, \{B, D\}, C]\\
		&[\{A, B\}, \{C, D\}, \{E, F\}]
	\end{aligned}
\end{equation*}
Each of the above represents a strict weak order with index functions of codomain $\langle \{1, 2, 3\}, <\rangle$ given by the list positions. The elements of each ``set" have the same index in the list, e.g., $A \sim B$, $C \sim D$, and $E \sim F$ in the third example.

\paragraph{Graph Theory Background.}
We now introduce some necessary machinery from elementary hypergraph theory.
\begin{definition}
	A \textit{hypergraph} is a pair $\big\langle V, E \big\rangle$. $V$ is the vertex set of the hypergraph, and $E \subset \mathcal{P}(V)$ is the set of hyper-edges. Equivalently, a hypergraph is a rank $1$ HCC with $\cX^0 = V$.
	\vspace{-0.5em}
\end{definition}
Hypergraphs are generalizations of (undirected and unweighted) graphs, which require each $e \in E$ to contain exactly $2$ vertices from $V$.

As with traditional graphs, paths and cycles determine many important properties of hypergraphs.
\begin{definition}
	A \textit{path} on a hypergraph $H = \big\langle V, E \big\rangle$ is an ordered sequence of vertices and hyper-edges $[v_1, e_1, v_2, e_2, v_3, ...,  e_{n-1}, v_n]$ such that each consecutive pair of vertices are connected by the corresponding hyper-edge. That is, $v_1, v_2 \in e_1 \in E$, $v_2, v_3 \in e_2 \in E$, and so on.
\end{definition}
A hypergraph is called \textit{connected} if any two vertices can be joined by a path. We denote by $\gamma_{a,b}$ a path joining the vertices $a$ and $b$. A \textit{cycle} is a path for which the first and last vertices coincide, that is, $v_n = v_1$. This definition of cycle coincides with that of a cycle on normal graphs.

Paths can be reversed to obtain new paths. Given a path $\gamma_{a,b} = [a = v_1, e_1, v_2, e_2, v_3, ...,  e_{n-1}, v_n = b]$ denote by $\bar{\gamma}_{b,a}$ the reserve path given by: $\bar{\gamma}_{b,a} = [b = v_n, e_{n-1}, v_{n-2}, ...,  v_2, e_1, v_1 = a]$.

Paths can be composed if the endpoint of the first is the starting point of the second. Explicitly, given paths $\gamma_{a,b} = [a = v_1, e_1, v_2, e_2, v_3, ...,  e_{n-1}, v_n = b]$ and $\gamma_{b,c} = [b = w_1, f_1, w_2, f_2, w_3, ...,  f_{m-1}, w_m = c]$, define their composition $\gamma_{a,c} = \gamma_{b, c} \circ \gamma_{a,b}$ as follows:
\begin{equation*}
	\gamma_{a,c} = [a = v_1, e_1, v_2, e_2, v_3, ...,  e_{n-1}, v_n = b = w_1, f_1, w_2, f_2, w_3, ..., f_{m-1}, w_m = c].
\end{equation*}

We are now ready to introduce the key ingredient of the slice ordering compatibility conditions: partitioned and weakly ordered hypergraphs, or PWO-HGs for short.
\begin{definition}
	A \textit{partitioned and weakly ordered hypergraph} $H = \big\langle V, E, \{<_e\}_{e \in E}, P \big\rangle$ is a hypergraph equipped with two additional pieces of data:
	\begin{enumerate}
		\item A partition of $E$, $P \in \mathcal{P}(\mathcal{P}(E))$.
		\item An edge-indexed family of strict weak orders, $\big\{<_e \ \ \subsetneq e \times e\big\}_{e \in E}$.
	\end{enumerate}
	\label{def:PWOHGs}
\end{definition}
A PWO-HG without the partition is called a \textit{weakly ordered hypergraph}.

Intuitively, weakly ordered hypergraphs are hypergraphs which have lists for edges. As such, they are a strict generalization of the well-studied interval hypergraphs (see \cite{HGT}, section 4.1). We recall that an interval hypergraph comes equipped with a single linear order for $V$, whereas weakly ordered hypergraphs (as we have defined them) come with distinct orders for each hyper-edge. It is straightforward to see that any interval hypergraph is by definition also a particular type of weakly ordered hypergraph. The figure below is an example of a weakly ordered hypergraph:
\vspace{0.5em}
\begin{center}
	\begin{tikzpicture}
		\draw[rounded corners=6pt]
		(-2.5, -0.3)  -- 
		(-2.5, 0.3)  -- 
		(2.5, 0.3)  -- 
		(2.5, -0.3)  -- 
		node[midway, below, anchor=north] {} cycle;
		
		\draw[rounded corners=6pt]
		(2.3, -0.5)  -- 
		(2.3, 1.5)  -- 
		(-2.3, 1.5)  -- 
		(-2.3, -0.5)  -- 
		(-1.7, -0.5)  -- 
		(-1.7, 1.0)  -- 
		(1.7, 1.0)  -- 
		(1.7, -0.5)  -- 
		node[midway, below, anchor=north] {} cycle;
		
		\node at (-2,0) (A) {$A$};
		\node at (-1,0) (A) {$<$};
		\node at (0,0) (B) {$B$};
		\node at (1,0) (A) {$<$};
		\node at (2,0) (C) {$C$};
		\node at (0,1.25) (A) {$>$};
	\end{tikzpicture}
\end{center}
It is useful to observe that in the above example, there is a \textit{cycle} $(A, B, C, A)$ that only passes through the edge orders $<$ in the increasing direction. As such, it is unclear if $B$ is ``one index away" from $A$, or ``two indices away". This type of degenerate cycle is exactly the source of the problems that the SOCCs will address. We (sadly) do require a bit more machinery in order to formalize this idea.

The weak orders $<_e$ are useful for defining a \textit{signed path distance} on the underlying hypergraph.
\begin{definition}
	The \textit{signed path distance} $d$ of a path $\gamma = [v_1, e_1, v_2, e_2, v_3, ..., e_{n-1}, v_n]$ on a weakly ordered hypergraph $H = \langle V, E, \{<_e\}_{e \in E} \rangle$ is the signed sum of the number of elements ``in between" each pair $(v_i, v_{i+1})$ with respect to the relation $<_{e_i}$. That is:
	\begin{equation*}
		\begin{aligned}
			d(\gamma) &= \sum_{i = 1}^{n - 1} d_{e_i}(v_i, v_{i+1})\\
			d_{e_i}(v_i, v_{i+1}) &= \bigg| \big\{ [v] \in e_i / \sim_{e_i} \ : \ [v] <_{e_i} [v_{i+1}] \big\} \bigg| - \bigg| \big\{ [v] \in e_i / \sim_{e_i} \ : \ [v] <_{e_i} [v_{i}] \big\} \bigg|\\
			&= index_{e_i}(v_{i+1}) - index_{e_i}(v_i)
		\end{aligned}
	\end{equation*}
	where $index_{e_i}$ is the index function for the strict weak order $e_i$.
\end{definition}
Signed path distances can be thought of as the familiar notion of path distance on a weighted (hyper)graph where the weights vary depending on both $e$, and the vertices' positions in the ``list" $<_e$. It is important to note that the summands $d_{e_i}(v_i, v_{i+1})$ may be \textit{negative}. For brevity, we will refer to signed path distances simply as distances, when clear from context.

The \textit{unsigned path length} of a path is the summand-wise absolute value of its signed path distance.
\begin{definition}
	The \textit{unsigned path length} $l$ of a path $\gamma = [v_1, e_1, v_2, e_2, v_3, ..., e_{n-1}, v_n]$ on a weakly ordered hypergraph $H = \langle V, E, \{<_e\}_{e \in E} \rangle$ is the unsigned sum of the number of elements ``in between" each pair $(v_i, v_{i+1})$ with respect to the relation $<_{e_i}$. That is:
	\begin{equation*}
		\begin{aligned}
			l(\gamma) &= \sum_{i = 1}^{n - 1} |d_{e_i}(v_i, v_{i+1})|\\
		\end{aligned}
	\end{equation*}
\end{definition}

We will use $d(a,b)$ to denote the distance of a minimal length path joining $a$ and $b$.

Hyper-edge partitions induce partitions of path distance in the obvious way.
\begin{definition}
	Given a partitioned ordered hypergraph $H = \langle V, E, \{<_e\}_{e \in E}, P \rangle$ and $p \in P$, the $p$-\textit{distance} $d_p$ of a path $\gamma = [v_1, e_1, v_2, e_2, v_3, ..., e_{n-1}, v_n]$ is:
	\begin{equation*}
		d_p(\gamma) = \sum_{i = 1}^{n - 1} \mathbbm{1}_p(e_i) d_{e_i}(v_i, v_{i+1})
	\end{equation*}
	where $\mathbbm{1}_p: E \rightarrow \{0, 1\}$ is the indicator function of $p$, i.e., $\mathbbm{1}_p(e) = 1$ exactly when $e \in p$.
\end{definition}
Partitioned lengths $l_p$ are defined analogously.
\clearpage
Here are some basic, but important, properties of path distances.
\begin{proposition}
	Let $H = \langle V, E, \{<_e\}_{e \in E}, P \rangle$ be a partitioned ordered hypergraph, and let $a, b, c \in V$. We have the following:
	\begin{equation*}
		\begin{aligned}
			&\textit{1. For any path } \gamma, \ \sum_{p \in P}d_p(\gamma) = d(\gamma)\\
			&\textit{2. For any path } \gamma, \ d_p(\gamma) = -d_p(\bar{\gamma})\\
			&\textit{3. For any pair of composable paths } \gamma_{a,b}, \gamma_{b,c}, \ d_p(\gamma_{b,c} \circ \gamma_{a,b}) = d_p(\gamma_{b,c}) + d_p(\gamma_{a,b})
		\end{aligned}
	\end{equation*}
	\label{prop:DistanceProperties}
\end{proposition}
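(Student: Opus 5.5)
All three properties follow by unwinding the definitions of $d$ and $d_p$ as finite sums over the consecutive steps of a path. The plan is to treat each step $(v_i, e_i, v_{i+1})$ as the atomic unit and verify each identity step by step.

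\emph{Property 1.} Since $P$ is a partition of $E$, each hyper-edge $e_i$ lies in exactly one part. Hence $\sum_{p \in P} \mathbbm{1}_p(e_i) = 1$ for every $i$. I would write
\begin{equation*}
\sum_{p\in P} d_p(\gamma) = \sum_{p \in P}\sum_{i=1}^{n-1} \mathbbm{1}_p(e_i)\, d_{e_i}(v_i,v_{i+1}),
\end{equation*}
exchange the two finite sums, and use the identity above to collapse the inner sum over $p$. The result is $\sum_i d_{e_i}(v_i,v_{i+1}) = d(\gamma)$.

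\emph{Property 2.} I would use the index-function form of each summand, $d_{e}(u,v) = index_e(v) - index_e(u)$, which immediately gives the antisymmetry $d_e(u,v) = -d_e(v,u)$. The reversed path $\bar{\gamma}$ traverses the same hyper-edges $e_{n-1}, \dots, e_1$, with each step $(v_{i+1}, e_i, v_i)$ reversed. The indicator weights $\mathbbm{1}_p(e_i)$ are therefore unchanged, and each summand flips sign. After reindexing the sum, this gives $d_p(\bar\gamma) = -d_p(\gamma)$.

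\emph{Property 3.} The composite $\gamma_{b,c}\circ\gamma_{a,b}$ is the concatenation of the two vertex/edge sequences, glued at the shared vertex $b$. Its list of consecutive steps is exactly the disjoint union of the steps of $\gamma_{a,b}$ and those of $\gamma_{b,c}$, with no new step introduced at the junction, since $b$ appears once as $v_n = w_1$. The defining sum for $d_p$ therefore splits into the two sums, giving additivity.

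The only point requiring care is the bookkeeping in property 3: confirming that the concatenation in the paper's definition of composition does not create or duplicate a step at $b$. Everything else is a direct consequence of finiteness, linearity of the sums, and the partition property of $P$. Note that the argument uses only that $P$ partitions $E$ and that each $<_e$ has an index function; it never uses the strict weak order axioms beyond the existence of $index_e$.
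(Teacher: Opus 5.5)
Your proposal is correct and takes the same approach as the paper. The paper's proof says only that the identities follow immediately from the definition of $p$-distance and the fact that $P$ partitions $E$. You have written out the bookkeeping it leaves implicit: the indicator weights summing to one over $P$, the sign flip $index_e(v)-index_e(u) = -(index_e(u)-index_e(v))$ under reversal, and the splitting of the step-sum under concatenation at $b$.
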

\begin{proof}
	These identities follow immediately from the definition of $p$-distance and the fact that $P$ is a partition of $E$.
\end{proof}

As PWO-HGs will be used to study hyper-tensors, we now introduce the notion of a path between \textit{tuples}.
\begin{definition}
	A \textit{tuple} of a PWO-HG $H = \langle V, E, \{<_e\}_{e \in E}, P \rangle$ is a $\subsetneq$-maximal element of the set:
	\begin{equation*}
		\big\{ \{v_1, v_2, ..., v_n\} \in \cP(V) \ : \ d(v_i, v_j) = 0, \ \forall i,j \in [[n]] \big\}
	\end{equation*}
	We denote by $\tup(H)$ the set of all tuples of $H$.
\end{definition}
Tuples are effectively the ``elements" of hyper-tensors. For example, consider the irregular hyper-tensor:
\begin{equation*}
	\begin{bmatrix}
		\{A,B\} & C\\
		D & \{E,F\}
	\end{bmatrix}
\end{equation*}
Each ``entry" of the above matrix is a tuple of the PWO-HG representation of the hyper-tensor. It is apparent from the definition that the set $\tup(H)$ covers (but does \textit{not necessarily} partition) $V$.

Tuples are useful for refining the definition of a path.
\begin{definition}
	A \textit{tuple path} on a PWO-HG $H = \langle V, E, \{<_e\}_{e \in E}, P \rangle$ is an ordered sequence of tuples and hyper-edges $[t_1, e_1, ..., e_{n-1}, t_n]$ such that for each $v_i \in t_1$, $v_j \in t_n$, there is a path between $v_i$ and $v_j$ which consists of exactly the same hyper-edges. The distance of a tuple path is defined to be:
	\begin{equation*}
		d(v_i^*, v_j^*), \ \text{where } v_i^*, v_j^* = argmax_{v_i, v_j} |d(v_i, v_j)|
	\end{equation*}
\end{definition}
The length of a tuple path is defined analogously. Tuple cycles are closed tuple paths. 

\subsubsection{\label{sec:GenTenSOCC_Apdx}The Slice Ordering Compatibility Conditions}
We are now ready to state the SOCCs and give the fully complete definition of a generalized tensor:
\begin{definition}
	A \textit{generalized tensor} is a $3$-cell $s^3$ of an HCC $\cX$ for which the following hold:
	\begin{enumerate}
		\item The $2$-cells of $s^3$ form a partition of the $\subsetneq$-maximal elements of $\cX^1|_{s^3}$.
		\item Each $x \in \cX^0|_{s^3}$ is contained in the intersection of some transversal of this partition. 
		\begin{equation*}
			\forall x \in \mathcal{X}^0|_{s^3}, \text{ } \exists \{s^1_i\}_{i\in I} \sqsubset s^3 \text{ with } x \in \bigcap_{i=1}^{|s^3|} s^1_i
		\end{equation*}
		\item The structure $H = \big\langle \cX^0|_{s^3}, \cX^1|_{s^3}, \cX^2|_{s^3} \big\rangle$ forms a partitioned and weakly ordered hypergraph which satisfies the \textit{slice ordering compatibility conditions}:
		\begin{enumerate}
			\item $H$ is connected.
			\item All tuple cycles on $H$ have zero $p$-distance, for each $p \in s^3$.
		\end{enumerate}
	\end{enumerate}
	\label{def:GenTensorAppendix}
\end{definition}
The only difference from \cref{def:GenTensor} is the specification of condition 1 to the subset-maximal elements of $s^3$, which is necessary to accommodate the extra $1$-cells that will be used to encode the orders in the rank $3$ representation discussed in \cref{sec:EncodingGenTensors}.

Intuitively, the slice ordering compatibility conditions state that the path distances of the PWO-HG associated to $s^3$ must behave like a conservative vector field. This prevents degenerate structures such as that of \cref{eq:DegenerateOrdersEx} and guarantees that given any arbitrary generalized tensor, one may extract something that is ``essentially" a multidimensional array. Here ``essentially" means that the resulting MDA may be jagged and/or contain multiple elements with the same multi-index, but will be free of any organizational contradictions. We formalize this important idea in the following statement.
\begin{lemma}
	For any generalized tensor $s^3$, the associated PWO-HG $H = \big\langle V, E, \{<_e\}_{e \in E}, P \big\rangle$ satisfies:
	\begin{equation*}
		\begin{aligned}
			\text{Whenever } \gamma_{t_1,t_2}, \ \delta_{t_1,t_2} &\text{ are two different tuple paths between } t_1,t_2 \in Tup(H)\\
			d_p(\gamma_{t_1,t_2}) &= d_p(\delta_{t_1,t_2}), \ \forall p \in s^3
		\end{aligned}
	\end{equation*}
	\label{lem:TuplePathProp}
\end{lemma}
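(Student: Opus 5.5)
The plan is the standard ``conservative field'' argument: two paths with the same endpoints close up into a cycle, and the SOCC says cycles carry no $p$-distance. Fix $p \in s^3$ and two tuple paths $\gamma_{t_1,t_2}$, $\delta_{t_1,t_2}$ between the same tuples. Reverse the second path to obtain $\bar{\delta}_{t_2,t_1}$, a tuple path from $t_2$ back to $t_1$. Then form the composite $\bar{\delta}_{t_2,t_1} \circ \gamma_{t_1,t_2}$, which starts and ends at $t_1$. This composite is a tuple cycle on $H$, so condition 3(b) of \cref{def:GenTensorAppendix} gives $d_p(\bar{\delta}_{t_2,t_1}\circ\gamma_{t_1,t_2}) = 0$.

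The remaining work is to split this zero into the two pieces. Additivity under composition (\cref{prop:DistanceProperties}, item 3) should give $d_p(\bar\delta)+d_p(\gamma) = 0$. Sign reversal (item 2) should give $d_p(\bar\delta) = -d_p(\delta)$. Together these yield $d_p(\gamma) = d_p(\delta)$.

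\cref{prop:DistanceProperties} is stated for vertex paths, not tuple paths, so I would first transfer it. Tuple-path distance is defined through a chosen pair of representatives $v_i\in t_1$, $v_j\in t_n$ realising the maximal $|d|$. The key observation is that within a tuple all vertices have mutual distance $0$ (by definition of $\tup(H)$). So for a fixed edge sequence, the vertex paths between different representatives differ only by segments of zero distance. Hence the value $d_p$ should be independent of the chosen representatives, and the argmax becomes immaterial. To make this precise, take representatives $u\in t_1$, $w\in t_2$ and the underlying vertex paths of $\gamma$ and $\delta$ through them. Reversal and concatenation are then performed on these vertex paths, which have the same edge sequences as the tuple paths, so the composite is again a tuple path. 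Items 2 and 3 then apply verbatim.

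I expect the main obstacle to be this representative-independence, specifically checking that $d(v,v')=0$ for $v,v'$ in a tuple also forces $d_p(v,v')=0$ for each individual $p$. Otherwise different representatives could shift the $p$-components while keeping the total fixed. My first attempt is to apply the cycle condition itself to the trivial tuple cycle at $t_1$ that passes through $v$ and $v'$, which yields $d_p = 0$ along any closed tuple path there. Failing that, I would use connectivity (3(a)) to reduce to edges containing both vertices, which share the same index in the weak order and so contribute zero.
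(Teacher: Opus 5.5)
Your proposal is essentially the paper's proof: you compose $\gamma_{t_1,t_2}$ with the reversal of $\delta_{t_1,t_2}$ to get a tuple cycle at $t_1$, use condition 3(b) of \cref{def:GenTensorAppendix} to make its $p$-distance vanish, and split that zero with items 2 and 3 of \cref{prop:DistanceProperties}. The paper applies those identities to tuple paths directly and never discusses representatives, so your representative-independence step is extra care rather than a missing ingredient. One caution there: your second fallback would need more than condition 3(a), since connectivity does not by itself give an edge containing both tuple-mates.
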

\begin{proof}
	Suppose $\gamma_{t_1,t_2}$ and $\delta_{t_1,t_2}$ are two different tuple paths between $a, b \in \cX^0|_{s^3}$ and let $p \in s^3$. Then $\gamma_{t_1,t_1} = \bar{\delta}_{t_1,t_2} \circ \gamma_{t_1,t_2}$ is a tuple cycle. By the SOCCs, $d_p(\gamma_{t_1,t_1}) = 0$. By \cref{prop:DistanceProperties}, we have that $d_p(\gamma_{t_1,t_2}) = -d_p(\bar{\delta}_{t_1,t_2}) = d_p(\delta_{t_1,t_2})$.
\end{proof}
This lemma describes how path distance is completely determined by the start and end \textit{vertex tuples} of the PWO-HG. Therefore, we will use $d(t_1, t_2)$ to refer to the distance of any tuple path between $t_1$ and $t_2$. To simplify notation going forward, we will use vertex variables such as $v, u$ to denote vertex tuples.

\Cref{lem:TuplePathProp} can be used to assign multi-indices to the tuples of a SOCC-satisfying PWO-HG in a consistent manner by first selecting any tuple to serve as the origin (i.e., the multi-index $(1, ..., 1)$) and then setting multi-indices for all other tuples to be the $p$-distances of any tuple path from the origin. The SOCCs ensure that the index systems obtained from each choice of origin are simply offsets of one-another. This is formalized in the following statement.
\begin{lemma}
	Let $s^3$ be a generalized tensor with associated PWO-HG $H = \big\langle V, E, \{<_e\}_{e \in E}, P \big\rangle$. Then for each $v \in \tup(H)$, there exists a well defined function $T_v$ given by: 
	\begin{equation*}
		\begin{aligned}
			T_v: \tup(H) &\rightarrow \mathbb{Z}^{|P|}\\
			u &\mapsto (d_{p_1}(v, u), ..., d_{p_{|P|}}(v, u))
		\end{aligned}
	\end{equation*}
	Furthermore, for each $v, w, u \in \tup(H)$, $T_v(u) - T_w(u)= T_v(w)$. That is, for any two origins, the resulting coordinate systems are translations of each other by the path distance between the origins.
	\label{lem:CoTensors}
\end{lemma}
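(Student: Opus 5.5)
The statement has two parts: that $T_v$ is well defined, and the translation identity. I would prove them in that order, using \cref{lem:TuplePathProp} for the first and \cref{prop:DistanceProperties} for the second.

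\textbf{Well-definedness.} Fix $v \in \tup(H)$ and let $u \in \tup(H)$.
\begin{enumerate}
\item First I need a tuple path from $v$ to $u$ to exist. Connectivity of $H$ (SOCC (a)) gives an ordinary vertex path between any vertex of $v$ and any vertex of $u$. I would then argue that such a path can be read as a tuple path. The cleanest route is to take the definition of tuple path at face value: for every pair of representatives, the same hyperedge sequence connects them. Tuples are sets of vertices at mutual distance zero, and they are maximal. By condition 2 of \cref{def:GenTensorAppendix}, each vertex lies in the intersection of a transversal. So all vertices of a tuple share the same transversal slices, and any hyperedge sequence leaving one representative can be followed from the others.
\item Given existence, \cref{lem:TuplePathProp} says $d_p$ is independent of the chosen tuple path for every $p \in P$. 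Hence each coordinate $d_p(v,u)$ is a single integer, and $T_v(u) \in \mathbb{Z}^{|P|}$ is well defined. Integrality is immediate, since every summand $d_{e_i}$ is a difference of index-function values.
\end{enumerate}

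\textbf{Translation identity.} Take $v, w, u \in \tup(H)$ and choose tuple paths $\gamma_{v,w}$ and $\gamma_{w,u}$.
\begin{enumerate}
\item Their composition $\gamma_{w,u} \circ \gamma_{v,w}$ is a tuple path from $v$ to $u$.
\item By \cref{prop:DistanceProperties}(3) and path-independence,
\[
d_p(v,u) = d_p(v,w) + d_p(w,u) \quad \text{for every } p \in P.
\]
\item By \cref{prop:DistanceProperties}(2), $d_p(w,u) = -d_p(u,w)$. This is just bookkeeping and is not needed directly.
\item Rearranging coordinatewise gives $T_v(u) - T_w(u) = d_p(v,u) - d_p(w,u) = d_p(v,w)$, i.e.\ $T_v(u) - T_w(u) = T_v(w)$. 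This is the claim.
\end{enumerate}

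\textbf{Main obstacle.} The hardest step is the existence of tuple paths, and the fact that composing tuple paths yields a tuple path. Tuple distance is defined through an argmax over representatives, so I must check that \cref{prop:DistanceProperties} still holds at the tuple level.

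My first attempt is to show that within a tuple all representatives have identical $p$-coordinates. If so, the argmax is irrelevant and tuple distance reduces to the vertex distance of any representative. The justification is that representatives are at mutual distance zero. Applying SOCC (b) to a cycle that passes through two representatives forces every $d_p$ between them to vanish, not only the total $d$.

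If that step fails, the fallback is to define $T_v$ on representatives and show the result is constant on tuples, again by the zero-cycle condition.
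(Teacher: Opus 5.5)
This is essentially the paper's proof: well-definedness comes from \cref{lem:TuplePathProp} together with connectivity of $H$, and the translation identity comes from additivity of $d_p$ under composition of tuple paths (\cref{prop:DistanceProperties}). The only difference is cosmetic: you compose $v\to w\to u$ and rearrange, while the paper reverses one path and composes $v\to u\to w$. However, the paper gives no argument for the existence of tuple paths beyond citing connectivity, and your extra justification for it does not work and should be dropped or replaced. Condition 2 of \cref{def:GenTensorAppendix} is a per-vertex statement and does not force the representatives of a tuple into common slices. For example, the $2\times 2$ hyper-tensor with row slices $[A,C],[B,C],[D,E]$ and column slices $[A,D],[B,D],[C,E]$ is a valid generalized tensor with the tuple $\{A,B\}$, yet $A$ and $B$ lie in no common $1$-cell, so a hyperedge leaving $A$ cannot be followed from $B$.
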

\begin{proof}
	By \cref{lem:TuplePathProp}, each such $T_v$ is well-defined. By the connectivity of $H$, $T_v$ is defined for every $u \in \tup(H)$ regardless of $v$. Next, for $v, w, u \in \tup(H)$, observe:
	\begin{equation*}
		\begin{aligned}
			T_v(u) - T_w(u) &= (d_{p_1}(v, u), ..., d_{p_{|P|}}(v, u)) - (d_{p_1}(w, u), ..., d_{p_{|P|}}(w, u))\\
			&= (d_{p_1}(v, u), ..., d_{p_{|P|}}(v, u)) + (d_{p_1}(u, w), ..., d_{p_{|P|}}(u, w))\\
			&= (d_{p_1}(v, w), ..., d_{p_{|P|}}(v, w)) = T_v(w)
		\end{aligned}
	\end{equation*}
	These equalities hold by \cref{lem:TuplePathProp} and \cref{prop:DistanceProperties}. 
\end{proof}

We now pause to observe a straightforward reformulation of the SOCCs.
\begin{lemma}
	If $s^3$ is a generalized tensor which satisfies: 
	\begin{equation*}
		\text{For each transversal } \{s^1_i\}_{i\in I} \sqsubset s^3, \text{ we have that } \bigg| \bigcap_{i=1}^{|s^3|} s^1_i \bigg| \leq 1
	\end{equation*}
	then all cycles on the PWO-HG $H = \big\langle \cX^0|_{s^3}, \cX^1|_{s^3}, \cX^2|_{s^3} \big\rangle$ have zero distance.
	\label{cor:TupSOCC=>SOCC}
\end{lemma}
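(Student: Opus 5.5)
The plan is to show that, under the stated injectivity hypothesis, every tuple of $H$ is a single vertex. Once that holds, tuple paths and ordinary vertex paths coincide, and the SOCC condition (b) of \cref{def:GenTensorAppendix} (zero $p$-distance on tuple cycles) transfers directly to ordinary cycles. Summing over $p$ with \cref{prop:DistanceProperties}(1) then gives $d(\gamma)=\sum_{p}d_p(\gamma)=0$ for every cycle $\gamma$.

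\textbf{Step 1: tuples are singletons.} Let $u\neq w$ be vertices with $d(u,w)=0$ in the sense used to define tuples. By condition 2 of \cref{def:GenTensorAppendix}, pick transversals $\{s^1_i\}$ and $\{r^1_i\}$ whose intersections contain $u$ and $w$, respectively. I will argue that for each part $p\in s^3$ the $1$-cells $s^1_p$ and $r^1_p$ may be taken equal. The idea is to use a minimal path from $u$ to $w$, together with the fact that the $p$-distance records the displacement along mode $p$. Zero total displacement should force $u$ and $w$ onto the same slices in every part.

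Granting this, the transversal $\{s^1_i\}$ has intersection containing both $u$ and $w$, so its intersection has cardinality at least $2$. This contradicts the hypothesis. Hence every tuple is a singleton $\{v\}$, and $\tup(H)$ is in bijection with $V$.

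\textbf{Step 2: cycles are tuple cycles.} Any vertex cycle $[v_1,e_1,\dots,v_n=v_1]$ becomes a tuple path $[\{v_1\},e_1,\dots,\{v_n\}]$. The defining requirement is trivially met, since each endpoint tuple has one element and the path itself uses the listed edges. The tuple distance, an argmax over singletons, equals the ordinary distance, and likewise for $d_p$. By SOCC (b), $d_p(\gamma)=0$ for every $p$, and \cref{prop:DistanceProperties}(1) gives $d(\gamma)=0$.

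\textbf{Main obstacle.} The hard part is Step 1. The tuple definition uses $d(a,b)$, the signed distance of a minimal-length path, which only says that total signed displacement vanishes. It does not immediately say that each $d_p$ vanishes, or that $u$ and $w$ share a $1$-cell in each part. My first attempt will use that each vertex lies in exactly one $1$-cell per part along its transversal. I will then try to show that a minimal path between $u$ and $w$ has nonzero length unless they agree on slices in every mode, because moving within a $p$-edge changes only the $p$-coordinate.

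If this fails, I will fall back on a more modest reading: \emph{tuple} means vertices sharing all slice memberships, i.e. lying in a common transversal intersection. Under that reading Step 1 is immediate from the hypothesis, and only Step 2 is needed.
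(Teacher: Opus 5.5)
Your route is the paper's: show that tuples are singletons, so every vertex cycle is a tuple cycle; then condition (b) of \cref{def:GenTensorAppendix} gives $d_p(\gamma)=0$ for each part $p$, and \cref{prop:DistanceProperties}(1) gives $d(\gamma)=0$. Step~2 is fine, reading the distance of a tuple path as the distance along that path, as the paper does in proving \cref{lem:TuplePathProp}. It is what the paper compresses into ``paths and tuple paths coincide.'' The paper disposes of Step~1 in one sentence: by the hypothesis, every path between distinct vertices has non-zero distance.

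Your primary plan for Step~1 cannot work, and neither can that sentence read literally.

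\textbf{Zero total distance.} Take the injective $2\times 2$ matrix $\big\{\{[A,B],[C,D]\},\{[A,C],[B,D]\}\big\}$, which satisfies the hypothesis. Both minimal paths from $B$ to $C$ (through $A$ or through $D$) have signed distance $-1+1=0$, so $d(B,C)=0$ although $B$ and $C$ share no $1$-cell. Zero total displacement therefore does not force common slices. Showing that the path has nonzero \emph{length} is beside the point, since tuples are defined by signed distance.

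\textbf{Zero distance in every part.} Requiring every $d_p$ to vanish does not rescue the argument either. You cannot treat a vertex as sitting in a single $1$-cell per part, because $1$-cells of one part may overlap (cf.\ \eqref{eq:NonCanonicalRep}). Take the parts $\{[A,B],[C,D],[C,E]\}$ and $\{[A,C],[B,D],[B,E]\}$. With $A=(1,1)$, $B=(1,2)$, $C=(2,1)$, $D=E=(2,2)$, each list advances one coordinate by one, so the SOCCs hold. All nine transversal intersections have at most one element. Yet $D\to C\to E$ has zero displacement in both parts while $D$ and $E$ share no $1$-cell.

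So your fallback is not a retreat. Defining a tuple as a set of vertices lying in a common transversal intersection is the reading under which Step~1 actually follows from the hypothesis, immediately via condition 2 of \cref{def:GenTensorAppendix}. It is also what the paper's one-line proof tacitly relies on. Commit to it and drop the first attempt; with that, your argument is complete.
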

\begin{proof}
	By the hypothesis, any path starting from any $v \in \cX^0|_{s^3}$ and ending at any $w \neq v$ has non-zero distance. So, all tuples are singletons, meaning that paths and tuple paths coincide.
\end{proof}
For brevity, we will refer to the mutual hypothesis of \cref{cor:TupSOCC=>SOCC,thrm:GenTensors<=>MDAs} as the \textit{non-hyper condition}. The stronger version of the second SOCC given in \cref{cor:TupSOCC=>SOCC} is much easier to work with, and will be useful in the proof of \cref{thrm:1-SMs=>Tensors}. We refer to it as the \textit{strong slice ordering compatibility condition}.

\paragraph{On the Rank Complexity of Generalized Tensors.} The powerful tools of this hypergraphical formulation are not free. Indeed, partitioned and weakly ordered hypergraphs are, in general, rank $5$ objects (see \cref{fig:GenTenNaive}). This naturally begs the question: how can we tell if a rank $3$ cell defines a PWO-HG? The complete answer to this question is given in \cref{sec:EncodingGenTensors}, but, to summarize: PWO-HGs which satisfy the SOCCs already admit rank $4$ representations as opposed to their general PWO-HG counterparts. To address the remaining rank discrepancy, we will show that a natural notion of equivalence of generalized tensors leads to ``nicer" forms which admit rank $3$ representations.

\subsubsection{\label{sec:GenTenIsos_Apdx}Isomorphisms of Generalized Tensors}
To derive properties of generalized tensors, it is sometimes necessary to consider their ``canonical" representatives. This in turn necessitates the notion of isomorphisms between generalized tensors. We define this as follows:
\begin{definition}
	An \textit{isomorphism} of generalized tensors $s^3_1$ and $s^3_2$ is a bijective map $f: \cX^0|_{s^3_1} \leftrightarrow \cX^0|_{s^3_2}$, and an injective map $g: \cX^2|_{s^3_1} \rightarrow \cX^2|_{s^3_2}$ which preserve all path distances. That is, for each pair of elements $a, b \in \cX^0|_{s^3_1}$ and each path $\gamma_{a,b}$, the following holds:
	\begin{equation*}
		d_p(\gamma_{a,b}) = d_{g(p)}(\gamma_{f(a), f(b)}), \ \forall p \in s^3_1
	\end{equation*}
\end{definition}
It is important to notice that isomorphisms of generalized tensors assert no explicit requirements on $\cX^1|_{s^3_1}$ or $\cX^1_{s^3_2}$. Rather, the compatibility of the $1$-cells is determined by the preservation of path distances. An \textit{equivalence} of generalized tensors is an isomorphism such that $f(\cdot)$ is the identity map on a common base set. That is, $\cX^0|_{s^3_1} = \cX^0_{s^3_2}$, and $f: \cX^0|_{s^3_1} \rightarrow \cX^0|_{s^3_2}$ is given by $f(x) = x$. Two tensors are called isomorphic (equivalent) if there exists an isomorphism (equivalence) between them. A demonstrative example of equivalent generalized tensors is given below:
\begin{equation*}
	\begin{aligned}
		\cX^0_1 = \cX^0_2 &= \{A, B, C, D\}\\
		\cX^1_1 &= \big\{ [A, B], [C, D], [A, C], [B, D] \big\}\\
		\cX^1_2 &= \big\{ [A, B], [C, D], [B, D] \big\}\\
		\cX^2_1 &= \bigg\{ \big\{ [A, B], [C, D] \big\}, \big\{ [A, C], [B, D] \big\} \bigg\} = s^3_1\\
		\cX^2_2 &= \bigg\{ \big\{ [A, B], [C, D] \big\}, \big\{ [A], [C], [B, D] \big\} \bigg\} = s^3_2
	\end{aligned}
\end{equation*}
The equivalence is given by the identity map on $\cX^0$ and the map associating $2$-cells based on the order in which they are written above. It is evident that the omission of the $[A, C]$ $1$-cell from $s^3_2$ does not disconnect the underlying hypergraph. Similarly, all $p$-distances are identical across the two tensors. This example of a ``missing" $1$-cell leads naturally to the notion of a maximal generalized tensor.
\begin{definition}
	A \textit{maximal} generalized tensor $s^3$ is a generalized tensor which, for every $a, b \in s^3$, satisfies:
	\begin{equation*}
		\begin{aligned}
			\textit{1. } & d(\gamma_{a, b}) = 0 \implies a, b \in s^1 \in s^2_p \in s^3, \forall s^2_p\\
			\textit{2. } & |d(\gamma_{a,b})| = 1 \implies \exists s^2_p \text{ such that } a, b \in s^1 \in s^2_p \in s^3
		\end{aligned}
	\end{equation*}
\end{definition}
Given any generalized tensor, it is straightforward to produce a maximal tensor which is equivalent to it. Such maximal tensors are called \textit{maximal representatives} of the original tensor. The existence of maximal representatives is described in the following statement:
\begin{lemma}
	Let $s^3$ be a generalized tensor. There exists a maximal generalized tensor $s^3_m$ which is equivalent to $s^3$.
	\label{prop:MaximalGenTensors}
\end{lemma}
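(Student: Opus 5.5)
We construct the maximal representative by adding $1$-cells to $s^3$ without changing $\cX^0|_{s^3}$ or any $p$-distance, and then check that the result is still a generalized tensor and is equivalent to $s^3$ via the identity on elements and the obvious correspondence of $2$-cells.

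We use the coordinate system of \cref{lem:CoTensors}. Fix an origin tuple $v_0 \in \tup(H)$ and write $T := T_{v_0}$. Every element $a \in \cX^0|_{s^3}$ lies in some tuple. For each part $p \in s^3$ and each vector $c \in \mathbb{Z}^{|P|}$ with the $p$-coordinate deleted, define a new $1$-cell $e_{p,c}$ as follows. Its members are all elements $a$ whose tuple coordinates agree with $c$ off the $p$-coordinate. It is weakly ordered by the $p$-coordinate of $T$, with equal coordinates treated as incomparable. Let $s^2_{p,m}$ consist of the $1$-cells of $p$ together with all nonempty $e_{p,c}$, and let $s^3_m := \{s^2_{p,m}\}_{p}$. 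Since we only add cells, conditions 1 and 2 of \cref{def:GenTensorAppendix} are inherited. Condition 1 is stated for $\subsetneq$-maximal $1$-cells, so if we want, we may discard old $1$-cells contained in new ones.

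Next we check that distances are preserved and that the SOCCs still hold. By construction, a step inside a new cell $e_{p,c}$ between $a$ and $b$ has distance $T(b)_p - T(a)_p$ in part $p$ and zero in every other part. This requires the index function of $e_{p,c}$ to agree with the $p$-coordinate up to translation. That agreement holds only if the $p$-coordinates occurring on $e_{p,c}$ form an interval of integers. We arrange this by splitting $e_{p,c}$ into maximal runs of consecutive coordinates. Old steps already satisfy the same formula by \cref{lem:CoTensors}. Hence every path in the enlarged hypergraph has $p$-distance equal to $T(\text{end})_p - T(\text{start})_p$. It follows that all tuple cycles have zero $p$-distance, and connectivity is inherited. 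The old paths are still present and have the same distances, so the identity map is an equivalence.

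Finally we verify maximality. Suppose $d(a,b)=0$. By \cref{lem:TuplePathProp}, and since $H$ is connected, $a$ and $b$ have equal coordinates $T(a)=T(b)$. Therefore they lie in a common $e_{p,c}$ for every $p$, with equal index. Now suppose $|d(a,b)| = 1$. Reading this as unit displacement in a single part $p$, the elements $a$ and $b$ share all coordinates except a $p$-coordinate differing by one. They then lie in the same run of $e_{p,c}$.

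The main obstacle is the interaction between the hyper-tensor situation and the run construction. Tuples need not partition $V$, so an element might receive two tuple coordinates, and $T$ may fail to be well defined on elements. I would first show that each element lies in a unique tuple in the sense relevant to coordinates, namely that all tuples containing $a$ share the same $T$-value, using zero-distance paths. I would then define $T(a)$ through any such tuple. The other delicate point is that maximality condition 2 may be meant for total distance $|d| = 1$ rather than single-part displacement. In that case I would restrict attention to pairs that differ in exactly one part, and argue that the remaining pairs, which differ in several parts with cancelling displacements, are handled by the intended reading of the definition.
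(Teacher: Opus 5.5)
Your proof is correct, with ``maximal'' read as explained below, but it takes a different route from the paper.

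The paper adds only the cells that maximality demands. For each coincident pair $a,b$ it puts the single-class cell $[\{a,b\}]$ into every mode. For each unit-displaced pair it puts $[a,b]$ or $[b,a]$ into the one mode carrying the displacement. It then simply asserts that no cycle of nonzero distance is created.

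You instead take the global potential $T$ from \cref{lem:CoTensors} and add whole coordinate lines, cut into runs of consecutive coordinates. This gives an actual proof of the step the paper only asserts: every old or new step changes $T$ by exactly its index difference in a single coordinate, so distances telescope, and the SOCCs and the equivalence follow at once. Your representative also has its lines already merged, so it is close to the canonical one of \cref{thrm:CanonicalRepsExist}. The cost is the run-splitting for jagged gaps, which the paper's two-element cells avoid by design, since their index differences are $0$ or $\pm1$ by construction. Making the paper's ``no new nonzero cycles'' claim rigorous would, however, need essentially your potential argument.

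On your two flagged points. First, do not try to accommodate pairs with cancelling displacements, because under the literal reading $d=\sum_p d_p$ the lemma is false. The anti-diagonal entries of a $2\times 2$ array have total distance $0$, and the entries $(1,2)$ and $(3,1)$ of a $3\times 2$ array have total distance $1$. Putting either pair into a common $1$-cell creates a cycle with nonzero $p$-distance for some $p$. So fix at the outset that $d(a,b)=0$ means $d_p(a,b)=0$ for every $p$, and that $|d(a,b)|=1$ means a unit displacement in exactly one part. Your condition-1 step already uses this reading silently. The paper's rule~2, which speaks of \emph{the} $2$-cell with $|d_{s^2}(\gamma_{a,b})|=1$, presupposes it as well.

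Second, your claim that all tuples through an element share one $T$-value is still unproven. The paper's proof does not address this either: it states its rules in terms of element-level $d(\gamma_{a,b})$ and so tacitly assumes element-level path-independence. Your argument is complete whenever that holds, e.g.\ under the strong SOCC.
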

\begin{proof}
	Starting from a copy of $s^3_1$, we construct a maximal generalized tensor $s^3_2$ with the following rules:
	\begin{equation*}
		\begin{aligned}
			\text{1. Whenever } a,b \in \cX^0|_{s^3_1} \text{ satisfy: }& d(\gamma_{a, b}) = 0 \text{,}\\
			&\text{add } [\{a, b\}] \text{ to each } s^2 \in s^3\\
			\text{2. Whenever } a,b \in \cX^0|_{s^3_1} \text{ satisfy: }& |d(\gamma_{a, b})| = 1 \text{,}\\
			&\text{add } [a, b] \text{ to } s^2 \text{ if } d_{s^2}(\gamma_{a,b}) = 1 \text{, otherwise, add } [b, a] \text{ to } s^2,\\
			&\text{where $s^2$ is the $2$-cell with } |d_{s^2}(\gamma_{a,b})| = 1.
		\end{aligned}
	\end{equation*}
	The generalized tensor $s^3_2$ produced by this construction is maximal because the added $1$-cells ensure the required implications. Furthermore, as none of the added $1$-cells introduce cycles of non-zero distance, $s^3_2$ preserves the path distances of $s^3_1$, making the two tensors equivalent.
\end{proof}

There is a second way in which generalized tensors can be ``degenerate". Namely, $1$-cells may overlap, as demonstrated by the below example:
\begin{equation}
	\begin{aligned}
		\cX^0_1 = \cX^0_2 &= \{A, B, C, D, E, F\}\\
		\cX^1_1 &= \big\{ [A, B, C], [D, E, F], [A, D], [B, E], [C, F] \big\}\\
		\cX^1_2 &= \big\{ [A, B], [B, C], [D, E, F], [A, D], [B, E], [C, F] \big\}\\
		\cX^2_1 &= \bigg\{ \big\{ [A, B, C], [D, E, F] \big\}, \big\{ [A, D], [B, E], [C, F] \big\} \bigg\} = s^3_1\\
		\cX^2_2 &= \bigg\{ \big\{ [A, B], [B, C], [D, E, F] \big\}, \big\{ [A, D], [B, E], [C, F] \big\} \bigg\} = s^3_2
	\end{aligned}
	\label{eq:NonCanonicalRep}
\end{equation}
$s^3_1$ and $s^3_2$ are clearly equivalent. These extraneous $1$-cells allow us to make precise this notion of ``canonical" representatives.
\begin{definition}
	A \textit{canonical representative} of a generalized tensor $s^3$ is a maximal representative with the least number of weakly ordered hyper-edges.
	\label{def:CanonicalRep}
\end{definition}
It is straightforward to see that canonical representatives always exist, as by definition their hyper-edge sets are $\subseteq$-minimal elements of $\cP(\cP(V))$ of minimal cardinality. Moreover, if the strong slice ordering compatibility condition holds, they can be explicitly constructed by first building a maximal representative with \cref{prop:MaximalGenTensors} and then merging $1$-cells whenever applicable, as described in the following statement:
\begin{theorem}
	Let $s^3$ be a generalized tensor which satisfies the strong SOCCs. There exists a procedure to produce a canonical representative $s^3_c$ for $s^3$. Furthermore, $s^3_c$ is unique up to permutations of its $2$-cells.
	\label{thrm:CanonicalRepsExist}
\end{theorem}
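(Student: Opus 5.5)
The plan is to build $s^3_c$ in two stages and then read off uniqueness from the coordinates supplied by \cref{lem:CoTensors}. First, apply \cref{prop:MaximalGenTensors} to obtain a maximal representative $s^3_m$ equivalent to $s^3$. Under the strong SOCC every tuple is a singleton (\cref{cor:TupSOCC=>SOCC}), so fixing an origin $v$ gives an injective coordinate map $T_v:\cX^0|_{s^3}\to\mathbb{Z}^{|P|}$; injectivity holds because two distinct elements with the same coordinates would lie at distance zero, which is excluded. By maximality, for each $p\in P$ and each pair $a,b$ whose coordinates differ by $\pm 1$ in the $p$-th entry only, some $1$-cell of the $2$-cell $p$ contains both $a$ and $b$, ordered consistently with $d_p$.

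Second, perform the merging step. Call two $1$-cells $e,e'$ in the same part $p$ \emph{mergeable} if they share an element and their union is totally ordered by the $p$-coordinate with all other coordinates constant. In that case, replace them by the single list ordered by $T_v$'s $p$-entry, as in the example \cref{eq:NonCanonicalRep}. Repeat until no further merges are possible. Each merge lowers the number of hyper-edges, so the procedure terminates. Each merge also preserves every signed $p$-distance, because the indices inside the merged list agree with the $p$-coordinate differences. Together with \cref{lem:TuplePathProp}, this shows the result is equivalent to $s^3$, and it is still maximal since adjacency pairs remain covered. I then need to identify the terminal object explicitly: for each $p$ and each fixed value of the remaining coordinates, it has exactly one $1$-cell, namely the maximal run of consecutive $p$-coordinates. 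Call these the \emph{lines} of the tensor. In other words, the $1$-cells become the connected components of the $p$-adjacency graph inside each coordinate fibre.

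Third, prove minimality and uniqueness. Let $R$ be any maximal representative equivalent to $s^3$. For each line $\ell$ and each consecutive pair in $\ell$, maximality forces a $1$-cell of $R$ containing that pair. Any single $1$-cell has constant non-$p$ coordinates and, since it is a list with indices matching coordinate differences, consecutive $p$-coordinates. So every $1$-cell of $R$ lies inside one line, and each line needs at least one $1$-cell. Hence $|E(R)|\ge$ the number of lines, with equality exactly when $R$'s $1$-cells are precisely the lines. This shows the merged tensor is canonical and that any canonical representative has the same $1$-cells. The only remaining freedom is which $2$-cell is labelled by which coordinate direction, which gives uniqueness up to permuting $2$-cells. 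A change of origin only translates coordinates by \cref{lem:CoTensors}, so it does not affect the lines.

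I expect the main obstacle to be the claim that a single $1$-cell cannot skip coordinates, i.e.\ that its elements occupy a consecutive block of $p$-values. Jagged tensors could allow gaps. Here I would argue directly from the definition of signed distance: adjacent entries of a list have index difference $1$, so the $p$-coordinates inside any $1$-cell are consecutive. Any missing value would have to correspond to an element that is absent, which would break the line into separate components. This is consistent with defining the lines as connected components rather than as full fibres.
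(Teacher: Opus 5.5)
You invoke \cref{cor:TupSOCC=>SOCC} in the wrong direction. That lemma says the non-hyper (transversal) condition forces tuples to be singletons, and hence all cycles to have zero distance. It is this \emph{conclusion} that the paper calls the strong SOCC, and it is all the theorem assumes. The converse fails, as the paper itself notes when it remarks that some hyper-tensors satisfy the strong SOCC. For example, take the $2$-cells $\{[\{A,B\},C],[D,E]\}$ (rows) and $\{[\{A,B\},D],[C,E]\}$ (columns). Every step's $p$-distance equals a coordinate difference, so all cycles have zero $p$-distance in both parts, yet $A\neq B$ receive identical coordinates. So injectivity of $T_v$ has no basis in the hypothesis: ``distance zero is excluded'' is exactly what the strong SOCC does not give you. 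The objects you build on injectivity also break in this case. These are merges whose union is ``totally ordered by the $p$-coordinate'' and lines defined as ``maximal runs of consecutive $p$-coordinates.'' Here the canonical $1$-cells must be genuine weak orders in which tied elements sit together; maximality condition 1 puts $A$ and $B$ in a common $1$-cell of every $2$-cell. As written, your argument proves the theorem only for non-hyper tensors.

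The repair keeps your structure. Work with weak orders throughout. Within each fibre of the non-$p$ coordinates, define the $p$-lines as the connected components of the graph joining elements whose $p$-coordinates differ by at most $1$, ties included. Your key observation survives: the index function of a finite weak order is onto $\{1,\dots,L\}$. So every $1$-cell of a representative has constant non-$p$ coordinates and occupies a consecutive block of $p$-values (possibly several elements per value), and hence lies in one line. Derive ``each line contains at least one hyper-edge'' from the transversal axiom of \cref{def:GenTensor}, not from consecutive pairs, which say nothing about singleton lines. Maximality conditions 1 and 2, applied to tied and to adjacent pairs respectively, are what guarantee that your merge procedure leaves exactly one hyper-edge per line. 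With these changes, the counting bound, canonicity, and uniqueness up to relabelling the $2$-cells follow as you describe.

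Once repaired, your route differs from the paper's. The paper merges overlapping maximal $1$-cells by extending an index function from a shared element, and proves uniqueness by contradiction between two canonical representatives. Your global-coordinate count establishes minimum cardinality directly, whereas the paper only argues that no single $1$-cell can be removed.
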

\begin{proof}
	Let $s^3$ be a generalized tensor, and let $s^3_m$ be a maximal representative for $s^3$. We will construct a canonical representative of $s^3$ by combining overlapping $1$-cells from each $s^2 \in s^3_m$. That is, we are interested in cases when:
	\begin{equation*}
		\begin{aligned}
			s^1_i, s^1_j \in s^2 \in s^3_m \text{ satisfy: }& \big( \nexists s^1_k \text{ with } s^1_i \subsetneq s^1_k \text{ or } s^1_j \subsetneq s^1_k \big) \ \wedge \ \big( s^1_i \cap s^1_j \neq \emptyset \big)\\
		\end{aligned}
	\end{equation*}
	We recall that such $\subsetneq$-maximal $1$-cells correspond to overlapping hyper-edges of the associated PWO-HG.
	Claim: if there are no such $s^1_i, s^1_j \in s^2 \in s^3_m$, then $s^3_m$ is already a canonical representative for $s^3$.\\
	We will show that no $1$-cell can be removed from $s^3_m$. To see this, recall that each $0$-cell of a generalized tensor must be contained in the intersection of a transversal of the partition of $\subsetneq$-maximal $1$-cells. In particular, this means that the claim's hypothesis implies that each $2$-cell must itself be a partition of the $0$-cells. It is then clear that no $1$-cell can removed from $s^3_m$, as otherwise some $2$-cell would no longer cover the $0$-cells. This proves the claim.
	
	We now assume that there exist some overlapping $\subsetneq$-maximal $s^1_i, s^1_j \in s^2 \in s^3_m$ and consequently we must produce a procedure for merging the cells. So, fix $b \in s^1_i \cap s^1_j$ and let $index_i(\cdot): s^1_i \rightarrow \big \langle \{1, 2, ..., |s^1_i|\}, < \big \rangle$ be the index function for $s^1_i$. We extend $index_i(\cdot)$ to a new function $index()$ defined on $s^1_i \cup s^1_j$ by the following rule:
	\begin{equation*}
		index(a) = \begin{cases}
			index_i(a), \ a \in s^1_i\\
			index_i(b) + d_{s^1_j}(\gamma_{b, a}), \text{ otherwise}
		\end{cases}
	\end{equation*}
	Claim: $index(\cdot)$ is a well-defined function on $s^1$.\\
	Suppose not. The only way $index(\cdot)$ could assign two different values to some $a \in s^1$ is if $a \in s^1_i \cap s^1_j$ and the two cases of the definition disagree. This means that $index_i(a) \neq index_i(b) + d_{s^1_j}(\gamma_{b, a})$. So, $index_i(a) - index_i(b) \neq d_{s^1_j}(\gamma_{b, a})$. But, $index_i(a) - index_i(b) = d_{s^1_i}(\gamma_{b, a})$, meaning that $a$ and $b$ have paths of different distances within the same $2$-cell. This contradicts the strong SOCC, proving the claim.
	
	Next, define a relation on $s^1 := s^1_i \cup s^1_j$ based on $index(\cdot)$, that is:
	\begin{equation*}
		\forall a, c \in s^1, a <_{s^1} c \iff index(a) < index(c)
	\end{equation*}
	Claim: $<_{s^1}$ is a strict weak order.\\
	By virtue of being defined by a function to the integers with $<$, $<_{s^1}$ is automatically transitive and has a transitive incomparability relation. We must now verify that $<_{s^1}$ is irreflexive and asymmetric. So, assume for the sake of contradiction that $<_{s^1}$ is not asymmetric. Then, we can find some $a, b \in s^1$ with $a <_{s^1} b$ and $b <_{s^1} a$. By construction, this means that $index(a) < index(b)$ and $index(b) < index(a)$. This is impossible because $index(\cdot)$ is well-defined. An analogous argument shows that $<_{s^1}$ is irreflexive. This proves the claim.
	
	Claim: Replacing $s^1_i$ and $s^1_j$ with $s^1$ does not change any path distances on the PWO-HG associated to $s^3$.\\
	Let $\gamma_{a, c}$ be a path between $a, c \in s^1$. By construction, if $a, c \in s^1_i$, then $d_{s^1}(\gamma_{a, c}) = d_{s^1_i}(\gamma_{a, c})$. Similarly, if $a, c \in s^1_j$, $d_{s^1}(\gamma_{a, c}) = d_{s^1_j}(\gamma_{a, c})$. So, we now assume that $a \in s^1_i$ and $c \in s^1_j$. Because $b \in s^1_i \cap s^1_j$, we can decompose $\gamma_{a, c}$ into two paths $\gamma_{a, c} = \gamma_{b, c} \circ \gamma_{a, b}$. Because $a, b \in s^1_i$, $d_{s^1}(\gamma_{a, b}) = d_{s^1_i}(\gamma_{a, b})$. Because $b, c \in s^1_j$, $d_{s^1}(\gamma_{b, c}) = d_{s^1_j}(\gamma_{b, c})$. So, $d_{s^1}(\gamma_{a, c}) = d_{s^1_i}(\gamma_{a, b}) + d_{s^1_j}(\gamma_{b, c})$. We conclude that the generalized tensor formed from $s^3_m$ by deleting $s^1_i$ and $s^1_j$ and then adding $s^1$ is equivalent to $s^3_m$. This shows that this process produces a canonical representative $s^3_c$ for $s^3$. This proves the claim.
	
	We must now show that $s^3_c$ is unique up to mode permutations. So, let $s^3_{c'}$ be a distinct canonical representative for $s^3_m$. Because equivalence of generalized tensors is transitive, $s^3_c$ and $s^3_{c'}$ are equivalent so they cannot differ in their $0$-cells. Let $g$ be the map between the $2$-cells of $s^3_c$ and $s^3_{c'}$. By a previous argument, the $2$-cells of each representative must partition the $0$-cells. As we need only prove uniqueness up to permutations of $2$-cells, there must exist some $s^2_c \in s^3_c$ such that $s^2_{c'} = g(s^2_c) \neq s^2_c$. In order for $s^2_c$ and $s^2_{c'}$ to be distinct partitions of the same set, there must be $0$-cells $a, b$ which belongs to the same $s^1 \in s^2_c$ and such that $a \in s^1_1 \in s^2_{c'}$ and $b \in s^1_2 \in s^2_{c'}$. By connectivity, let $\gamma_{a, b}$ be a path between $a$ and $b$ in $s^3_{c'}$. There are two possibilities: either $d_{s^2}(\gamma_{a, b}) \neq 0$ for some $s^2 \neq s^2_{c'}$, or $d_{s^2}(\gamma_{a, b}) = 0$ $\forall s^2 \neq s^2_{c'}$. The first case is impossible because there is a direct path between $a$ and $b$ in $s^3_c$ which only passes through $s^2_c$, so this would violate the fact that $s^3_c$ and $s^3_{c'}$ are equivalent. The second case decomposes into two more cases. If $|d_{s^2_{c'}}(\gamma_{a, b})| \leq 1$, then by the maximiality of $s^3_{c'}$, $a, b$ would belong to the same $1$-cell in $s^2_{c'}$ which is a contradiction. So, we now assume that $|d_{s^2_{c'}}(\gamma_{a, b})| > 1$ and $d_{s^2}(\gamma_{a, b}) = 0$ for all $s^2 \neq s^2_{c'}$. Because $s^3_c$ and $s^3_{c'}$ are equivalent, this means that there must exist some $0$-cells $x_1, x_2, ..., x_n$ such that $[a]_{s^1} <_{s^1} [x_1]_{s^1} <_{s^1} [x_2]_{s^1} ... <_{s^1} [x_n]_{s^1} <_{s^1} [b]_{s^1}$. By the maximality of $s^3_{c'}$, this means that $x_1 \in s^1_1$ and $x_n \in s^1_2$. Analogously, $x_2 \in s^1_1$ and $x_{n-1} \in s^1_2$, and so on. We can continue this process until we reach some $i$ such that $d_{s^1}(\gamma_{x_i, x_{i+1}}) = 1$ and $x_i \in s^1_1$ and $x_{i+1} \in s^1_2$. By the previous argument, this is impossible.
	
	We conclude that $s^3_c$ is unique up to mode permutations.
\end{proof}

Canonical representatives are useful for reducing the rank complexity of the HCC encodings of generalized tensors.

\subsubsection{\label{sec:GenTenHCCs_Apdx}HCC Encodings of Generalized Tensors\label{sec:EncodingGenTensors}}
\paragraph{Overview.} Taken as face value, \cref{def:GenTensorAppendix} is incompatible with the structure diagram of \cref{fig:RankDiagram}. Indeed, hyper-edges are by definition $1$-cells, but weak orders are collections of $2$-cells, and partitions always require an additional 2 ranks of cells. Therefore, directly translating a PWO-HG into a set hierarchy results in a rank $5$ HCC, as expressed in the structure diagram of \cref{fig:GenTenNaive}. It is therefore not obvious when generalized tensors can be represented with rank $3$ HCCs. In this section, we explain how this is possible.

\begin{figure}[t!]
	\begin{tikzpicture}
		\node at (3,3) (V) {Vertices};  \node at (7,3) {Base set $\mathcal X^0$};
		\node at (3,2) (EL) {Edges};  \node at (7,2) {$\mathcal X^1 \subset \mathcal P(\mathcal X^0)$};
		\node at (3,1) (E) {Ordered Pairs};  \node at (7,1) {$\mathcal X^2 \subset \mathcal P(\mathcal X^1)$};
		\node at (3,0) (OE) {Ordered Edges};   \node at (7,0) {$\mathcal X^3 \subset \mathcal P(\mathcal X^2)$};
		\node at (3,-1) (P) {Parts};   \node at (7,-1) {$\mathcal X^4 \subset \mathcal P(\mathcal X^3)$};
		\node at (3,-2) (T) {PWO-HGs};   \node at (7,-2) {$\mathcal X^5 \subset \mathcal P(\mathcal X^4)$};
		
		\draw[blue, very thick, <->] (V.south) -- (EL.north);
		\draw[black, very thick, ->] (EL.south) -- (E.north);
		\draw[black, very thick, ->] (E.south) -- (OE.north);
		\draw[black, very thick, ->] (OE.south) -- (P.north);
		\draw[black, very thick, ->] (P.south) -- (T.north);
		
		\begin{pgfonlayer}{background}
			\node[fit=(V), vfit=cyan!60!white] {};
			\node[fit=(EL), vfit=cyan!60!white] {};
			\node[fit=(E), vfit=cyan!60!white] {};
			\node[fit=(OE), vfit=cyan!60!white] {};
			\node[fit=(P), vfit=cyan!60!white] {};
			\node[fit=(T), vfit=cyan!60!white] {};
		\end{pgfonlayer}

		\node at (11,3) (V2) {Elements};
		\node at (11,2) (E2) {Ordered Slices};
		\node at (11,1) (M2) {Modes};
		\node at (11,0) (T2) {Tensors};
		\node at (11,-1) (Z) {};
		\node at (11,-2) (Z2) {};
		
		\draw[blue, very thick, <->] (V2.south) -- (E2.north);
		\draw[black, very thick, ->] (E2.south) -- (M2.north);
		\draw[black, very thick, ->] (M2.south) -- (T2.north);
		
		\begin{pgfonlayer}{background}
			\node[fit=(V2), vfit=cyan!60!white] {};
			\node[fit=(E2), vfit=cyan!60!white] {};
			\node[fit=(M2), vfit=cyan!60!white] {};
			\node[fit=(T2), vfit=cyan!60!white] {};
		\end{pgfonlayer}
	\end{tikzpicture}
	\centering
	\caption{Structure diagrams for the default formulation of PWO-HGs from \cref{def:PWOHGs} (left) and the reduced rank representations of their canonical representatives from \cref{def:CanonicalRep} (right).}
	\label{fig:GenTenNaive}
\end{figure}

\paragraph{The Problem.} As an example of the straightforward but rank-inefficient HCC encoding of partitioned and weakly ordered hypergraphs, consider the below structure:
\begin{center}
	\begin{tikzpicture}
		\draw[rounded corners=6pt, fill = orange!70!white, opacity = 0.6]
		(-2.5, -0.25)  -- 
		(-2.5, 0.25)  -- 
		(0, 0.5) -- 
		(2.5, 0.25)  -- 
		(2.5, -0.25)  -- 
		(0, -0.5) -- 
		node[midway, below, anchor=north] {} cycle;
		
		\draw[rounded corners=6pt, fill = cyan!70!white, opacity = 0.6]
		(-2.3, 0.5)  -- 
		(-1.7, 0.5)  -- 
		(-1.7, -1.75)  -- 
		(-2.3, -1.75)  -- 
		node[midway, below, anchor=north] {} cycle;
		
		\draw[rounded corners=6pt, fill = cyan!70!white, opacity = 0.6]
		(2.3, -0.5)  -- 
		(1.7, -0.5)  -- 
		(1.7, 1.75)  -- 
		(2.3, 1.75)  -- 
		node[midway, below, anchor=north] {} cycle;
		
		\node at (-2,0) (A) {$A$};
		\node at (-1,0) (A) {$<$};
		\node at (0,0.25) (B) {$B$};
		\node at (0,-0.25) (C) {$C$};
		\node at (1,0) (A) {$<$};
		\node at (2,0) (D) {$D$};
		\node at (-2,-0.65) (A) {\rotatebox[origin=c]{90}{$<$}};
		\node at (-2,-1.25) (E) {$E$};
		\node at (2,0.65) (A) {\rotatebox[origin=c]{90}{$<$}};
		\node at (2,1.25) (F) {$F$};
	\end{tikzpicture}
\end{center}
Directly translating \cref{def:PWOHGs} to an HCC, the above partitioned and ordered hypergraph leads to the following HCC:
\begin{equation*}
	\begin{aligned}
		\cX^0 &= \{A, B, C, D, E, F\}\\
		\cX^1 &= \big\{ \{A, E\}, \{A, B, C, D\}, \{D, F\}\big\} \cup (A, E) \cup (A, B) \cup (A, C) \cup (B, D) \cup (C, D) \cup (D, F)\\
		\cX^2 &= \Big\{ e_1 = \big\{ \{A, E\}\big\}, e_2 = \big\{ \{A, B, C, D\}\big\},\\
		&\qquad \qquad \qquad e_3 = \big\{ \{D, F\}\big\}, (A, E), (A, B), (A, C), (B, D), (C, D), (D, F) \Big\}\\
		\cX^3 &= \Big\{ E_1 = \big\{e_1, (A, E)\big\}, E_2 = \big\{e_2, (A, B), (A, C), (B, D), (C, D)\big\}, E_3 = \big\{e_3, (D, F)\big\} \Big\}\\
		\cX^4 &= H = \big\{ \{E_1, E_3\}, \{E_2\} \big\}\\
	\end{aligned}
\end{equation*}
Here we have used the shorthand $(A, B)$ to refer to the ordered pair of $A$ and $B$ which, expressed with the standard Kuratowski encoding, is equivalent to: $(A, B) = \big\{ \{A\}, \{A, B\} \big\}$. In contrast to the reduced rank encoding will we construct shortly, the default HCC encoding of PWO-HGs requires that edges be $2$-cells. This is necessary in order to explicitly associate orders to their edges. The edges themselves are easily distinguished from their orders by the fact that they are the only singleton $2$-cells. As we will see, the SOCCs eliminate the need for this explicit association of orders to edges.

It is important to observe that the literal interpretation of \cref{def:PWOHGs} is indeed the minimal rank possible for arbitrary partitioned and ordered hypergraphs. This is demonstrated in the next example.
\begin{center}
	\begin{tikzpicture}
		\draw[rounded corners=6pt]
		(-2.5, -0.3)  -- 
		(-2.5, 0.3)  -- 
		(2.5, 0.3)  -- 
		(2.5, -0.3)  -- 
		node[midway, below, anchor=north] {} cycle;
		
		\draw[rounded corners=6pt]
		(2.3, -0.5)  -- 
		(2.3, 1.5)  -- 
		(-2.3, 1.5)  -- 
		(-2.3, -0.5)  -- 
		(-1.7, -0.5)  -- 
		(-1.7, 1.0)  -- 
		(1.7, 1.0)  -- 
		(1.7, -0.5)  -- 
		node[midway, below, anchor=north] {} cycle;
		
		\node at (-2,0) (A) {$A$};
		\node at (-1,0) (A) {$<$};
		\node at (0,0) (B) {$B$};
		\node at (1,0) (A) {$<$};
		\node at (2,0) (C) {$C$};
		\node at (0,1.25) (A) {$>$};
	\end{tikzpicture}
\end{center}
The above example is a perfectly valid (trivially) partitioned and weakly ordered hypergraph, despite the fact that it is \textit{not} a generalized tensor as it does not satisfy any form of the slice ordering compatibility conditions. The corresponding HCC representation is given by:
\begin{equation*}
	\begin{aligned}
		\cX^0 &= \{A, B, C\}\\
		\cX^1 &= \big\{ \{A, B, C\}, \{A, C\} \big\} \cup (A, B) \cup (B, C) \cup (C, A) \cup (C, A)\\
		\cX^2 &= \Big\{ e_1 = \big\{ \{A, B, C\} \big\}, e_2 = \big\{ \{A, C\} \big\}, (A, B), (B, C), (A, C), (C, A) \Big\}\\
		\cX^3 &= \Big\{ E_1 = \big\{e_1, (A, B), (B, C), (A, C)\big\}, E_2 = \big\{e_2, (C, A)\big\} \Big\}\\
		\cX^4 &= H = \big\{ \{E_1, E_2\}\big\}\\
	\end{aligned}
\end{equation*}
$\cX^1$ is necessary as the $1$-cells are by definition the edges of the hypergraph. $\cX^2$ is necessary to unambiguously represent the ordered pairs as they cannot be determined from only $\cX^1$. To see this, consider the expansion:
\begin{equation*}
	\begin{aligned}
		\cX^1 &= \big\{ \{A, B, C\}, \{C, A\},\{A\}, \{A, B\}, \{B\}, \{B, C\}, \{C\}, \{A, C\} \big\}\\
	\end{aligned}
\end{equation*}
The problem is clear, it is impossible to determine if $B < A$ or $B \nless A$. This question is disambiguated by $\cX^2$. $\cX^3$ is necessary to explicitly match edges with orders, otherwise, it would be impossible to determine if $A < C$ or $C < A$ in $e_2$. $\cX^4$ and $\cX^5$ are necessary to define a partition of the ordered edges.

\paragraph{Lower Rank Encodings.} The default HCC representation for partitioned and ordered hypergraphs uses the standard encoding of ordered pairs and is a direct translation of \cref{def:PWOHGs} into a set hierarchical form. However, if the strong slice ordering compatibility conditions hold, some possible sources of ambiguity are removed, which allows for a more ``efficient" rank $4$ representation. Moreover, the canonical representatives of such PWO-HGs allow for an even more efficient rank $3$ encoding, as strict weak orders can be represented succinctly with an extended version of the standard Kuratowski encoding of ordered pairs. This in turn further simplifies the task of representing generalized tensors with HCCs. This compressed encoding leads to the following theorem.
\begin{theorem}
	The canonical representative of any partitioned and weakly ordered hypergraph which satisfies the strong slice ordering compatibility condition can be represented with a rank $3$ HCC.
	\label{thrm:PWOHG_Encoding}
\end{theorem}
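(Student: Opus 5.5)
The plan is to build the rank-$3$ encoding explicitly, replacing each weakly ordered hyper-edge by its chain of \emph{initial segments}. This extends the Kuratowski encoding $(A,B) = \{\{A\},\{A,B\}\}$. Let $H = \langle V, E, \{<_e\}_{e\in E}, P\rangle$ satisfy the strong SOCC, and let $s^3_c$ be its canonical representative, which exists by \cref{thrm:CanonicalRepsExist}. For a hyper-edge $e$ whose index function has length $L_e$, I will encode $<_e$ by the nested family
\begin{equation*}
\iota(e) = \big\{ I_k(e) \big\}_{k=1}^{L_e}, \qquad I_k(e) = \{ v \in e : index_e(v) \leq k \}.
\end{equation*}
This is a chain of $1$-cells whose top element is $e$ itself. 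The order is then recovered from $\iota(e)$ by setting $a <_e b$ iff some $I_k(e)$ contains $a$ but not $b$. Incomparable elements, meaning those with equal index, lie in exactly the same initial segments, so the weak order is recovered exactly and not just up to a linear extension. The encoding is then
\begin{equation*}
\cX^0 = V, \qquad \cX^1 = \bigcup_{e\in E}\iota(e), \qquad \cX^2 = \Big\{ s^2_p = \bigcup_{e\in p}\iota(e) \Big\}_{p\in P}, \qquad \cX^3 = \{ s^3 \} \text{ with } s^3 = \cX^2.
\end{equation*}
Every $k$-cell here is a set of $(k-1)$-cells, so this is a rank-$3$ HCC.

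The main work is to show the decoding is unambiguous, i.e.\ that $H$ can be reconstructed from $s^3$. I would do this in three steps.

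\textbf{Step 1: recover the edges.} Within a fixed $s^2_p$, the hyper-edges of $p$ should be exactly the $\subsetneq$-maximal $1$-cells of $s^2_p$. In the proof of \cref{thrm:CanonicalRepsExist} it was shown that, in a canonical representative, the maximal $1$-cells of each $2$-cell are pairwise disjoint and partition $V$. Hence each top element $e$ is maximal. Moreover, every other $1$-cell of $s^2_p$ is an initial segment of a unique maximal $1$-cell, namely the one it meets, by disjointness.

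\textbf{Step 2: recover the orders.} The non-maximal $1$-cells of $s^2_p$ split into chains indexed by the maximal ones. Each chain determines $<_e$ by the rule above.

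\textbf{Step 3: recover the partition.} Modes are distinguished simply because they are different $2$-cells. A given set may occur as a $1$-cell in several modes (for instance a singleton prefix), but this causes no ambiguity, because membership is always read inside a fixed $2$-cell.

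I expect the main obstacle to be the degenerate collisions that could make two distinct pieces of data encode to the same set. One case is two distinct modes $p\neq p'$ with $s^2_p = s^2_{p'}$; these would collapse to a single element of $s^3$. Another is an edge of length $1$, where $\iota(e)=\{e\}$ and the order carries no information, which is harmless. A third is an edge whose first class is the whole edge.

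For the first case I would argue from the strong SOCC together with maximality. If two modes carried identical ordered edges, then every path step of length $1$ would change both the $p$-distance and the $p'$-distance. Condition 2 of maximality would then force the two modes to coincide as modes. Hence this situation either cannot occur, or it can be handled by the ``unique up to permutations of $2$-cells'' clause of \cref{thrm:CanonicalRepsExist}.

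Finally, I would check that the decoded PWO-HG has the same $p$-distances as $H$. This holds by construction, since the index functions are recovered exactly. It follows that the rank-$3$ cell represents the canonical representative, which completes the proof.
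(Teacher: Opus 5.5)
Your proposal is correct and is essentially the paper's proof. The paper likewise replaces each weakly ordered edge by a Kuratowski-style nested chain; it uses the final segments $\bigcup_{j\ge i}$ of the $\sim_e$-classes where you use initial segments. It then recovers $E$ as the $\subsetneq$-maximal $1$-cells, $P$ from the $2$-cells, and each $<_e$ from its chain, and it rests on the same fact you invoke, that the edges within each mode of a canonical representative are pairwise disjoint.

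Your choice to read everything inside a fixed $2$-cell is actually more careful than the paper's decoding as literally written. There the count $|\{s^1\in\cX^1 : x\in s^1\subsetneq e\}|$ ranges over all of $\cX^1$, so it can pick up cells from other modes. For the $2\times 2$ matrix with $r$-edges $[A,B],[C,D]$ and $g$-edges $[A,C],[B,D]$, the final segment $\{C\}$ of the $g$-edge $\{A,C\}$ lies inside the $r$-edge $\{C,D\}$, which gives $C$ and $D$ the same index.

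Your worry about two modes collapsing to the same $2$-cell cannot arise, for a simpler reason than the maximality argument you sketch. Distinct parts of $P$ have disjoint, nonempty edge sets, and your Step 1 recovers the edges of $p$ as the maximal elements of $s^2_p$, so $s^2_p\neq s^2_{p'}$ whenever $p\neq p'$.
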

\begin{proof}
	Let $H = \big\langle V, E, \{<_e\}_{e \in E}, P \big\rangle$ be the canonical representative of a partitioned and weakly ordered hypergraph which has no cycle with non-zero $p$-distance, $\forall p \in P$. We construct a rank $3$ HCC from $H$ as follows:
	\begin{equation*}
		\begin{aligned}
			\cX^0 &:= V\\
			\cX^1 &:= \Bigg\{ \bigcup_{j = i}^{|e / \sim_e|} {}_j[v]_e \ : \ e \in E, \ i \in \big\{1, 2, ..., |e / \sim_e|\big\} \Bigg\}\\
			\cX^2 &:= \Bigg\{ \Big\{ \bigcup_{j = i}^{|e / \sim_e|} {}_j[v]_e \ : \ e \in p, \ i \in \big\{1, 2, ..., |e / \sim_e|\big\} \Big\} \ : \ p \in P \Bigg\}\\
		\end{aligned}
	\end{equation*}
	
	Claim 1: The original hypergraph $H$ can be determined from $\langle \cX^0, \cX^1, \cX^2 \rangle$.\\
	$\cX^1$ forms an expanded hypergraph $\mathcal{H} = \langle V, \cX^1 \rangle$ which, by construction, contains $H$ as a sub-hypergraph. Furthermore, each additional hyper-edge is a strict subset of an original hyper-edge, meaning that $E$ corresponds exactly to the $\subsetneq$-maximal $1$-cells of $\cX^1$. So the original partitioned hypergraph can be always be unambiguously reconstructed from $\langle \cX^0, \cX^1, \cX^2 \rangle$. This proves claim 1.
	
	Similarly, $P$ can be recovered by restricting $\cX^2$ to $\subsetneq$-maximal elements of $\cX^1$.
	
	To reconstruct the strict weak orders for each hyper-edge, we will construct a series of non-injective maps $index_e(\cdot): V \rightarrow \mathbb{N}$ based on each $\subsetneq$-maximal hyper-edge. Given some $e \in \cX^1$ for which $\nexists$ any $s^1$ with $e \subsetneq s^1$, define:
	\begin{equation*}
		\begin{aligned}
			index_e: \cX^0|_e &\rightarrow \mathbb{Z}\\
			x &\mapsto \bigg| \big\{ s^1 \in \cX^1 \ : \ x \in s^1 \subsetneq e \big\} \bigg|
		\end{aligned}
	\end{equation*}
	
	The weak orders are then given by these index functions.
	
	Claim 2: Each $index_e$ is well-defined and agrees with all index functions of the corresponding $<_e$.\\
	Because $H$ is a canonical representative, each $\subsetneq$-maximal $s^2 \in \cX^2$ is a partition of $\cX^0$. So, the $e$ can only overlap on $\subsetneq$-minimal sets. Therefore, each $\subsetneq$-chain induced by the unions used in the construction of $\cX^1$ is distinct apart from their bases. By construction, $\subsetneq$-minimal sets will only be counted once by $index_e(\cdot)$. This means that $index_e(\cdot)$ is completely determined by the strict subsets of the original hyper-edge of $H$. In turn, $index_e(\cdot)$ is completely determined by the equivalence classes of $\sim_e$, meaning that the image of $index_e(\cdot)$ is order isomorphic to any index function of $<_e$, which are well defined functions. This proves claim 2.
	
	Claims $1-2$ demonstrate that the original PWO-HG $H$ can be uniquely recovered from $\langle \cX^0, \cX^1, \cX^2 \rangle$.
\end{proof}

We can now be more precise about what it means for a rank $3$ HCC $s^3$ to form a partitioned and weakly ordered hypergraph. As in \cref{def:GenTensorAppendix}, $\cX^2|_{s^3}$ must partition the $\subsetneq$-maximal $1$-cells. The above construction shows that for $s^1_1, s^1_2 \in \cX^1|_{s^3}$, $s^1_1 \cap s^1_2$ must either be empty or $\subsetneq$-minimal in $\cX^1|_{s^2}$, for each $s^2 \in s^3$. If these conditions hold, a PWO-HG $H$ can be uniquely constructed from $s^3$, however, $H$ will in general not satisfy the SOCC. This is demonstrated by the following example:
\begin{equation*}
	\begin{aligned}
		\cX^0 &= \{A, B, C, D, E, F\}\\
		\cX^1 &= \big\{ \{A, B, C\}, \{B, C\}, \{C\}, \{D\}, \{F, E\}, \{E\}, \{A, F\}, \{F\}, \{B\}, \{C, D, E\}, \{D, E\} \big\}\\
		\cX^2 &= \Big\{ \big\{ \{A, B, C\}, \{B, C\}, \{C\}, \{D\}, \{F, E\}, \{E\} \big\}, \big\{ \{A, F\}, \{F\}, \{C, D, E\}, \{D, E\}, \{E\} \big\} \Big\}\\
	\end{aligned}
\end{equation*}
This HCC uniquely determines the below PWO-HG:
\begin{center}
	\begin{tikzpicture}
		\draw[rounded corners=6pt]
		(-0.5, -0.3)  -- 
		(-0.5, 0.3)  -- 
		(4.5, 0.3)  -- 
		(4.5, -0.3)  -- 
		node[midway, below, anchor=north] {} cycle;
		
		\draw[rounded corners=6pt]
		(-0.5, -2.3)  -- 
		(-0.5, -1.7)  -- 
		(4.5, -1.7)  -- 
		(4.5, -2.3)  -- 
		node[midway, below, anchor=north] {} cycle;
		
		\draw[rounded corners=6pt]
		(4.3, 0.5)  -- 
		(3.7, 0.5)  -- 
		(3.7, -2.5)  -- 
		(4.3, -2.5)  -- 
		node[midway, below, anchor=north] {} cycle;
		
		\draw[rounded corners=6pt]
		(0.3, 0.5)  -- 
		(-0.3, 0.5)  -- 
		(-0.3, -2.5)  -- 
		(0.3, -2.5)  -- 
		node[midway, below, anchor=north] {} cycle;
		
		\node at (0,0) (A) {$A$};
		\node at (1,0) (A) {$<$};
		\node at (2,0) (B) {$B$};
		\node at (3,0) (A) {$<$};
		\node at (4,0) (C) {$C$};
		\node at (4,-1) (D) {$D$};
		\node at (4,-2) (E) {$E$};
		\node at (0,-2) (F) {$F$};
		\node at (2,-2) (A) {$<$};
		\node at (0,-1) (A) {\rotatebox[]{90}{$<$}};
		\node at (4,-0.5) (A) {\rotatebox[]{90}{$<$}};
		\node at (4,-1.5) (A) {\rotatebox[]{90}{$<$}};
	\end{tikzpicture}
\end{center}
While this is indeed a valid PWO-HG, it fails to satisfy the SOCC. This demonstrates the necessity of each of the three requirements in \cref{def:GenTensorAppendix}.

\paragraph{The Strong SOCC and Hyper-Tensors.} The encoding given in \cref{thrm:PWOHG_Encoding} fails when the strong SOCC does not hold. This can occur for certain types of hyper-tensors, namely, those with overlapping tuples. In such situations, the same encoding can be applied to produce a rank-$4$ HCC by first explicitly collecting tuples together into $1$-cells. Alternatively, one could redefine the base set to be a set of tuples of elements instead, as this by definition causes both versions of the SOCCs to coincide. In any situation, the same $3$ ranks of cells used to construct slices, modes, and tensors are required in order to represent all generalized tensors. Interestingly, some hyper-tensors (in particular those who satisfy the strong SOCC) do not require the base set to be a collection of explicit tuples.

\subsubsection{\label{sec:GenTenDisc_Apdx}Benefits of the PWO-HG Construction.}
The foundation for generalized tensors built in the previous parts may seem like an unnecessarily complicated description of multidimensional arrays. However, in addition to naturally extending to jagged and hyper-tensors, the PWO-HG construction offers two key benefits. \textit{First}, it offers an important new way to conceptualize tensors, providing insight on how to extend the study of tensor operations into new directions. \textit{Second}, it is an \textit{intrinsic definition} of tensors which is also \textit{fixed rank}. Here, intrinsic means that the HCC encoding contains only the elements and no superfluous $0$-cells. Fixed rank means that the HCC encoding can express tensors of any order with a constant number of ranks. We now compare and contrast the PWO-HG construction of generalized tensors to traditional descriptions of multidimensional arrays. This will explain why a fixed rank and intrinsic definition of tensors is valuable.

\begin{wrapfigure}[13]{r}{0.475\textwidth}
	\centering
	\raisebox{0pt}[\dimexpr\height-0.25\baselineskip\relax]{\includegraphics[width=0.75\linewidth]{Figures/GenTensorEx.png}}
	\caption{Copy of \cref{fig:GenTensorExample}.}
	\label{fig:GenTenEx_Appendix}
\end{wrapfigure}
For the sake of uniform analysis, we focus our attention to generalized tensors which admit multidimensional array representations. As demonstrated in the previous sections, the PWO-HG definition requires a rank $3$ HCC to represent such objects. Furthermore, there are no restrictions on the order of the tensors represent-able in this way. So it is a fixed rank $3$ construction of multidimensional arrays. Throughout, let us consider the $2 \times 2 \times 2$ example from \cref{fig:GenTensorExample} (copied here for convenience). We recall the PWO-HG based encoding here:
\begin{equation*}
	\begin{aligned}
		s^3 = \Big\{r &= \big\{[A,B], [C,D], [E,F], [G,H]\big\},\\
		g &= \big\{[A,C], [B,D], [E,G], [F,H]\big\},\\
		b &= \big\{[A,E], [B,F], [C,G], [D,H]\big\} \Big\}\\
	\end{aligned}
\end{equation*}

\begin{figure}
	\begin{tikzpicture}
		\node at (1.5,3) (V) {Vertices}; \node at (4.5,3) (I) {Indices};  \node at (7,3) {Base set $\mathcal X^0$};
		\node at (1.5,1.5) (OC) {Order Componenets}; \node at (4.5,1.5) (VI) {V vs. I}; \node at (7,1.5) {$\mathcal X^1 \subset \mathcal P(\mathcal X^0)$};
		\node at (1.5,0) (FC) {Function Components}; \node at (4.5,0) (P) {Parts}; \node at (7,0) {$\mathcal X^2 \subset \mathcal P(\mathcal X^1)$};
		\node at (3,-1.5) (MDAs) {Multidimensional Arrays};   \node at (7,-1.5) {$\mathcal X^3 \subset \mathcal P(\mathcal X^2)$};
		\node at (7,-3) {$\mathcal X^4 \subset \mathcal P(\mathcal X^3)$};
		
		\draw[black, very thick, ->] (V.south) -- (OC.north);
		\draw[blue, very thick, <->] ([xshift=-5]I.south) -- ([xshift=10]OC.north);
		\draw[black, very thick, ->] (I.south) -- (VI.north);
		\draw[black, very thick, ->] (OC.south) -- (FC.north);
		\draw[black, very thick, ->] (VI.south) -- (P.north);
		\draw[black, very thick, ->] (FC.south) -- ([xshift=-5]MDAs.north);
		\draw[black, very thick, ->] (P.south) -- ([xshift=5]MDAs.north);
		
		\begin{pgfonlayer}{background}
			\node[fit=(V), vfit=cyan!60!white] {};
			\node[fit=(I), vfit=cyan!60!white] {};
			\node[fit=(VI), vfit=cyan!60!white] {};
			\node[fit=(OC), vfit=cyan!60!white] {};
			\node[fit=(P), vfit=cyan!60!white] {};
			\node[fit=(FC), vfit=cyan!60!white] {};
			\node[fit=(MDAs), vfit=cyan!60!white] {};
		\end{pgfonlayer}

		\node at (11,3) (V2) {Elements};
		\node at (11,1.5) (E2) {Vectors};
		\node at (11,0) (M2) {Matrices};
		\node at (11,-1.5) (T2) {Order $3$ Tensors};
		\node at (11,-3) (Z) {Order $4$ Tensors};
		\node at (11,-4.5) (Z2) {...};
		
		\draw[black, very thick, ->] (V2.south) -- (E2.north);
		\draw[black, very thick, ->] (E2.south) -- (M2.north);
		\draw[black, very thick, ->] (M2.south) -- (T2.north);
		\draw[black, very thick, ->] (T2.south) -- (Z.north);
		\draw[black, very thick, ->] (Z.south) -- (Z2.north);
		
		\begin{pgfonlayer}{background}
			\node[fit=(V2), vfit=cyan!60!white] {};
			\node[fit=(E2), vfit=cyan!60!white] {};
			\node[fit=(M2), vfit=cyan!60!white] {};
			\node[fit=(T2), vfit=cyan!60!white] {};
			\node[fit=(Z), vfit=cyan!60!white] {};
			\node[fit=(Z2), vfit=cyan!60!white] {};
		\end{pgfonlayer}
	\end{tikzpicture}
	\centering
	\caption{Structure diagrams for the functional definition of multidimensional arrays (left) and the list-of-lists definition (right). ``V vs. I'' is shorthand for $1$-cells which are required to distinguish between the vertices and index variables used to construct MDAs.}
	\label{fig:StructDiags}
\end{figure}

Next, we consider the standard functional definition of multidimensional array given in \cref{def:Tensor}. To represent such a function $T$ as an HCC, we start from the set $\cX^0 := V \bigcup_i [[M_i]]$, where $V$ is the value set and $[[M_i]]$ is the $i^{th}$ mode. Moving to rank $1$ cells, we can separate out $V$ and the components necessary to assemble the ordered pairs of $\bigotimes_i [[M_i]]$ Note that as with the naive encoding of PWO-HGs, these ordered pair components are \textit{not} ordered pairs yet, for this we need $2$-cells. Moreover, at the same time as we collect these components into ordered pairs, we can associate to them their images under $T$. Finally, we must collect all these function components together into the parts of a partition of the index variables. This results in a rank $3$ encoding, as shown in the left structure diagram of \cref{fig:StructDiags}. Explicitly, the encoding for the $2 \times 2 \times 2$ MDA is:
\begin{equation*}
	\begin{aligned}
		\cX^0 &= \{ A, B, C, D, E, F, G, H, r_1, r_2, g_1, g_2, b_1, b_2\}\\
		\cX^1 &= \big\{ \{r_1\}, \{r_1, r_2\}, \{g_1\}, \{g_1, g_2\}, \{b_1\}, \{b_1, b_2\}, \{ A, B, C, D, E, F, G, H \}, \{A\}, \{B\}, ... \big\}\\
		\cX^2 &= \Big\{ \big\{ \{r_1\}, \{g_1\}, \{b_1\}, \{A\} \big\}, \big\{ \{r_1, r_2\}, \{g_1\}, \{b_1, b_2\}, \{F\} \big\}, ..., \big\{\{r_1, r_2\} \big\}, \big\{\{b_1, b_2\} \big\}, ... \Big\}
	\end{aligned}
\end{equation*}
The MDA is given by $s^3 = \cX^2$. We can see that this encoding, while also rank $3$, is not intrinsic because \textit{many} more elements needed to be added to the base set $\cX^0$, unlike the PWO-HG construction. The functional definition of multidimensional arrays has to explicitly separate the elements from their multi-indices. In contrast, the PWO-HG construction of generalized tensors is intrinsic and extracts multi-indices for the elements from their relationships to other elements.

Next, we consider the list-of-lists construction of multidimensional arrays. The idea of this encoding is to first form lists (vectors), then form lists of lists (matrices), lists of lists of lists (order $3$ tensors), etc. This is how tensors are stored in computer memory. Explicitly, the encoding for the $2 \times 2 \times 2$ MDA is:
\begin{equation*}
	\begin{aligned}
		\cX^0 &= \{ A, B, C, D, E, F, G, H\}\\
		\cX^1 &= \big\{ [A,B], [C,D], [E,F], [G,H]\big\}\\
		\cX^2 &= \Big\{ \big[ [A,B], [C,D] \big], \big[ [E,F], [G,H] \big]\Big\}\\
		\cX^3 &= \Bigg\{ \Big[ \big[ [A,B], [C,D] \big], \big[ [E,F], [G,H] \big]\Big] \Bigg\}
	\end{aligned}
\end{equation*}
This encoding is intrinsic like the PWO-HG construction, requiring no additional elements to be introduced into the base set. However, it is not constant rank; an additional rank of cells are required to encode each additional order of multidimensional arrays.

It is interesting to note that out of these three encodings of multidimensional arrays, only the PWO-HG construction introduced in this paper simultaneously requires no extra elements \textit{and} can express tensors of any order with rank $3$ cells. It is in this sense that the PWO-HG construction is the ``best of both worlds" with respect to the standard functional definition and the computationally useful list-of-lists definition.

\subsubsection{\label{sec:GenTenProofs_Apdx}Proofs of \cref{thrm:GenTensors<=>MDAs,lem:MMs=>NonInjMDAs}}
We have now constructed enough new machinery to prove \cref{thrm:GenTensors<=>MDAs}. For the convenience of the reader, we recall the statement (with a minor technical update) here:

\begin{theorem}
	All injective multidimensional arrays can be represented as generalized tensors. Moreover, if $s^3$ is a finite generalized tensor whose canonical representative also satisfies the following two conditions:
	\begin{equation*}
		\begin{aligned}
			&\textit{1. } \forall s^2_i \in s^3, \exists \text{ a positive constant } c_i \in \mathbb{N}_+ \text{ such that } ||s^1_j|| = c_i, \ \forall s^1 \in \cX^1|_{s^2_i},\\
			&\textit{2. } \text{For each transversal } \{s^1_i\}_{i\in I} \sqsubset s^3, \text{ we have that } \bigg| \bigcap_{i=1}^{|s^3|} s^1_i \bigg| \leq 1\\
		\end{aligned}
	\end{equation*}
	\vspace{-0.5em}
	then $s^3$ can be represented as an injective multidimensional array.
	\label{thrm:GenTensors<=>MDAs_Appendix}
\end{theorem}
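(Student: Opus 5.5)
The proof splits into two directions.

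\emph{From arrays to generalized tensors.} Given an injective array $T: [[M_1]]\times\cdots\times[[M_{\cO}]]\to V$, take $\cX^0 := \mathrm{Im}(T)$. For each mode $i$ and each fixed choice of the other indices, let $s^1$ be the $1$-slice
\[
[T(i_1,\dots,i_{i-1},1,i_{i+1},\dots), \ \dots, \ T(i_1,\dots,M_i,\dots)],
\]
ordered by the $i$-th index. This order is a linear order, hence a strict weak order. Let $s^2_i$ collect all $1$-slices along mode $i$, and set $s^3 := \{s^2_1,\dots,s^2_{\cO}\}$.

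The first two conditions of \cref{def:GenTensorAppendix} are then checked directly.
\begin{itemize}
\item Injectivity makes the $s^2_i$ pairwise disjoint sets of $1$-cells. No $1$-slice is a proper subset of another, so every $1$-cell is $\subsetneq$-maximal and the $s^2_i$ partition them.
\item The element $T(i_1,\dots,i_{\cO})$ lies in the intersection of the transversal formed by its $\cO$ slices through it.
\end{itemize}

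For the slice ordering compatibility conditions, observe that along any edge of mode $i$, the signed distance $d_{e}(v,w)$ equals the difference of the $i$-th coordinates of $T^{-1}(v)$ and $T^{-1}(w)$. Hence $d_{p_i}$ of any path telescopes to the difference of endpoint coordinates, so every cycle has zero $p$-distance. Connectivity follows because one can walk between any two multi-indices one coordinate at a time.

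\emph{From generalized tensors to arrays.} Let $s^3$ be a finite generalized tensor whose canonical representative (\cref{thrm:CanonicalRepsExist}) satisfies conditions 1 and 2.
\begin{itemize}
\item By condition 2 and \cref{cor:TupSOCC=>SOCC}, every tuple is a singleton and the strong SOCC holds.
\item Fix any $v_0\in\cX^0|_{s^3}$. \Cref{lem:CoTensors} gives a well-defined coordinate map $T_{v_0}:\cX^0|_{s^3}\to\mathbb{Z}^{|s^3|}$ with $u\mapsto (d_{p}(v_0,u))_{p\in s^3}$.
\item Translate by the coordinatewise minimum plus one, so all coordinates are $\ge 1$. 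Call the result $\phi$.
\item The candidate array is $T:=\phi^{-1}$. Two steps remain: $\phi$ is injective, and its image is exactly a box $\prod_i[[c_i]]$.
\end{itemize}

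\emph{Injectivity.} If $\phi(u)=\phi(w)$, then $d(u,w)=0$, so $u$ and $w$ lie in a common tuple. Tuples are singletons, so $u=w$.

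\emph{Image is a box.} This is the main obstacle. In the canonical representative, each $2$-cell $s^2_i$ partitions $\cX^0$ (shown in the proof of \cref{thrm:CanonicalRepsExist}). Each of its $1$-cells is a linear order; it is not merely weak, because two incomparable elements in the same $1$-cell would have equal coordinates and contradict injectivity. By condition 1, every such $1$-cell has length $c_i$.

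Maximality then shows that along a mode-$i$ $1$-cell, the $i$-th coordinate runs through $c_i$ consecutive integers while the other coordinates stay fixed. So $\phi(\cX^0)$ is a union of full axis-parallel segments of length $c_i$ in every direction. Such a set, when connected under unit steps, must be a product of intervals.

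To prove this, argue by induction on the number of modes. Project out mode $\cO$, so each fibre is a full segment. Then show that the offsets of these segments agree. Suppose two adjacent fibres had different offsets. A short tuple cycle through them, using edges of mode $j\neq\cO$, would have non-zero $p_{\cO}$-distance, because the mode-$j$ cells are full segments of length $c_j$ that must be translated copies. This contradicts the SOCC.

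Consequently $\phi$ is a bijection onto $\prod_i[[c_i]]$, and $T=\phi^{-1}$ is the required injective array. Its $1$-slices recover the $1$-cells of the canonical representative.
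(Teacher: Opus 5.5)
Your first direction matches the paper's Part 1. In the converse, the coordinate map from \cref{lem:CoTensors}, injectivity of $\phi$ via singleton tuples, and linearity of the orders are all sound, and this is a tidier setup than the paper's.

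The gap is the step ``image is a box,'' which carries all the content of the paper's Claims \#1 and \#2. Your mechanism cannot work. Once $\phi$ is well defined, $S:=\phi(\cX^0)$ sits in $\mathbb{Z}^{|s^3|}$ with honest coordinates, so every closed walk in $S$ has zero displacement in every mode, and the SOCC can no longer give a contradiction. Concretely, take the staircase $S=\{(x,y)\in\mathbb{Z}^2 : x+y\in\{0,1\}\}$. It is connected and every maximal row and column has exactly $2$ points. Adjacent columns have different offsets ($x=0$ covers $y\in\{0,1\}$, $x=1$ covers $y\in\{-1,0\}$), yet no cycle has non-zero distance. What rules this out is finiteness, which you never use at this step; the paper uses it through its infinite-path construction.

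Your intermediate claim is also false as stated. The set $\{0,1\}^2\cup(\{2,3\}\times\{-1,0\})$ is finite, connected, and a union of full length-$2$ segments in both directions, but it is not a box. You need the stronger fact that each mode-$i$ $1$-cell is a \emph{maximal} run of $S$ in direction $i$. Indeed, if an endpoint $x$ of such a cell had $x\pm e_i\in S$ ($e_i$ the $i$-th unit vector), maximality would put them in a common mode-$i$ $1$-cell, and canonical merging would produce a cell longer than $c_i$. Finally, ``each fibre is a full segment'' is unjustified, since a fibre may contain several segments. The projection also need not satisfy the lower-dimensional hypothesis, so the induction is not actually set up.

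With the maximal-run property in hand, a short extremal induction on the number $n$ of modes closes the gap.
\begin{enumerate}
\item Let $m=\min_{x\in S}x_n$; this is where finiteness enters. Let $C$ be a connected component of $\{x\in S: x_n=m\}$.
\item Runs of $C$ in directions $i<n$ are runs of $S$. By induction, $C=B'\times\{m\}$ for a box $B'$ with side lengths $c_1,\dots,c_{n-1}$.
\item By minimality of $m$, the direction-$n$ run through each point of $C$ is $\{m,\dots,m+c_n-1\}$. Hence $B:=B'\times\{m,\dots,m+c_n-1\}\subseteq S$.
\item Every direction-$i$ run through a point of $B$ already contains $c_i$ consecutive points of $B$, so it consists exactly of them. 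Thus no point of $S\setminus B$ is a unit neighbour of $B$.
\item Hyperedges map to runs of consecutive points, so $S$ is connected under unit steps. Therefore $S=B$.
\end{enumerate}
This would give a cleaner proof than the paper's infinite-path argument. Some global use of finiteness of this kind is unavoidable, though.
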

The ``minor technical update'' as compared to \cref{thrm:GenTensors<=>MDAs} is the restriction of hypotheses 1 to the canonical representative for $s^3$. This is necessary because hypothesis 1 (the \textit{equal length} condition) is meaningless in the presence of redundant slices (as shown in \cref{eq:NonCanonicalRep}).

We note that by \cref{cor:TupSOCC=>SOCC}, any generalized tensor satisfying hypothesis 2 must admit a canonical representative by \cref{thrm:CanonicalRepsExist}. So restricting hypothesis 1 to the canonical representative for $s^3$ is perfectly well-defined.

\noindent\textit{Proof.}
	There are two components to the proof. First, we will show how to construct a generalized tensor from an arbitrary injective multidimensional array. Second, we will use the properties of canonical representatives to show that generalized tensors satisfying hypothesis 1 must either be infinite or admit representations as multidimensional arrays.
	
	Throughout, let $H = \big\langle V, E, \{<_e\}_{e \in E}, P \big\rangle$ denote the PWO-HG associated to the generalized tensor $s^3$. 
	
	\paragraph{Part 1.}
	We first show that all injective multidimensional arrays can be represented as generalized tensors. Let $T: [[M_1]] \times [[M_2]] \times ... \times [[M_O]] \rightarrow V$ be an injective multidimensional array. Without loss of generality, assume $T$ is surjective (if not, simply restrict $V$ to the image of $T$). Denote by $v_{i_1, i_2, ..., i_O}$ the unique element of $V$ corresponding to the multi-index $(i_1, i_2, ..., i_O)$ under the function $T$. We construct a rank $3$ hierarchical combinatorial complex from $T$ as follows:
	\begin{equation*}
		\begin{aligned}
			\cX^0 &= \bigg\{ \{v\} \ : \ v \in V\bigg\}\\
			\cX^1 &= \bigg\{ \{ v_{1, k_2, ..., k_O} \}, \{ v_{1, k_2, ..., k_O}, v_{2, k_2, ..., k_O} \}, ..., \{ v_{1, k_2, ..., k_O}, v_{2, k_2, ..., k_O}, ..., v_{|M_1|, k2, ..., k_O} \},\\
			& \qquad \{ v_{k_1, 1, ..., k_O} \}, \{ v_{k_1, 1, ..., k_O}, v_{k_1, 2, ..., k_O} \}, ..., \{ v_{k_1, 1, ..., k_O}, v_{k_1, 2, ..., k_O}, ..., v_{k_1, |M_2|, ..., k_O} \},\\
			& \qquad \quad .\\
			& \qquad \quad .\\
			& \qquad \quad .\\
			& \qquad \{ v_{k_1, ..., k_{O-1}, 1} \}, \{ v_{k_1, ..., k_{O-1}, 1}, v_{k_1, ..., k_{O-1}, 2} \}, ..., \{ v_{k_1, ..., k_{O-1}, 1}, ..., v_{k_1, ..., k_{O-1}, |M_O|} \}\\
			& \qquad : \ \forall (k_1, k_2, ..., k_O) \in [[M_1]] \times [[M_2]] \times ... \times [[M_O]] \bigg\}\\
			\cX^2 &= s^3 = \bigg\{ \big\{ \{ v_{1, k_2, ..., k_O}, v_{2, k_2, ..., k_O}, ..., v_{|M_1|, k2, ..., k_O} \} \\
			& \qquad \qquad \qquad  : \ \forall (k_2, k_3, ..., k_O) \in [[M_2]] \times [[M_3]] \times ... \times [[M_O]] \big\},\\
			& \qquad \big\{ \{ v_{k_1, 1, ..., k_O}, v_{k_1, 2, ..., k_O}, ..., v_{k_1, |M_2|, ..., k_O} \} \\
			&\qquad \qquad \qquad : \ \forall (k_1, k_3, ..., k_O) \in [[M_1]] \times [[M_3]] \times ... \times [[M_O]] \big\},\\
			& \qquad \quad .\\
			& \qquad \quad .\\
			& \qquad \quad .\\
			& \qquad \big\{ \{ v_{k_1, ..., k_{O-1}, 1}, v_{k_1, ..., k_{O-1}, 2}, ..., v_{k_1, ..., k_{O-1}, |M_O|} \} \\
			&\qquad \qquad \qquad : \ \forall (k_1, k_2, ..., k_{O-1}) \in [[M_1]] \times [[M_2]] \times ... \times [[M_{O-1}]] \big\} \bigg\}
		\end{aligned}
	\end{equation*}
	Note that we have omitted the innermost braces around each singleton set of $\cX^0$ for legibility. Similarly, we have not explicitly encoded the order of the $2$-cells. This can be trivially accomplished with Kuratowski encoding. We now claim that $\cX^2$, when interpreted as a $3$-cell $s^3$, is indeed a generalized tensor. To verify this claim, we must check the following:
	\begin{enumerate}
		\item $\cX^2$ is a partition of the $\subsetneq$-maximal elements of $\cX^1$.
		\item Each $\{v\} \in \cX^0$ is contained in a transversal of this supposed partition.
		\item $\cX^1$ satisfies the slice ordering compatibility conditions.
	\end{enumerate}
	To show $1$, let $s^1 \in \cX^1$ be a $\subsetneq$-maximal $1$-cell. By construction, $s^1$ is a set of $v \in V$ which corresponds under $T$ to a set of multi-indices with $O-1$ of the entries fixed. Let $i$ denote the free index. As $\cX^2$ was constructed such that all $1$-cells with the $i^{th}$ index free belong to the same $2$-cell, we conclude there can be at most one $2$-cell $s^2$ which contains $s^1$. By construction, each $2$-cell contains all possible $1$-cells with the $i^{th}$ index free, so we conclude that there is indeed a $2$-cell which contains $s^1$. As $s^1$ was an arbitrary $\subsetneq$-maximal $1$-cell, we further conclude that property $1$ holds.
	
	To show $2$, let $v \in \cX^0$ be a $0$-cell. Let $[j_1, j_2, ..., j_O]$ denote the multi-index associated to $v$ by $T$. To show that $\{v\}$ is contained in a transversal of $\cX^2$, it suffices to show that an arbitrary $2$-cell contains some $1$-cell which contains $\{v\}$. So, let $s^2$ be an arbitrary $2$-cell. By construction, all $1$-cells contained in $s^2$ have the same $O-1$ indices fixed. So, let $s^1 \in s^2$ be the $1$-cell with fixed indices that agree with the corresponding $j_i$'s. Such a $1$-cell must exist because we constructed $\cX^2$ by forming all possible combinations of fixed indices. Then, as $s^1$ contains elements corresponding to all values of the remaining free index, it must contain $\{v\}$. We conclude that all $2$-cells contain some $1$-cell containing $\{v\}$ which in turn means there must exist a transversal of $\cX^2$ containing $\{v\}$.
	
	To show $3$, first observe that by construction, we have:
	\begin{equation*}
		d(v_{i_1, i_2, ..., i_O}, v_{j_1, j_2, ..., j_O}) = \sum_{n=1}^{O} j_n - i_n
	\end{equation*}
	Note that because $T$ is a multidimensional array, $s^3$ is not a hyper-tensor, meaning all tuples of $H$ are singletons. Because the identity vector field on $R^n$ is conservative, there are no cycles of non-zero distance on $H$. As no tuples are of size $ > 1$, there are no tuple cycles of non-zero distance on $H$.
	
	We conclude that the above construction does indeed produce a generalized tensor from $T$.
	
	\paragraph{Part 2.} We now show that if $s^3$ satisfies:
	\begin{equation*}
		\begin{aligned}
			&\textit{Hypothesis 1. } \forall s^2_i \in s^3, \exists \text{ a positive constant } c_i \in \mathbb{N}_+ \text{ such that } ||s^1_j|| = c_i, \ \forall s^1 \in \cX^1|_{s^2_i},\\
			&\textit{Hypothesis 2. } \text{For each transversal } \{s^1_i\}_{i\in I} \sqsubset s^3, \text{ we have that } \bigg| \bigcap_{i=1}^{|s^3|} s^1_i \bigg| \leq 1\\
		\end{aligned}
	\end{equation*}
	then $s^3$ can be represented as an injective multidimensional array.
	
	There are two key ingredients to the proof. First, we will show that $H$ satisfies a stronger type of connectivity. Second, we will show that $H$ has a zero element. Together with hypothesis 2, these facts are sufficient to completely determine a unique multidimensional array. Throughout, we will use \cref{cor:TupSOCC=>SOCC} to restrict our attention to paths/cycles instead of their tuple-variants.
	
	We start by handling the case when $|P| = 1$. Here, the only way for $H$ to be connected and to satisfy hypothesis \#1 is if there exists some $e \in E$ such that $e = V$. This is because $H$ is a canonical representative so the single $p \in P$ must partition $V$, so if there were disjoint edges of equal length $H$ would be disconnected. We can construct an order $1$ multidimensional array from $H$ by inverting the index function from $e$ to $\langle \{1,2,...,|e|\}, < \rangle$. This is possible because of hypothesis 2.
	
	From here on, we assume that $|P| \geq 2$.
	
	We pause to observe another important consequence of hypothesis \#1, namely, $|e|, ||e|| \geq 2$ for each $e \in E$. This must be the case because if some $e \in p \in P$ was such that $|e| = 1$, then $||e|| = 1$ so by the hypotheses, every edge in $p$ also has size $1$. This makes it impossible for any path to have non-zero $p$-distance, as no vertices are connected by any edge of $p$. So, $p \in P$ can be removed to produce an equivalent generalized tensor, contradicting the fact that $H$ is the canonical representative for $s^3$.
	
	\textbf{Claim \#1}: For each $a, b \in e \in p \in P$, and for each permutation $(p_1, p_2,...,p_{|P|})$ of $P$, there exists a path between $a$ and $b$ which intersects each $p_i$ at most once in exactly the order given by the permutation. For brevity, we will refer to this property as \textit{array connectivity}.
	
	\textit{Overview.} We will show that if this claim is false, then $V$ must necessarily be infinite. This shows that array connectivity holds for any finite $H$ satisfying hypothesis 1. We accomplish this by first finding an ``exterior corner'' of $H$ which will use to produce an infinite path in $H$.
	
	\textit{Proof.} For the sake of contradiction, assume that $H$ is not array connected. Then, there exists some permutation $\sigma_P = (p_1p_2p_3...p_{|P|})$ of $P$, such that some pair of vertices cannot be joined by a path which intersects each $p_i$ only once in the specified order. 
	Let $a \in V$. We construct a directed rooted tree $T$ as a sub-hypergraph of $H$ as follows:
	\begin{equation*}
		\begin{aligned}
			&L_0 := \{a\}\\
			&E_0 := \{(a, a)\}\\
			&\textit{For each $i \in (1, 2, ..., |P|)$}:\\
			& \qquad L_i := \{v \in V \ : \ \exists w \in L_{i - 1} \text{ such that } w, v \in e \in p_{i}\} \supseteq L_{i-1}\\
			& \qquad E_i = \{ (w, v) \in L_i \times L_i \ : \ w \in L_{i-1} \wedge v \notin L_{i-1}\}\\
			&V_T := \bigcup_{i = 0}^{|P|}L_i = L_{|P|}\\
			&E_T := \bigcup_{i = 0}^{|P|}E_i\\
		\end{aligned}
	\end{equation*}
	This construction produces a tree because if any two distinct paths starting at $r$ intersect, they would produce a cycle of non-zero $p$-distance for some $p \in P$ because each path only intersects each $p$ once. This is impossible because $H$ satisfies the SOCCs, so $T$ is acyclic.
	
	By construction, all paths in $T$ intersect each $p_i$ at most once. By the contradiction assumption, $\exists b \in V$ such that $b \notin V_T$, that is, $V \setminus V_T \neq \emptyset$. Without loss of generality, we may assume that $b$ is distance $1$ away from some $a' \in V_T$. This is because $H$ is connected, so if $b$ were farther away from $a'$ we could take the last vertex on a path from $b$ to $a'$ which is outside of $V_T$. By the maximality $H$, there exists some $e$ with $b, a' \in e$. So, let $J$ be the permutation index of this $b, a'$ edge, i.e., $b, a' \in e_J \in p_J$. By the construction of $T$, $a'$ cannot occur in $L_1$, otherwise $b \in V_T$. So, let $j$ be the smallest index such that $a' \in L_j$, and let $e_j \in p_j$ be the edge containing $a'$ in the $j^{th}$ part of $P$. The construction of $T$ ensures that $J > j$, as otherwise we would find that $b \in V_T$.
	
	Next, we show that WLOG, $a'$ is an endpoint of its containing $p_j$ edge, i.e., $a'$ is either $<_{e_j}$-maximal or $<_{e_j}$-minimal. To see this, suppose that $a'$ is not an endpoint. By a previous argument, $|e| \geq 2$ for each $e \in E$. So, let $c \in V$ be a $j$-neighbor of $b$, i.e., $c$ is a vertex such that $b, c \in e'_j \in p_j$. Importantly, $c \notin V_T$ because $b \notin V_T$. Without loss of generality, we can pick $c$ such that it is $j$-distance $1$ away from $b$. Next, consider the path $\gamma_{a', c} := [a', e_J, b, e_j, c]$, and let $\varsigma_j \in \{+1, -1\}$ be the $e_j$-distance between $b$ and $c$. Because $a'$ is not an endpoint of $e_j$, we can find a $c' \in V_T$ such that the path $\gamma_{c', c} := [c', e_j, a', e_J, b, e'_j, c]$ has $j$-distance zero. By the maximality of $H$, $c', c \in e \in E_J$. So $d$ and $c$ satisfy the same properties as $a'$ and $b$, i.e., $c' \in V_T$, $c \notin V_T$, and $|d_J(\gamma_{c', c})| = 1$. Because $e_j$ and $e'_j$ have the same length, we can repeat this construction to conclude that there must exist some choice of $a' \in V_T$ which is an endpoint of $e_j$.
	
	\begin{wrapfigure}[16]{r}{0.45\textwidth}
		\centering
		\raisebox{0pt}[\dimexpr\height-1.0\baselineskip\relax]{\includegraphics[width=0.45\textwidth]{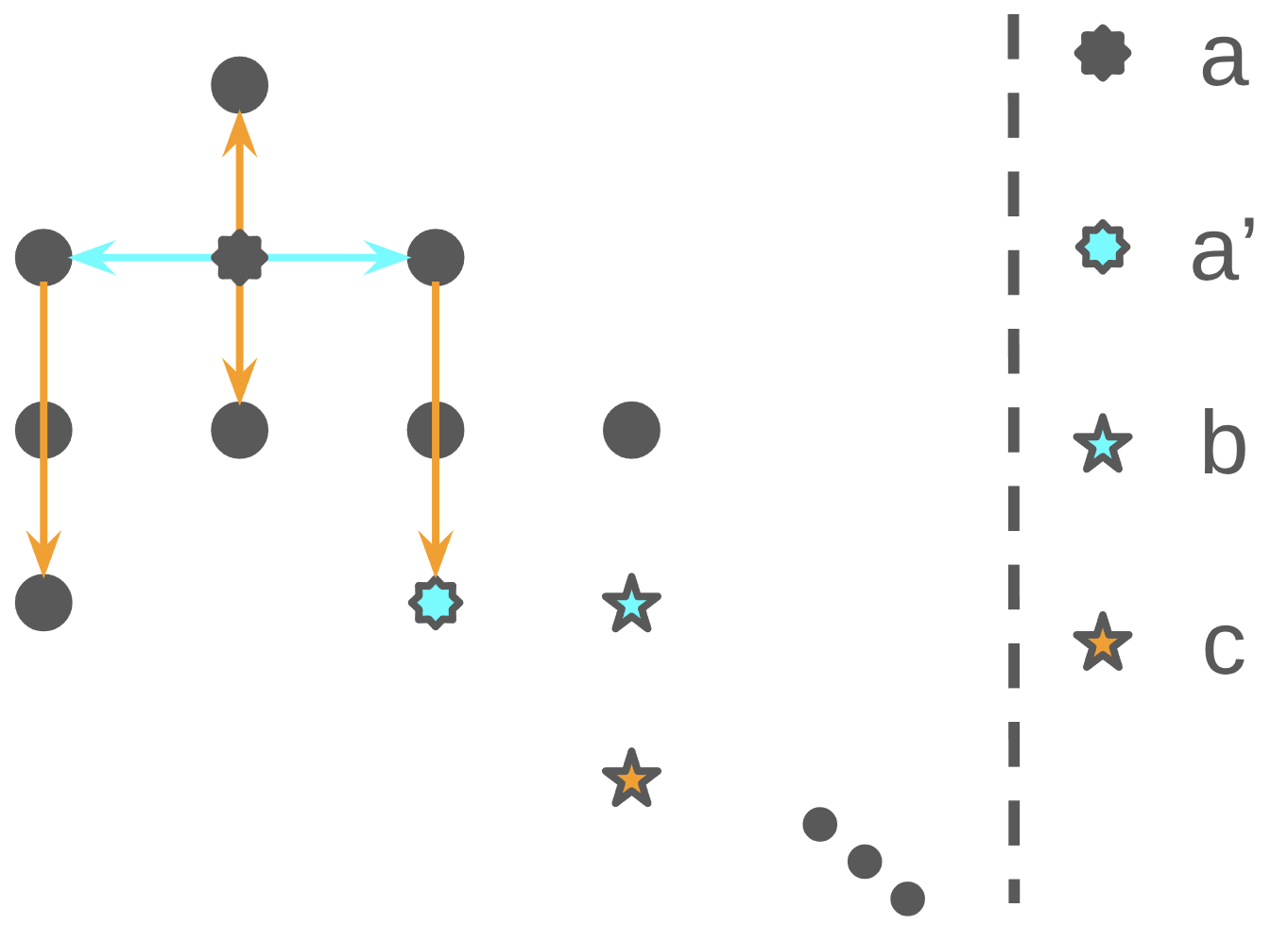}}\\
		\caption{Visualization of an example ``exterior'' corner. ({\color{cyan}$\bm{\rightarrow}$}) edges of $T$ added by $E_1$. ({\color{orange}$\bm{\rightarrow}$}) edges of $T$ added by $E_2$}
		\label{fig:InfiniteSequence}
	\end{wrapfigure}
	Next, we show that WLOG, $a'$ is an endpoint of $e_j$ \textit{and} $b$ is not a $e'_j$ endpoint of the same type, that is, $a'$ and $b$ cannot be simultaneously both maximal or both minimal with respect to $<_{e_j}$ and $<_{e'_j}$. To see this, observe that the equal length property and the maximality of $H$ show that if $a'$ and $b$ were endpoints of the same type, then every vertex of $e_j$ would be $J$-distance $1$ from a vertex of $e'_j$. This means that each $a' \in e_j$ has a corresponding $b \in e'_j$ which does not belong to $V_T$. Because this holds for the entirety of $e_j$, it holds in particular for the predecessor of $e_j$, that is, the vertex from some $e_{j-1}$ which is connected to each $a' \in e_j$ in $T$. So, we can take $a'$ to be this predecessor and replace $j$ with $j - 1$. This process cannot continue indefinitely, as $J > j$. So, we must eventually find some $j$ such that $a'$ is a $j$ endpoint and $b$ is not a $j$ endpoint of the same type.
	
	We are now ready to iteratively construct an infinite path on $H$. A recap of what we have built so far is provided in \cref{fig:InfiniteSequence}. We will use the fact that $c$ is ``off of'' the vertices covered by the tree $T$ to build an infinite sequence ``in the ({\color{darkgray}$\bullet \ \bullet \ \bullet$}) direction''.
	
	The above arguments show that we can find a $c$ such that $b, c \in e'_j \in p_j$ with $d_j(b, c) = \pm 1$ and $\nexists \ c' \in e_j$ with $d_j(\gamma_{a', c'}) = d_j(\gamma_{b, c})$. Therefore, $c$ is a $J$ endpoint pointing away from $a'$, i.e., every $J$ neighbor of $c$ lies in the same direction as $a', b \in e_J$. To put it another way, if $\varsigma_J = d_J(a', b)$, then $\forall d$ such that $c, d \in e'_J \in p_J$, $\varsigma_J \times d_J(c, d) \geq 0$. We now argue that $d$ can be chosen such that $d$ is a $j$ endpoint pointing in the direction of $\varsigma_j$. To see this, suppose that for every $d$, we could find some $f$ such that $d, f \in e''_j$ and $d_j(d, f) = -d_j(b, c)$. Then, the path $[b, e'_j, c, e'_J, d, e''_j, f]$ has only non-zero $J$-distance. So, by iteratively setting $|d_{e'_J}(c, d)| = 1, 2, ...$, the canonicity of $H$ implies that such $f$ must belong to the same $J$ edge as $a'$ and $b$. This immediately violates hypothesis 1 as we have found two $J$ edges of different lengths, namely, $||e_J|| = 1 + ||e'_J||$. So, we have found vertices $b, c, d$ such that $d_J(a', b) \times d_J(c, d) > 0$ and $d_j(b, c) \times d_j(d, f) > 0$, for all $f$ which are $j$ neighbors of $d$. If there are multiple such $d$, select the one which minimizes $J$-distance from $c$. This process can be continued indefinitely. Indeed, consider a path $(v_1, e^j_1, v_2, e^J_1, v_3, ..., v_{2N-1}, e^j_{2N-1}, v_{2N})$ such that $v_{2N}$ is a $J$-endpoint of minimal distance from $v_{2N-1}$. Suppose it is not possible to select a $v_{2N+1}$ which is a $j$-endpoint pointing away from $v_{2N-1}$. Then each $J$ neighbor of $v_{2N}$ admits a step towards $v_{2N-1}$, meaning either there was another $J$-endpoint closer to $v_{2N-1}$ or hypothesis \#1 fails. As this argument is symmetric with respect to $j$ and $J$, this shows that we can build an infinite path wherein the steps alternate between two parts of the partition $P$ and each step is in the same direction as all others from the part. By the SOCCs, such a path cannot contain a cycle, so we conclude that it must pass through infinitely many vertices. In turn, this contradicts the finiteness of $H$, so we deduce that $H$ must be array connected.
	
	\textbf{Claim \#2}: There exists some $z \in V$ such that for each $e$ with $z \in e \in E$, $z$ is $<_e$-minimal.\\
	\textit{Proof.} First, we show that the same type of infinite sequence used to prove claim \#1 can be constructed to show another important property of $H$. This property is: if $a, b \in e_i \in p_i$, then $index_j(a) = index_j(b)$ for all $e_j \in p_j$, $j \neq i$. Suppose this was not true. Then, observe that WLOG $a$ and $b$ are $i$-distance $1$ apart. If not, then iteratively move from $a$ to $b$ in $e_i$. If at any point in this process another $index_j$ differing vertex is encountered, then stop there. Eventually this process arrives at $b$, so $|d_i(a, b)| = 1$. Take $j$ such that $index_j(a) \neq index_j(b)$. Because the $i$ edges containing $a$ and $b$ are of equal length, we can shift $a$ and $b$ to find the same condition used above, i.e., $a$ a $j$-endpoint and $b$ not a $j$-endpoint of the same type. From here, $|V| \geq \aleph_0$ can be established with the same path construction.
	
	To prove claim \#2, consider the tree $T$ constructed in the proof of claim \#1. If $a$ is not minimal with respect to the single hyper-edge $e_1$ which defines $L_1$, move to a $<_{e_1}$-minimal element of $L_1$. Call this element $z_1$. Continuing the tree construction from $z_1$ must still cover $V$ by claim \#1. So, consider the $p_2$ edge with $z_1 \in e_2$. If $z_1$ is not $<_{e_2}$-minimal, move to a $z_2$ which is. By the previous argument, $index_{e_1}(z_1) = index_{e_1}(z_2)$, so $z_2$ is minimal with respect to both $<_{e_1}$ and $<_{e_1}$. Continuing this process for each $p_i$ produces an element which is $<_{e_1}$-minimal for all $i$.
	
	We can now construct a multidimensional array from $H$. Let $z$ be the zero element guaranteed by claim \#2. We will define a function $M$ on $I := [[c_1]] \times [[c_2]] \times ... \times [[c_{|P|}]]$. So, let $(i_1, i_2, ..., i_{|P|}) \in I$. Starting from $z \in e_1 \in p_1$, find an element in $e_1$ which has $1$-distance $i_1$ from $z$. Such an element must exist by hypothesis \#1. Call this element $z_1$. Next, consider the $p_2$ edge $e_2$ which contains $z_1$. By a previous argument, each $z_2 \in e_2$ has the same $1$-index as $z_1$. By construction, $z_1$ is minimal in $e_2$, so we can find a $z_2$ which is $i_2$ away from $z_1$. By repeating this process for each $p_i$, we can find an element whose $p$-distances to $z$ are given exactly by $(i_1, i_2, ..., i_{|P|})$. By hypothesis \#2, this element is unique. By claim \#1, there are no $v \in V$ which are not covered by some choice of $(i_1, i_2, ..., i_{|P|})$. So, we have constructed an injective multidimensional array from $H$ that is unique up to mode permutations.
	
	\label{prf:MDAs->GenTensors}
\null\hfill $\Box$

\begin{remark}
	In the construction of part 1 of \cref{thrm:GenTensors<=>MDAs_Appendix}, the $\subsetneq$-maximal elements of $\cX^1$ are exactly the $1$-slices of $T$, and $\cX^2$ is exactly the set of all $1$-slice spaces of $T$.
\end{remark}

\begin{remark}
	Part 2 of the proof of \cref{thrm:GenTensors<=>MDAs_Appendix} reveals that the cartesian product is inevitable in the sense that any finite PWO-HG which admits well-defined multi-indices and has equal length hyper-edges must give rise to a cartesian product. This is interesting because the equal length hypothesis is a local property, whereas the cartesian product is a global statement about all vertices of the PWO-HG. In this sense, the equal length condition ``rigidifies'' the underlying hypergraph.
\end{remark}
\clearpage
We conclude this part with the proof of \cref{lem:MMs=>NonInjMDAs}. Recall the statement:
\begin{lemma}
	Any multidimensional array can be represented as a mode map between two tensors.
\end{lemma}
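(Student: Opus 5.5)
The plan is to separate the two roles a possibly non-injective multidimensional array $T: [[M_1]] \times \dots \times [[M_{\cO}]] \rightarrow V$ plays: its index structure, and its value assignment. I will encode the index structure as an injective array and the value assignment as the function $f$ of a mode map.

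\textbf{Step 1: the index tensor $t_1$.} Introduce a fresh base set of formal elements $\{x_{i_1, \dots, i_{\cO}}\}$, one for each multi-index, and define the injective array $\tilde T(i_1,\dots,i_{\cO}) = x_{i_1,\dots,i_{\cO}}$. By Part 1 of \cref{thrm:GenTensors<=>MDAs_Appendix}, $\tilde T$ is represented by a generalized tensor $t_1$. Its $2$-cells are the $1$-slice spaces of $\tilde T$, one per mode.

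\textbf{Step 2: the value tensor $t_2$ and the map $f$.} Take $t_2$ to be an order-one generalized tensor on the base set $\mathrm{Im}(T) \subseteq V$. It is a single $2$-cell containing one $1$-cell, equipped with an arbitrary linear order. This is the $|P|=1$ case discussed in Part 2, and it trivially satisfies the SOCCs. Define $f: \cX^0|_{t_1} \rightarrow \cX^0|_{t_2}$ by $f(x_{i_1,\dots,i_{\cO}}) = T(i_1,\dots,i_{\cO})$. Realize $f$ as a $2$-cell of the HCC by Kuratowski-encoding its graph: each pair $(x, f(x))$ is a set of $1$-cells, so the graph is a collection of $1$-cells lying in $\cX^2$. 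Then set $s^4 = \{t_1, t_2\} \cup \{f\text{-carrying cells}\}$. Condition 1 of \cref{def:MMs} asks for a \emph{unique} such $2$-cell, so I will take the mode map to have no MMCs. Then condition 2 holds vacuously. Optionally, I can add the single MMC $s^3 = t_1 \cup t_2$, with $p_1 = t_1$ and $p_2 = t_2$. It is valid because the whole of $t_1$ is one $\cO$-slice and $t_2$ is one $1$-slice, so $f(\sigma) \subseteq \sigma_l$ holds trivially. This shows the construction is non-degenerate.

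\textbf{Step 3: recovery.} I will show that $T$ is recovered from $s^4$. Applying the reverse direction of \cref{thrm:GenTensors<=>MDAs_Appendix} to $t_1$ returns $\tilde T$ up to mode permutation, and composing with $f$ gives $T$ back, since $f \circ \tilde T = T$. Injectivity is not needed anywhere for $t_2$ or $f$, which is exactly why mode maps capture non-injective arrays.

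\textbf{Main obstacle.} The main difficulty is the set-hierarchical bookkeeping. I must make sure the Kuratowski encoding of $f$ places its cells at rank $2$ inside $s^4$ without colliding with the $2$-cells (modes) of $t_1$ and $t_2$. I must also make sure $f$ is the unique such $2$-cell, given that the elements of $t_1$ and $t_2$ are disjoint. I would handle this by choosing the formal elements $x_{\mathbf i}$ disjoint from $V$, and by noting that the mode $2$-cells consist of subsets within one tensor, whereas the graph of $f$ is the only $2$-cell whose $1$-cells straddle both tensors.
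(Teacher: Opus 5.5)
Your proposal is correct and follows essentially the same route as the paper's proof. The paper likewise uses an order-one tensor on the image of $T$, a second tensor built from an injective array on the same index set, $T$ itself as the function $f$, and the single MMC $t_1 \cup t_2$, whose one trivial slice maps into the one slice of the codomain. Your extra bookkeeping (fresh elements disjoint from $V$ so the graph of $f$ flattens to a single straddling $2$-cell, plus the explicit recovery step) is more careful than the paper. However, your ``no MMCs, condition 2 holds vacuously'' variant does not quite work as stated: any $3$-cell you add to $s^4$ to carry $f$ lies in $s^4 \setminus \{t_1, t_2\}$ and is therefore itself subject to condition 2. The paper's definition of mode maps also leaves this placement of $f$ unresolved.
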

\begin{proof}
	Let $T: [[M_1]] \times [[M_2]] \times ... \times [[M_{\cO}]] \rightarrow V$ be a multidimensional array. As before, we WLOG assume that $T$ is surjective. If $T$ is injective the result holds vacuously.
	
	We now assume that $T$ is non-injective. Create an order $1$ generalized tensor based on the image of $T$, i.e., a vector containing the elements of $V$. Create a second generalized tensor from any injective multidimensional array defined on $[[M_1]] \times [[M_2]] \times ... \times [[M_{\cO}]]$. Finally, express $T$ itself as a mode map from the second generalized tensor to the first. There is only a single mode map component which is the union of these tensors. By construction, the single trivial slice of the second tensor is a subset (in fact, equal to) the trivial slice of the first.
\end{proof}

\begin{remark}
	For the categorically inclined reader, we note that the proof of \cref{lem:MMs=>NonInjMDAs} demonstrates how mode maps can reduce to general functions between sets. In this sense, one may interpret the category of generalized tensors (with mode maps as morphisms) as a refinement of the category of sets which includes much additional structure.
\end{remark}

\clearpage

\subsection{Tensor Operations\label{sec:TensorOps}}
\paragraph{Overview.} The definition of tensor operation given in the main paper reveals how the tensor structure of an operation may be separated from the underlying operations on the tensors' value set. Additionally, it allows for jagged tensors to be used in operations of any complexity. In this part, we unpack the details surrounding this definition. We start by giving the complete definition of a tensor coupling. Then, we leverage our constructions to prove several useful statements about tensor operations. In particular, we give sufficient conditions for the \textit{arity decomposition} of tensor operations --- when a higher arity operation is equivalent to the composition of lower arity operations. We conclude with a collection of examples and discussions of connections to other frameworks.

Unlike the previous part where we constructed generalized tensors as abstract containers, in this section we assume that the base set of each generalized tensor is a collection of \textit{variables} valued in some set $\mathbb{F}$.

\subsubsection{\label{sec:TensorOps_Defs}Definition of Tensor Coupling.} We now provide the full definition of a tensor coupling. As stated in the main paper, the purpose of this definition is to generalize the rule that $m \times n_1$ matrices can only be multiplied with $n_2 \times p$ matrices when $n_1 = n_2$. This requires the notion of tensor length.
\begin{definition}
	Let $s^3$ be a generalized tensor and $s^2 \in s^3$. Then, $s^2$ has \textit{tensor length} $c$ when the maximum $s^2$-distance of any tuple path, on the PWO-HG associated to $s^3$, is $c$.
	\label{def:TensorLength}
\end{definition}
In the case when $s^3$ satisfies the hypotheses of \cref{thrm:GenTensors<=>MDAs_Appendix}, maximum $s^2$-distance coincides with the constant length of all hyper-edges of $s^2$. Equal tensor length is then exactly the familiar rule described above.

As articulated in the main paper, \cref{def:TensorLength} is necessary to enable the use of jagged tensors in tensor operations. As an example, consider the below example of ``matrix multiplication":
\begin{equation*}
	\begin{bmatrix}
		A & B\\
		C & 
	\end{bmatrix} \times
	\begin{bmatrix}
		E & F\\
		& H
	\end{bmatrix}
\end{equation*}
Clearly, the coupled modes contain hyper-edges ($1$-slices) of different lengths. However, the maximum $p$-distances of the modes agree, which is sufficient to guarantee that the broadcasting method of operation evaluation can produce a well-defined result. This result is given below:
\begin{equation*}
	\begin{bmatrix}
		AE + B & AF + BH\\
		CE & CF + H
	\end{bmatrix}
\end{equation*}

Contractions are simply distinguished couplings, that is, couplings which indicate explicitly that they are to be summed out. Technically, we accomplish this as follows:
\begin{definition}
	A \textit{tensor contraction} is a $3$-cell $s^3$ of an HCC $\cX$ such that $s^3 = C \cup \Big\{\big\{\{ \emptyset \}\big\}\Big\}$ where $C$ is a coupling.
\end{definition}
The purpose of the ``indicator $2$-cell" $\big\{\{ \emptyset \}\big\}$ is to ensure that contractions are distinguishable. This works because this set prevents any cell which contains it from satisfying the axioms of generalized tensors, mode map components, and couplings. As detailed in the next section, contractions are used to define slice-spaces of hyper-tensors during the computation of tensor operations.

\subsubsection{\label{sec:TensorOps_Eval}Evaluating Tensor Operations}
\paragraph{Overview.}
The hyper-tensor slice-spaces obtained from \cref{thrm:1-SMs=>Tensors} can be thought of as ``un-evaluated" versions of the final result tensors. Indeed, given a hyper-tensor slice-space, there are many possible ways to produce a ``collapsed" non-hyper tensor. Therefore, additional data is required, namely, a choice of two binary operations defined on the value set of the elements (recall that elements are \textit{variables}). These operations are called \textit{base operations}: one is used to collapse the tuples, and the other to collapse the slice space. They are therefore called the \textit{tuple-operation} and \textit{slice-operation}.

As an example, the familiar case of $\odot$ and $\oplus$ are ``the same" as tensor operations in the sense that given the same inputs they produce the same hyper-tensor slice-spaces. They do, of course, differ in their choice of base operations: $\odot$ uses multiplication as the tuple-operation whereas $\oplus$ uses addition. The slice-operation is irrelevant to this example, as neither operation produces non-trivial slice-spaces. The distinction between tensor operations of different base operations will be of critical importance to the architecture derivations conducted in \cref{sec:ArchDerivs}.

We will next prove a slightly more general version of \cref{thrm:1-SMs=>Tensors} stated in the main paper. This will be useful for establishing the machinery necessary to properly conduct the complexity analysis in \cref{tab:ComplexityHistory}. Before that, we pause to re-emphasize that \textit{elements}, the objects of the base set of generalized tensors, are \textit{variables}. Until now, there has been no reason to pay particular attention to them. We now assume that these variables are valued in a particular set $\mathbb{F}$. For completeness, we mention that a \textit{binary operation on $\mathbb{F}$} (not to be confused with a tensor operation!) is a function from $\mathbb{F}^2$ back to $\mathbb{F}$.

\paragraph{Setup.} We start our exploration of tensor operations in their full generality by briefly mentioning the PWO-HG interpretation of slice spaces.
\begin{proposition}
	Let $T: [[M_1]] \times ... \times [[M_{\cO}]] \rightarrow V$ be an injective multidimensional array with HCC representation $s^3$ and associated PWO-HG $H = \big\langle V, E, \{<_e\}_{e \in E}, P \big\rangle$. The slice space defined by $M' = \{M_1, ..., M_k\}$ for $k \leq \cO$ corresponds to the $\subsetneq$-maximal elements of $\varSigma_k \subsetneq \cP(V)$, where $\varSigma_k$ is given by:
	\begin{equation*}
		\varSigma_k = \big\{ \{v_1, ..., v_n\} \in \cP(V) \ : \ d_p(v_i, v_j) = 0, \ \forall i,j \in [[n]], \ p \in [k+1, ..., \cO]\big\}
	\end{equation*}
	\label{prop:HG-SSs}
\end{proposition}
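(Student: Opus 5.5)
The plan is to work entirely in the explicit encoding of $T$ from Part 1 of the proof of \cref{thrm:GenTensors<=>MDAs_Appendix}. There the PWO-HG $H$ has one part $p_n\in P$ per mode $M_n$, and its hyper-edges are the $1$-slices along $M_n$. For this encoding we already have the signed distance formula: any path from $v_{i_1,\dots,i_{\cO}}$ to $v_{j_1,\dots,j_{\cO}}$ has total distance $\sum_n (j_n - i_n)$. I will first refine this to a per-part statement. A step along an edge of $p_n$ changes only the $n$-th coordinate, and it changes it by exactly the index difference. So $d_{p_n}(\gamma) = j_n - i_n$ for every path $\gamma$. The same conclusion follows independently of the path from \cref{lem:TuplePathProp} together with \cref{cor:TupSOCC=>SOCC}, since $T$ is injective and therefore all tuples are singletons. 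Hence $d_{p_n}(v,w)$ is well defined and equals the difference of the $n$-th coordinates of the multi-indices of $v$ and $w$.

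The key characterization then follows. A set $\{v_1,\dots,v_m\}\subseteq V$ lies in $\varSigma_k$ if and only if all its members share the same coordinates $i_{k+1},\dots,i_{\cO}$. In other words, $\varSigma_k$ consists exactly of the subsets of the fibres
\[
F_{(i_{k+1},\dots,i_{\cO})} = \{\,v_{j_1,\dots,j_k,i_{k+1},\dots,i_{\cO}} : j_n\in[[M_n]]\,\},
\]
where $T$ is identified with its image via injectivity. By \cref{def:SliceSpaces}, these fibres are precisely the $k$-slices $T_\sigma$ of the $M'$-slice space: the outer function $T_\lambda$ is indexed by the remaining modes, and each $T_\sigma$ is defined on $[[M_1]]\times\cdots\times[[M_k]]$. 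The indexing convention in the statement (parts $k+1,\dots,\cO$ held at zero distance) has to be matched with the convention in \cref{def:SliceSpaces}, so I will relabel modes so that the free modes are those in $M'$.

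It remains to show that the $\subsetneq$-maximal elements of $\varSigma_k$ are exactly the fibres. The fibres are pairwise disjoint, since distinct tuples $(i_{k+1},\dots,i_{\cO})$ give disjoint sets, and together they cover $V$. Every member of $\varSigma_k$ is contained in a single fibre. Each fibre belongs to $\varSigma_k$ and is maximal, because adding any vertex from outside the fibre makes some $d_{p_n}$ with $n>k$ nonzero. Conversely, a proper subset of a fibre is strictly contained in that fibre and so is not maximal. Finally, I will read off the correspondence itself: the maximal sets are the images of the $T_\sigma$, and the restriction of the coordinate functions $d_{p_n}$ for $n\le k$ to a fibre recovers each $T_\sigma$ as a function rather than merely its image.

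The main obstacle is bookkeeping rather than depth. Specifically, I need to state carefully that ``corresponds'' means a bijection between slices and maximal sets, with the internal indexing recoverable from the $p$-distances. I also need to handle the degenerate cases: when $k=\cO$, $\varSigma_k$ is all of $\cP(V)$ and the single maximal set is $V$; when $k=0$, the maximal sets are the singletons.
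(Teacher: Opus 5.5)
Your proposal is correct and follows the same route as the paper: the paper also reads off from the Part 1 encoding of \cref{thrm:GenTensors<=>MDAs_Appendix} that two elements lie in the same $k$-slice exactly when $d_p$ vanishes for $p = k+1,\dots,\cO$, and concludes that the $\subsetneq$-maximal elements of $\varSigma_k$ are the $k$-slices. You supply what the paper's two-line proof leaves implicit, namely the per-mode telescoping identity $d_{p_n}(v,w) = j_n - i_n$ (which makes your side appeal to \cref{lem:TuplePathProp} unnecessary) and the disjoint-fibre argument for maximality, and your remark that $\varSigma_{\cO} = \cP(V)$ correctly notes that the strict inclusion in the statement fails in that edge case.
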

\begin{proof}
	It follows from the construction used to prove part 1 of \cref{thrm:GenTensors<=>MDAs_Appendix} that $v_1, v_2$ belong to the same $k$-slice exactly when $d_p(v_1, v_2) = 0$ for $k+1 \leq p \leq \cO$. Therefore, each $\subsetneq$-maximal element of $\varSigma_k$ corresponds to a $k$-slice, making $\varSigma_k$ exactly the collection of all $k$-slices.
\end{proof}
\begin{remark}
	The above hypergraph formulation of slice spaces extends uniformly to arbitrary jagged hyper-tensors.
\end{remark}

We are now ready to give the strengthened version of \cref{thrm:1-SMs=>Tensors}.
\begin{theorem}
	Tensor operations are evaluated in two distinct steps. First, every tensor operation $s^4$ determines a $k$-slice-space of a hyper-tensor $s^3_h$, where $k$ is the number of contractions of the operation. Second, given a fixed ordering of the tensors $t_1, ..., t_{\alpha} \in s^4$, and two binary operations ($\star$ and $\diamond$) defined on $\mathbb{F}$, $s^3_h$ then determines a non-hyper tensor $s^3$. Moreover, if each operand satisfies the conditions of \cref{thrm:GenTensors<=>MDAs} and each operand's base set is distinct, then $s^3_h$ is $\alpha$-regular, where $\alpha$ is the arity of $s^4$.
	\label{thrm:1-SMs=>Tensors_Appendix}
\end{theorem}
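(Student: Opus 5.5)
We construct the hyper-tensor $s^3_h$ directly from the coupling structure of $s^4$ and then collapse it in two stages. Let $t_1,\dots,t_\alpha$ be the operands and $C_1,\dots,C_m$ the couplings (contractions included), with $c_i$ the common tensor length of $C_i$ guaranteed by condition 2.ii of \cref{def:1SM}. Condition 1 of \cref{def:1SM} and condition 2.i (each coupling meets each operand in at most one mode) give an incidence structure: every mode of every $t_j$ lies in exactly one coupling, and no coupling uses two modes of the same operand. This is the data of the TOM.

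\textbf{Step 1 (broadcast).} We index the hyper-tensor by the couplings and take the base set $\bigsqcup_j \cX^0|_{t_j}$, with vertices allowed to appear in several tuples. For each multi-position $(x_1,\dots,x_m)$ with $x_i\in[[c_i]]$, we use \cref{lem:CoTensors} with a chosen origin in each $t_j$ (translated so that coordinates start at $1$, the origin coming from the minimal element of Claim \#2 of \cref{thrm:GenTensors<=>MDAs_Appendix} when available). The tuple at that position is the union over $j$ of the elements of $t_j$ whose $p$-coordinates equal $x_i$ for every mode $p\in C_i\cap t_j$. Each $t_j$ is thus copied along the couplings in which it has no mode.

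We then define the $1$-cells as maximal sets that vary in one coupling coordinate with all others fixed, ordered by that coordinate. The $2$-cells are the couplings. Both SOCCs hold because every tuple path distance is a difference of integer coordinate vectors, so cycles have zero $p$-distance, exactly as in Part 1 of \cref{thrm:GenTensors<=>MDAs_Appendix}. Connectivity follows from the product indexing. The contractions $K\subseteq\{C_i\}$ (marked by the indicator $2$-cell) select the $K$-slice space, and by \cref{prop:HG-SSs} and the following remark this is a $k$-slice space with $k=|K|$.

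\textbf{Step 2 (collapse).} We fix the operand order. Each tuple is reduced with $\star$, folding its elements in the order $t_1,\dots,t_\alpha$; an operand that contributes nothing at a jagged position is simply skipped. This yields an ordinary (non-hyper) tensor indexed by all couplings. Each $k$-slice of the contraction slice space is then reduced with $\diamond$, folding along the lexicographic order of the contracted coordinates. The result is indexed by the non-contracted couplings and has singleton tuples, so it satisfies the non-hyper condition. The fixed orderings are what make the result well defined without assuming associativity or commutativity.

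\textbf{Regularity.} Suppose each operand satisfies the hypotheses of \cref{thrm:GenTensors<=>MDAs} and the base sets are disjoint. Then each $t_j$ is a full injective array of shape $\prod_{p} c_{i(p)}$, so for each multi-position exactly one element of $t_j$ has the prescribed coordinates. This uses equal lengths to guarantee existence and hypothesis 2 to guarantee uniqueness. Disjointness means the $\alpha$ contributions are distinct elements, so every tuple has exactly $\alpha$ elements.

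The main obstacle is Step 1 in the jagged/hyper case. There the coordinate systems of \cref{lem:CoTensors} are defined only up to translation, so aligning origins across operands must be done using tensor length (maximum $p$-distance) rather than edge length. We also need to check that tuples defined this way are exactly the $\subsetneq$-maximal zero-distance sets required by the definition of $\tup$. If this proves too delicate, the fallback is to establish the construction rigorously for operands satisfying \cref{thrm:GenTensors<=>MDAs} and to treat the jagged case by normalizing each coordinate to start at its minimum along each coupling.
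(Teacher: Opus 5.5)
Your overall mechanism matches the paper's. You broadcast each operand along the couplings it does not meet and take positionwise unions. You then read off the contraction slice space via \cref{prop:HG-SSs}, fold with $\star$ in operand order, and fold with $\diamond$ over the contracted coordinates. Your regularity count (exactly one element per operand at each position, all distinct by disjointness) is the paper's argument.

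The gap is in Step 1, where you realize $s^3_h$ as a generalized tensor on the original elements $\bigsqcup_j \mathcal{X}^0|_{t_j}$ by ordering the lines of tuples. Broadcasting makes an element of $t_j$ occupy \emph{every} index along a coupling in which $t_j$ has no mode. Your $1$-cells then cannot carry strict weak orders. Already for matrix multiplication the tuples are $\{X_{ik},W_{kj}\}$, so the line varying $j$ is $\{X_{ik},W_{k1},\dots,W_{kJ}\}$. Here $X_{ik}$ would have to be incomparable to every $W_{kj}$, and transitivity of incomparability would collapse all the $W_{kj}$ into one class. Consequently your SOCC argument ``exactly as in Part 1'' does not apply, because it needs each element to have a unique multi-index. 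This failure occurs for full injective operands too, so your fallback does not rescue it. Overlapping tuples are exactly the case the paper flags as escaping the rank-$3$ encoding.

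The paper avoids this by never encoding the broadcast array on the original elements. It builds $\mathcal{A}$ as a set-valued array on $[[M'_1]]\times\dots\times[[M'_{\mathcal{C}_O}]]$ and invokes \cref{lem:MMs=>NonInjMDAs}. The non-injective broadcast array is thereby represented as a mode map out of an injective index tensor, to which Part 1 of \cref{thrm:GenTensors<=>MDAs_Appendix} applies directly. The paper's remark on overlapping tuples gives an equivalent fix: take the tuples or positions themselves as the base set. Replace your explicit $1$-cell construction with either device and the rest of your argument goes through.

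A smaller point concerns your claim that the fixed orderings remove any need for associativity or commutativity. That holds only if each operand contributes at most one element per position. For hyper operands, the paper first collapses each operand's own tuples with $\star$ (Step 1, item 5). You would still need an order inside those tuples.
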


\begin{proof}
	Let $s^4$ be a tensor operation containing tensors $T_1, T_2, ..., T_{\alpha}$ and couplings $C_1, C_2, ..., C_n$. Let $k \leq n$ denote the number of couplings that are obtained from contractions. Without loss of generality, we assume that the contracted couplings are $C_1, ..., C_k$ (if not, simply reorder the modes). Much like the evaluation of tensor operations, the proof consists of two distinct parts. First, we will construct a hyper-tensor $s^3_h$ from $s^4$. Second, we will construct from this hyper-tensor a strictly non-hyper tensor.
	
	\textbf{Part 1.}
	The construction of $s^3_h$ consists of three steps. First, build a tuple-valued multidimensional array from each $T_i$. Second, form a hyper-tensor by taking the element-wise union of these MDAs after broadcasting according to the couplings. Third, build a $k$-slice space of this hyper-tensor with the contractions.
	
	\textit{Step 1.} For each $T_i$, build a multidimensional array valued in the set $V_i := \cP(\cX^0|_{T_i}) \cup \emptyset$ as follows:
	\begin{enumerate}
		\item Let $H_i$ be the PWO-HG (\cref{def:PWOHGs}) associated to $T_i$, and pick an origin $z_i \in \cX^0|_{T_i}$ to maximize the minimum distance path from $z_i$. That is, pick $z_i = argmax_{z} \big\{ min_{x \in \cX^0|_{T_i}} d(z, x) \big\}$.
		\item Construct multi-indices for each tuple of $H_i$ with \cref{lem:CoTensors}, setting the origin to $z_i$. 
		\item If necessary, introduce offsets to these indices to ensure that no multi-index contains any non-positive entries. Call the resulting set of multi-indices $\cI_i \subseteq [[M_{i,1}]] \times ... \times [[M_{i,\cO_i}]]$, where $\cO_i = |\cX^2|_{T_i}|$ and $M_{i,j}$ is the tensor length of the $j^{th}$ mode of $T_i$.
		\item Define a $V_i$-valued multidimensional array $\cT_i$ by assigning to each multi-index of $\cI_i$ the corresponding tuple if one exists. Otherwise, assign $\emptyset$. By the SOCCs, this is a well-defined function.
		\item Collapse the tuples of $\cT_i$ to their products with the rule: $t \in image(\cT_i) \mapsto \prod_{x \in t} x$, where $\prod$ is shorthand for iterated application of $\star$.
	\end{enumerate}
	This process produces a multidimensional array $\cT_i: [[M_{i,1}]] \times ... \times [[M_{i,\cO_i}]] \rightarrow V_i$ for each tensor $T_i$. While these multidimensional arrays may be jagged, they are by construction not hyper.
	
	\textit{Step 2.} Denote by $\cM$ the set of all modes of the arrays constructed in step 1, i.e., $\cM = \bigcup_{i = 1}^{\alpha}\{ [[M_{i,1}]], ..., [[M_{i,\cO_i}]]\}$. Of course, we require a reduced set of modes for the output tensor based on the couplings. So, let $\sim_C$ be the equivalence relation defined by the couplings, i.e., $M \sim_C N$ for $M, N \in \cM$ exactly when the corresponding $2$-cells $s^2_M, s^2_N$ belong to the same coupling of $s^4$. Next, form $\cM' = \cM / \sim_C$, the set of modes modulo the couplings. By the definition of couplings and the construction in step 1, we know that all modes of the same equivalence class of $\sim_C$ have the same tensor length and therefore correspond to the same index set. Next, let $\cC_{O} = |\cM'|$, and let $M'_1, ..., M'_{\cC_O} \in \cM'$. We define another tuple-valued multidimensional array on $[[M'_1]]  \times .. \times [[M'_{\cC_O}]]$ as follows:
	\begin{equation*}
		\begin{aligned}
			&\cA: [[M'_1]]  \times .. \times [[M'_{\cC_O}]] \rightarrow V := \cP\bigg(\bigcup_{a = 1}^{\alpha}\cX^0|_{T_a}\bigg) \cup \emptyset\\
			&\text{where }(i_1, ..., i_{\cC_O}) \mapsto \bigcup_{a = 1}^{\alpha} \cT_a[i_{j_1}, ..., i_{j_{\cO_l}}],\\
			&\text{and for a fixed } a, \ \{j_1, ..., j_{\cO_i}\} := \big\{j \in [[\cC_O]] \ : \ [\sim_C]_j \cap T_a \neq \emptyset \big\}
		\end{aligned}
	\end{equation*}
	By \cref{lem:MMs=>NonInjMDAs}, $\cA$ can be represented as either a generalized tensor, or a mode map between two generalized tensors.
	
	\textit{Step 3.} Finally let $M'_{i_1}, ..., M'_{i_k}$ be the modes associated to the contracted couplings $C_1, ..., C_k$. The hyper-tensor slice-space is given by the $\{M'_{i_1}, ..., M'_{i_k}\}$-slice space of $\cA$ which can be well-defined in the language of generalized tensors by \cref{prop:HG-SSs}.
	
	We now assume in addition, that each $T_i$ satisfies the hypothesis of \cref{thrm:GenTensors<=>MDAs_Appendix}. Therefore, each $\cI_i = dom(T_i)$ is equal to $[[M_{i,1}]] \times ... \times [[M_{i,\cO_i}]]$. Furthermore, each $\cT_i$ cannot map any multi-index to $\emptyset$. In fact, each $\cT_i$ maps each multi-index to a singleton set. By assumption, the operands have distinct base sets. Therefore, each union used in the construction of step 2 reduces to a union of $\alpha$ distinct singleton sets. This shows that each tuple of the PWO-HG associated to $\cA$ consists of exactly $\alpha$ elements. We conclude that under this additional assumption, $\cA$ is an $\alpha$-regular hyper-tensor.
	
	\textbf{Part 2.}
	We now construct a non-hyper tensor from $\cA$ using $\star$ and $\diamond$. Keep the input tensors $T_1, ..., T_{\alpha}$ in their provided fixed ordering. We now collapse the tuples of variables to concrete values in $\mathbb{F}$, as follows:
	\begin{equation}
		\begin{aligned}
			&\cT: [[M'_1]]  \times .. \times [[M'_{\cC_O}]] \rightarrow \mathbb{F}\\
			&\text{where }(i_1, ..., i_{\cC_O}) \mapsto  \prod_{a=1}^{\alpha} \Bigg[ \big(\cA[i_1, ..., i_{\cC_O}] \big)[a] \Bigg],\\
			&\text{and } \big(\cA[i_1, ..., i_{\cC_O}] \big)[a] \text{ is the element from the } a^{th} \text{ operand,}\\
			&\text{and where }\prod \text{ is shorthand for iterated application of } \star
		\end{aligned}
		\label{eq:EvalOfA}
	\end{equation}
	$\cT$ is now, by construction, a non-hyper tensor. We may, of course, reduce its order by applying $\diamond$ along a slice space. So, after possibly re-indexing the modes, let $[[M'_1]]  \times .. \times [[M'_{\sigma}]]$ be the multi-index set defined by the non-contracted modes $\cM' \setminus \{M'_{i_1}, ..., M'_{i_k}\}$. Define another non-hyper tensor as follows:
	\begin{equation*}
		\begin{aligned}
			&\cT_{\sigma}: [[M'_1]]  \times .. \times [[M'_{\sigma}]] \rightarrow \mathbb{F}\\
			&\text{where }(i_1, ..., i_{\sigma}) \mapsto  \sum_{(i_{\sigma + 1},...,i_{\cC_O}) \in [[M'_{\sigma + 1}]]  \times .. \times [[M'_{\cC_O}]]} \cT[i_1, ..., i_{\cC_O}],\\
			&\text{and }\sum \text{ is shorthand for iterated application of } \diamond
		\end{aligned}
	\end{equation*}
	The multidimensional array $\cT_{\sigma}$ is exactly the result tensor.
\end{proof}
\begin{remark}
	\cref{thrm:1-SMs=>Tensors_Appendix} makes explicit the separation of tensor structure and base operations in the evaluation of tensor operations. Indeed, for any tensor operation, no information about the base operations is required to construct the hyper-tensor slice-space. Similarly, for any hyper-tensor slice-space, no information about the original tensor operation is required to construct the final result tensor.
\end{remark}
\begin{remark}
	\cref{thrm:1-SMs=>Tensors_Appendix} clarifies why the number of columns in a tensor operation matrix is called the order complexity. Indeed, order complexity is exactly the order of the multidimensional array $\cA$ constructed in the above proof.
\end{remark}
To establish connections to tensor \textit{broadcasting}, we pause to properly define this concept.
\begin{definition}
	Let $T$ be a generalized tensor which can be represented as a multidimensional array $\cT$ defined on the multi-index set $[[M_1]] \times ... \times [[M_{\cO}]]$. A \textit{broadcasted tensor over} $T$ is a multidimensional array $\cT_b$ of the form:
	\begin{equation*}
		\begin{aligned}
			\cT_b: [[M_1]] \times ... \times [[M_{\cO}]] \times [[M_{\cO + 1}]] \times ... \times [[M_{\cO + n}]] &\rightarrow V\\
			(i_1, ..., i_{\cO + n}) & \mapsto \cT[i_1, ..., i_{\cO}]
		\end{aligned}
	\end{equation*}
\end{definition}
This definition makes formal the familiar intuition that broadcasting just means copying a tensor along ``dummy" modes. We can now state the connection between broadcasting and the hyper-tensors obtained from tensor operations.
\begin{proposition}
	The index selection used in part 1, step 2 of the proof of \cref{thrm:1-SMs=>Tensors_Appendix} can be equivalently stated as accessing the $(i_1, ..., i_{\cC_O})^{th}$ element of appropriately broadcasted versions of $T_i$. That is, one may introduce dummy modes to each operand based on the tensor operation matrix to see that the hyper-tensor is given as an element-wise union of equal shape tensors.
\end{proposition}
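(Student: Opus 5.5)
The claim is that the index selection in Step 2 of Part 1 of the proof of \cref{thrm:1-SMs=>Tensors_Appendix} is the same as reading off entries of broadcasted operands. I would prove it by exhibiting, for each operand $T_a$, an explicit broadcasted tensor $\cT_{a,b}$ on the common index set $[[M'_1]] \times \dots \times [[M'_{\cC_O}]]$. I would then check pointwise that $\cA[i_1,\dots,i_{\cC_O}] = \bigcup_{a=1}^{\alpha} \cT_{a,b}[i_1,\dots,i_{\cC_O}]$.

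The construction is read directly from the tensor operation matrix. Row $a$ of the TOM marks with filled circles the columns $J_a = \{j_1,\dots,j_{\cO_a}\}$ whose coupling class $[\sim_C]_j$ meets $T_a$. By condition $2.i$ of \cref{def:1SM}, each coupling contains at most one mode of $T_a$, so this assignment of modes of $T_a$ to columns is injective. Since $\{t_1,\dots,t_\alpha\}$ partitions $\cX^2|_{s^4}$, every mode of $T_a$ lies in exactly one class, so the assignment is also a bijection onto $J_a$. The empty circles, $[[\cC_O]] \setminus J_a$, are the dummy modes.

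Let $\pi_a$ be a permutation of the $\cC_O$ columns that lists $J_a$ first, in the order of the modes of $\cT_a$. Define $\cT_{a,b} := \widetilde{\cT}_a \circ \pi_a$, where $\widetilde{\cT}_a$ is the broadcasted tensor over $\cT_a$ in the sense of the broadcasting definition, with extra modes $[[M'_j]]$ for $j \notin J_a$. The only subtlety is that this definition places the dummy modes last, so I would either allow a mode permutation in that definition or note that reindexing modes does not change the underlying generalized tensor. That second point should follow from the equivalence notion of \cref{sec:GenTenIsos_Apdx}, since permuting $2$-cells preserves all $p$-distances.

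The shapes need to agree. By the coupling condition $2.ii$, and by Step 1 of the proof, the mode of $T_a$ assigned to column $j$ has tensor length $M'_j$. So $\widetilde{\cT}_a$ is defined on exactly $\prod_j [[M'_j]]$, and all $\alpha$ broadcasted tensors have equal shape. The pointwise identity is then immediate by unwinding definitions: $\cT_{a,b}[i_1,\dots,i_{\cC_O}] = \cT_a[i_{j_1},\dots,i_{j_{\cO_a}}]$, and this is exactly the $a$-th term in the union that defines $\cA$.

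I expect the main obstacle to be the jagged case. There $\cT_a$ takes the value $\emptyset$ at missing multi-indices, and tensor lengths are maxima of path distances rather than actual edge lengths, so I must make sure $\widetilde{\cT}_a$ is still a total function on the full box. I would handle this by taking $\cT_a$ on its whole box $[[M_{a,1}]] \times \dots \times [[M_{a,\cO_a}]]$ with $\emptyset$ filling the gaps, exactly as in Step 1. Broadcasting then copies the $\emptyset$ entries along the dummy modes, and since $\emptyset$ is neutral for union, the element-wise union still reproduces $\cA$.
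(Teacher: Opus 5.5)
Your proposal is correct and follows the paper's own argument. The paper's proof is the one-line observation that the rule $(i_1,\dots,i_{\cC_O}) \mapsto \cT_a[i_{j_1},\dots,i_{j_{\cO_a}}]$ is, by definition, the broadcast of $\cT_a$ along the dummy modes indexed by $[[\cC_O]]\setminus J_a$; your extra checks (injectivity of modes into columns via condition $2.i$, equal shapes via $2.ii$, the mode-ordering convention, and the $\emptyset$-padded jagged case) just make explicit what the paper leaves implicit, with one small misattribution: each mode lies in exactly one class because $\sim_C$ is an equivalence relation (couplings tacitly disjoint, which your injectivity step also uses), not because the operands partition $\cX^2|_{s^4}$.
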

\begin{proof}
	The rule:
	\begin{equation*}
		(i_1, ..., i_{\cC_O}) \mapsto \cT_a[i_{j_1}, ..., i_{j_{\cO_l}}], \qquad \{j_1, ..., j_{\cO_i}\} := \big\{j \in [[\cC_O]] \ : \ [\sim_C]_j \cap T_a \neq \emptyset \big\}
	\end{equation*}
	is, by definition, equivalent to forming a broadcasted tensor over $\cT_a$ with dummy modes $\{M'_1, ..., M'{\cO}\} \setminus \{M'_{j_1}, ..., M'_{j_{\cO_l}}\}$.
\end{proof}
It is important to underscore that \cref{thrm:1-SMs=>Tensors_Appendix} illustrates how the tensor structure of a tensor operation is completely independent of the choice of specific base operations used to evaluate it. Effectively, it is a decomposition of the notion of a ``tensor operation" into a ``tensor structure part" and an ``$\mathbb{F}$ operation part". As a result, this general statement explains how to compute \textit{any} tensor operation given \textit{any} two base operations.

\clearpage
\subsubsection{\label{sec:TensorOps_Decomp}Arity Decomposition of Tensor Operations\label{sec:ArityDecomp}}
The separation of tensor structure and base operations provided by \cref{thrm:1-SMs=>Tensors_Appendix} allows us to establish a useful statement about when high arity tensor operations may be \textit{decomposed}, as alluded to in \cref{sec:EmpiricalAnalysis} of the main paper. This will explain when and why high arity operations are equivalent to compositions of lower arity operations.

We recall that for binary operations $\star$ and $\diamond$ defined on $\mathbb{F}$, \textit{associativity} is the property that $(a \diamond b) \diamond c = a \diamond (b \diamond c)$ always holds. $\star$ \textit{distributes over} $\diamond$ if $a \star (b \diamond c) = (a \star b) \diamond (a \star c)$ and  $(b \diamond c) \star a = (b \star a) \diamond (c \star a)$ always hold.

\cref{thrm:ArityDecomp} is an extremely helpful tool and will be useful for architecture analysis, as it allows us to split and re-combine tensor operations of different arity provided the base operations are well-enough behaved. We now make this precise.
\begin{theorem}
	Let $s^4$ be an $\alpha$-arity tensor operation with $\alpha \geq 3$, and let $\star$ and $\diamond$ be two binary operations defined on $\mathbb{F}$. If the following three conditions hold:
	\begin{enumerate}
		\item Each operand of $s^4$ can be represented as an injective multidimensional array
		\item $\star$ distributes over $\diamond$
		\item $\diamond$ is associative
	\end{enumerate}
	then, after evaluation using $\star$ and $\diamond$, $s^4$ is equal to the composition of an $(\alpha - 1)$-arity tensor operation and a binary tensor operation.
	\label{thrm:ArityDecomp}
\end{theorem}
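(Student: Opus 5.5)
The plan is to work entirely at the level of the explicit formula produced by \cref{thrm:1-SMs=>Tensors_Appendix}. By hypothesis 1, each operand $T_a$ is an injective multidimensional array, so the hyper-tensor $\cA$ is $\alpha$-regular. The evaluated result is then
\begin{equation*}
\cT_\sigma[i_1,\dots,i_\sigma] = \sum_{(i_{\sigma+1},\dots,i_{\cC_O})} \Big( \prod_{a=1}^{\alpha} \cT_a[\,\cdot\,] \Big),
\end{equation*}
where $\prod$ and $\sum$ denote iterated $\star$ and $\diamond$, and each $\cT_a$ is indexed by the coordinates selected by its row of the tensor operation matrix. No empty entries occur, precisely because of hypothesis 1. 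I will read $\prod$ as a left fold in the fixed operand ordering. The product then splits as $\big(\prod_{a<\alpha}\cT_a[\cdot]\big) \star \cT_\alpha[\cdot]$ without needing associativity of $\star$. This explains why the last operand $T_\alpha$ is the one peeled off.

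\textbf{Step 1: partition the modes of $\cM'$.} I split the modes into four classes:
\begin{itemize}
\item[(a)] non-contracted modes;
\item[(b)] contracted modes met by $T_\alpha$;
\item[(c)] contracted modes met only by $T_1,\dots,T_{\alpha-1}$;
\item[(d)] contracted modes met only by $T_\alpha$.
\end{itemize}
I then define an $(\alpha-1)$-ary operation $s^4_1$ on $T_1,\dots,T_{\alpha-1}$. Its couplings are the restrictions of the original couplings to these operands, discarding any coupling that becomes empty. Exactly the class (c) couplings are marked as contractions. Conditions 2.i and 2.ii of \cref{def:1SM} are inherited, since restriction cannot increase $|s^3_i\cap t_j|$ and preserves tensor lengths, so $s^4_1$ is a tensor operation. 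Its output $Z$ has modes indexed by classes (a) and (b) that meet some $T_a$ with $a<\alpha$.

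Next I define a binary operation $s^4_2$ on $(Z,T_\alpha)$. A mode of $Z$ and a mode of $T_\alpha$ are coupled whenever they came from the same original coupling, and classes (b) and (d) are contracted. To apply \cref{thrm:1-SMs=>Tensors_Appendix} again, I regard the entries of $Z$ as fresh variables disjoint from those of $T_\alpha$. Then $Z$ is an injective, non-jagged array, and the second evaluation is well defined.

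\textbf{Step 2: verify equality of the two evaluations.} Using the re-indexing freedom in the theorem, I order the contracted coordinates so that the (b) and (d) indices are outermost and the (c) indices innermost. Associativity of $\diamond$ lets me rewrite the iterated sum over the product index set as a nested sum $\sum_{(b,d)}\sum_{(c)}$; this is exactly a lexicographic regrouping, so no commutativity is needed. Inside the inner sum, the factor $\cT_\alpha[\cdot]$ depends only on (a), (b) and (d) coordinates, so it is constant. Right distributivity, $(x\diamond y)\star z=(x\star z)\diamond(y\star z)$, extended to finitely many terms by induction using associativity of $\diamond$, pulls it out:
\begin{equation*}
\sum_{(c)}\Big(\prod_{a<\alpha}\cT_a \star \cT_\alpha\Big) = \Big(\sum_{(c)}\prod_{a<\alpha}\cT_a\Big)\star\cT_\alpha = Z[\cdot]\star\cT_\alpha[\cdot].
\end{equation*}
The outer sum is then literally the evaluation of $s^4_2$ on $(Z,T_\alpha)$.

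I expect the main obstacle to be bookkeeping rather than algebra. First, I must check that the broadcasting in Part 1, Step 2 of the proof of \cref{thrm:1-SMs=>Tensors_Appendix} matches the index selection in both evaluations. Second, the degenerate cases need care: for example, $Z$ having order $0$, or $T_\alpha$ sharing no modes with the other operands, which gives an outer-product binary operation. For these I would check that a coupling of a single mode still satisfies \cref{def:1SM}.
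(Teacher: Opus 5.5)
Your proposal is correct and follows essentially the same route as the paper: you peel off $T_\alpha$, build the $(\alpha-1)$-ary operation on $T_1,\dots,T_{\alpha-1}$ in which only the couplings avoiding $T_\alpha$ stay contracted (this is the paper's $C_1$, with your class (b) corresponding to its $\Delta$ deferred contractions), and pair its result with $T_\alpha$ in a binary operation that performs the remaining contractions. You then use distributivity of $\star$ over $\diamond$ to move the constant $T_\alpha$ factor across the inner sum, and associativity of $\diamond$ to nest or merge the sums. The differences are only presentational: you run the computation from the $\alpha$-ary formula toward the composition rather than the reverse, and you are more explicit than the paper about reading $\prod$ as a left fold, fixing the summation order when $\diamond$ is not commutative, and discarding couplings that become empty.
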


Before we discuss the somewhat technical proof, we provide a fundamental corollary as further motivation.
\begin{corollary}
	Assume the entire setup of \cref{thrm:ArityDecomp}. Then $s^4$ can be decomposed into a sequence of binary tensor operations.
	\label{cor:DecompIntoBinary}
\end{corollary}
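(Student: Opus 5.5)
We argue by induction on the arity $\alpha$, applying \cref{thrm:ArityDecomp} repeatedly. The claim to prove is: for every $\alpha \geq 2$, any $\alpha$-ary tensor operation satisfying hypotheses 1--3 of \cref{thrm:ArityDecomp} equals, after evaluation with $\star$ and $\diamond$, a composition of $\alpha-1$ binary tensor operations. The base case $\alpha = 2$ is immediate, since $s^4$ is already binary. (If one wishes to include unary operations, $\alpha=1$ is trivially an empty composition or a single unary step, and can be set aside.)

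For the inductive step, take $\alpha \geq 3$. \cref{thrm:ArityDecomp} writes $s^4$, after evaluation, as a composition of an $(\alpha-1)$-ary tensor operation $s^4_1$ and a binary tensor operation $s^4_2$. We then apply the inductive hypothesis to $s^4_1$. For this, all three hypotheses of \cref{thrm:ArityDecomp} must still hold for $s^4_1$. Hypotheses 2 and 3 concern only $\star$ and $\diamond$, which are unchanged, so they transfer automatically.

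The only real obstacle is hypothesis 1: every operand of $s^4_1$ must be representable as an injective multidimensional array. The operands of $s^4_1$ are of two kinds.
\begin{itemize}
\item Some are original operands of $s^4$, which satisfy hypothesis 1 by assumption.
\item At most one is the intermediate result produced by the other factor. This is the output of a tensor operation evaluated via \cref{thrm:1-SMs=>Tensors_Appendix}, so it is a non-hyper, non-jagged multidimensional array $\cT_\sigma$ on a full Cartesian product of index sets.
\end{itemize}
Such an output need not be injective as a function into $\mathbb{F}$. We handle this by regarding its entries as fresh, distinct \emph{variables}; elements are variables throughout this part of the paper. Then its index-to-element map is injective, and part 1 of \cref{thrm:GenTensors<=>MDAs_Appendix} represents it as a generalized tensor. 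Substituting the computed values afterwards does not change the evaluated result.

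If the decomposition in \cref{thrm:ArityDecomp} instead has the binary operation consume original operands first, the same argument applies, since the $(\alpha-1)$-ary factor then takes that binary output as one of its inputs. In either order, the inductive hypothesis decomposes $s^4_1$ into $\alpha-2$ binary operations. Composing these with $s^4_2$ gives $\alpha-1$ binary tensor operations whose evaluation equals that of $s^4$, which completes the induction.
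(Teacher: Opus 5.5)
Your proposal is correct and follows the paper's proof, which simply applies \cref{thrm:ArityDecomp} $(\alpha-2)$ times; your induction makes that iteration explicit and gives the same count of $\alpha-1$ binary operations. One simplification: in the paper's construction $s^4_1=\{T_1,\dots,T_{\alpha-1}\}\cup C_1$ uses only original operands, and the intermediate $T_\beta$ enters only the binary factor $s^4_2$, so hypothesis 1 passes to $s^4_1$ directly and your fresh-variable argument, while sound, is not needed.
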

\begin{proof}
	Simply apply \cref{thrm:ArityDecomp} $(\alpha - 2)$ times to produce a sequence of binary tensor operations.
\end{proof}
\cref{cor:DecompIntoBinary} Is of great practical utility, as it explains how to construct an algorithm for evaluating high complexity operations. This algorithm is given in \cref{alg:TensorOpEval}. We now give the proof of \cref{thrm:ArityDecomp}.

\noindent\textit{Proof.}
Assume all notation from the proof of \cref{thrm:1-SMs=>Tensors_Appendix}, and let $\cT$ denote the multidimensional array produced by the evaluation of $s^4$. We first compute the effect of hypothesis 1 on the formula for $\cT$.
\begin{equation*}
	\begin{aligned}
		\cT: [[M'_1]]  \times .. \times [[M'_{\cC_O}]] &\rightarrow \mathbb{F}\\
		(i_1, ..., i_{\cC_O}) &\mapsto  \prod_{a=1}^{\alpha} \big(\cA[i_1, ...,  i_{\cC_O}] \big)[a]
	\end{aligned}
\end{equation*}
\begin{equation*}
	\begin{aligned}
		\cT_{\sigma}: [[M'_1]]  \times .. \times [[M'_{\sigma}]] &\rightarrow \mathbb{F}\\
		(i_1, ..., i_{\sigma}) &\mapsto  \sum_{(i_{\sigma + 1},...,i_{\cC_O}) \in [[M'_{\sigma + 1}]]  \times .. 	\times [[M'_{\cC_O}]]} \cT[i_1, ..., i_{\sigma}, i_{\sigma + 1},..., i_{\cC_O}]
	\end{aligned}
\end{equation*}
Simplifying (and dropping the domain of the contracted indices for legibility):
\begin{equation}
	\begin{aligned}
		\cT_{\sigma}: [[M'_1]]  \times .. \times [[M'_{\sigma}]] &\rightarrow \mathbb{F}\\
		(i_1, ..., i_{\sigma}) &\mapsto  \sum_{(i_{\sigma + 1},...,i_{\cC_O})} \Bigg[ \prod_{a=1}^{\alpha} \big(\cA[i_1, ..., i_{\sigma}, i_{\sigma + 1},..., i_{\cC_O}] \big)[a] \Bigg]
	\end{aligned}
	\label{eq:TOrig}
\end{equation}

Next, we construct two lower arity operations from $s^4$. Let $C$ be the set of couplings of $s^4$. Define an $(\alpha - 1)$-arity tensor operation $s^4_1$ as follows:
\begin{equation*}
	s^4_1 = \{T_1, ..., T_{\alpha - 1}\} \cup C_1
\end{equation*}
Where $T_1, ..., T_{\alpha -1}$ are the first $\alpha - 1$ tensors from the original operation $s^4$, and $C_1$ is the set of couplings $C$ restricted to the set $\cM_1 = \bigcup_{i = 1}^{\alpha -1} T_i$. Critically, it is necessary to remove contractions from any coupling which intersects $T_{\alpha}$, that is:
\begin{equation*}
	\begin{aligned}
		C_1 &= \big\{ c \setminus T_{\alpha} \ : \ c \in C, \ c \text{ is not contracted} \big\} \\
		&\bigcup \big\{ c \setminus T_{\alpha} \ : \ c \in C, \ (c \text{ is contracted}) \wedge (c \cap T_{\alpha} = \emptyset) \big\}\\
		&\bigcup \big\{ (c \setminus T_{\alpha}) \setminus \{ \{ \{ \emptyset \} \} \} \ : \ c \in C, \ (c \text{ is contracted}) \wedge (c \cap T_{\alpha} \neq \emptyset) \big\}
	\end{aligned}
\end{equation*}
$s^4_1$ is, by construction, an ($\alpha - 1$)-arity tensor operation.

Next, let $\cT_{\beta}$ be the multidimensional array obtained from evaluating $s^4_1$, and let $T_{\beta}$ be its associated generalized tensor. We define a binary tensor operation $s^4_2$ as follows:
\begin{equation*}
	s^4_2 = \{T_{\beta}, T_{\alpha}\} \cup C_2
\end{equation*}
Form $C_2$ from $C$ by removing no contractions from $C$ and for each $c \in C$, interpreting its modes as modes of $T_{\beta}$ when applicable. This is perfectly well-defined, because the modes of $T_{\beta}$ are in one-to-one correspondence with some modes of $\cT$.

Let $\cT_{\alpha}$ be the multidimensional array associated to the generalized tensor $T_{\alpha}$. 

Next, we need to evaluate the composition of $s^4_1$ and $s^4_2$ to check against \cref{eq:TOrig}. The important data to keep track of are: which indices are contracted and \textit{when}, meaning contracted in $s^4_1$ or $s^4_2$. To facilitate this process, we now perform some index management:
\begin{equation*}
	\begin{aligned}
		(i_1, ..., i_{\sigma}) &\text{ --- non-contracted indices of } \cA\\
		(i_{\sigma + 1}, ..., i_{\cC_O}) &\text{ --- contracted indices of } \cA\\
		(j_1, ..., j_{\rho}) &\text{ --- non-contracted indices of } \cT_{\beta} \text{ from } s^4_1\\
		(j_{\rho + 1}, ..., j_{\cO_{\beta}}) &\text{ --- contracted indices of } \cT_{\beta} \text{ from } s^4_1\\
		(k_1, ..., k_{\varrho}) &\text{ --- non-contracted indices of } \cT_{\alpha} \text{ in } s^4_2\\
		(k_{\varrho + 1}, ..., k_{\cO_{\alpha}}) &\text{ --- contracted indices of } \cT_{\alpha} \text{ in } s^4_2\\
	\end{aligned}
\end{equation*}

Our next objective is to map the $j$ and $k$ indices to their relative positions in the $i$ indices. Throughout this process, we will use ``index" and ``mode" interchangeably. We start by noticing that the construction of $s^4_1$ and $s^4_2$ ensures that we may reorder the modes such that $(j_1, ..., j_{\cO_{\beta}})$ is a sub-sequence of $(i_1, ..., i_{\cC_O})$ with $j_1 = i_n$ for some $1 \leq n \leq \cC_O$ and $j_{\rho} = i_{\sigma + \Delta}$, for some $\Delta \geq 0$. $\Delta$ is the number of contracted modes of $\cA$ that are not contracted in $s^4_1$. $\Delta$ is non-negative because we only (possibly) removed contractions in the creation of $s^4_1$.

\begin{wrapfigure}[14]{r}{0.6\textwidth}
	\centering
	\raisebox{0pt}[\dimexpr\height-1.0\baselineskip\relax]{\includegraphics[width=0.6\textwidth]{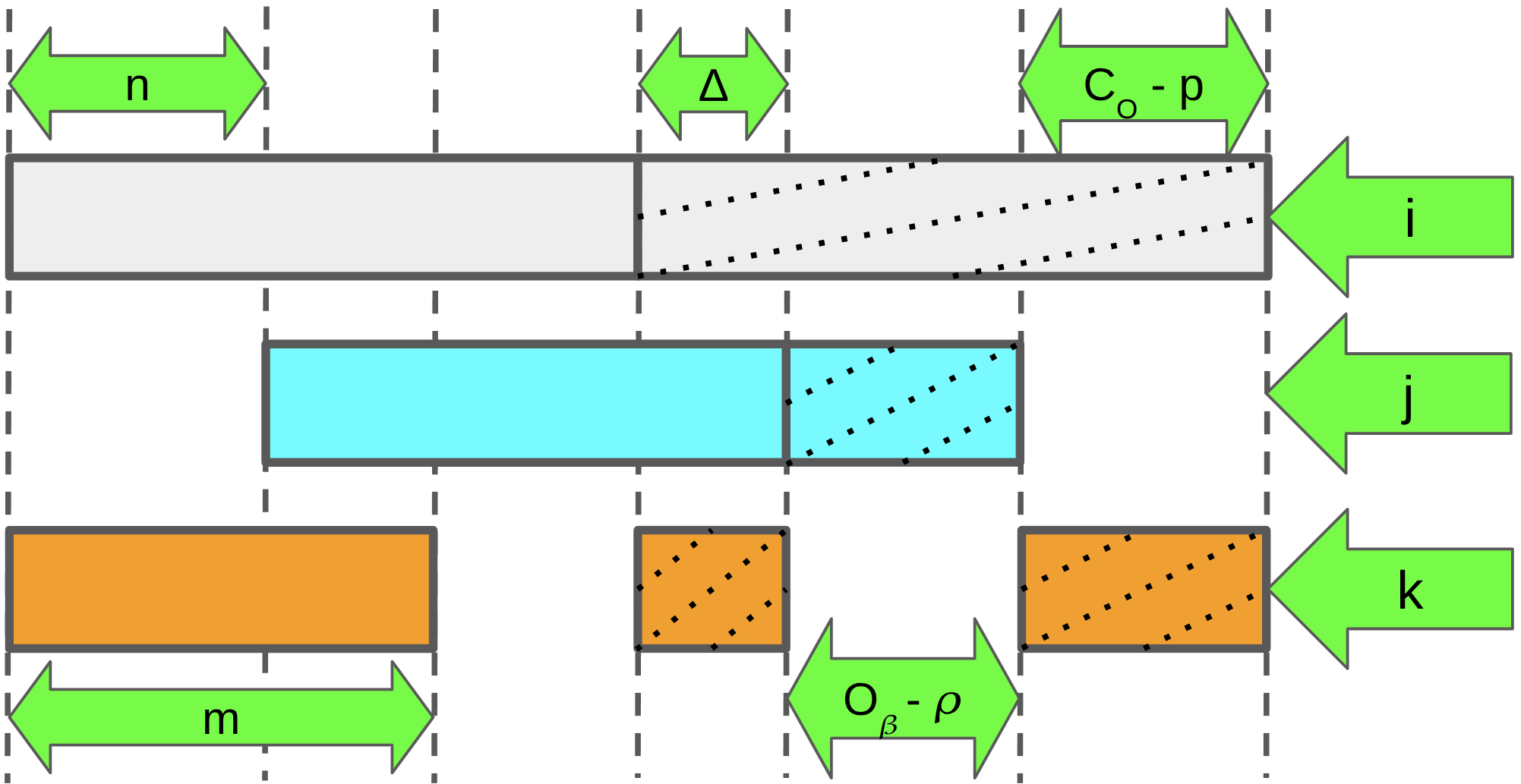}}
	\caption{Index map. Dotted boxes are contracted modes.}
	\label{fig:IndexMadness}
\end{wrapfigure}
Next, observe that for each contraction removed in the creation of $s^4_1$, there must exist some $k$ index which is part of the corresponding coupling. This is because of how $C_1$ is defined. Dually, for each contracted mode of $s^4_1$, there cannot exist any $k$ index which corresponds to the same original coupling of $C$.

Finally, observe that there may exist non-contracted $k$ indices which are coupled to some of the non-contracted $j$ indices. Moreover, there may exist both contracted or non-contracted $k$ indices which are not coupled to any $j$ modes. A visualization of this index map is given in \cref{fig:IndexMadness}. To recap, we have created an ($\alpha - 1$)-arity tensor operation $s^4_1$ ({\color{cyan}blue}) from the original $\alpha$-arity operation $s^4$ (top). The rule used to construct $s^4_1$ possibly removes contractions of $s^4$ ($\Delta$), but only from couplings which intersect $T_{\alpha}$ ({\color{orange}orange}).

Next, we need to translate this picture into equations. So, notice that $(k_1, ..., k_{\varrho}) = (i_1, ..., i_m)$, where $n \leq m \leq \sigma$. This holds because any non-contracted mode of $\cA$ that is broadcasted in $s^4_1$ must, by construction, belong to $T_{\alpha}$. Next, let $p := \cO_{\alpha} - \varrho$ be the number of contracted modes of $\cT_{\alpha}$ in $S^4_2$. The complete system of index equivalences is given below:
\begin{equation*}
	\begin{aligned}
		(i_1, ..., i_m) &= (k_1, ..., k_{\varrho})\\
		(j_1, ..., j_{m-n}) &= (k_n, ..., k_{\varrho})\\
		(j_1, ..., j_{\rho}) &= (i_n, ..., i_{\sigma + \Delta})\\
		(i_{\sigma + 1}, ..., i_{\sigma + \Delta}) &= (k_{\varrho + 1}, ..., k_{\varrho + \Delta})\\
		(j_{\rho + 1}, ..., j_{\cO_{\beta}}) &= (i_{\sigma + \Delta + 1}, ..., i_{\cC_O - p})\\
	\end{aligned}
\end{equation*}

Let $[[M'_n]]  \times .. \times [[M'_{n + \rho}]]$ be the multi-index set defined by the non-contracted modes of $\cA_1$, where $\cA_1$ is the hyper-tensor defined by $s^4_1$. From here on, we write $(i_1 : i_n)$ as shorthand for $(i_1, ..., i_n)$. We can now evaluate $s^4_1$:
\begin{equation}
	\begin{aligned}
		&\cT_{\beta}: [[M'_n]]  \times .. \times [[M'_{n + \rho}]] \rightarrow \mathbb{F}\\
		&(j_1 : j_{\rho}) = (i_n : i_{\sigma + \Delta}) \mapsto \sum_{(j_{\rho + 1} : j_{\cO_{\beta}})} \Bigg[ \prod_{a=1}^{\alpha - 1} \big(\cA_1[j_1 : j_{\cO_{\beta}}] \big)[a] \Bigg]\\
		&= \sum_{(j_{\rho + 1} : j_{\cO_{\beta}})} \Bigg[ \prod_{a=1}^{\alpha - 1} \big(\cA[\emptyset = i_1 : i_{n-1}, j_1 : j_{\cO_{\beta}}, i_{\sigma + \Delta + 1} : i_{\cC_O} = \emptyset] \big)[a] \Bigg]
	\end{aligned}
	\label{eq:S41}
\end{equation}
Here we have used an empty index to indicate broadcasting, i.e., indexing into dummy modes. This works for each $a \leq \alpha - 1$ because, by construction, none of the broadcasted modes intersect any $T_a$ with $a \leq \alpha - 1$.

Let $\cA_2$ be the hyper-tensor defined by $s^4_2$, and let $\cT'_{\sigma}$ be the corresponding multidimensional array. We are now finally ready to evaluate $s^4_2$:
\begin{equation*}
	\begin{aligned}
		&\cT'_{\sigma}: [[M'_1]]  \times .. \times [[M'_{\sigma}]] \rightarrow \mathbb{F}\\
		&(i_1 : i_{\sigma}) \mapsto \sum_{(i_{\sigma + 1} : i_{\sigma + \Delta}, i_{\cC_O - p + 1} : i_{\cC_O})} \Bigg[ \cT_{\beta}[i_n : i_{\sigma + \Delta}] \star \cT_{\alpha}[i_1 : i_m, i_{\sigma + 1} : i_{\sigma + \Delta}, i_{\cC_O - p + 1} : i_{\cC_O}] \Bigg]
	\end{aligned}
\end{equation*}
expanding out the above gives:
\begin{equation*}
	\begin{aligned}
		= \sum_{(i_{\sigma + 1} : i_{\sigma + \Delta}, i_{\cC_O - p + 1} : i_{\cC_O})} \Bigg( \cT_{\beta}[i_n : i_{\sigma + \Delta}] &\star \cT_{\alpha}[i_1 : i_m, i_{\sigma + 1} : i_{\sigma + \Delta}, i_{\cC_O - p + 1} : i_{\cC_O}] \Bigg)\\
		= \sum_{(i_{\sigma + 1} : i_{\sigma + \Delta}, i_{\cC_O - p + 1} : i_{\cC_O})} \Bigg( \Bigg[ \sum_{(i_{\sigma + \Delta + 1} : i_{\cC_O - p})} &\Bigg[ \prod_{a=1}^{\alpha - 1} \big(\cA_1[i_n : i_{\cC_O - p}] \big)[a] \Bigg] \Bigg]\\
		& \star \cT_{\alpha}[i_1 : i_m, i_{\sigma + 1} : i_{\sigma + \Delta}, i_{\cC_O - p + 1} : i_{\cC_O}] \Bigg)
	\end{aligned}
\end{equation*}
By hypothesis 2, $\star$ distributes over $\diamond$. Furthermore, the indices used in $\cT_{\alpha}$ are independent of those in the innermost sum (recall $\sum$ is shorthand for iterated $\diamond$). We can therefore factor the $\cT_{\alpha}$ term inside the parentheses:
\begin{equation*}
	\begin{aligned}
		= \sum_{(i_{\sigma + 1} : i_{\sigma + \Delta}, i_{\cC_O - p + 1} : i_{\cC_O})} \Bigg(  \sum_{(i_{\sigma + \Delta + 1} : i_{\cC_O - p})} \Bigg[&\Bigg[ \prod_{a=1}^{\alpha - 1} \big(\cA_1[i_n : i_{\cC_O - p}] \big)[a] \Bigg] \\
		& \star \cT_{\alpha}[i_1 : i_m, i_{\sigma + 1} : i_{\sigma + \Delta}, i_{\cC_O - p + 1} : i_{\cC_O}] \Bigg] \Bigg)
	\end{aligned}
\end{equation*}
Because the broadcasted indices of \cref{eq:S41} align, by construction, with the indices of $\cT_{\alpha}$, we can combine the $\cT_{\alpha}$ term into the product (recall $\prod$ is shorthand for iterated $\star$) and combine:
\begin{equation*}
	\begin{aligned}
		&= \sum_{(i_{\sigma + 1} : i_{\sigma + \Delta}, i_{\cC_O - p + 1} : i_{\cC_O})} \Bigg(  \sum_{(i_{\sigma + \Delta + 1} : i_{\cC_O - p})} \Bigg[ \prod_{a=1}^{\alpha} \big(\cA[i_1 : i_{\cC_O}] \big)[a] \Bigg] \Bigg)
	\end{aligned}
\end{equation*}
By hypothesis 3, $\diamond$ is associative. Furthermore, the indices of each summation are independent. So, we can combine them:
\begin{equation*}
	\begin{aligned}
		&= \sum_{(i_{\sigma + 1},..., i_{\cC_O})} \Bigg[ \prod_{a=1}^{\alpha} \big(\cA[i_1 : i_{\cC_O}] \big)[a] \Bigg]
	\end{aligned}
\end{equation*}
This is exactly the value obtained from the higher arity operation in \cref{eq:TOrig}. We conclude that the composition of $s^4_1$ and $s^4_2$ is equivalent to the original tensor operation $s^4$.
\null\hfill $\Box$
\paragraph{Algorithmic form of \cref{cor:DecompIntoBinary}} \Cref{alg:TensorOpEval} demonstrates that any tensor operation of multidimensional arrays valued in $\mathbb{R}$, when evaluated using standard real number multiplication and addition, is equivalent to a sequence of Hadamard products followed by summation along a slice space. We note that the proof of \cref{thrm:ArityDecomp} further demonstrates that this algorithm can be made more efficient by processing applicable contractions inside the for loop.

\begin{algorithm}
	\KwData{$s^4$ a tensor operation of multidimensional arrays $\cT_1, ..., \cT_{\alpha}$, couplings $C$, and base operations real number $\cdot$ and $+$}
	\KwResult{A result multidimensional array $\cT_r$}
	$\{M_{i,1}, ..., M_{i,\cO_i}\} \gets dom(\cT_i)$\;
	$\cM \gets \bigcup_{i} \{M_{i,1}, ..., M_{i,\cO_i}\}$\;
	$\{M'_1, ..., M'_{\cC_O}\} = \cM' \gets \cM / \sim_C$\;
	$\cT_r' : [[M'_1]] \times ... \times [[M'_{\cC_O}]] \rightarrow \mathbb{R}$ given by\;
	$(i_1,...,i_{\cC_O}) \mapsto \cT_1[i_{j_1}, ..., i_{j_{\cO_1}}]$, where $\{j_1, ..., j_{\cO_1}\} = \big\{ j \in [[\cC_O]] \ : \ [\sim_C]_j \cap dom(\cT_1) \neq \emptyset \big\}$\;
	\For{$2 \leq i \leq \alpha$}{
	$\cT'_i: [[M'_1]] \times ... \times [[M'_{\cC_O}]] \rightarrow \mathbb{R}$ given by\;
	$(i_1,...,i_{\cC_O}) \mapsto \cT_i[i_{j_1}, ..., i_{j_{\cO_i}}]$, where $\{j_1, ..., j_{\cO_i}\} = \big\{ j \in [[\cC_O]] \ : \ [\sim_C]_j \cap dom(\cT_i) \neq \emptyset \big\}$\;
	$\cT'_r \gets \cT'_r \odot \cT'_i$\;
	}
	$C' \gets \{ c' \in C \ : \ c' \text{ is contracted}\}$\;
	$\cM'' \gets \{ M' \in \cM' \ : \ M' = [\sim_C]_{c'}, \ c' \in C'\}$\;
	$k \gets |\cM''|$\;
	$\cT_r : [[M'_{k+1}]] \times ... \times [[M'_{\cC_O}]] \rightarrow \mathbb{R}$ given by\;
	$\cT_r[i_{k+1}, ..., i_{\cC_O}] \gets \sum_{(i_1, ..., i_k) \in [[M''_1, ..., M''_k]]}\cT'_r[i_1, ..., i_{\cC_O}]$\;
	\KwRet{$\cT_r$}
	\caption{Procedure for computing results of tensor operations evaluated using real number multiplication and addition.}
	\label{alg:TensorOpEval}
\end{algorithm}

\begin{corollary}
	Assume hypotheses 2 and 3 from \cref{thrm:ArityDecomp}. Let $s^4_1$ be an $\alpha$-arity tensor operation, and let $s^4_2$ be an $\alpha'$-arity tensor operation and assume all operands of $s^4_1$ and $s^4_2$ can be represented as injective multidimensional arrays. Furthermore, assume that the result of $s^4_1$ has the same tensor lengths as the first operand of $s^4_2$. Then, $s^4_1$ and $s^4_2$ can be combined into an equivalent tensor operation of arity $\alpha + \alpha' - 1$.
	\label{cor:OperationMerging}
\end{corollary}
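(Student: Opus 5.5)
The plan is to run the proof of \cref{thrm:ArityDecomp} in reverse. Both $s^4_1$ and $s^4_2$ are written in the evaluated form of \cref{eq:TOrig}, and the inner operation is then substituted into the outer one. We assume the result of $s^4_1$ is fed into $s^4_2$ as its first operand $T'_1$. Concretely, write the result of $s^4_1$ as $\cT_{\beta}[j_1 : j_{\rho}] = \sum_{(j_{\rho+1}:j_{\cO_\beta})} \prod_{a=1}^{\alpha} (\cA_1[j_1 : j_{\cO_\beta}])[a]$. Write the result of $s^4_2$ as a contraction of $\cT_{\beta}[\cdot] \star \prod_{b=2}^{\alpha'} (\cA_2[\cdot])[b]$ over the contracted indices of $\cA_2$. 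The hypothesis that the result of $s^4_1$ has the same tensor lengths as the first operand of $s^4_2$ says that each non-contracted mode of $\cA_1$ can be identified with a mode of $T'_1$. Consequently the index sets agree, and the substitution is well defined.

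Next I would construct the merged operation $s^4$ explicitly. Its operands are $T_1, \dots, T_{\alpha}$ from $s^4_1$ together with $T'_2, \dots, T'_{\alpha'}$ from $s^4_2$, which gives arity $\alpha + \alpha' - 1$. Its couplings are obtained by gluing. Each coupling $c_2$ of $s^4_2$ that meets $T'_1$ meets it in a single mode, by condition 2.i of \cref{def:1SM}. That mode corresponds to a unique non-contracted coupling $c_1$ of $s^4_1$, and we replace the pair by $c_1 \cup (c_2 \setminus T'_1)$. Contracted couplings of $s^4_1$ are kept as they are. Couplings of $s^4_2$ that avoid $T'_1$ are also kept. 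A glued coupling is marked as contracted exactly when $c_2$ was contracted.

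Several conditions must then be checked. Condition 1 of \cref{def:1SM} holds because the operands' mode sets are disjoint. Condition 2.i holds because $c_1$ and $c_2 \setminus T'_1$ involve disjoint sets of operands. Condition 2.ii follows from the equal-length hypothesis together with the coupling lengths inside each original operation. This is essentially the inverse of the construction of $C_1$ and $C_2$ in the proof of \cref{thrm:ArityDecomp}. Indeed, applying that construction to the merged $s^4$ with the last $\alpha'-1$ operands split off should recover $s^4_1$ and $s^4_2$. One can then try to invoke \cref{thrm:ArityDecomp} iteratively instead of recomputing.

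For the equality of evaluations, I would substitute $\cT_{\beta}$ into the formula for $s^4_2$. Distributivity (hypothesis 2) pulls $\prod_{b \geq 2}$ inside the inner $\diamond$-sum, since its indices are independent of the inner contracted indices. The two products then merge into a single iterated $\star$ over all $\alpha + \alpha' - 1$ operands. Associativity of $\diamond$ (hypothesis 3) merges the two sums into one sum over all contracted couplings of $s^4$. The result is exactly \cref{eq:TOrig} for $s^4$, evaluated through the hyper-tensor of \cref{thrm:1-SMs=>Tensors_Appendix}. The injectivity assumption makes that hyper-tensor regular, so the tuple products are unambiguous.

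I expect the main obstacle to be the index bookkeeping and one hidden point. Merging the product $\cT_{\beta} \star \prod_{b \geq 2}$ into a single ordered $\star$-product implicitly uses a form of associativity for $\star$, which is not among the hypotheses. (The proof of \cref{thrm:ArityDecomp} uses the same step silently.) I would handle this by fixing the operand ordering of $s^4$ so that the iterated product brackets as $(\prod_{a \leq \alpha})\star(\cdots)$, consistent with the fixed-ordering convention of \cref{thrm:1-SMs=>Tensors_Appendix}. If that convention is not flexible enough, I would note that for the intended case of $\cdot$ and $+$ on $\mathbb{R}$ the issue disappears.
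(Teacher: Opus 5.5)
Your proposal is correct and takes essentially the paper's route. The paper's proof simply notes that the hypotheses are those satisfied by the pair built in the proof of \cref{thrm:ArityDecomp}, and takes the merged operation to be the union of the operands with the couplings of $s^4_2$ re-attached through the modes of the result of $s^4_1$; your rule $c_1 \cup (c_2 \setminus T'_1)$ states that construction more precisely.

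Your version is actually more complete, since you carry out the distributivity/associativity computation for arbitrary $\alpha'$, whereas the cited proof only has a binary outer operation. Your concern about associativity of $\star$ goes away if iterated $\star$ is read left-nested with the intermediate result as the first operand, because then only right-distributivity is used. Note, however, that both you and the paper tacitly assume $s^4_1$ and $s^4_2$ share no operand; otherwise the union contains fewer than $\alpha+\alpha'-1$ tensors and your claim that the operands' mode sets are disjoint is false.
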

\begin{proof}
	These hypotheses are simply a restatement of the properties satisfied by the tensor operations $s^4_1$ and $s^4_2$ constructed in the proof of \cref{thrm:ArityDecomp}. Therefore, they are equivalent to the tensor operation defined by:
	\begin{equation*}
		s^4 := \{T \in s^4_1 \ : \ T \text{ is a tensor}\} \cup \{T \in s^4_2 \setminus T_1 \ : \ T \text{ is a tensor}\} \cup C_1 \cup C_2
	\end{equation*}
	where $C_1$, $C_2$ are the couplings from $s^4_1$, $s^4_2$, respectively, with $C_2$ modified by replacing modes of $T_1$ with the modes of the result of $s^4_1$. The tensor lengths align by assumption. 
\end{proof}

\clearpage
\subsubsection{\label{sec:TensorOps_Examples}Additional Examples of Tensor Operations}
We now provide more examples of tensor operations to facilitate understanding of the main paper. Throughout this section, we assume that $\mathbb{F} = \mathbb{R}$, and that the base operations of all tensor operations are real number multiplication and addition.

\paragraph{Jagged Matrix Multiplication.}
As explained in the previous sections, our definition of tensor operations is sufficiently general to allow jagged tensors to participate in any operation. We recall that jagged tensors are \textit{incomplete} in the sense that the multi-index sets defined by \cref{lem:CoTensors} will be strict subsets of a cartesian product. \cref{fig:JaggedMatMul} provides an illustration in the style of \cref{fig:TensorOpEx} of the evaluation of matrix multiplication with jagged matrices by \cref{thrm:1-SMs=>Tensors_Appendix}.
\begin{figure}[h!]
	\centering
	\includegraphics[width=\textwidth]{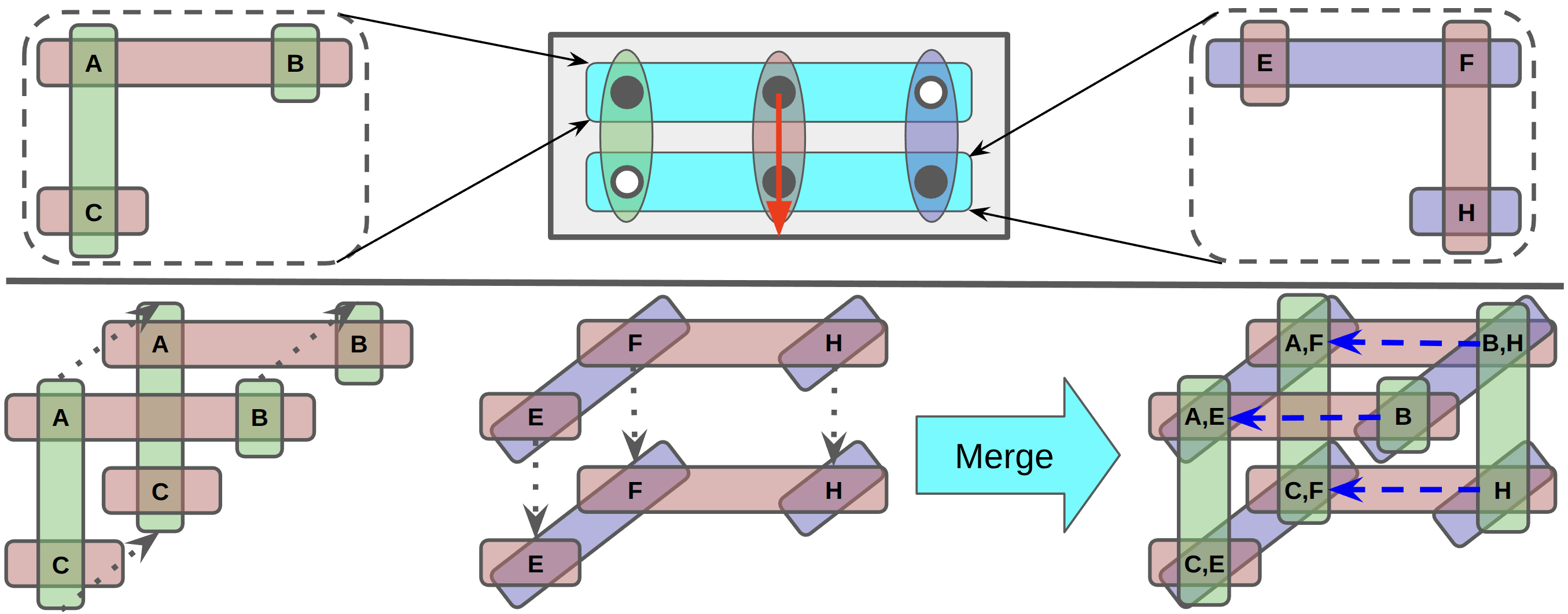}
	\caption{Visualization of the matrix multiplication of two jagged matrices. Modes are color-coded.}
	\label{fig:JaggedMatMul}
\end{figure}

\paragraph{The Fish Product.}
\begin{wrapfigure}[16]{r}{0.5\textwidth}
	\centering
	\raisebox{0pt}[\dimexpr\height-1.5\baselineskip\relax]{
		\includegraphics[width=0.5\textwidth]{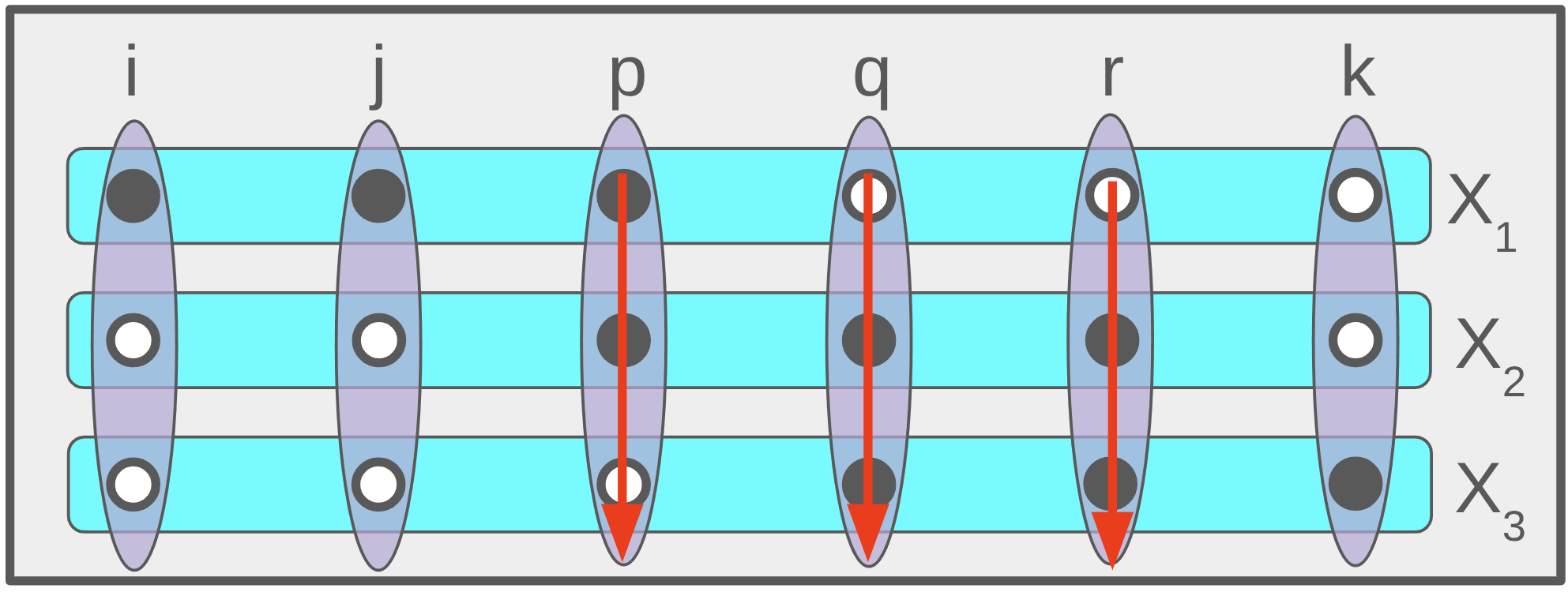}}
	\caption{Tensor operation matrix (TOM) representation of the fish product. Rows correspond to the input tensors; columns to the modes of the output. Filled/open circles indicate which modes are/are not part of which inputs. {\color{red}Red arrows} denote contractions. Observe that despite being arity $3$, this operation has coupling arity $2$.}
	\label{fig:FishProduct}
\end{wrapfigure}
The fish product is an example of ternary operation which has only binary coupling-arity. It was studied in \citenump{Plexes}. We recall the definition:
\begin{equation*}
	Y[i, j, k] = \sum_{p,q,r}X_1[i, j, p]X_2[p,q,r]X_3[q,r,k]
\end{equation*}
The tensor operation for the fish product is shown in \cref{fig:FishProduct}. Notice that despite being a ternary operation (i.e., an operation involving $3$ tensors), the coupling-arity is only $2$. This makes its decomposition into binary tensor operations (see \cref{cor:DecompIntoBinary}) particularly obvious:
\begin{equation*}
	\begin{aligned}
		Z[i,j,q,r] &= \sum_{p}X_1[i,j,p]X_2[p,q,r]\\
		Y[i, j, k] &= \sum_{q,r}Z[i,j,q,r]X_3[q,r,k]
	\end{aligned}
\end{equation*}

\paragraph{High Complexity Examples.}
\begin{wrapfigure}[9]{r}{0.5\textwidth}
	\centering
	\raisebox{0pt}[\dimexpr\height-4.0\baselineskip\relax]{\includegraphics[width=0.5\textwidth]{Figures/SpicyTOM.png}}
	\caption{Copy of \cref{fig:SpicyTOM}. A TOM representation of a high complexity operation.}
	\label{fig:SpicyTOM_Appendix}
\end{wrapfigure}
We now provide full formulae for some of the high complexity operations discovered during the dataset collection (sampling process described in full in \cref{sec:DataCollection}). We start with the operation from \cref{fig:SpicyTOM}. For convenience, the TOM for this operation is copied in \cref{fig:SpicyTOM_Appendix}. The formula for this operation is given below:
\begin{equation*}
	\begin{aligned}
		&Y[i,j,k,l,m,n,o] =\\
		&\sum_{j,k,l,m,o}X_1[l,n]W_1[j,k,n,o]X_2[i,k,m,n,o]W_2[i,j,l,m]
	\end{aligned}
\end{equation*}

\begin{wrapfigure}[10]{r}{0.475\textwidth}
	\centering
	\raisebox{0pt}[\dimexpr\height-0.0\baselineskip\relax]{\includegraphics[width=0.475\textwidth]{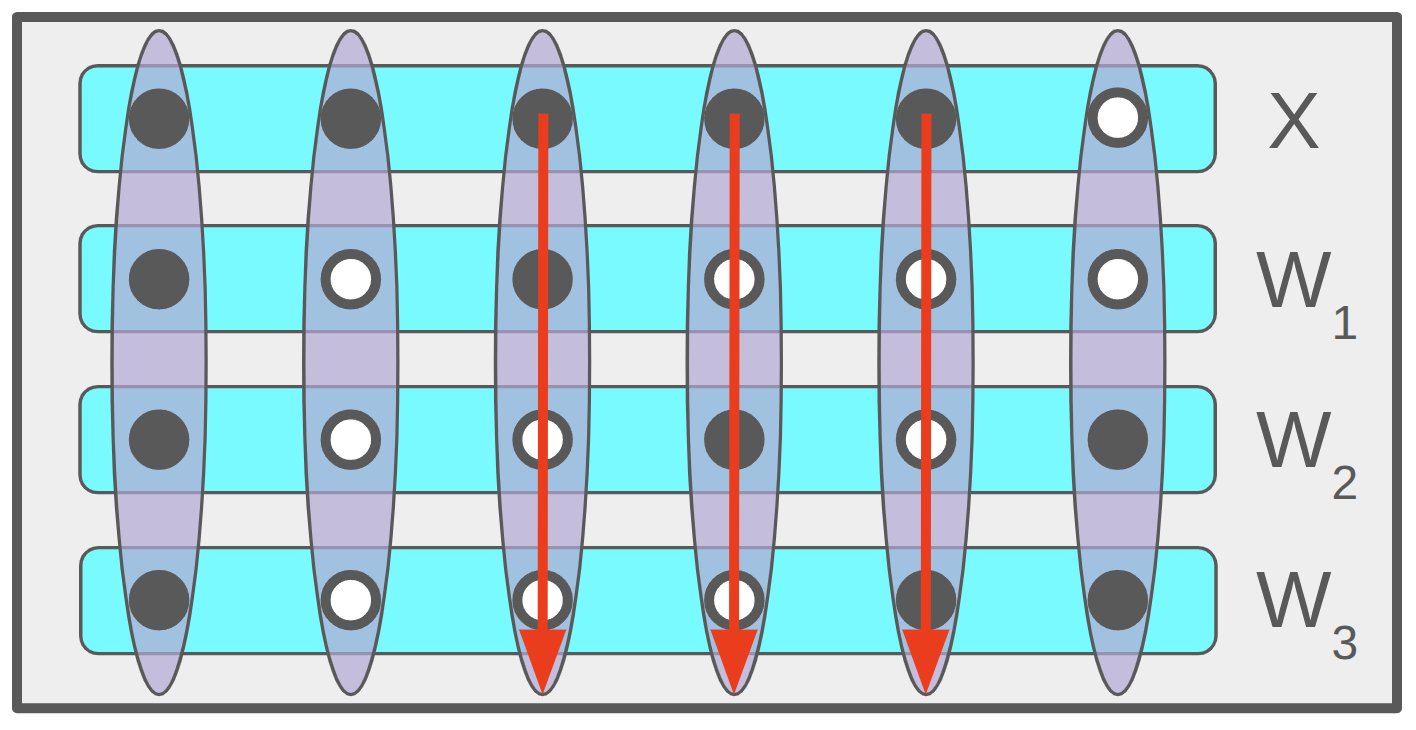}}
	\caption{TOM with $\cC_O = 6$, $\cC_{\alpha} = 4$, $\cC_A = 4$.}
	\label{fig:SpicyTOM2}
\end{wrapfigure}
Another high-complexity tensor operation from a top-performing sampled architecture is shown in \cref{fig:SpicyTOM2}. This is an example of an operation which has $\cC_A \neq \cC_{\alpha} = 4$. For completeness, the formula defined by this TOM is given below:
\begin{equation*}
	\begin{aligned}
		&Y[i,j,k,l,m,n] =\\
		&\sum_{k,l,m}X[i,j,k,l,m]W_1[i,k]W_2[i,l,n]W_3[i,m,n]
	\end{aligned}
\end{equation*}
This operation has both arity $4$ and coupling-arity $4$, making it an example of a ``full" quad-ary tensor operation. Notice that three of the four tensors involved in this operation are learned, meaning it can be thought of as a very large fully connected layer which has been ``operand factored". This effectively enforces a particular sort of structural bias on this fully connected layer.

\begin{wrapfigure}[8]{r}{0.5\textwidth}
	\centering
	\raisebox{0pt}[\dimexpr\height-0.0\baselineskip\relax]{\includegraphics[width=0.5\textwidth]{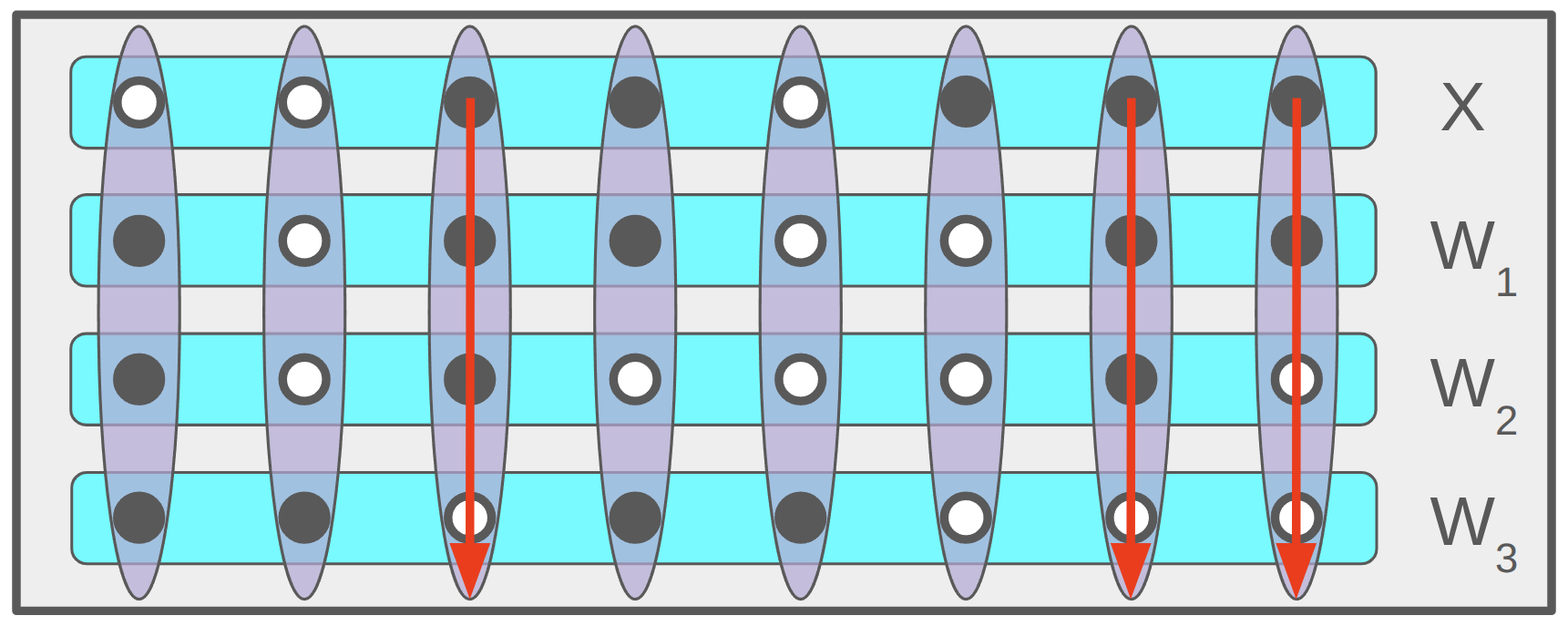}}
	\caption{TOM with $\cC_O = 8$, $\cC_{\alpha} = 4$, $\cC_A = 3$.}
	\label{fig:SpicyTOM3}
\end{wrapfigure}
A third high-complexity tensor operation from a top-performing sampled architecture is shown in \cref{fig:SpicyTOM3}. This is an example of an operation which has $\cC_A \neq \cC_{\alpha} = 4$. For completeness, the formula defined by this TOM is given below:
\begin{equation*}
	\begin{aligned}
		Y[i,j,k,l,m,n,o,p] &=\\
		\sum_{k,o,p}\Big[X[k,l,n,o,p]&W_1[i,k,l,o,p]\\&W_2[i,k,o]W_3[i,j,l,m]\Big]
	\end{aligned}
\end{equation*}
Notice that, similarly to the previous example, three of the four operands are learned, meaning that this operand can be thought of as structurally regularized fully connected layer.

\subsubsection{\label{sec:TensorOps_Connections}Connections to Other Frameworks}

\paragraph{The Plex Formalism.} There is a natural connection between tensor operations (as we have defined them) and the plexes of \citenump{Plexes}. Recall that a plex is hypergraph whose vertices are modes of tensors and whose hyper-edges are tensors (couplings are then given as hyper-edge intersections). There is a straightforward correspondence between these representations of tensor operations: the hypergraph defined a tensor operation matrix corresponds exactly to that operation's plex diagram. Precisely, the \textit{hypergraph defined by a tensor operation matrix} $s^4$ is the structure $\langle V = \cM', E = \cX^3_T|_{s^4} \rangle$, where $\cM'$ is the set of modes modulo the couplings and $\cX^3_T|_{s^4}$ is the set of tensors contained in $s^4$. Stated another way, TOMs can be interpreted as incidence matrices of hypergraphs, where columns/rows correspond to vertices/hyper-edges. These hypergraphs are exactly the plex diagrams.

\paragraph{Copresheaf Networks.} There are several connections between our framework and the copresheaf framework introduced in \citenump{CopresheafNNs}. We recall that a \textit{copresheaf neighborhood matrix} (CNM) (Definition 7 of \citenump{CopresheafNNs}) is a linear transformation valued multidimensional array of order $2$, whose entries are determined by a neighborhood relation. Typically, these neighborhood relations are derived from the topology of a combinatorial complex (Definition 1 of \citenump{CopresheafNNs}). Given one or more copresheaf neighborhood matrices, one can then construct various types of neural network ``layers" (Definitions 9, 10, and 11 of \citenump{CopresheafNNs}).

\begin{wrapfigure}[10]{r}{0.55\textwidth}
	\centering
	\raisebox{0pt}[\dimexpr\height-1.0\baselineskip\relax]{\includegraphics[width=0.55\textwidth]{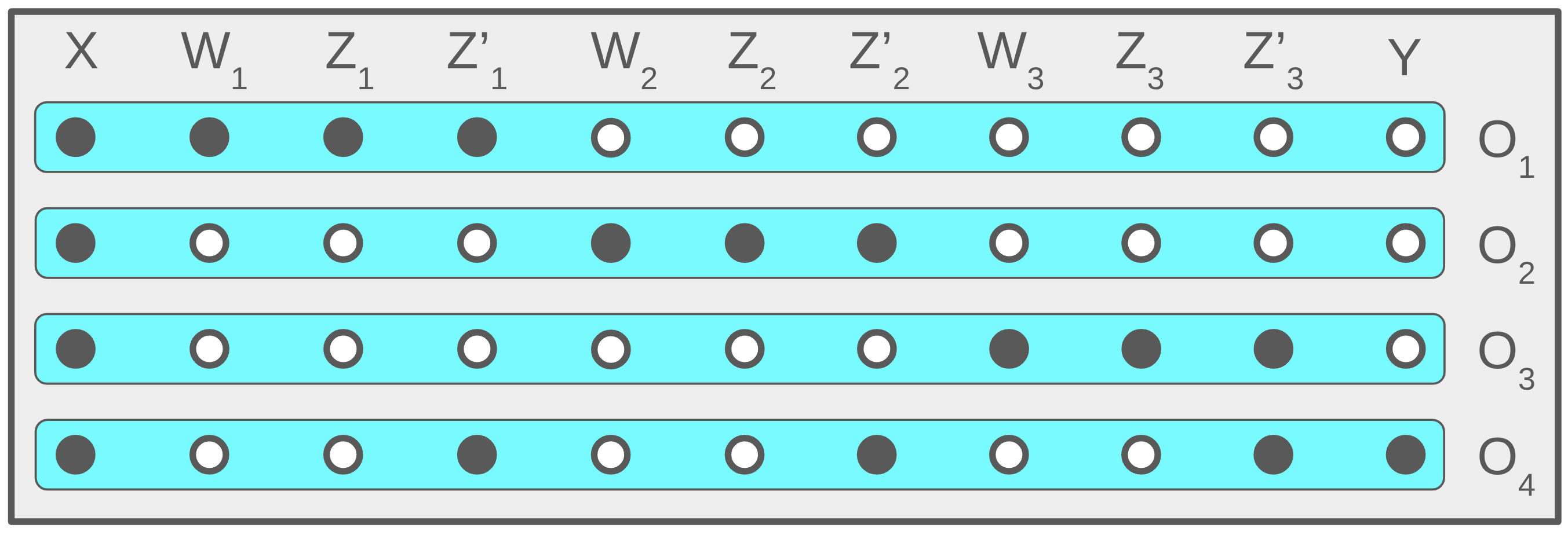}}
	\caption{TEM for Copresheaf message aggregation on a vertex neighborhood of cardinality $3$.}
	\label{fig:CopresheafTEM}
\end{wrapfigure}
Naturally, any of these copresheaf-based networks can be expressed in our framework, as they are still combinations of tensor operations and mode maps (most mode maps being trivial). However, there are deeper connections between CNMs and TEMs. For the sake of discussion, let us focus our attention to the message aggregation step which is common to all the specific networks constructed in definitions 9, 10, and 11 of \citenump{CopresheafNNs}. Let $X$ denote $h_x^{(l)}$, $Y$ denote $h_x^{(l+1)}$, $Z_i$ denote $h_{y_i}^{(l)}$ for each $y_i$ in the neighborhood of $x$, and $Z'_i$ denote the result of each application of the learnable message function.

Copresheaf message aggregation then corresponds to a particular type of TEM, as shown in \cref{fig:CopresheafTEM}. Note that we have, for simplicity, assumed that the learnable functions $\alpha$ and $\beta$ in Definition 9 of \citenump{CopresheafNNs} use compatible base operations with the linear transformations of the CNM.

It is interesting to observe that the copresheaf framework is effectively a categorical language for the description of TEMs similar to that of \cref{fig:CopresheafTEM}. An important detail is that the transformations that can be implemented in a CNM are strictly linear, meaning that in particular, they cannot leverage certain operations of $\cC_{\alpha} > 2$. This limitation highlights the key benefit of our framework, namely, it allows for the description of both TEM structure (including TEMs of the copresheaf network type) and tensor operation structure via the TOMs. Stated another way, our framework can be seen as a generalization of copresheaf networks which allows for higher complexity tensor operations to be used as the CNM functions.

\clearpage

\subsection{Architectural Derivations\label{sec:ArchDerivs}}
In this section, we discuss how the complexity measures reported in \cref{tab:ComplexityHistory} of the main paper are derived. We start with fully connected networks, and work our way left-to-right in the table. Throughout, we use the term \textit{tensor shape} to refer to the ordered tuple $(M_1, ..., M_{\cO})$. $X$ will be used for input tensors, $Z$ for intermediate tensors, and $Y$ for outputs.

\subsubsection{Computation Details.\label{sec:Disambig}}
There are often several equivalent ways to encode neural networks as HCCs. Therefore, some decisions must be made to eliminate ambiguity in the computed complexity measures. As articulated in the main paper, our focus is the study of how architectures have become more intuitively ``complicated" over the past $40$ years. We describe the disambiguating assumptions made with this objective in mind.

One of the largest sources of encoding ambiguity stems from \cref{thrm:ArityDecomp}. This is because when analyzing real-valued neural networks, it is difficult to determine the boundaries of tensor operations as they may be freely decomposed and merged to form a wide range of different arity operations. To remove this ambiguity from our analysis, we simplify encodings whenever possible by defining lower rank cells to be simpler. For example, of the two equivalent formulations of the fish product (see \cref{fig:FishProduct}), the ternary encoding is the simplest. This is because it involves fewer operations ($1$ instead of $2$) despite increasing arity complexity. We note that our set-theoretic definition of tensor operation necessarily requires that no operation contain multiple copies of the same tensor. 

We recall that tensor operations are evaluated with a choice of two \textit{fixed} base operations on $\mathbb{R}$, meaning that \cref{cor:OperationMerging} does not apply to operations evaluated with different base operations. In particular, this means that we can only merge binary tensor operations into a single ternary operation when the original operations are evaluated with the same base operations.

As we aim to analyze neural networks, we define a ``nonlinear activation" to be a unary tensor operation with base operations given by the nonlinear function. This is important, because it means that nonlinear activations (by this formal definition) clarify the boundaries between tensor operations. Specifically, no nonlinearity can occur ``in the middle of" a single ($\times$, $+$)-type of tensor operation.

For the sake of analysis, we ignore the batch mode when computing order complexity. Similarly, we do not count the ``packaging" tensors required to express some mode maps towards the final tensor complexity. In other words, tensor complexity is computed based on tensors which have distinct base sets. Yet, mode maps are nevertheless still relevant to complexity computation, as non-trivial mode maps further clarify tensor operations boundaries. Specifically, no non-identity mode map can occur ``in the middle of" a single tensor operation.

\paragraph{Fully Connected Networks.}
Fully connected networks are defined as a sequence of alternating matrix multiplications and nonlinear activations \citenump{NAGExpress}. We focus our attention to a single ``layer", which consists of a single operation of order complexity $3$ and arity $2$ (see \cref{fig:TensorOpEx,fig:JaggedMatMul}).

\paragraph{Convolutional Networks.}
\begin{wrapfigure}[8]{r}{0.5\textwidth}
	\centering
	\raisebox{0pt}[\dimexpr\height-1.5\baselineskip\relax]{\includegraphics[width=0.5\textwidth]{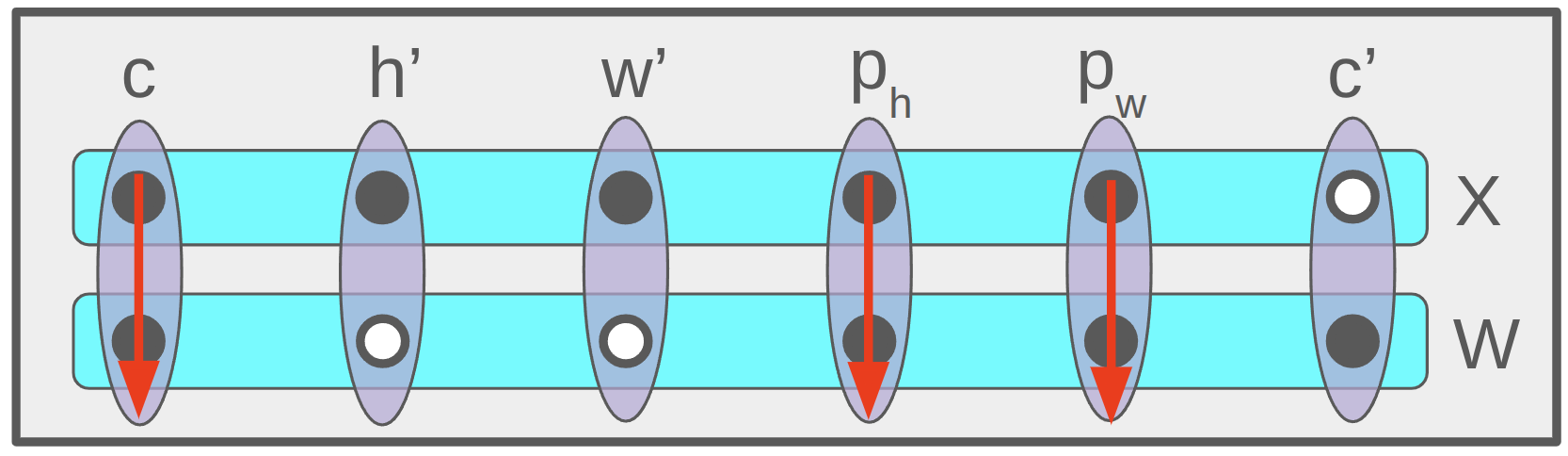}}
	\caption{TOM for 2-dimensional convolution.}
	\label{fig:Conv2TOM}
\end{wrapfigure}
We now consider purely convolutional networks such as VGG \citenump{VGG}. We focus our attention to a single convolutional ``layer", which consists of a binary operation and an unfolding mode map. The mode map is responsible for extracting patches from the input image of shape $(c, h, w)$, a process which produces an order $5$ tensor of shape $(c, h', w', p_h, p_w)$, where $p_h, p_w$ are the patch sizes for the height and width modes, and $h', w'$ are the numbers of extracted patches.

The TOM for convolution (after the mode map) is given in \cref{fig:Conv2TOM}. The weight tensor is of shape $(c, p_h, p_w, c')$, where $c'$ is the number of output channels. We observe that this operation has $\cC_O = 6$, $\cC_A = \cC_{\alpha} = 2$.

\paragraph{Residual Convolutional Networks.}
Next, we consider residual models such as ResNet \citenump{ResNet}. We focus our attention to the fundamental aspect of ResNets: the residual block (see Figure 2 of \citenum{ResNet}). It is evident from this ``architecture diagram" that there are $3$ tensor operations involved. As the element-wise addition is binary and has order complexity $3$, the convolutional operations define $\cC_O$ and $\cC_{\alpha}$.

\begin{wrapfigure}[8]{r}{0.5\textwidth}
	\centering
	\raisebox{0pt}[\dimexpr\height-2.0\baselineskip\relax]{\includegraphics[width=0.5\textwidth]{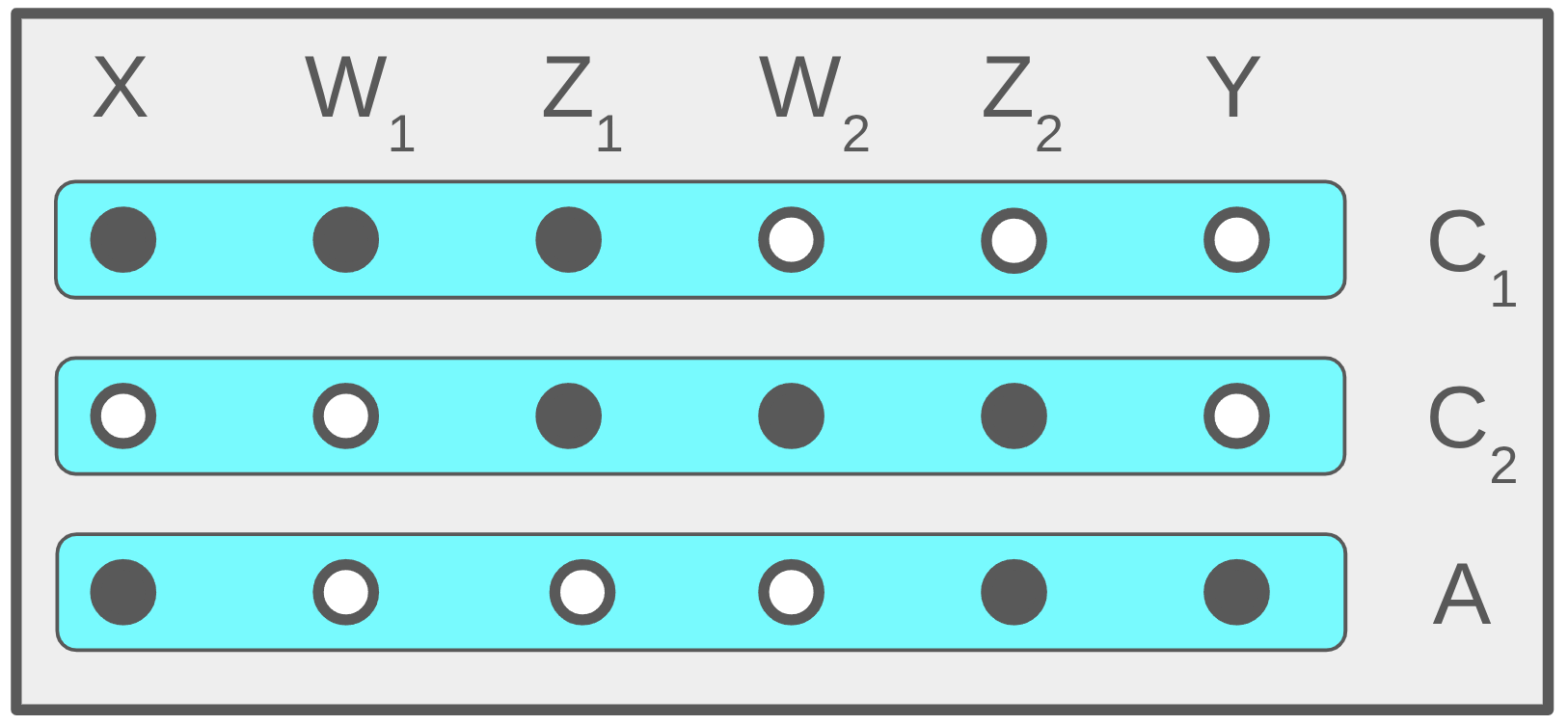}}
	\caption{TEM for a residual block.}
	\label{fig:ResNetTEM}
\end{wrapfigure}
As demonstrated in \cref{fig:ResNetTEM}, $\cC_T = 6$, and $\cC_{op} = 3$ for a residual block. Explicitly, the system of operations is given by:
\begin{equation*}
	\begin{aligned}
		Z_1 &= X \circledast W_1\\\
		Z_2 &= Z_1 \circledast W_2\\
		Y &= X \oplus Z_2
	\end{aligned}
\end{equation*}
where $\circledast$ is 2-dimensional convolution, and $\oplus$ is element-wise addition. The convolutions cannot be merged with \cref{thrm:ArityDecomp}, because there is a non-trivial mode map in between them. It is useful to note that \cref{fig:ResNetTEM} is effectively the incidence matrix for the ``tensor operation graph" (i.e., ``architecture diagram") shown in the original figure from \citenump{ResNet}. This theme will continue in the following derivations, illustrating how TEMs encode the intuitive idea of an architecture diagram.

\paragraph{Self-Attention.}
\begin{wrapfigure}[7]{r}{0.5\textwidth}
	\centering
	\raisebox{0pt}[\dimexpr\height-1.0\baselineskip\relax]{\includegraphics[width=0.5\textwidth]{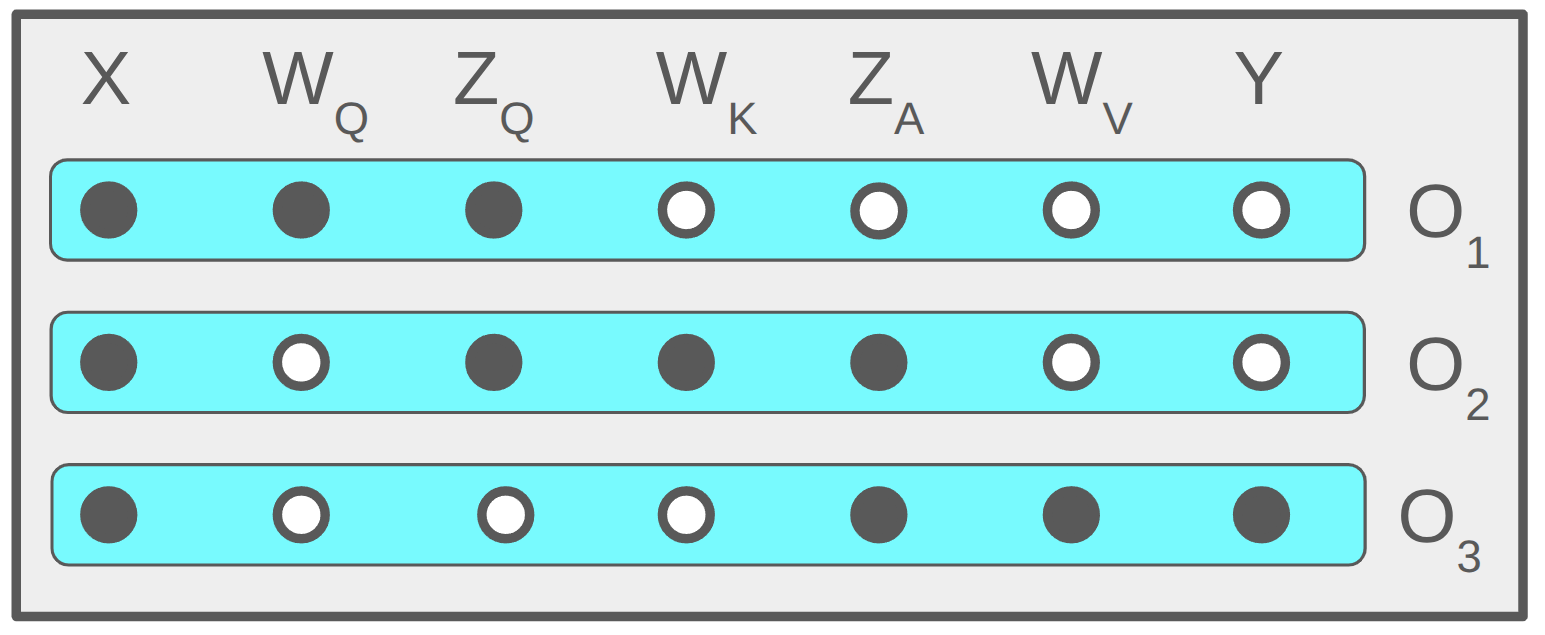}}
	\caption{TEM for self attention.}
	\label{fig:SATEM}
\end{wrapfigure}
Next, we consider transformer models such as \citenump{Transformers,ViT}. We focus our attention to a single self-attention ``operation". We start with the system of operations:
\begin{equation*}
	\begin{aligned}
		Z_Q &= X \times W_Q\\
		Z_K &= X \times W_K\\
		Z_V &= X \times W_V\\
		Z_A &= Z_Q \times Z_K^T\\
		Y &= Z_A \times Z_V
	\end{aligned}
\end{equation*}

\begin{wrapfigure}[19]{r}{0.4\textwidth}
	\centering
	\raisebox{0pt}[\dimexpr\height-1.0\baselineskip\relax]{\includegraphics[width=0.4\textwidth]{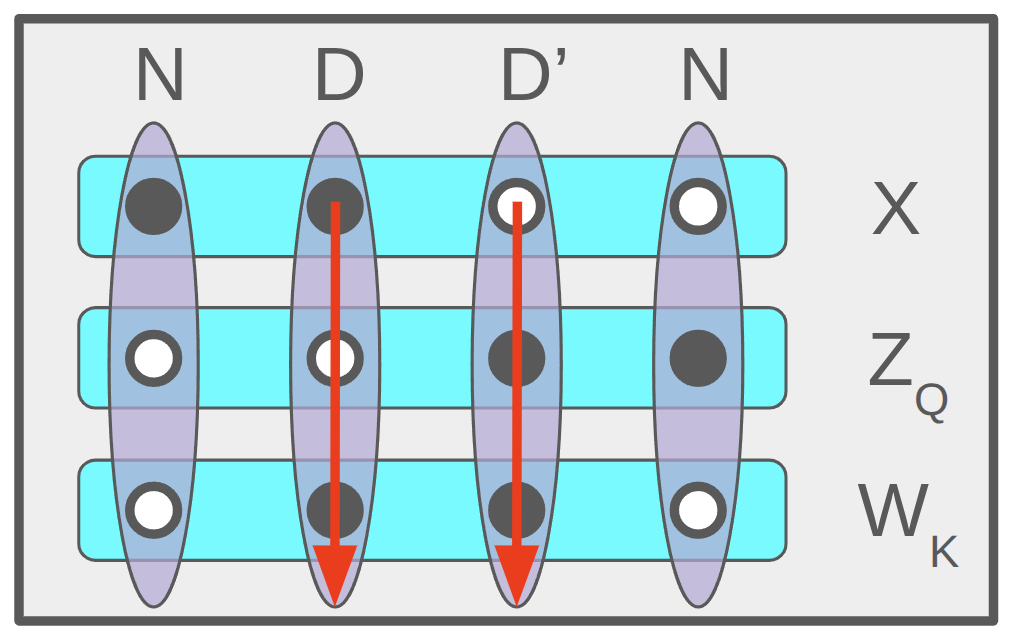}}\\
	\includegraphics[width=0.4\textwidth]{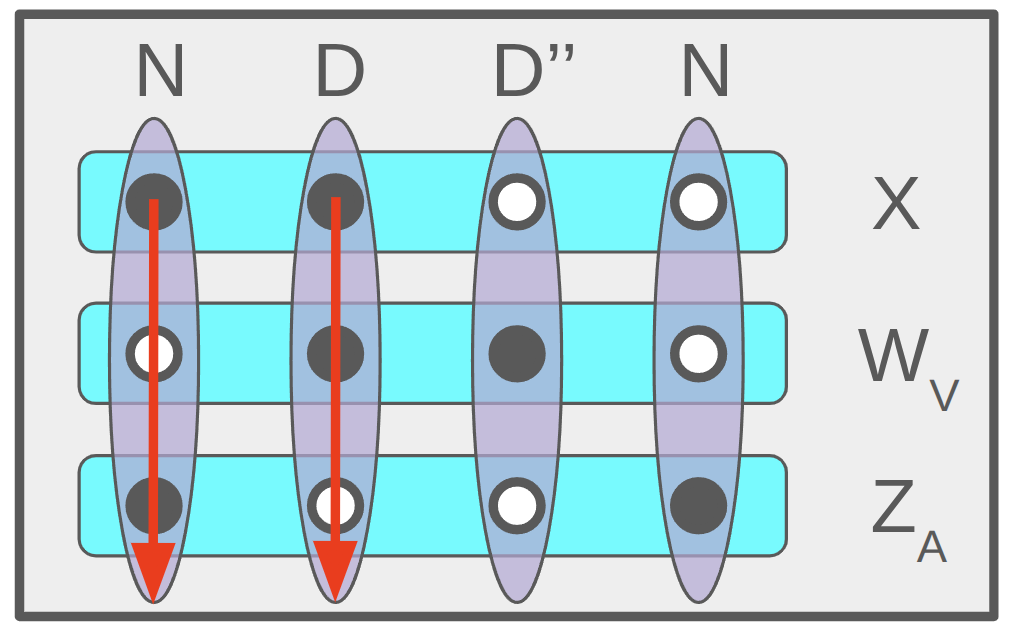}
	\caption{TOMs for operations $2$ and $3$ of self attention.}
	\label{fig:SATOMs}
\end{wrapfigure}
The above would suggest that $\cC_{op} = 5$. However, as there are no activations or non-trivial mode maps between the computation of $Z_3$ and $Y$, we can combine those operations into a single operation of arity $3$ and coupling-arity $2$. Similarly, we can simplify the computation of the attention map $A$. This updated system of equations is below:
\begin{equation*}
	\begin{aligned}
		Z_Q &= X \times W_Q\\
		A &= Z_Q \times (X \times W_K)^T\\
		Y &= A \times (X \times W_V)
	\end{aligned}
\end{equation*}

This reduces $\cC_{op}$ to $3$, at the cost of increasing $\cC_{\alpha}$ to $3$. It is of course possible to swap the roles of $Z_Q$ and $Z_K$, but this does not change any of the complexity measures. $Z_Q$ and $Z_K$ cannot be simultaneously compressed without producing an operation with duplicate tensors. The TEM and TOMs for this system of operations are given in \cref{fig:SATEM,fig:SATOMs}, respectively. For ease of understanding, we include the tensor shapes for these TOMs. To recall, $X$ is of shape $(N,D)$ where $N$ is the number of tokens and $D$ is the token dimension. $D'$ and $D''$ indicate other ``hidden dimensions" which may be freely chosen.

\cref{fig:SATOMs} demonstrates why $\cC_O = 4$, $\cC_T = 7$, $\cC_{\alpha} = 3$, and $\cC_A = 2$ for single-head self-attention. This operation is (to the best of our knowledge) the first time a higher arity core block was used to construct a deep neural network. As articulated in the main paper, this observation has gone unnoticed because the operation was originally described with only binary operations. It is only thanks to \cref{thrm:ArityDecomp} that we are able to glean such insights.

As we will show in the following derivations, the influence of self-attention on architecture design is significant. Specifically, many models developed after 2017 exhibit similar structural patterns, namely, they result in HCCs of higher arity tensor operations after simplification by \cref{thrm:ArityDecomp}.

It is also interesting to observe that multi-head self-attention has the same complexity signature up to order complexity, which is $5$ instead of $4$. This is because the parallel heads result in an extra mode of the $W_Q, W_K, W_V$ tensors and subsequent intermediate tensors.

\paragraph{Poly-Nets.}
\begin{wrapfigure}[12]{r}{0.45\textwidth}
	\centering
	\raisebox{0pt}[\dimexpr\height-1.0\baselineskip\relax]{\includegraphics[width=0.45\textwidth]{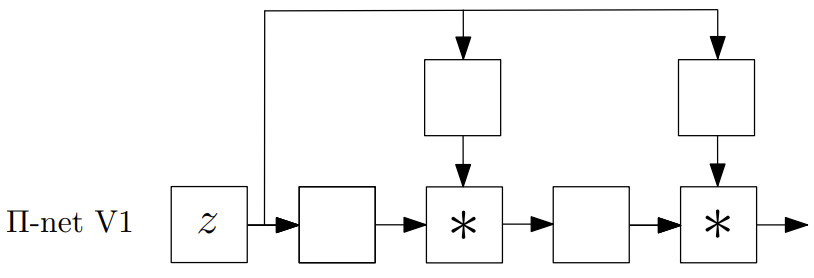}}\\
	\caption{Copy of Figure 1 (top) from \citenump{P-Nets} (reproduced with permission).}
	\label{fig:PNetFig1}
\end{wrapfigure}
Next, we consider polynomial neural networks \citenump{P-Nets}. These architectures are interesting because they are capable of learning without any non-linear activations. We focus our attention to the core building block of $\prod$-net V1 (top architecture diagram of Figure 1 from \citenump{P-Nets}). For the reader's convenience, we have reproduced this figure in \cref{fig:PNetFig1}. This architecture was originally designed for convolutional settings, so we interpret the input as an image (i.e., order $3$ tensor), and assume that all operations (boxes) are 2-dimensional convolution or Hadamard product. The resulting system of operations is given below:

\begin{wrapfigure}[10]{r}{0.5\textwidth}
	\centering
	\raisebox{0pt}[\dimexpr\height-0.75\baselineskip\relax]{\includegraphics[width=0.5\textwidth]{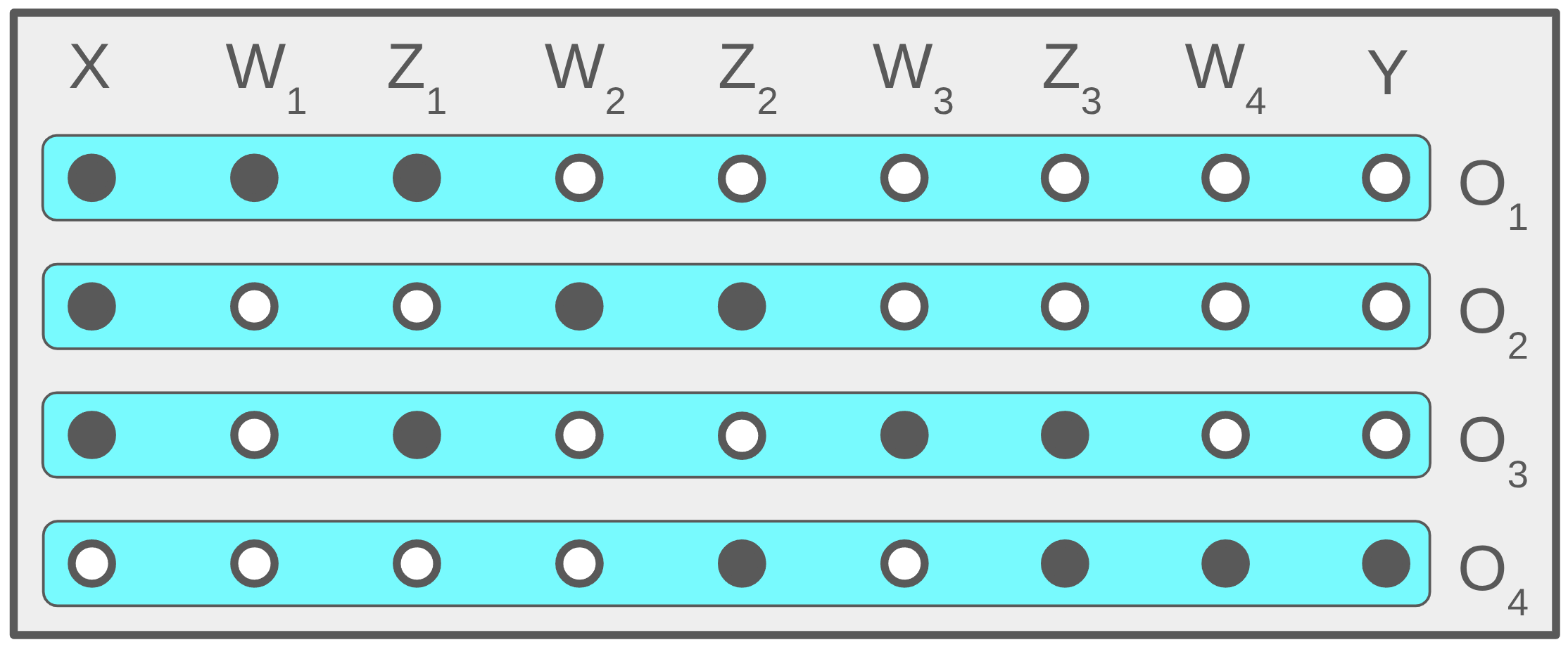}}\\
	\caption{TEM for $\prod$-net V1.}
	\label{fig:PNetTEM}
\end{wrapfigure}

\begin{equation*}
	\begin{aligned}
		Z_1 &= X \circledast W_1\\
		Z_2 &= X \circledast W_2\\
		Z_3 &= X \circledast W_3\\
		Z_4 &= Z_1 \odot Z_3\\
		Z_5 &= Z_4 \circledast W_4\\
		Y &= Z_2 \odot Z_5
	\end{aligned}
\end{equation*}

\begin{wrapfigure}[9]{r}{0.4\textwidth}
	\centering
	\raisebox{0pt}[\dimexpr\height+1.5\baselineskip\relax]{\includegraphics[width=0.4\textwidth]{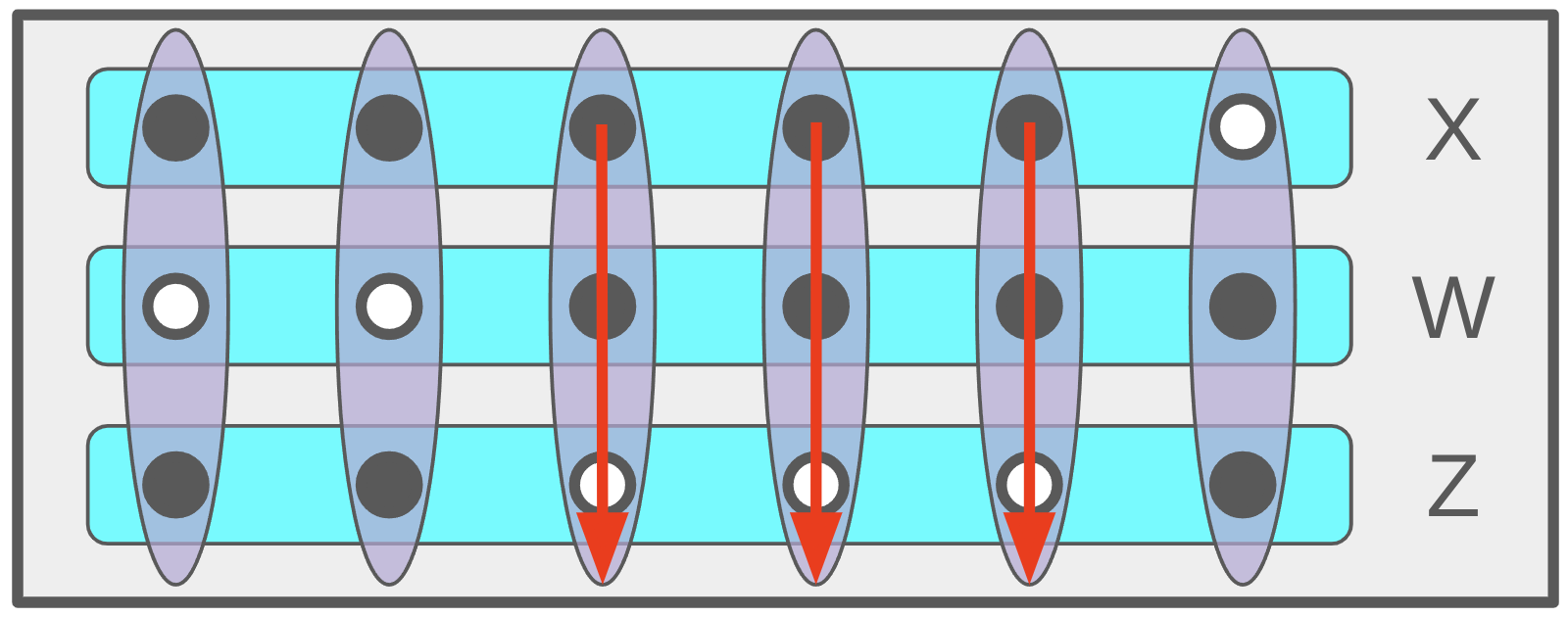}}\\
	\caption{TOM for operations $3$ and $4$ of $\prod$-net V1.}
	\label{fig:PNetTOM}
\end{wrapfigure}
Similarly to self-attention, this system of operations can be simplified by combining operations. Specifically, we can combine the computation of $Z_3$ and $Z_4$ as no non-trivial mode maps nor non-linear activations occur in between. Similarly, we can combine the computation of $Z_5$ and $Y$. This produces the following system of operations:
\begin{equation*}
	\begin{aligned}
		Z_1 &= X \circledast W_1\\
		Z_2 &= X \circledast W_2\\
		Z_3 &= Z_1 \odot (X \circledast W_3)\\
		Y &= Z_2 \odot (Z_3 \circledast W_4)
	\end{aligned}
\end{equation*}
The TEM for this compressed system of operations is given in \cref{fig:PNetTEM}, and the TOM for the third and fourth operations is given in \cref{fig:PNetTOM}. We conclude that $\cC_{op} = 4$, $\cC_T = 9$, $\cC_{\alpha} = 3$, and $\cC_O = 6$, while $\cC_A$ is still only $2$. It is interesting to observe that this architecture exhibits the same arity signature as self-attention. This is indicative of the larger patterns observed in this work, namely, that higher arity complexity operations exhibit promising performance characteristics.

\paragraph{MO-Nets.}
\begin{wrapfigure}[9]{r}{0.5\textwidth}
	\centering
	\raisebox{0pt}[\dimexpr\height-1.0\baselineskip\relax]{\includegraphics[width=0.5\textwidth]{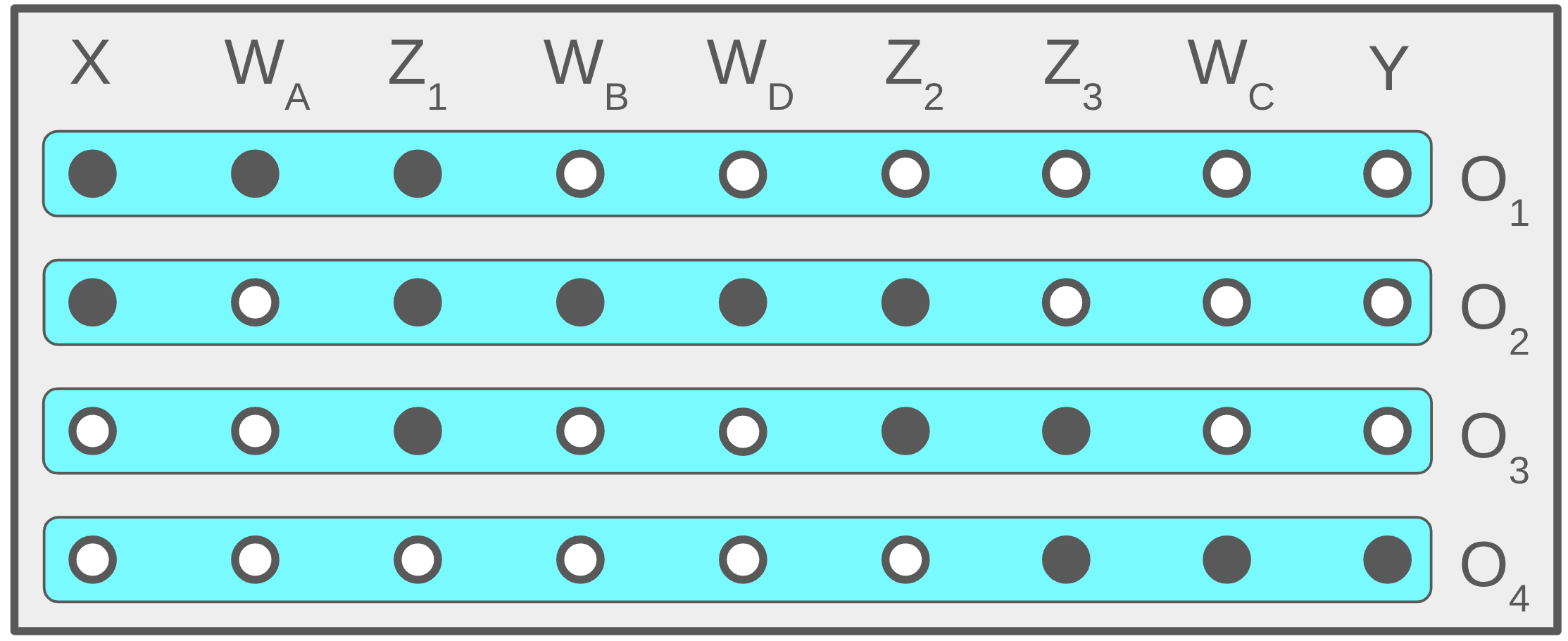}}\\
	\caption{TEM for MO-Nets.}
	\label{fig:MONetTEM}
\end{wrapfigure}
Next, we consider the multilinear operator networks (MO-Nets) of \citenump{MultiLinOpNets}. We focus our attention to the core building block defined by equation (1) of \citenump{MultiLinOpNets}. We use the tensor shapes provided in the original paper. This system of operations is given below:
\begin{equation*}
	\begin{aligned}
		Z_1 &= X \times W_A\\
		Z_2 &= X \times W_B \times W_D\\
		Z_3 &= Z_1 \odot Z_2\\
		Z_4 &= Z_3 \oplus Z_1\\
		Y &= Z_4 \times W_C
	\end{aligned}
\end{equation*}

\begin{wrapfigure}[10]{r}{0.4\textwidth}
	\centering
	\raisebox{0pt}[\dimexpr\height-0.5\baselineskip\relax]{\includegraphics[width=0.4\textwidth,height=0.25\textwidth]{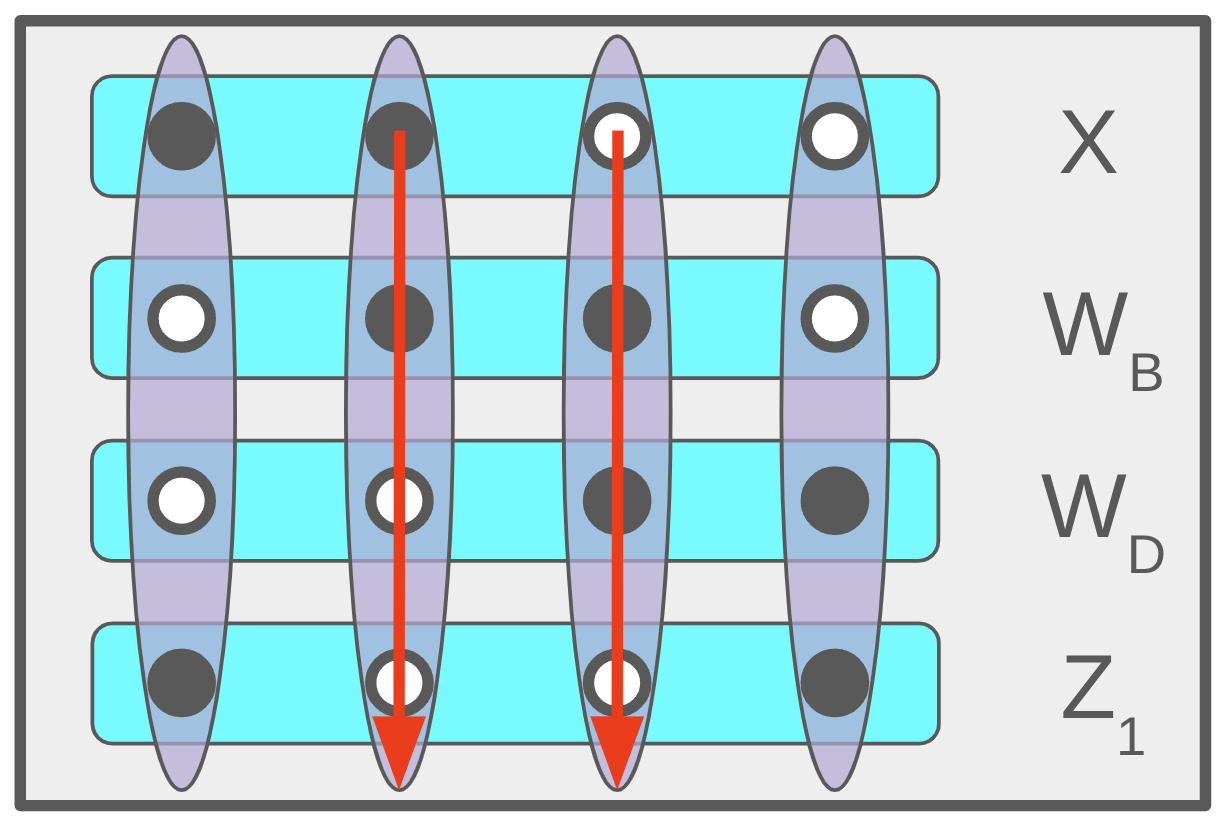}}\\
	\caption{TOM for operation $1$ of MO-Nets.}
	\label{fig:MONetTOM}
\end{wrapfigure}
Where $X$ has shape $(N, D)$, $W_A$ has shape $(D, M)$, $W_B$ has shape $(D, L)$, $W_D$ has shape $(L, M)$, and $C$ has shape $(M, O)$. Similarly to the analysis of Poly-Nets, we can combine the computation of $Z_2$ and $Z_3$. Nothing else can be simplified because the TOM for $Z_4$ uses different base operations. The simplified system of operations is given below:
\begin{equation*}
	\begin{aligned}
		Z_1 &= X \times W_A\\
		Z_2 &= Z_1 \odot (X \times W_B \times W_D)\\
		Z_3 &= Z_2 \oplus Z_1\\
		Y &= Z_3 \times W_C
	\end{aligned}
\end{equation*}
The TEM for MO-Nets is given in \cref{fig:MONetTEM} and the TOM for operation $1$ is given in \cref{fig:MONetTOM}. We conclude that $\cC_{op} = 4$, $\cC_T = 9$, $\cC_{\alpha} = 4$, $\cC_O = 4$, and $\cC_A = 2$. This is (to our knowledge) the first example of a core building block which has arity complexity $4$ despite still having coupling-arity $2$.

\paragraph{Vision Mamba.}
Next, we consider vision mamba (ViM) \citenump{VisionMamba}. This architecture is an adaptation of the mamba sequence model \citenump{Mamba} to image data. We focus our attention to the so-called ``ViM Block Process" described in Algorithm (1) of \citenump{VisionMamba}. For consistency with the other analyzed architectures, we count only a single iteration of the for-loop on line 19 (Algorithm 1, \citenump{VisionMamba}) towards the final complexity. This algorithm is very long and the analysis techniques used are identical to those used in the $3$ previous derivations, so we only highlight the more interesting observations here.

\begin{wrapfigure}[12]{r}{0.4\textwidth}
	\centering
	\raisebox{0pt}[\dimexpr\height-1.0\baselineskip\relax]{\includegraphics[width=0.4\textwidth]{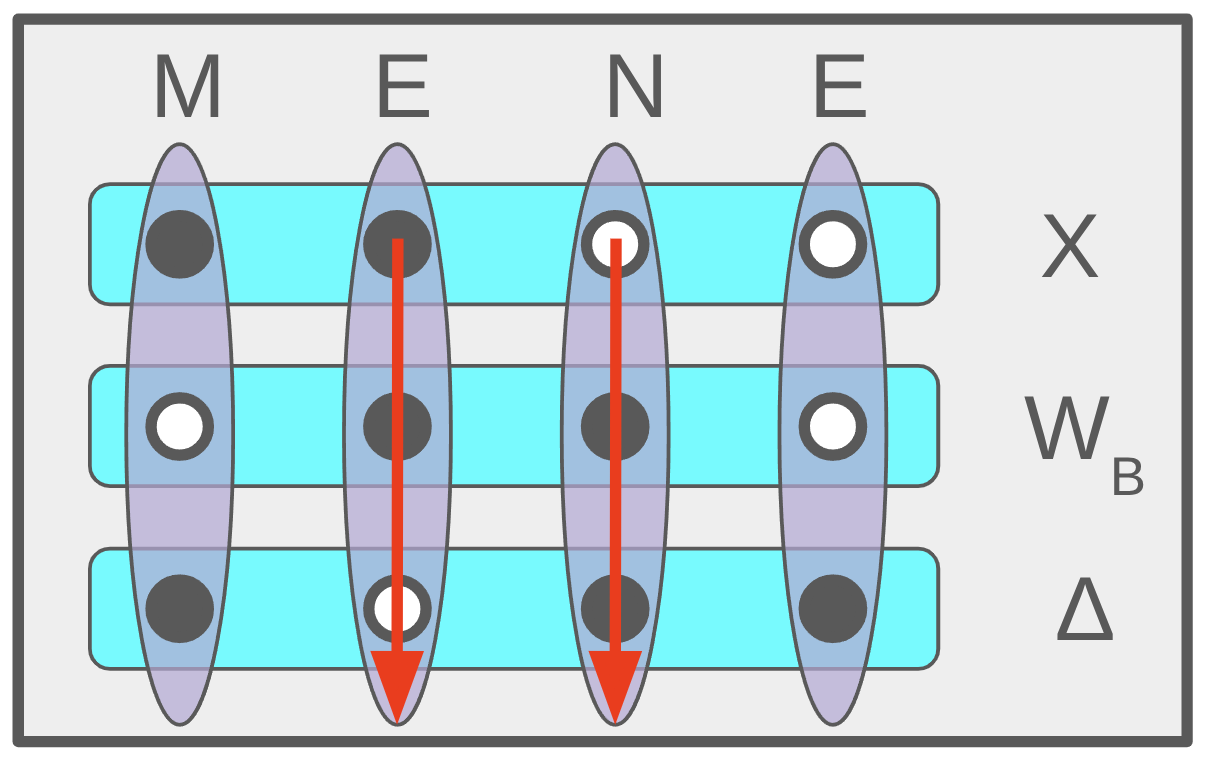}}\\
	\caption{Selected TOM from ViM.}
	\label{fig:VimTOM}
\end{wrapfigure}
First, the computation on line 8 (Algorithm 1, \citenump{VisionMamba}) can be combined with that of the second term on line 14 to produce a ternary operation. The TOM for this is shown in \cref{fig:VimTOM}. 

Second, the recurrent computation of $h$ can be tensorized by storing the value of $h$ after each loop iteration. This results in a tensor of shape $(M,E,N)$. Using this trick, we can combine the computation on line 9 with that of line 21 (Algorithm 1, \citenump{VisionMamba}) to produce another ternary operation. Intriguingly, the TOM for this operation is exactly the same as that of \cref{fig:VimTOM}, suggesting that this particular tensor operation is of foundational importance to the vision mamba architecture. We conclude that the ViM core block has $\cC_O = 4$ and $\cC_{\alpha} = 3$. Still, $\cC_A  = 2$.

The remainder of the complexity analysis for this core block is a routine counting exercise. The conclusion (leveraging the simplifications discussed above) is that a single ViM core block has $\cC_{op} = 27$, $\cC_T = 45$.

\clearpage

\paragraph{Deep Tensor Tree Networks.}
\begin{wrapfigure}[20]{r}{0.25\textwidth}
	\centering
	\raisebox{0pt}[\dimexpr\height-0.0\baselineskip\relax]{\includegraphics[width=0.25\textwidth]{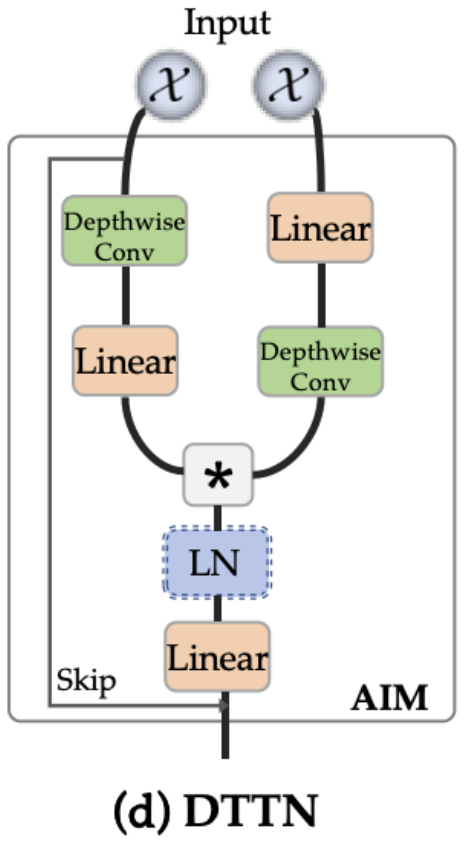}}\\
	\caption{Copy of Figure 2(d) from \citenump{DTTenNets} (reproduced with permission).}
	\label{fig:DTTNFig2d}
\end{wrapfigure}
Next, we consider the tensor tree networks (DTTNs) from \citenump{DTTenNets}. We focus our attention to the core building block of the model, the so-called asymmetric interaction module (AIM) described in Figure 2 (d) of \citenump{DTTenNets}. For the reader's convienence we have reproduced this figure in \cref{fig:DTTNFig2d}.

We start the analysis by computing the TOM for depthwise convolution. Recall that this is a variant of 2-dimensional convolution which uses separate spatial kernels for each image channel but does not change the number of channels nor contract the channel mode. The TOM is given in \cref{fig:DepthwiseConv2TOM}.

The next step in the analysis is to examine the two branches of the AIM. We recall that these branches are composed of depthwise convolution and matrix multiplication (along the channel mode). The difference is that the left branch performs depthwise convolution followed by matrix multiplication while the right branch swaps the order of these operations. This is important because the first branch can be expressed as a single tensor operation, whereas the second branch requires two operations. The reason for this asymmetric operation simplification is that non-trivial mode maps are only required for the depthwise convolution operations. Much like the previous analyses, we can integrate the Hadamard product which combines the two branches into either of the previous operations.

\begin{wrapfigure}[7]{r}{0.5\textwidth}
	\centering
	\raisebox{0pt}[\dimexpr\height-1.0\baselineskip\relax]{\includegraphics[width=0.5\textwidth]{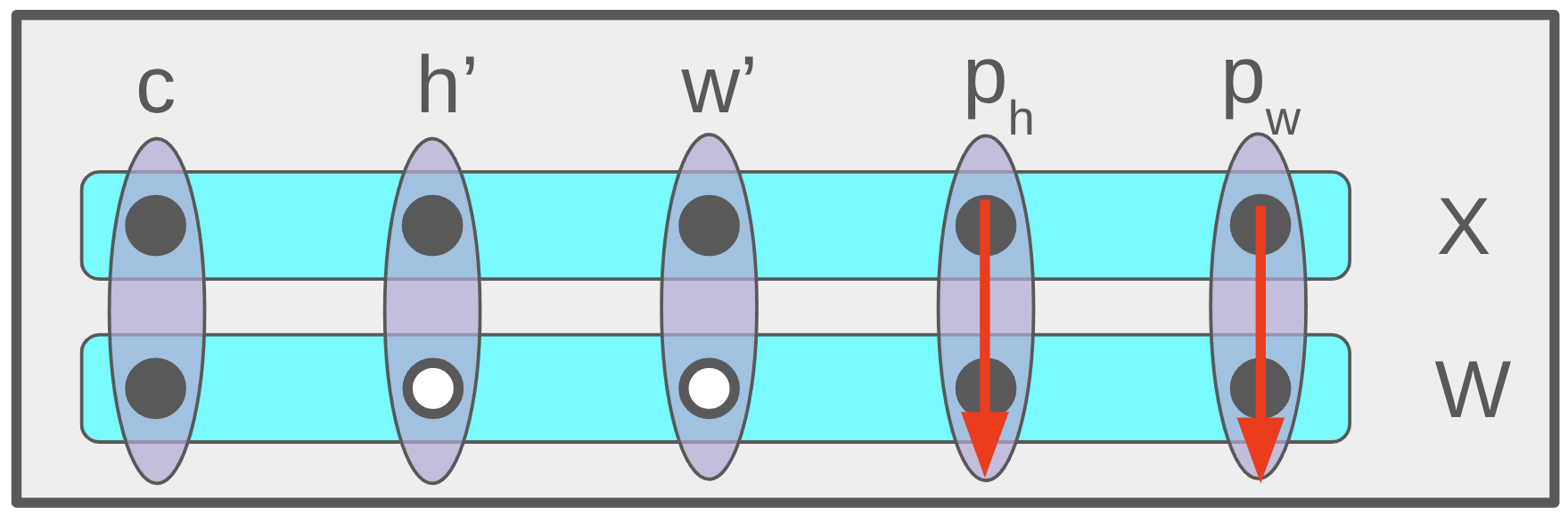}}\\
	\caption{TOM for the (2-d) depthwise convolution operation.}
	\label{fig:DepthwiseConv2TOM}
\end{wrapfigure}
We integrate the Hadamard product into the right branch as this produces the lowest complexity encoding. The resulting simplest system of operations is given below:

\begin{equation*}
	\begin{aligned}
		Z_1 &= (X \circledast_D W_{C1}) \times_{Ch} W_{L1}\\
		Z_2 &= X \times_{Ch} W_{L2}\\
		Z_3 &= (Z_2 \circledast_D W_{C2}) \odot Z_1\\
		Z_4 &= Z_3 \times_{Ch} W_{L3}\\
		Y &= X \oplus Z_4
	\end{aligned}
\end{equation*}

\begin{wrapfigure}[10]{r}{0.6\textwidth}
	\centering
	\raisebox{0pt}[\dimexpr\height-3.0\baselineskip\relax]{\includegraphics[scale=0.0875]{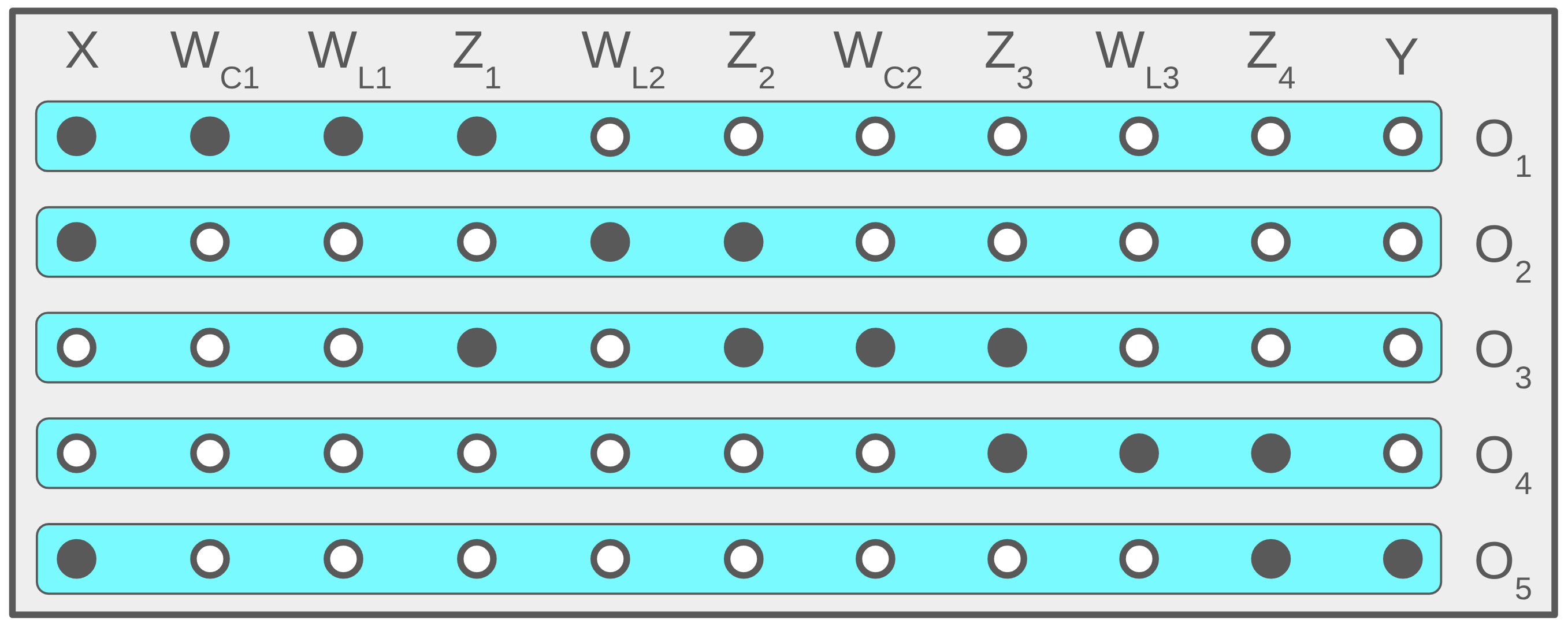}}\\
	\caption{TEM for the AIM of DTTNs.}
	\label{fig:DTTNTEM}
\end{wrapfigure}
Where $\circledast_D$ is depthwise convolution and $\times_{Ch}$ is matrix multiplication along the channel mode. The TEM is shown in \cref{fig:DTTNTEM} and the higher arity TOMs are shown in \cref{fig:DTTNTOMs}. These operations are particularly noteworthy because they are (to the best of our knowledge) the first examples of $\cC_A = 3$ operations used in a core block, at least with respect to the disambiguation assertions made in \cref{sec:Disambig}. We conclude that $\cC_{op} = 5$, $\cC_T = 11$, $\cC_{\alpha} = 3$, $\cC_O = 6$, and $\cC_A = 3$. 

\begin{figure}[b!]
	\centering
	\includegraphics[scale=0.115]{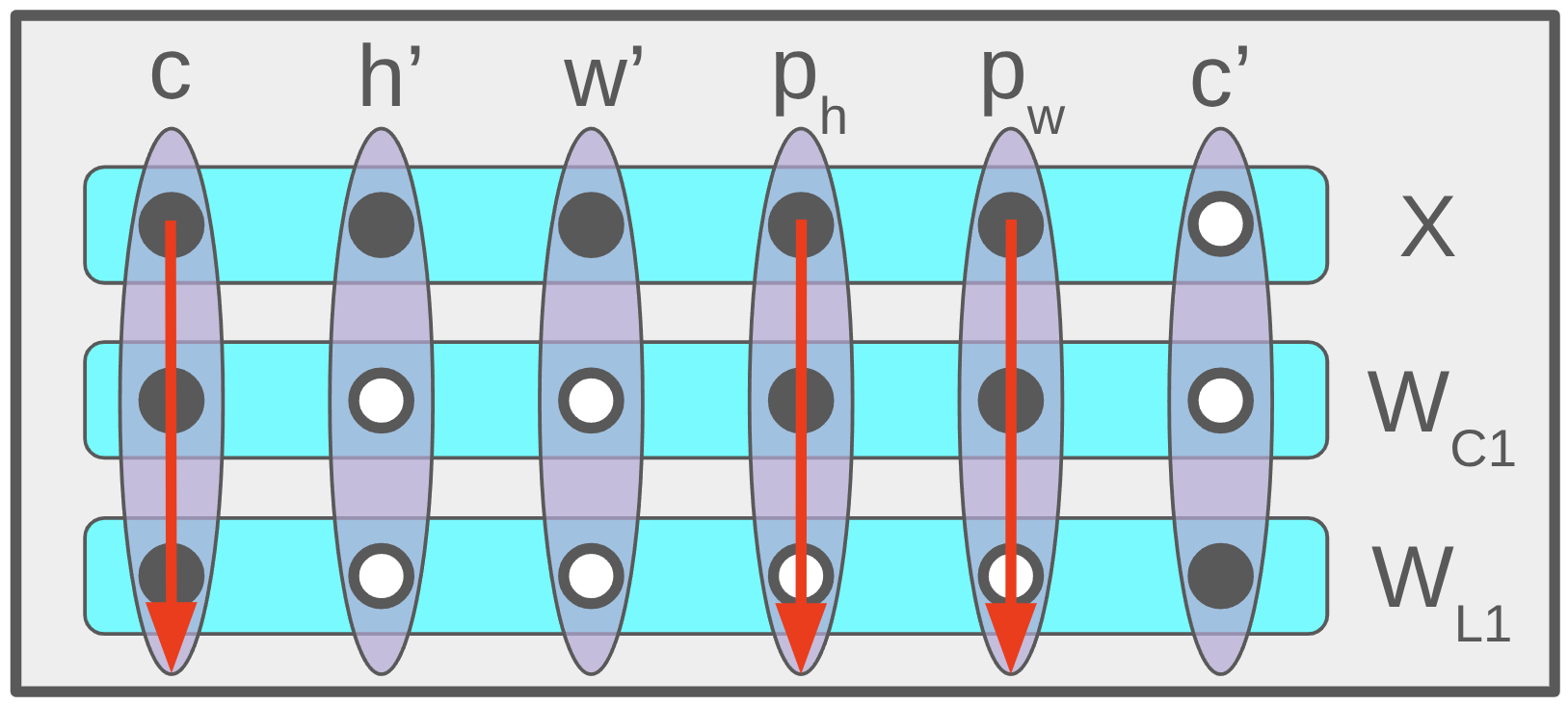}\hspace{0.75em}
	\includegraphics[scale=0.115]{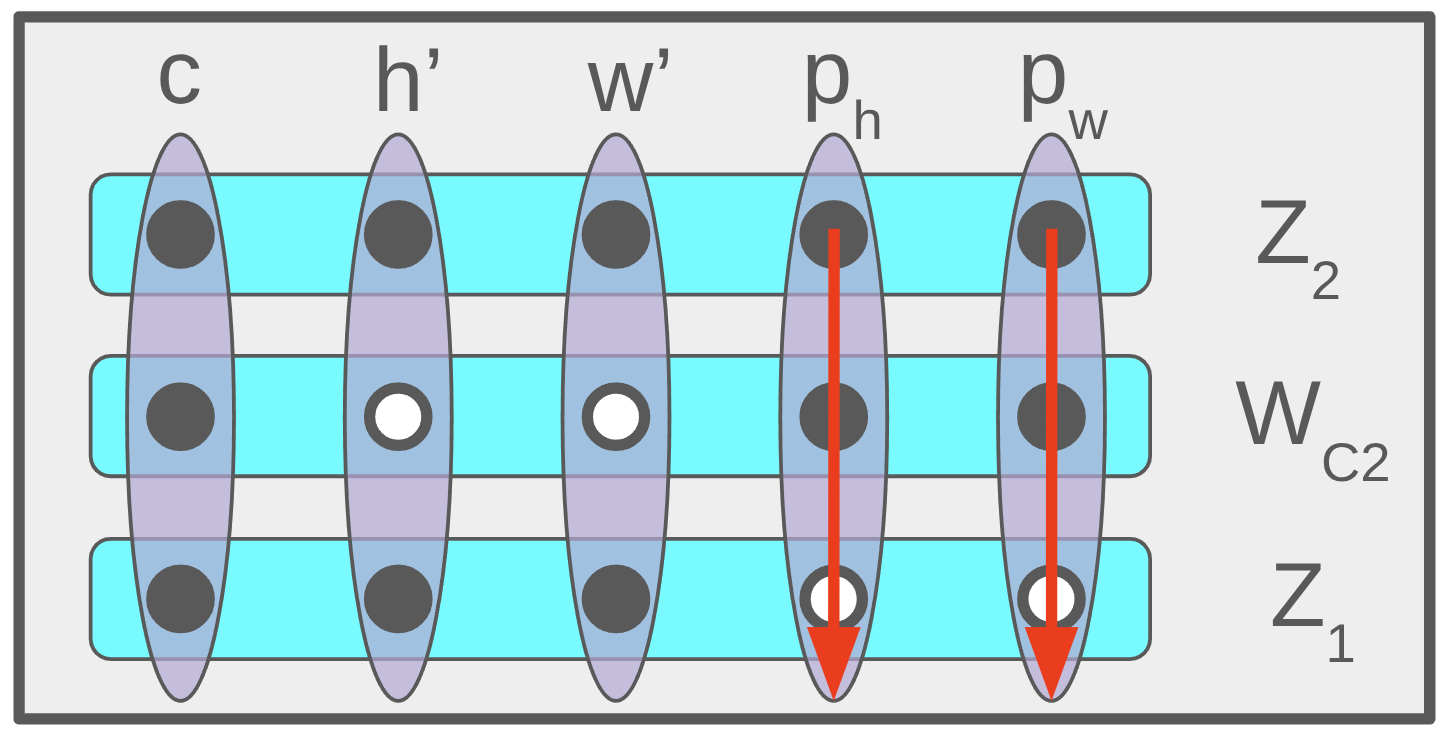}
	\caption{TOMs $1$ (left) and $3$ (right) for the AIM of DTTNs.}
	\label{fig:DTTNTOMs}
\end{figure}

\clearpage
\section{\label{sec:EmpiricalAppendix}Empirical Appendix}
In this appendix, we discuss in full detail the collection of the contributed dataset of novel architectures discussed in the main paper. 

In part 1, we describe how the dataset of architectures was collected and provide results from extra validation experiments.

In part 2, we provide statistics and information about the collected dataset. Included here are plots of all samples with respect to various different independent variables derived from the measures of architectural complexity introduced in the main paper.

In part 3, we provide full details of the highlighted ``red star'' architecture ({\color{red}$\bm{\star}$}).

\clearpage
\subsection{Dataset Collection\label{sec:DataCollection}} 
In this part, we provide complete information on how architectures were sampled, trained, and collected for our contributed dataset.

\subsubsection{\label{sec:DataCollection_Sampling}Details of Sampling Strategy}
We start by discussing the techniques used in the sampling of the architectures.

\paragraph{Stage 1 Modules.} For all settings, we include the tensor shape reduction at the end of each stage 1 module. Meaning, the first pooling layer is included.

For ResNet34, there are $3$ \textit{Basic Blocks} in the first stage. These are residual blocks of the complexity signature reported in \cref{tab:ComplexityHistory}. To mimic the design of the original model, we include an unfolding mode map prior to the sampled modules, and set the output shape to be the number of unfolded image patches, i.e., the channel dimension is removed. This results in the same \textit{ambient dimension} as the final linear layer of the complete ResNet34($\times 0.5$) architecture, which is $256$. We recall that the ambient dimension is the total dimensionality of latent space, e.g., a feature map of shape $64 \times 16 \times 16$ is of ambient dimension $16384$. Here, the ($\times 0.5$) indicates that the number of channels throughout the model is scaled by $0.5$.

For SWIN\_T, there are $2$ \textit{Swin Transformer Blocks} in the first stage. We do not include any non-trivial mode maps between the first stage and sampled modules. To mimic the design of the complete architecture, we set the output shape of the sampled modules to be the number of tokens (image patches) and average over the token dimension before the final linear layer. This results in a reduced final latent dimension of $192$ vs. the $768$ of the original model.

\paragraph{Width Scaling.} To produce the parameter vs. accuracy curves of \cref{fig:C100Results}, we scaled the hidden widths of ResNet34 and Swin\_T. Specifically, the number of channels for each stage of ResNet34 is multiplied by the same scalar. For example, the original hidden widths for ResNet34 are $(64, 128, 256, 512)$ for stages $1, 2, 3, 4$; these widths are modified proportionately, so ResNet34($\times 0.5$) uses $(32, 64, 128, 256)$ channels and so on. For Swin, an analogous width scaling strategy was applied. As Swin is a transformer, it does not have any meaningful ``channel dimension'' so we instead scale the embedding dimension used in the self attention blocks. Again, we scale this hidden dimension uniformly across all stages. 

\paragraph{TEM and TOM Sampling.} For each sampled architecture, we first select a number of operations (i.e., $\cC_{op}$) in between $2$ and $5$. Then, a TEM is sampled at random, with respect to an arity constraint of $2 \leq \cC_{\alpha} \leq 4$. Then, for each tensor operation, a TOM is sampled at random, with respect to an order complexity constraint of $\cC_O \leq 13$. The shape of the output tensor from each operation is sampled at random, by selecting integer factors from the input shape. As not all TOMs are compatible with the necessary tensor shapes, we re-sample TOMs until a compatible operation is found for each row of the TEM.

\paragraph{Non-Linear Activations.} For non-linear activation functions, a balanced coin flip is performed to determine if any will be applied after each tensor operation. For those operations selected to receive non-linear activation, a second balanced coin flip is performed to determine if a second non-linearity is applied. In other words, $50\%$ of the sampled tensor operations have no non-linear activation performed afterwards, whereas $25\%$ have a single activation, and the remaining $25\%$ have two activations. When activations are selected, they are sampled from a pool of: Leaky ReLU, ReLU6, LayerNorm, and Softmax. When LayerNorm or Softmax is sampled, a mode of the corresponding output tensor is sampled at random to determine which axis the activation is applied along.

\paragraph{Hardware Requirements.} We collected all samples using a workstation with $4$ Nvidia RTX $3090$ GPUs ($24$gb VRAM).
\clearpage
\subsubsection{\label{sec:DataCollection_DiagData}Diagnostic Data}
Here we describe the diagnostic data provided in the collected dataset.

For each sampled architecture, we record the TEM, TOMs, and non-linear activations used. Additionally, we record the total number of parameters and complete information from the training process. Specifically, we collect per-epoch cross-entropy loss and top-1 accuracy metrics on both the training and test sets. Trained model weights are also available for each sampled architecture.

\subsubsection{\label{sec:DataCollection_Optim}Optimization}
As the purpose of our contributed dataset is to collect information for the understanding on new tensor operation structures, we maximize the number of collected samples by training all models \textit{efficiently}. Specifically, we used the AdamW optimizer \citenump{Adam} and the OneCycle learning rate scheduler \citenum{OneCycle}, as this scheduler dramatically reduces the training time required to achieve model convergence. Concretely, this optimization recipe allows all models to either convergence completely (most do), or achieve a high-degree of convergence in $50$ epochs. We therefore use this training duration for the dataset collection.

All models are trained with fixed hyperparameters for all datasets on the \textit{original resolution images}. We used a batch size of $128$ and set the maximum learning rate for the ResNet-based samples to $0.0075$. The Swin-based samples use a max learning rate of $0.001$. A standard data augmentation recipe was used, consisting of random offset crops (max offset of $4 \times 4$) and random horizontal flips (with probability $0.5$). All error ranges reported are the standard deviations computed over $3$ independent training runs from random initialization.

\paragraph{Training Duration Verification.} To verify that the trained models achieve a high-degree of converge in $50$ epochs, we provide the training curves for the original ResNet34 and Swin\_T architectures. Training configuration hash IDs are provided to facilitate cross-referencing the experimental results with the contributed codebase that we will release. Plots shown in \cref{fig:Diag1,fig:Diag2,fig:Diag3,fig:Diag4,fig:Diag5,fig:Diag6}.

We also verify that the collected dataset is indicative of results obtained using a longer training duration. Specifically, we spot-checked the dataset by training two randomly selected samples (one from either side of the baseline performance level) for $120$ epochs. The stage 1 baselines are re-trained in the same way. Results shown in \cref{tab:RN_Validation,tab:Swin_Validation}.

To validate the comparisons against MobileNetV2 (MNV2) made in the main paper, we also re-train the ({\color{red}$\bm{\star}$}) architecture and MNV2 for $120$ epochs. Results shown in \cref{tab:MNV2_Validation}, and diagnostic data for MNV2 is shown in \cref{fig:Diag7}.

\Cref{tab:RN_Validation,tab:Swin_Validation,tab:MNV2_Validation} and \cref{fig:Diag1,fig:Diag2,fig:Diag3,fig:Diag4,fig:Diag5,fig:Diag6,fig:Diag7} can be found on the following pages.

These validation experiments confirm that models do indeed achieve a high-degree of convergence when trained for $50$ epochs with the One Cycle scheduler. This is reflected in both the learning curves, and the low standard deviations of all baseline models trained for $50$ epochs. Moreover, we observe that the dataset samples are fully representative of the results obtained by switching to a longer training duration. Specifically, we observe \textit{zero} instances of a `bad' sample (meaning one that falls below baseline) becoming a `good' sample (meaning one that falls above baseline) when extra compute budget is introduced. Dually, there are no instances of `good' samples becoming `bad' when trained for $120$ epochs vs. $50$. We conclude that training models for $50$ epochs with the One Cycle scheduler is sufficient for producing a meaningful, diverse dataset for the understanding of tensor operation structure.

As expected, the ({\color{red}$\bm{\star}$}) architecture continues to outperform MobileNetV2 with the increased compute budget. Moreover, the diagnostic data of \cref{fig:Diag7} confirms that MNV2 has fully converged at $120$ epochs. These validation results clearly demonstrate that the ({\color{red}$\bm{\star}$}) architecture outperforms MNV2 with just $\sim5$ ``layers'' and a parameter count of 198,342. For additional context, the stage 1 baseline for this sample contains 152,000 parameters, meaning the sampled block of ({\color{red}$\bm{\star}$}) contains 46,342 parameters. Despite this small increase in model size, ({\color{red}$\bm{\star}$}) is $4.11\%$ above the baseline. A complete breakdown of this remarkably parameter efficient sample is provided in \cref{sec:RedStarArch}.

\clearpage
\begin{table}[h!]
	\centering
	\begin{tblr}{
			colspec={c|c|c||c|c|c||},
			}
			\SetCell[c=3]{c} Dataset ($\bm{\downarrow}$) / Architecture Type ($\bm{\rightarrow}$) & & & Baseline & $\Delta_+$ sample & $\Delta_-$ sample \\\hline
		\hline
		\SetCell[r=4]{c} CIFAR-10 & \SetCell[r=2]{c} $50$ Epochs & 
		Acc. ($\%$) & $88.36 \pm 0.19$ & $89.38 \pm 0.21$ & $87.38 \pm 0.88$ \\\cline{3-6}
		& & Delta & N/A & {\color{blue}$+1.02$} & {\color{red}$-0.98$} \\\cline{2-6}
		& \SetCell[r=2]{c} $120$ Epochs & 
		Acc. ($\%$) & $89.27 \pm 0.18$ & $90.39 \pm 0.28$ & $88.72 \pm 0.38$ \\\cline{3-6}
		& & Delta & N/A & {\color{blue}$+1.12$} & {\color{red}$-0.55$} \\\cline{1-6}
		\hline
		\SetCell[r=4]{c} CIFAR-100 & \SetCell[r=2]{c} $50$ Epochs & 
		Acc. ($\%$) & $60.58 \pm 0.24$ & $61.70 \pm 0.35$ & $57.02 \pm 0.33$ \\\cline{3-6}
		& & Delta & N/A & {\color{blue}$+1.12$} & {\color{red}$-3.56$} \\\cline{2-6}
		& \SetCell[r=2]{c} $120$ Epochs & 
		Acc. ($\%$) & $62.21 \pm 0.16$ & $64.52 \pm 0.16$ & $60.23 \pm 0.13$ \\\cline{3-6}
		& & Delta & N/A & {\color{blue}$+2.31$} & {\color{red}$-1.98$} \\\cline{1-6}
		\hline
		\SetCell[r=4]{c} Tiny Imagenet & \SetCell[r=2]{c} $50$ Epochs & 
		Acc. ($\%$) & $43.93 \pm 0.01$ & $46.75 \pm 0.97$ & $43.42 \pm 0.43$ \\\cline{3-6}
		& & Delta & N/A & {\color{blue}$+2.82$} & {\color{red}$-0.51$} \\\cline{2-6}
		& \SetCell[r=2]{c} $120$ Epochs & 
		Acc. ($\%$) & $45.68 \pm 0.32$ & $47.31 \pm 0.15$ & $45.59 \pm 0.23$ \\\cline{3-6}
		& & Delta & N/A & {\color{blue}$+1.63$} & {\color{red}$-0.09$} \\\cline{1-6}
		\hline
	\end{tblr}
	\vspace{0.5em}
	\caption{Training duration validation for the ResNet34-based samples. The $\Delta_+$ and $\Delta_-$ samples are selected at random from the architecture dataset. Deltas are relative to the stage 1 baseline.}
	\label{tab:RN_Validation}
\end{table}

\begin{table}[h!]
	\centering
	\begin{tblr}{
			colspec={c|c|c||c|c|c||},
		}
		\SetCell[c=3]{c} Dataset ($\bm{\downarrow}$) / Architecture Type ($\bm{\rightarrow}$) & & & Baseline & $\Delta_+$ sample & $\Delta_-$ sample \\\hline
		\hline
		\SetCell[r=4]{c} CIFAR-10 & \SetCell[r=2]{c} $50$ Epochs & 
		Acc. ($\%$) & $74.91 \pm 0.29$ & $80.87 \pm 0.81$ & $71.60 \pm 1.52$ \\\cline{3-6}
		& & Delta & N/A & {\color{blue}$+5.96$} & {\color{red}$-3.31$} \\\cline{2-6}
		& \SetCell[r=2]{c} $120$ Epochs & 
		Acc. ($\%$) & $79.16 \pm 0.22$ & $82.93 \pm 0.18$ & $76.78 \pm 2.73$ \\\cline{3-6}
		& & Delta & N/A & {\color{blue}$+3.77$} & {\color{red}$-2.38$} \\\cline{1-6}
		\hline
		\SetCell[r=4]{c} CIFAR-100 & \SetCell[r=2]{c} $50$ Epochs & 
		Acc. ($\%$) & $47.07 \pm 0.21$ & $54.95 \pm 0.38$ & $41.83 \pm 0.66$ \\\cline{3-6}
		& & Delta & N/A & {\color{blue}$+7.88$} & {\color{red}$-5.24$} \\\cline{2-6}
		& \SetCell[r=2]{c} $120$ Epochs & 
		Acc. ($\%$) & $52.75 \pm 0.29$ & $57.68 \pm 0.01$ & $46.81 \pm 0.39$ \\\cline{3-6}
		& & Delta & N/A & {\color{blue}$+4.93$} & {\color{red}$-5.94$} \\\cline{1-6}
		\hline
		\SetCell[r=4]{c} Tiny Imagenet & \SetCell[r=2]{c} $50$ Epochs & 
		Acc. ($\%$) & $37.10 \pm 0.20$ & $42.00 \pm 0.13$ & $33.65 \pm 0.26$ \\\cline{3-6}
		& & Delta & N/A & {\color{blue}$+4.90$} & {\color{red}$-3.45$} \\\cline{2-6}
		& \SetCell[r=2]{c} $120$ Epochs & 
		Acc. ($\%$) & $40.52 \pm 0.27$ & $43.81 \pm 0.21$ & $36.96 \pm 0.16$ \\\cline{3-6}
		& & Delta & N/A & {\color{blue}$+3.29$} & {\color{red}$-3.56$} \\\cline{1-6}
		\hline
	\end{tblr}
	\vspace{0.5em}
	\caption{Training duration validation for the Swin\_T-based samples. The $\Delta_+$ and $\Delta_-$ samples are selected at random from the architecture dataset. Deltas are relative to the stage 1 baseline.}
	\label{tab:Swin_Validation}
\end{table}

\newcommand{\DiagScale}{0.0905}

\begin{figure}[h!]
	\centering
	\includegraphics[scale=\DiagScale]{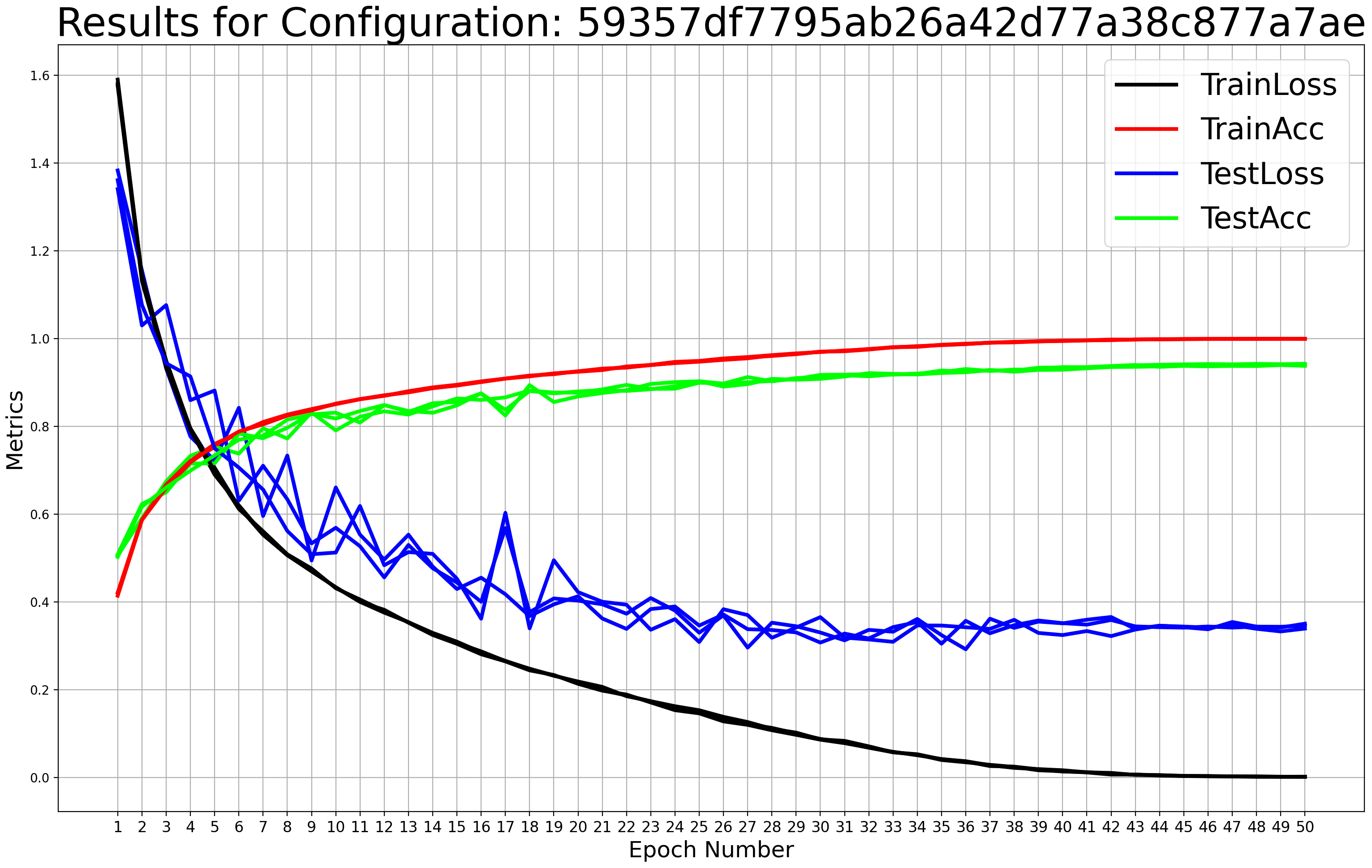}
	\caption{Diagnostic data for ResNet34 trained for $50$ epochs on CIFAR-10.}
	\label{fig:Diag1}
\end{figure}

\begin{figure}[h!]
	\centering
	\includegraphics[scale=\DiagScale]{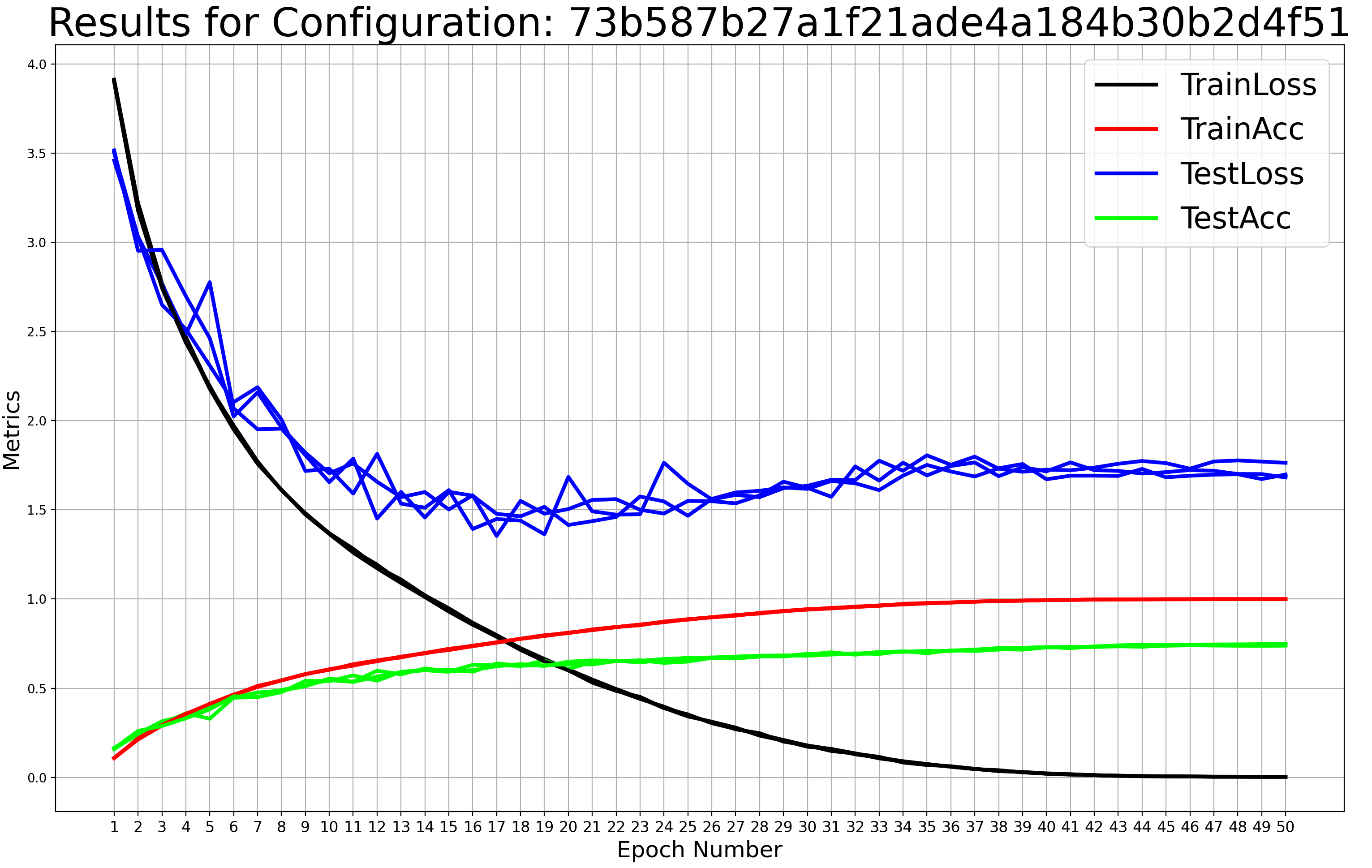}
	\caption{Diagnostic data for ResNet34 trained for $50$ epochs on CIFAR-100.}
	\label{fig:Diag2}
\end{figure}

\begin{figure}[h!]
	\centering
	\includegraphics[scale=\DiagScale]{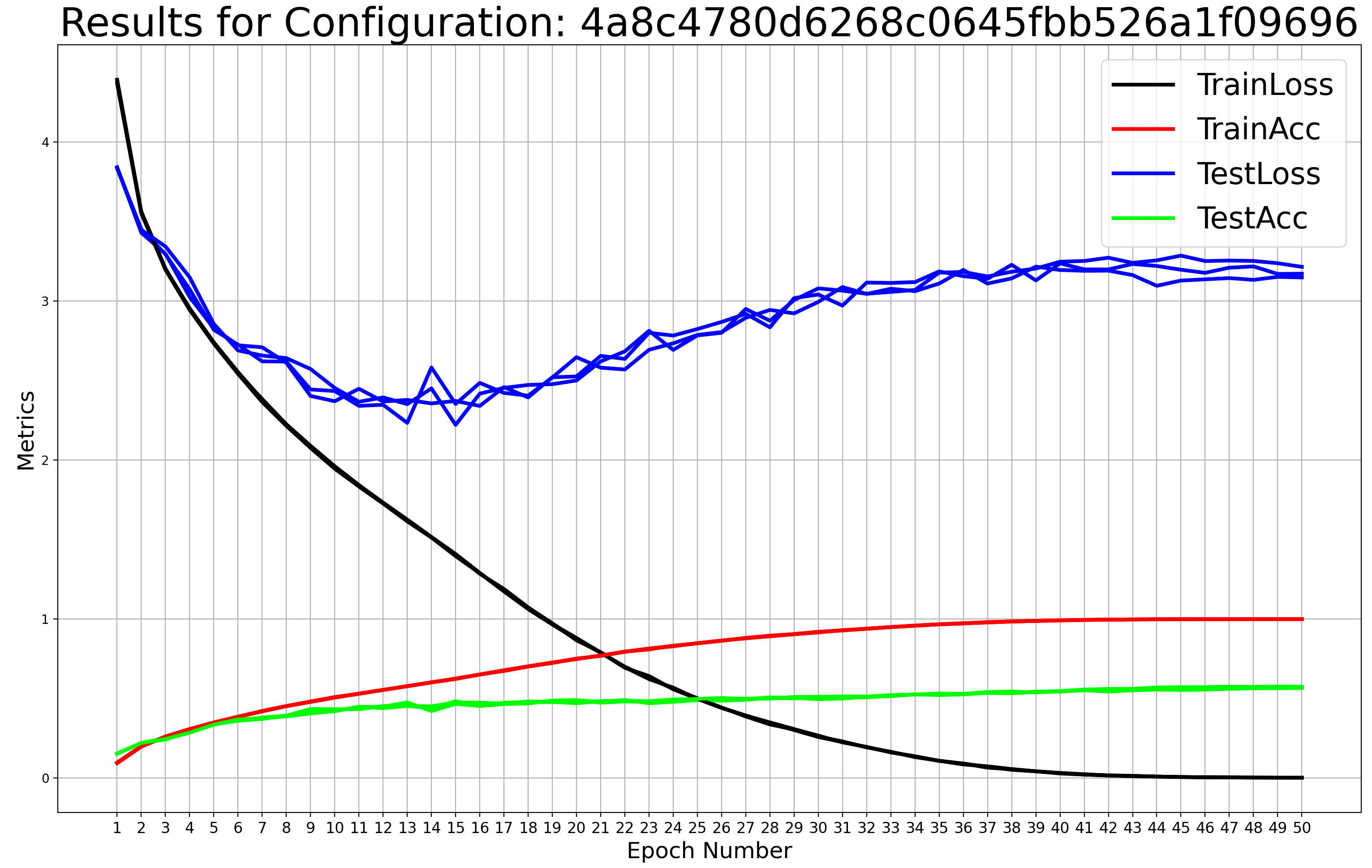}
	\caption{Diagnostic data for ResNet34 trained for $50$ epochs on Tiny Imagenet.}
	\label{fig:Diag3}
\end{figure}

\begin{figure}[h!]
	\centering
	\includegraphics[scale=\DiagScale]{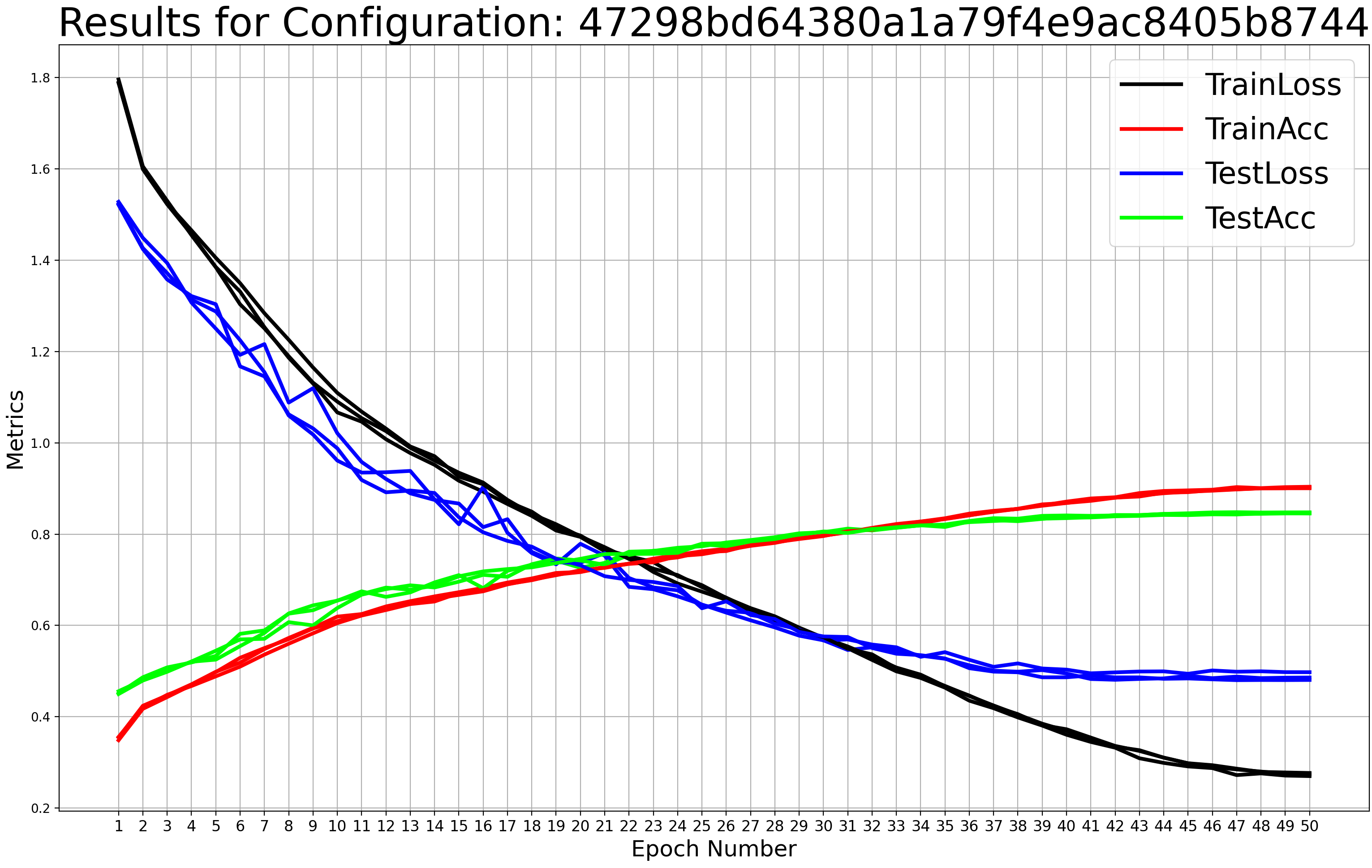}
	\caption{Diagnostic data for Swin\_T trained for $50$ epochs on CIFAR-10.}
	\label{fig:Diag4}
\end{figure}

\begin{figure}[h!]
	\centering
	\includegraphics[scale=\DiagScale]{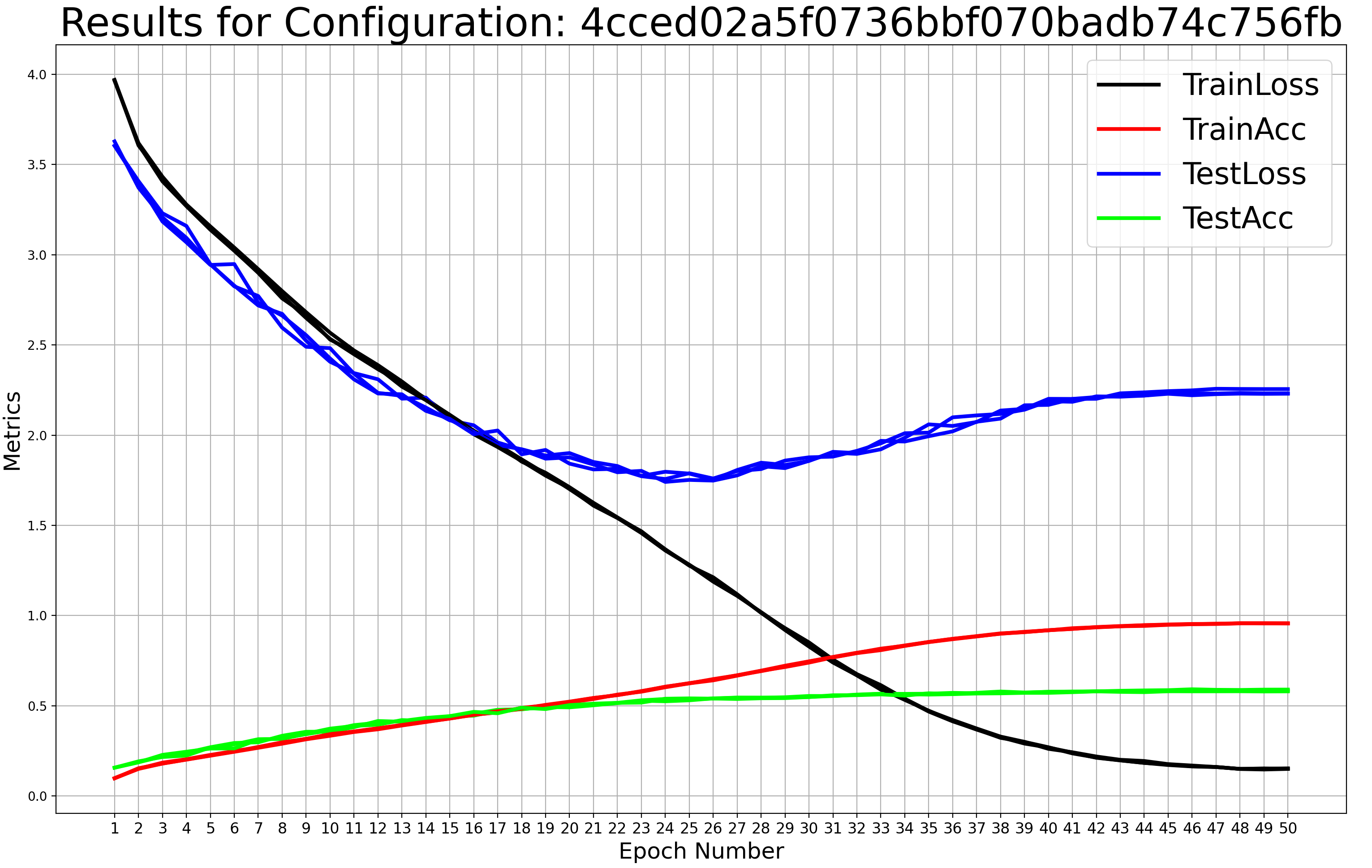}
	\caption{Diagnostic data for Swin\_T trained for $50$ epochs on CIFAR-100.}
	\label{fig:Diag5}
\end{figure}

\begin{figure}[h!]
	\centering
	\includegraphics[scale=\DiagScale]{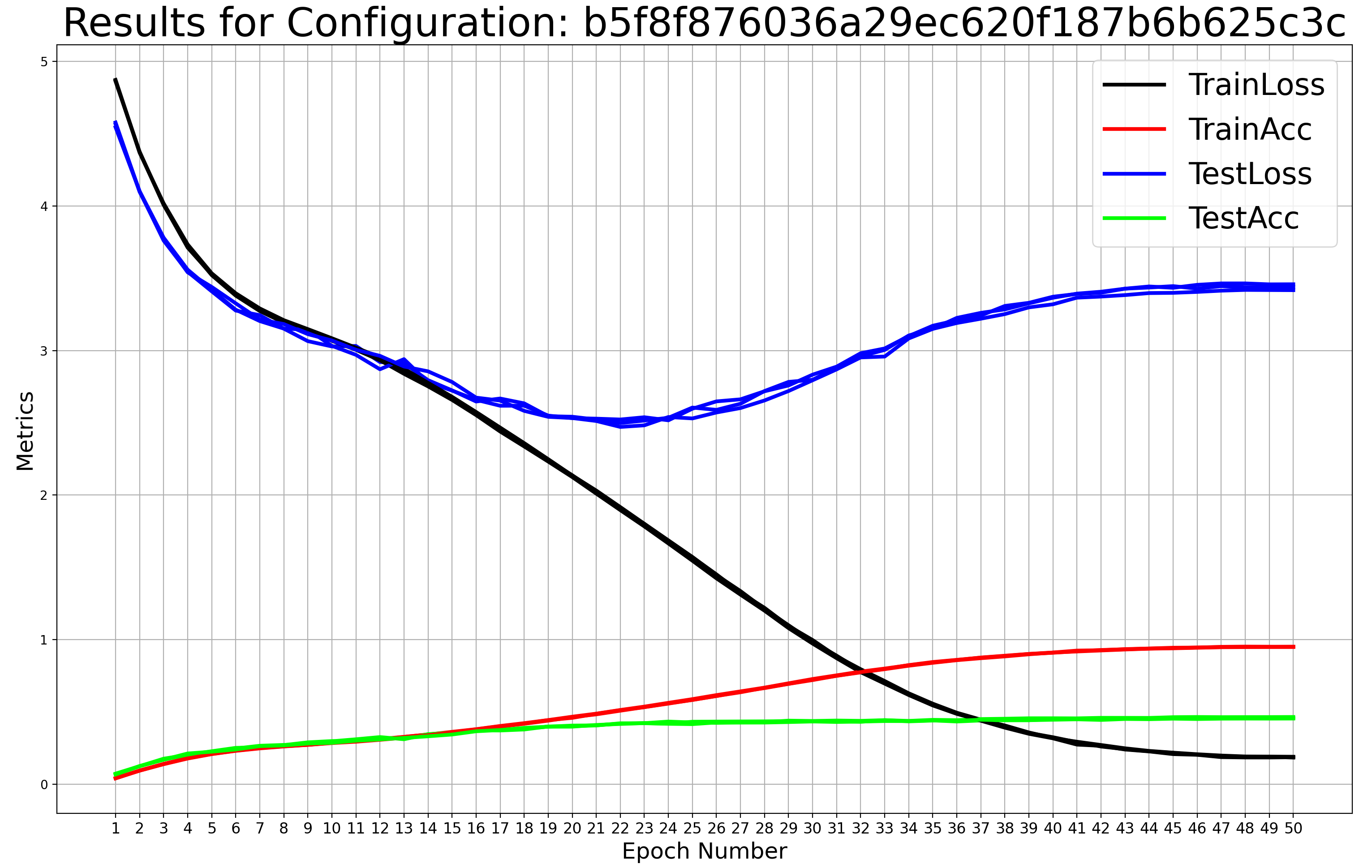}
	\caption{Diagnostic data for Swin\_T trained for $50$ epochs on Tiny Imagenet.}
	\label{fig:Diag6}
\end{figure}

\clearpage
\begin{table}[h!]
	\centering
	\begin{tblr}{
			colspec={c|c|c||c|c||},
		}
		\SetCell[c=3]{c} Dataset ($\bm{\downarrow}$) / Architecture Type ($\bm{\rightarrow}$) & & & MobileNetV2 & ({\color{red}$\bm{\star}$}) sample \\\hline
		\hline
		\SetCell[r=4]{c} CIFAR-100 & \SetCell[r=2]{c} $50$ Epochs & 
		Acc. ($\%$) & $64.29\pm 0.04$ & $65.52\pm 0.22$ \\\cline{3-5}
		& & Delta & N/A & {\color{blue}$+1.23$} \\\cline{2-5}
		& \SetCell[r=2]{c} $120$ Epochs & 
		Acc. ($\%$) & $65.69 \pm 0.32$ & $66.32 \pm 0.15$ \\\cline{3-5}
		& & Delta & N/A & {\color{blue}$+0.63$} \\\cline{1-5}
		\hline
	\end{tblr}
	\vspace{0.5em}
	\caption{Training duration validation for the MobileNetV2 vs. ({\color{red}$\bm{\star}$}) architecture comparison. Deltas are relative to MNV2.}
	\label{tab:MNV2_Validation}
\end{table}
\begin{figure}[h!]
	\centering
	\includegraphics[scale=\DiagScale]{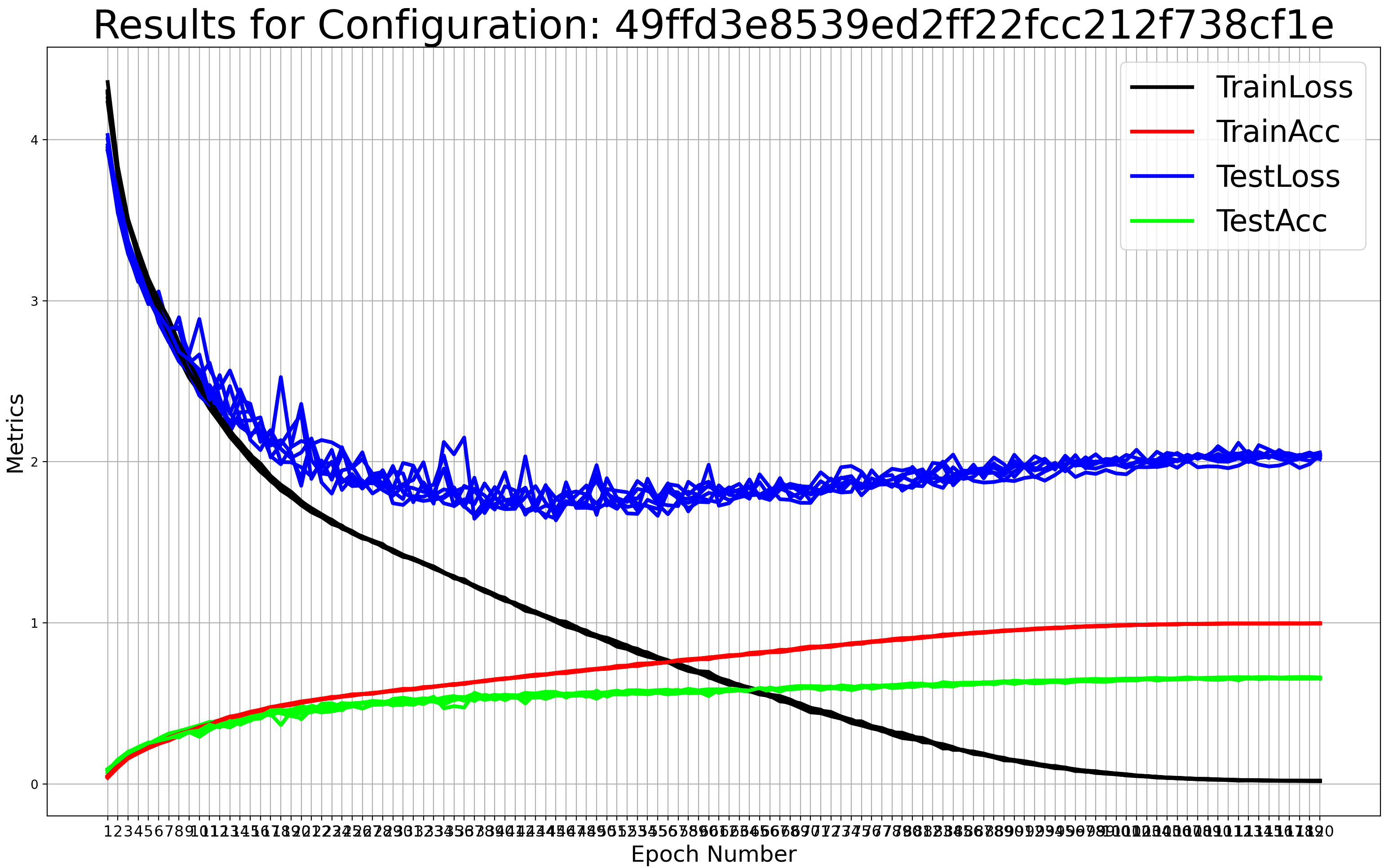}
	\caption{Diagnostic data for MobileNetV2 trained for $120$ epochs on CIFAR-100.}
	\label{fig:Diag7}
\end{figure}

\clearpage
\subsection{\label{sec:DataResults}Complete Dataset Results}
In this part, we first provide plots in the style of \cref{fig:C100Results} for all datasets. Then, we provide breakdowns of the number of samples collected along with a suite of figures visualizing the data with respect to various architectural complexity measures.

\newcommand{\PAScale}{0.1175}

\subsubsection{\label{sec:DataResults_PA}Parameter Efficiency of the Sampled Architectures.}
\vspace{-0.5em}The following figures are in the style of \cref{fig:C100Results}.
\begin{figure}[h!]
	\centering
	\includegraphics[scale=\PAScale]{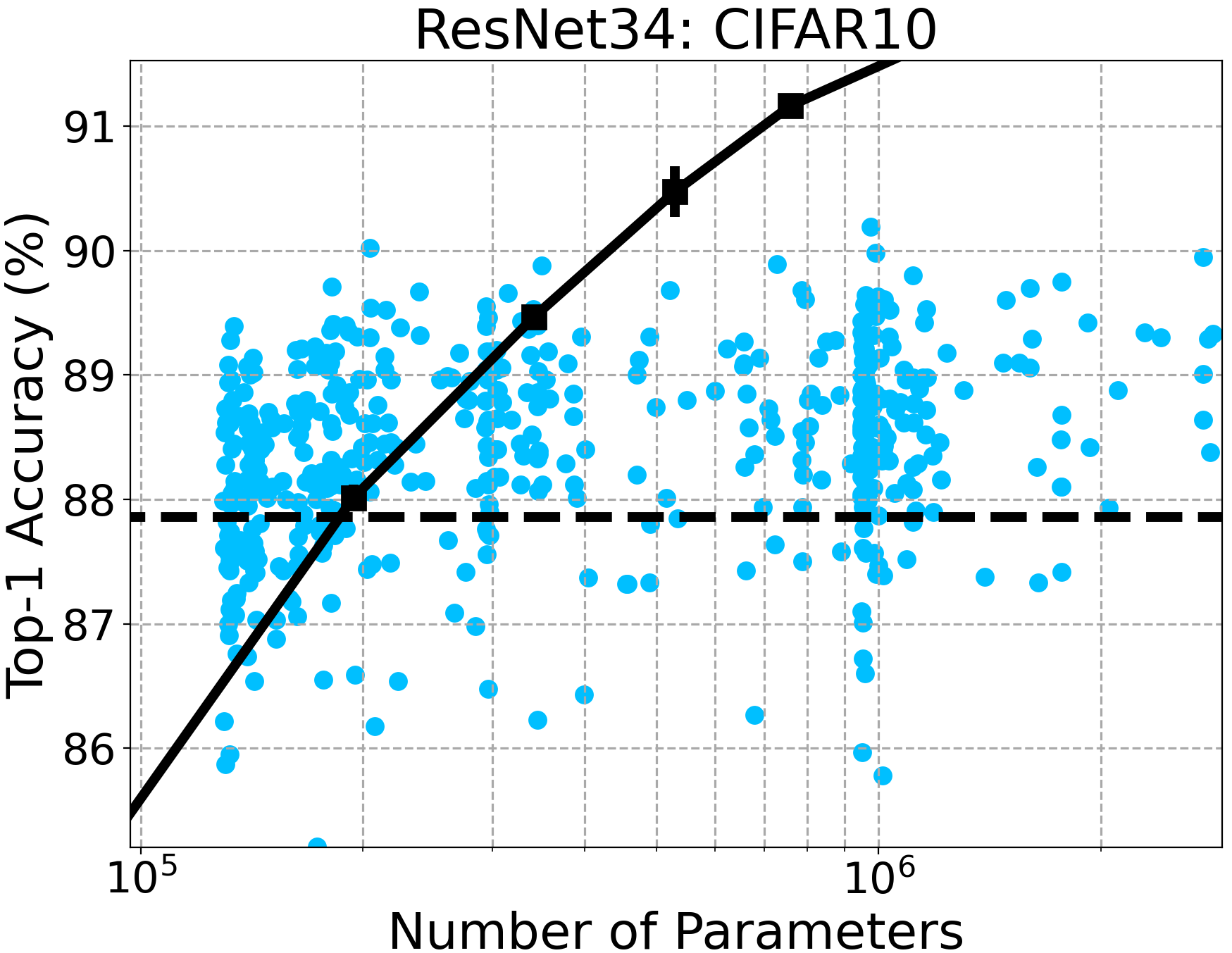}
\end{figure}

\begin{figure}[h!]
	\centering
	\includegraphics[scale=\PAScale]{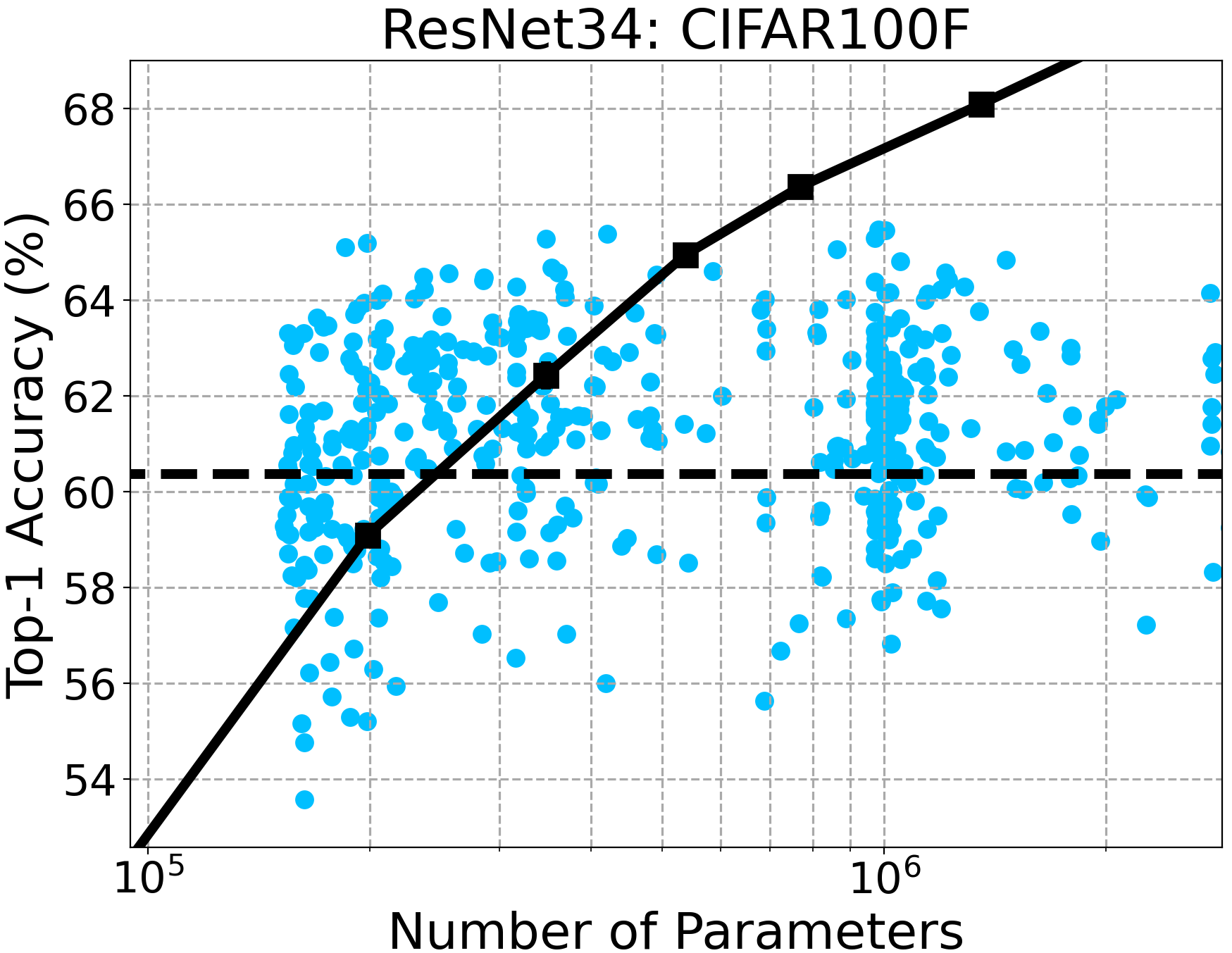}
\end{figure}

\begin{figure}[h!]
	\centering
	\includegraphics[scale=\PAScale]{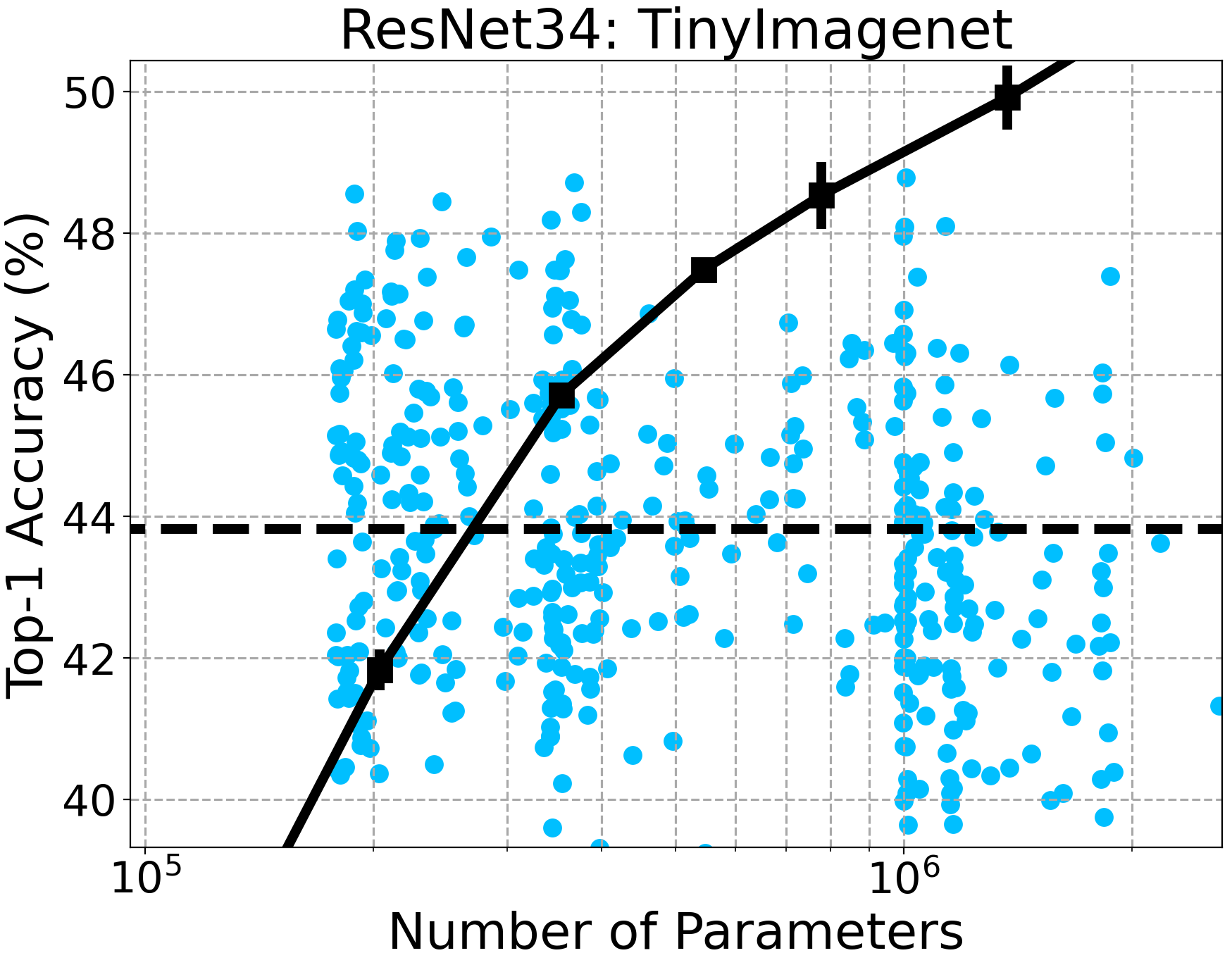}
\end{figure}

\begin{figure}[h!]
	\centering
	\includegraphics[scale=\PAScale]{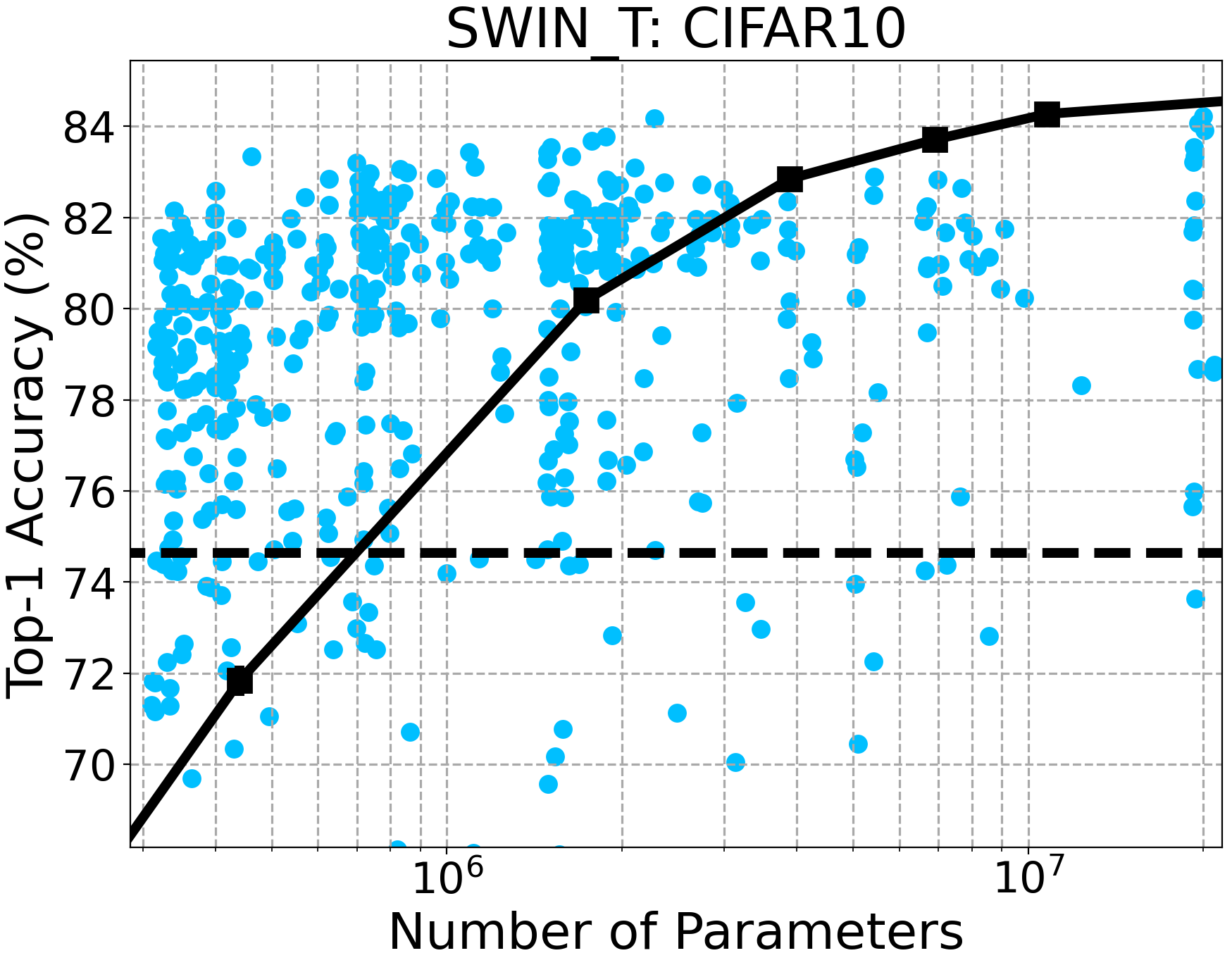}
\end{figure}

\begin{figure}[h!]
	\centering
	\includegraphics[scale=\PAScale]{Figures/C100_Swin.png}
\end{figure}

\begin{figure}[h!]
	\centering
	\includegraphics[scale=\PAScale]{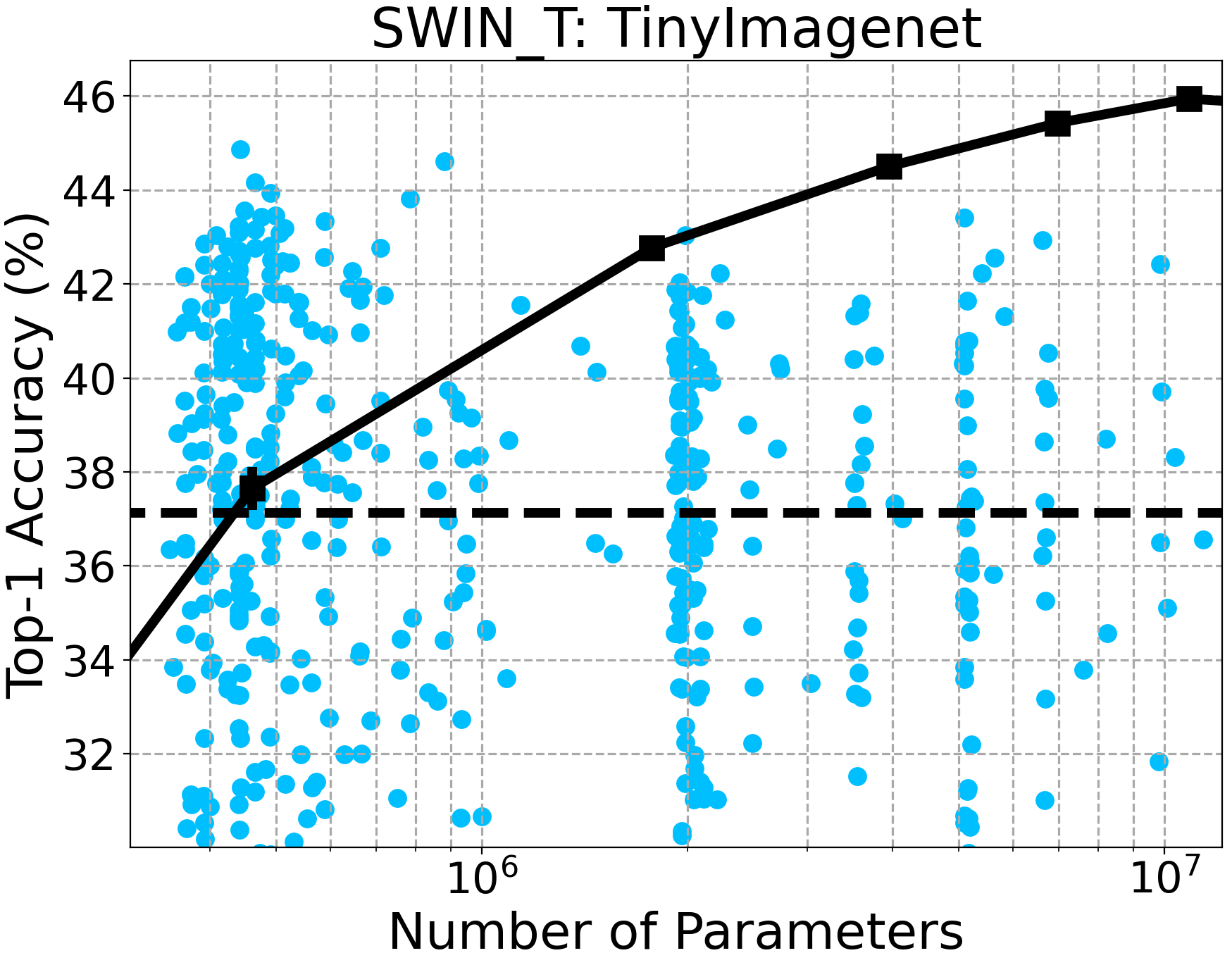}
\end{figure}

\paragraph{Discussion.} The complete results mirror the findings from CIFAR-100 reported in the main paper. To provide additional context, when trained with the same hyperparameters as the ResNet-based sampled models, MobileNetV2 attains accuracy scores of: $91.04\% \pm 0.14$, $64.29\% \pm 0.04$, and $45.49\% \pm 0.65$ on CIFAR-10, CIFAR-100, and Tiny Imagenet, respectively.

Overall, the results on CIFAR-10 and Tiny Imagenet support the conclusions articulated in the main paper. We observe that the sampled architectures compare best against both baselines on Tiny Imagenet. Additionally, the sampled blocks appear to be more impactful in the context of self attention operations, as shown by the Swin results.

\clearpage
\subsubsection{\label{sec:DataResults_Stats}Statistics of the Sampled Architectures.}
We now provide visualizations and breakdowns of the number of architectures sampled with respect to all the complexity measures introduced in the main paper.

\newcommand{\VsXScale}{0.16}

\begin{figure}[h!]
	\centering
	\includegraphics[scale=\VsXScale]{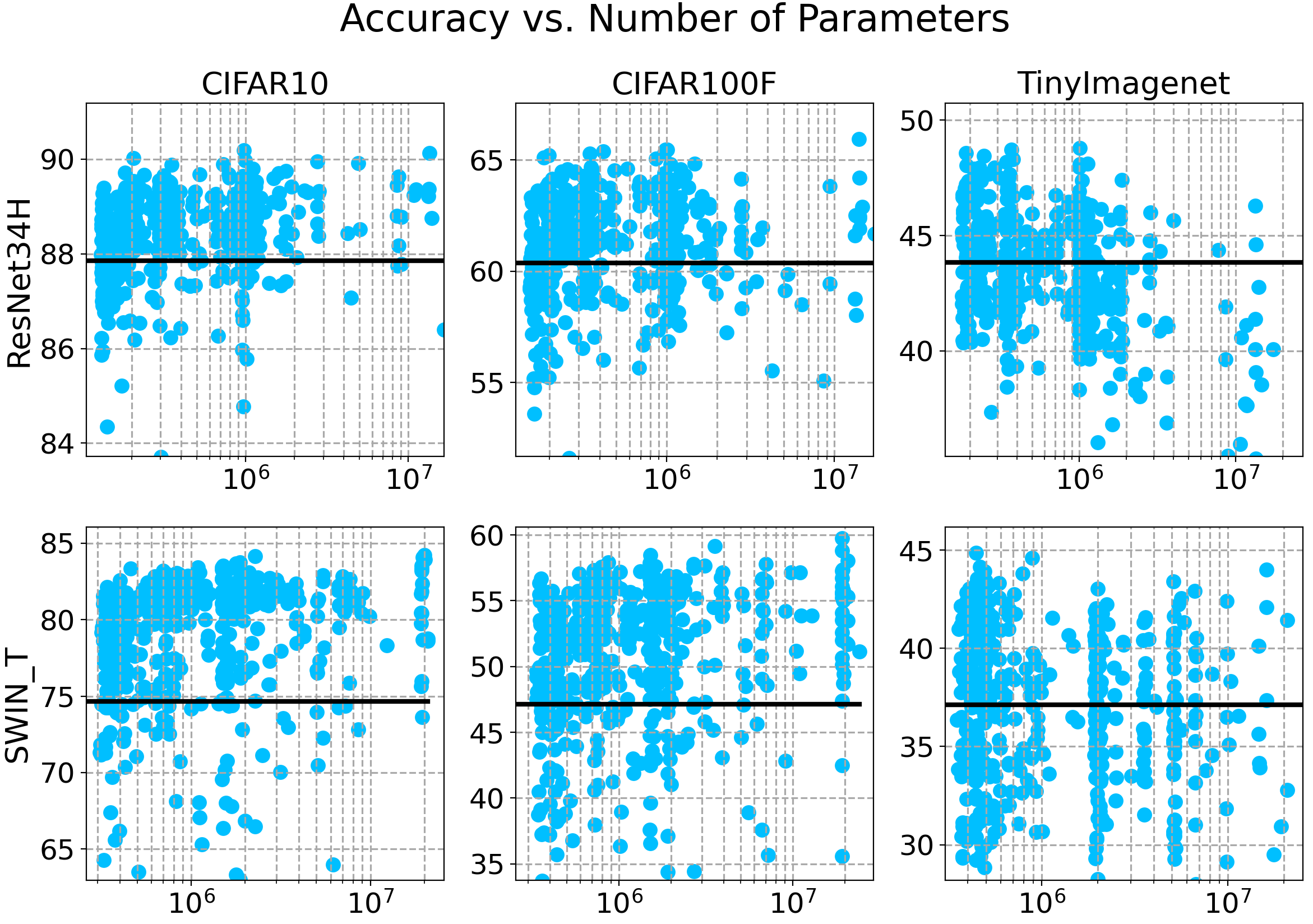}
	\caption{Distribution of sampled architectures ({\color{cyan}$\bullet$}) vs. number of parameters of the sampled block. (\textbf{---}) Stage 1 baseline accuracy.}
\end{figure}
\begin{table*}[h!]
	\centering
	\begin{tblr}{
			colspec={c||c|c||c|c||c|c||},
			column{1}={halign=r}
		}
		Dataset & \SetCell[c=2]{c} CIFAR 10 & & \SetCell[c=2]{c} CIFAR 100 & & \SetCell[c=2]{c} Tiny ImageNet & \\ \hline
		Stage 1 & ResNet34 & SWIN\_T & ResNet34 & SWIN\_T & ResNet34 & SWIN\_T \\ \hline\hline
		
		Total & 526 & 507 & 502 & 513 & 486 & 494 \\ \hline
		
	\end{tblr}
	\caption{Number of sampled architectures with respect to dataset and stage 1 model.}
	\label{tab:DatasetVsSettings}
\end{table*}

\clearpage

\begin{figure}[h!]
	\centering
	\includegraphics[scale=\VsXScale]{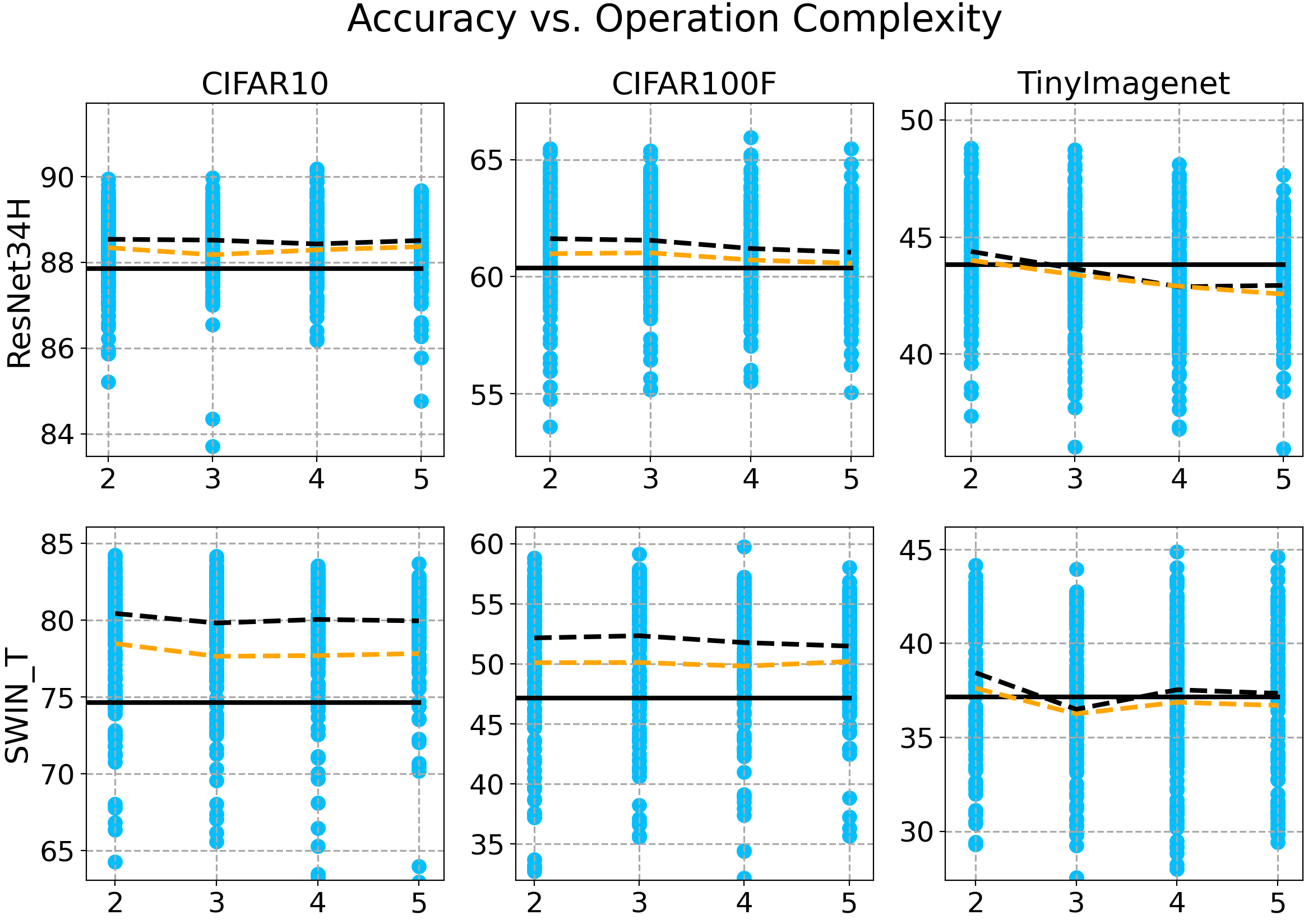}
	\caption{Distribution of sampled architectures ({\color{cyan}$\bullet$}) vs. operation complexity $\cC_{op}$ of the sampled block. ({\color{orange}$\bm{- -}$}) Mean sample accuracy. ({$\bm{- -}$}) Median sample accuracy. (\textbf{---}) Stage 1 baseline accuracy.}
\end{figure}
\begin{table*}[h!]
	\centering
	\begin{tblr}{
			colspec={c||c|c||c|c||c|c||c},
			column{1}={halign=r}
		}
		Dataset & \SetCell[c=2]{c} CIFAR 10 & & \SetCell[c=2]{c} CIFAR 100 & & \SetCell[c=2]{c} Tiny ImageNet & & \\\hline
		Stage 1 & ResNet34 & SWIN\_T & ResNet34 & SWIN\_T & ResNet34 & SWIN\_T\ & Total \\\hline\hline
		
		$\cC_{op} = 2$ & 162 & 153 & 151 & 145 & 129 & 141 & 881\\\hline
		$\cC_{op} = 3$ & 131 & 122 & 127 & 124 & 127 & 128 & 759 \\\hline
		$\cC_{op} = 4$ & 132 & 125 & 124 & 125 & 129 & 123 & 758 \\\hline
		$\cC_{op} = 5$ & 101 & 107 & 100 & 119 & 101 & 102 & 630 \\\hline
		
	\end{tblr}
	\caption{Number of sampled architectures vs. operation complexity ($\cC_{op}$) of the sampled block.}
	\label{tab:DatasetVsNumOps}
\end{table*}

\begin{figure}[h!]
	\centering
	\includegraphics[scale=\VsXScale]{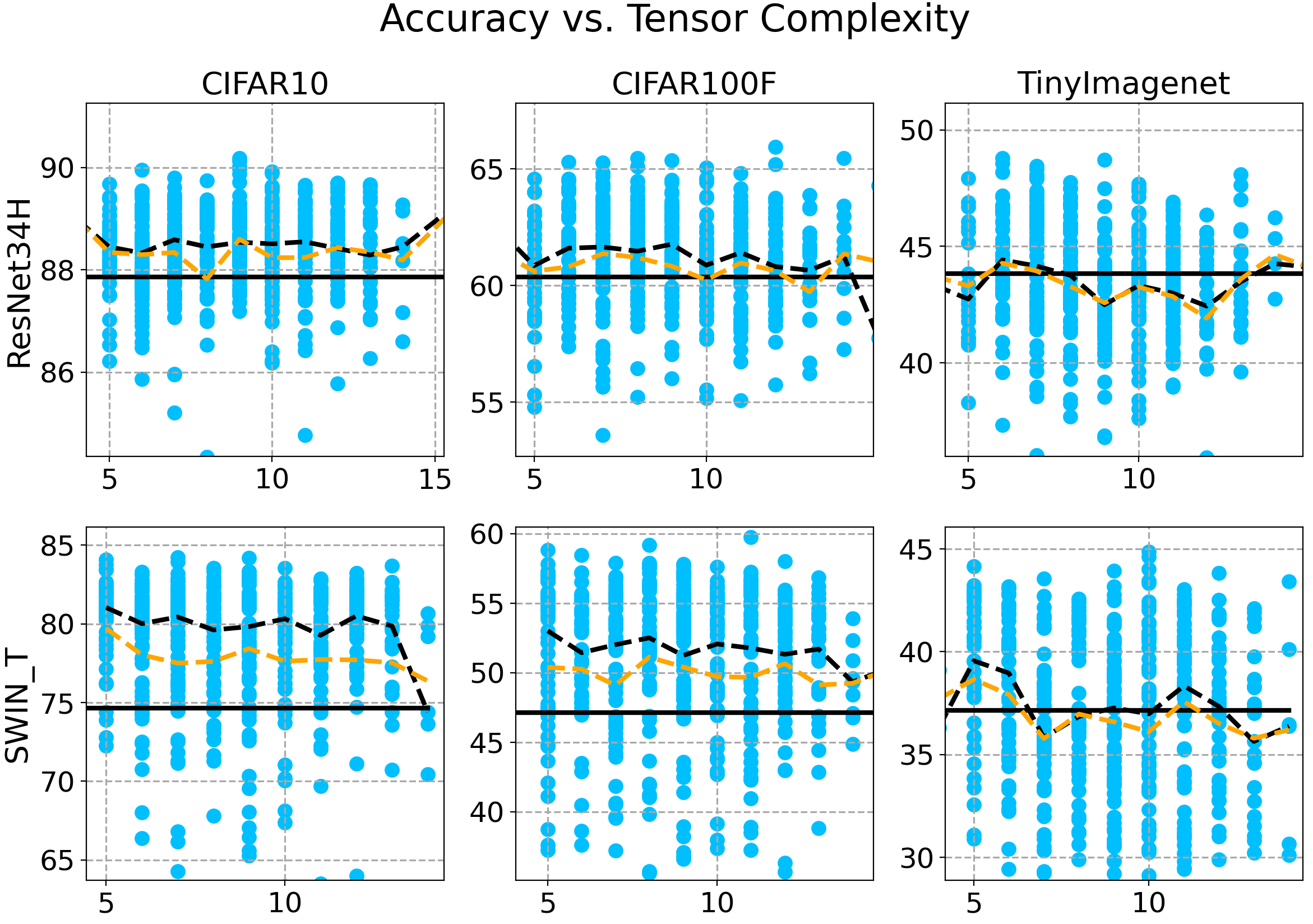}
	\caption{Distribution of sampled architectures ({\color{cyan}$\bullet$}) vs. tensor complexity $\cC_{T}$ of the sampled block. ({\color{orange}$\bm{- -}$}) Mean sample accuracy. ({$\bm{- -}$}) Median sample accuracy. (\textbf{---}) Stage 1 baseline accuracy.}
\end{figure}
\begin{table*}[h!]
	\centering
	\begin{tblr}{
			colspec={c||c|c||c|c||c|c||c},
			column{1}={halign=r}
		}
		Dataset & \SetCell[c=2]{c} CIFAR 10 & & \SetCell[c=2]{c} CIFAR 100 & & \SetCell[c=2]{c} Tiny ImageNet & & \\\hline
		Stage 1 & ResNet34 & SWIN\_T & ResNet34 & SWIN\_T & ResNet34 & SWIN\_T\ & Total \\\hline\hline
		
		$\cC_{T} = 4$  & 1   & \ES & 3   & \ES & 6   & 2   & 12 \\\hline
		$\cC_{T} = 5$  & 38  & 45  & 35  & 59  & 21  & 40  & 238 \\\hline
		$\cC_{T} = 6$  & 62  & 63  & 60  & 45  & 42  & 53  & 325 \\\hline
		$\cC_{T} = 7$  & 77  & 74  & 75  & 70  & 82  & 62  & 440 \\\hline
		$\cC_{T} = 8$  & 62  & 54  & 64  & 57  & 63  & 49  & 349 \\\hline
		$\cC_{T} = 9$  & 68  & 73  & 64  & 66  & 63  & 76  & 410 \\\hline
		$\cC_{T} = 10$ & 80  & 55  & 56  & 63  & 64  & 65  & 383 \\\hline
		$\cC_{T} = 11$ & 54  & 56  & 62  & 60  & 78  & 72  & 382 \\\hline
		$\cC_{T} = 12$ & 50  & 50  & 46  & 53  & 39  & 46  & 284 \\\hline
		$\cC_{T} = 13$ & 26  & 31  & 22  & 30  & 23  & 23  & 155 \\\hline
		$\cC_{T} = 14$ & 7   & 6   & 13  & 9   & 4   & 6   & 45 \\\hline
		$\cC_{T} = 15$ & \ES & \ES & 2   & \ES & 1   & \ES & 3 \\\hline
		$\cC_{T} = 16$ & 1   & \ES & \ES & 1   & \ES & \ES & 2 \\\hline
		
	\end{tblr}
	\caption{Number of sampled architectures vs. tensor complexity ($\cC_{T}$) of the sampled block.}
	\label{tab:DatasetVsNumTens}
\end{table*}

\begin{figure}[h!]
	\centering
	\includegraphics[scale=\VsXScale]{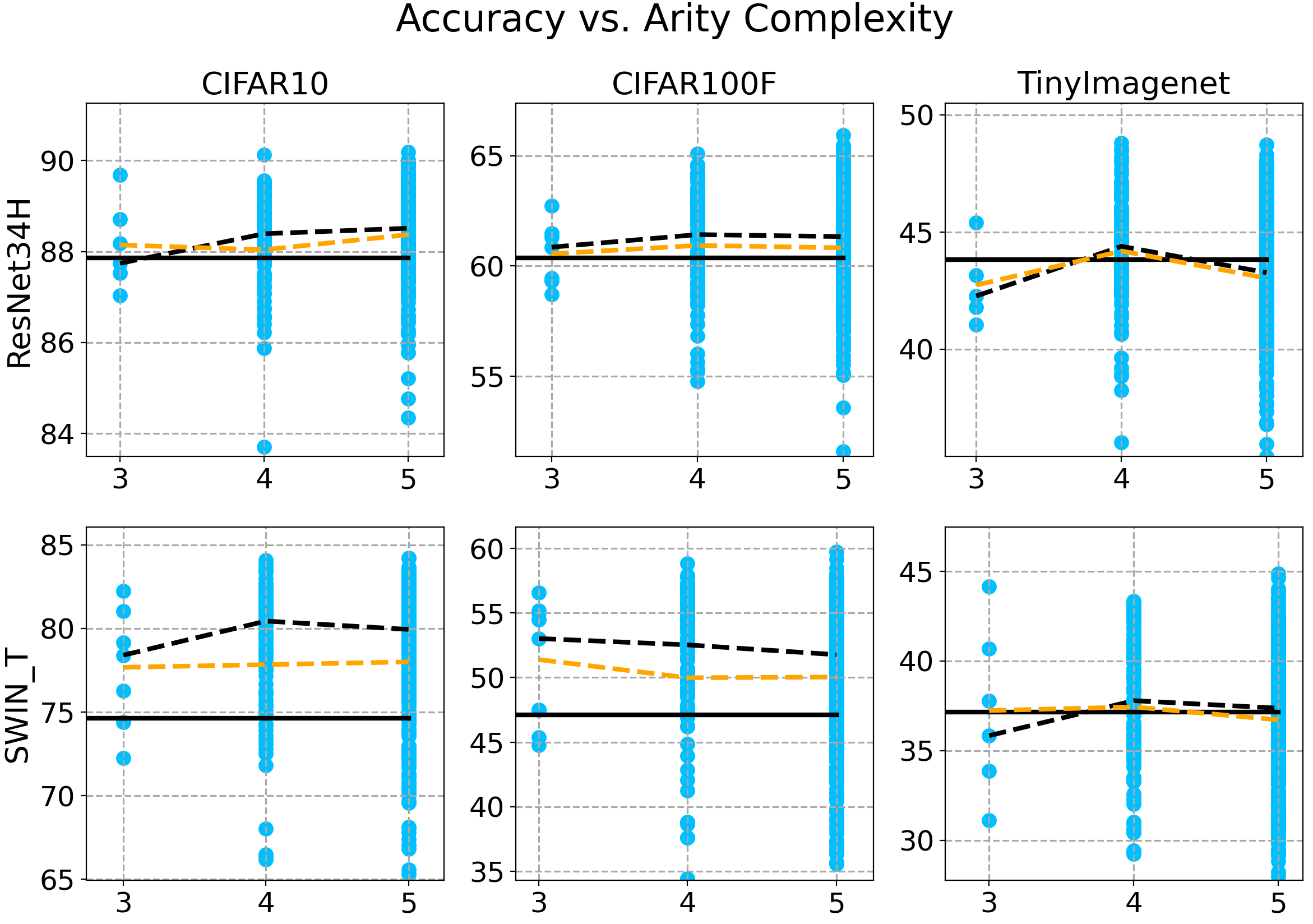}
	\caption{Distribution of sampled architectures ({\color{cyan}$\bullet$}) vs. arity complexity $\cC_{\alpha}$ of the sampled block. ({\color{orange}$\bm{- -}$}) Mean sample accuracy. ({$\bm{- -}$}) Median sample accuracy. (\textbf{---}) Stage 1 baseline accuracy.}
\end{figure}
\begin{table*}[h!]
	\centering
	\begin{tblr}{
			colspec={c||c|c||c|c||c|c||c},
			column{1}={halign=r}
		}
		Dataset & \SetCell[c=2]{c} CIFAR 10 & & \SetCell[c=2]{c} CIFAR 100 & & \SetCell[c=2]{c} Tiny ImageNet & & \\\hline
		Stage 1 & ResNet34 & SWIN\_T & ResNet34 & SWIN\_T & ResNet34 & SWIN\_T\ & Total \\\hline\hline
		
		$\cC_{\alpha} = 2$ & 6   & 7   & 7   & 10  & 5   & 6   & 41\\\hline
		$\cC_{\alpha} = 3$ & 116 & 127 & 105 & 111 & 90  & 108 & 657 \\\hline
		$\cC_{\alpha} = 4$ & 404 & 373 & 390 & 392 & 391 & 380 & 2330 \\\hline
		
	\end{tblr}
	\caption{Number of sampled architectures vs. arity complexity ($\cC_{\alpha}$) of the sampled block.}
	\label{tab:DatasetVsArity}
\end{table*}

\clearpage

\begin{figure}[h!]
	\centering
	\includegraphics[scale=\VsXScale]{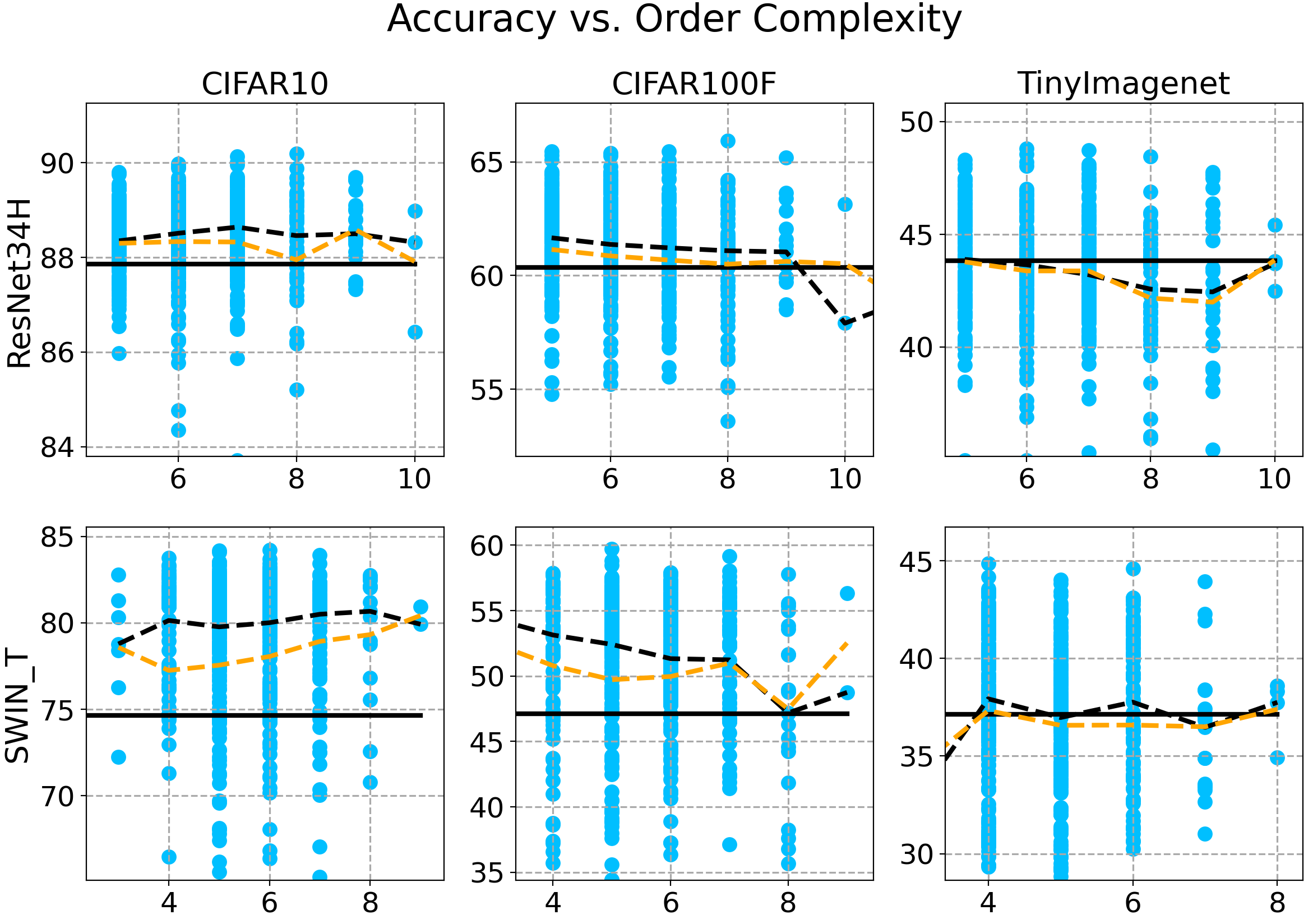}
	\caption{Distribution of sampled architectures ({\color{cyan}$\bullet$}) vs. order complexity $\cC_{O}$ of the sampled block. ({\color{orange}$\bm{- -}$}) Mean sample accuracy. ({$\bm{- -}$}) Median sample accuracy. (\textbf{---}) Stage 1 baseline accuracy.}
\end{figure}
\begin{table*}[h!]
	\centering
	\begin{tblr}{
			colspec={c||c|c||c|c||c|c||c},
			column{1}={halign=r}
		}
		Dataset & \SetCell[c=2]{c} CIFAR 10 & & \SetCell[c=2]{c} CIFAR 100 & & \SetCell[c=2]{c} Tiny ImageNet & & \\\hline
		Stage 1 & ResNet34 & SWIN\_T & ResNet34 & SWIN\_T & ResNet34 & SWIN\_T\ & Total \\\hline\hline
		
		$\cC_{O} = 3$  & \ES & 7   & \ES & 3   & \ES & 7   & 17 \\\hline
		$\cC_{O} = 4$  & \ES & 58  & \ES & 80  & \ES & 213 & 351 \\\hline
		$\cC_{O} = 5$  & 156 & 193 & 140 & 183 & 131 & 163 & 966 \\\hline
		$\cC_{O} = 6$  & 144 & 159 & 160 & 162 & 117 & 89  & 831 \\\hline
		$\cC_{O} = 7$  & 151 & 73  & 120 & 62  & 142 & 18  & 566 \\\hline
		$\cC_{O} = 8$  & 49  & 15  & 64  & 21  & 65  & 8   & 222 \\\hline
		$\cC_{O} = 9$  & 23  & 2   & 15  & 2   & 27  & \ES & 69 \\\hline
		$\cC_{O} = 10$ & 3   & \ES & 2   & \ES & 4   & \ES & 9 \\\hline
		$\cC_{O} = 11$ & \ES & \ES & 1   & \ES & \ES & \ES & 1 \\\hline
		
	\end{tblr}
	\caption{Number of sampled architectures vs. order complexity ($\cC_{O}$) of the sampled block.}
	\label{tab:DatasetVsOrderComplexity}
\end{table*}

\clearpage

\begin{figure}[h!]
	\centering
	\includegraphics[scale=\VsXScale]{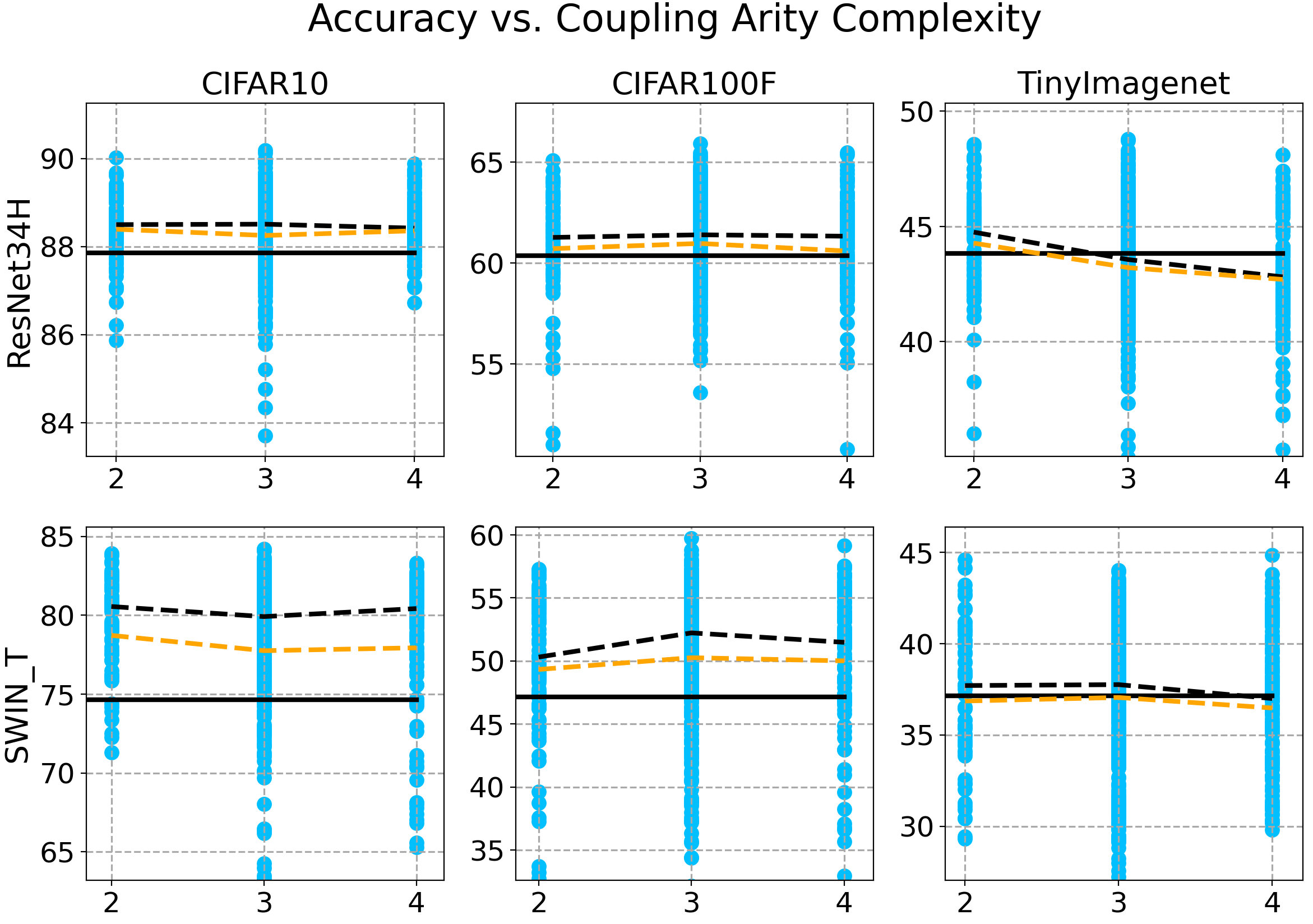}
	\caption{Distribution of sampled architectures ({\color{cyan}$\bullet$}) vs. coupling arity complexity $\cC_{A}$ of the sampled block. ({\color{orange}$\bm{- -}$}) Mean sample accuracy. ({$\bm{- -}$}) Median sample accuracy. (\textbf{---}) Stage 1 baseline accuracy.}
\end{figure}
\begin{table*}[h!]
	\centering
	\begin{tblr}{
			colspec={c||c|c||c|c||c|c||c},
			column{1}={halign=r}
		}
		Dataset & \SetCell[c=2]{c} CIFAR 10 & & \SetCell[c=2]{c} CIFAR 100 & & \SetCell[c=2]{c} Tiny ImageNet & & \\\hline
		Stage 1 & ResNet34 & SWIN\_T & ResNet34 & SWIN\_T & ResNet34 & SWIN\_T\ & Total \\\hline\hline
		
		$\cC_{A} = 2$ & 79  & 81  & 74  & 84  & 71  & 65  & 454\\\hline
		$\cC_{A} = 3$ & 357 & 316 & 303 & 319 & 300 & 293 & 1888 \\\hline
		$\cC_{A} = 4$ & 90  & 110 & 125 & 110 & 115 & 136 & 686 \\\hline
		
	\end{tblr}
	\caption{Number of sampled architectures vs. coupling arity complexity ($\cC_{A}$) of the sampled block.}
	\label{tab:DatasetVsCouplingArity}
\end{table*}

\clearpage

\begin{figure}[h!]
	\centering
	\includegraphics[scale=\VsXScale]{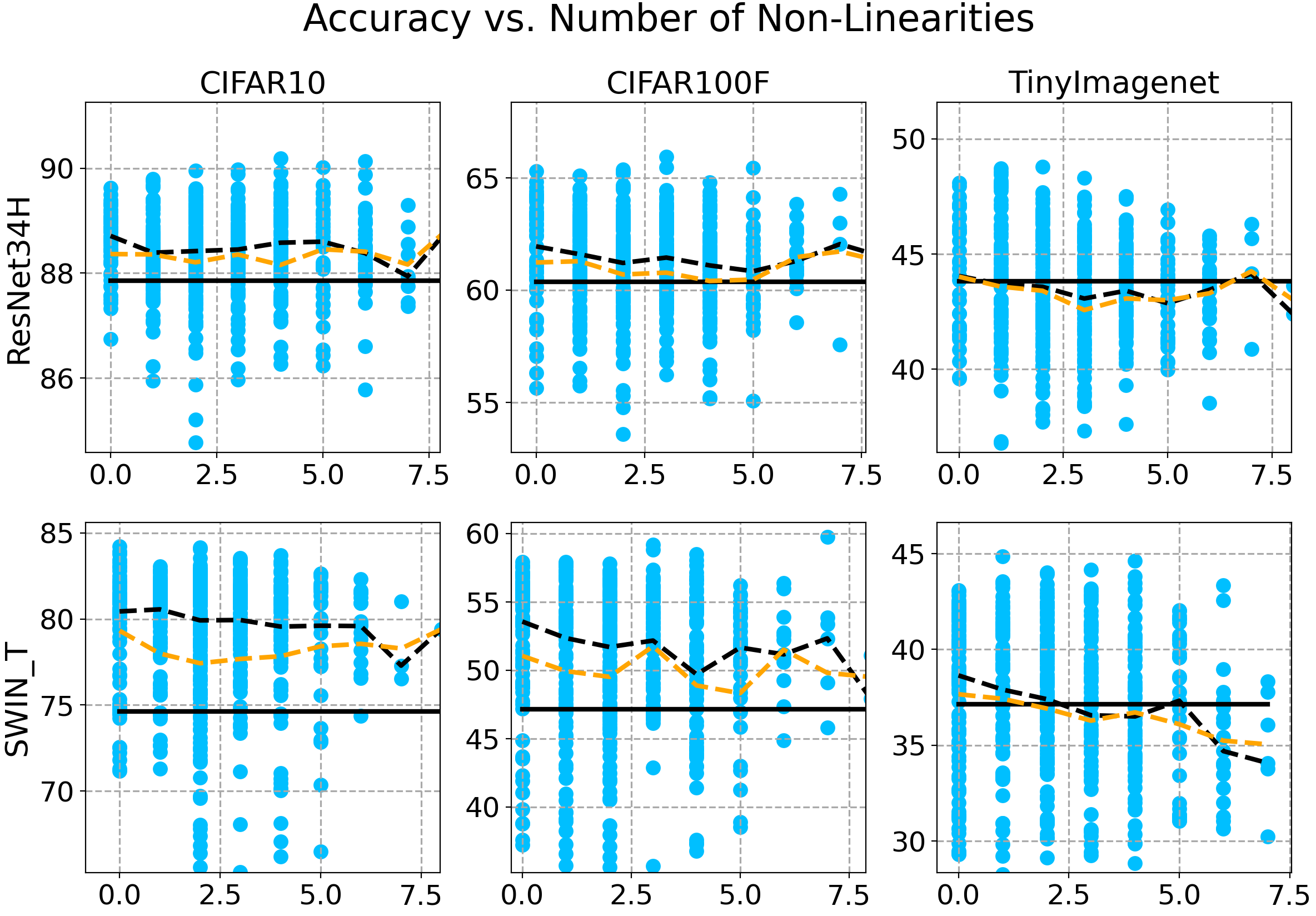}
	\caption{Distribution of sampled architectures ({\color{cyan}$\bullet$}) vs. number of non-linear activations $\cC_{NL}$ of the sampled block. ({\color{orange}$\bm{- -}$}) Mean sample accuracy. ({$\bm{- -}$}) Median sample accuracy. (\textbf{---}) Stage 1 baseline accuracy.}
\end{figure}
\begin{table*}[h!]
	\centering
	\begin{tblr}{
			colspec={c||c|c||c|c||c|c||c},
			column{1}={halign=r}
		}
		Dataset & \SetCell[c=2]{c} CIFAR 10 & & \SetCell[c=2]{c} CIFAR 100 & & \SetCell[c=2]{c} Tiny ImageNet & & \\\hline
		Stage 1 & ResNet34 & SWIN\_T & ResNet34 & SWIN\_T & ResNet34 & SWIN\_T\ & Total \\\hline\hline
		
		$\cC_{NL} = 0$ & 53  & 70  & 51  & 81  & 39  & 95  & 389\\\hline
		$\cC_{NL} = 1$ & 73  & 81  & 87  & 98  & 85  & 79  & 503 \\\hline
		$\cC_{NL} = 2$ & 135 & 146 & 121 & 120 & 123 & 122 & 767 \\\hline
		$\cC_{NL} = 3$ & 89  & 92  & 94  & 80  & 92  & 74  & 521\\\hline
		$\cC_{NL} = 4$ & 89  & 69  & 88  & 76  & 81  & 66  & 469 \\\hline
		$\cC_{NL} = 5$ & 45  & 26  & 36  & 36  & 32  & 33  & 208 \\\hline
		$\cC_{NL} = 6$ & 32  & 18  & 20  & 12  & 26  & 19  & 127 \\\hline
		$\cC_{NL} = 7$ & 9   & 3   & 4   & 8   & 4   & 6   & 34 \\\hline
		$\cC_{NL} > 7$ & 1   & 2   & 1   & 3   & 4   & \ES & 11 \\\hline
		
	\end{tblr}
	\caption{Number of sampled architectures vs. the number of non linear activations ($\cC_{NL}$) of the sampled block.}
	\label{tab:DatasetVsNumNL}
\end{table*}

\clearpage
\subsection{The Red Star Architecture \label{sec:RedStarArch}}
In this part, we provide the complete structure of the highlighted red start CIFAR-100 architecture ({\color{red}$\bm{\star}$}) discussed throughout this document.

\subsubsection{\label{sec:RedStarArch_SMs}Graphical Description}
We start with the tensor equation matrix; then, the tensor operation matrices:
\begin{figure}[h!]
	\centering
	\includegraphics[scale=\VsXScale]{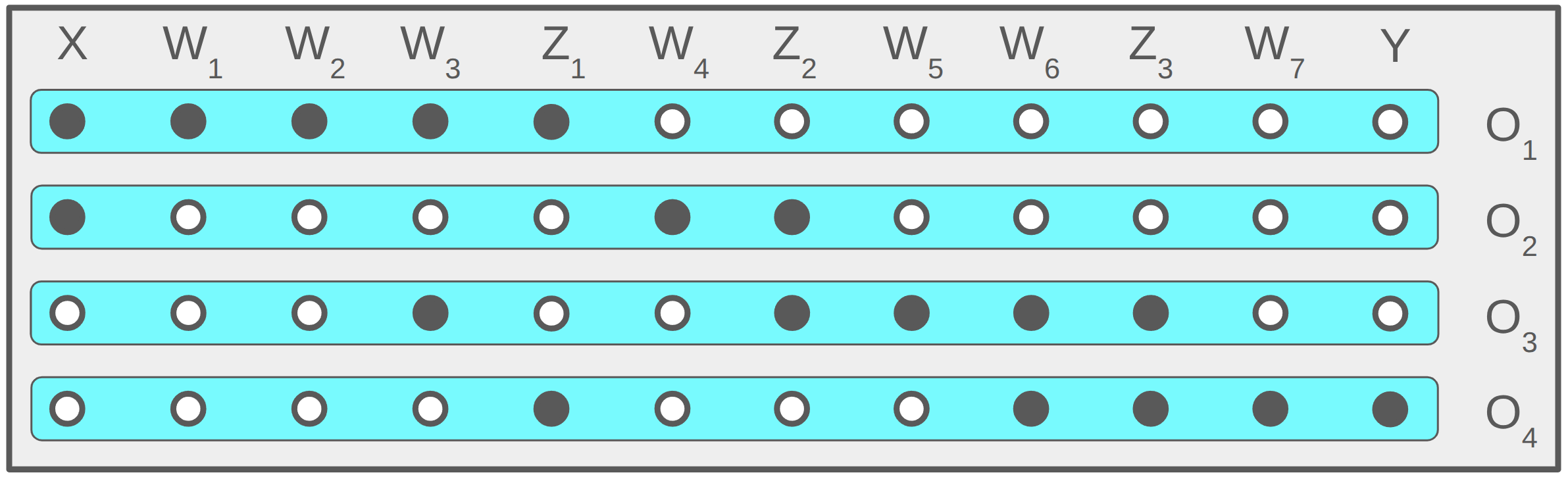}
	\caption{Tensor equation matrix for the sampled block of the {\color{red}$\bm{\star}$} architecture. $\cC_{op} = 4$, $\cC_T = 12$.}
\end{figure}

\begin{figure}[h!]
	\centering
	\includegraphics[scale=0.175]{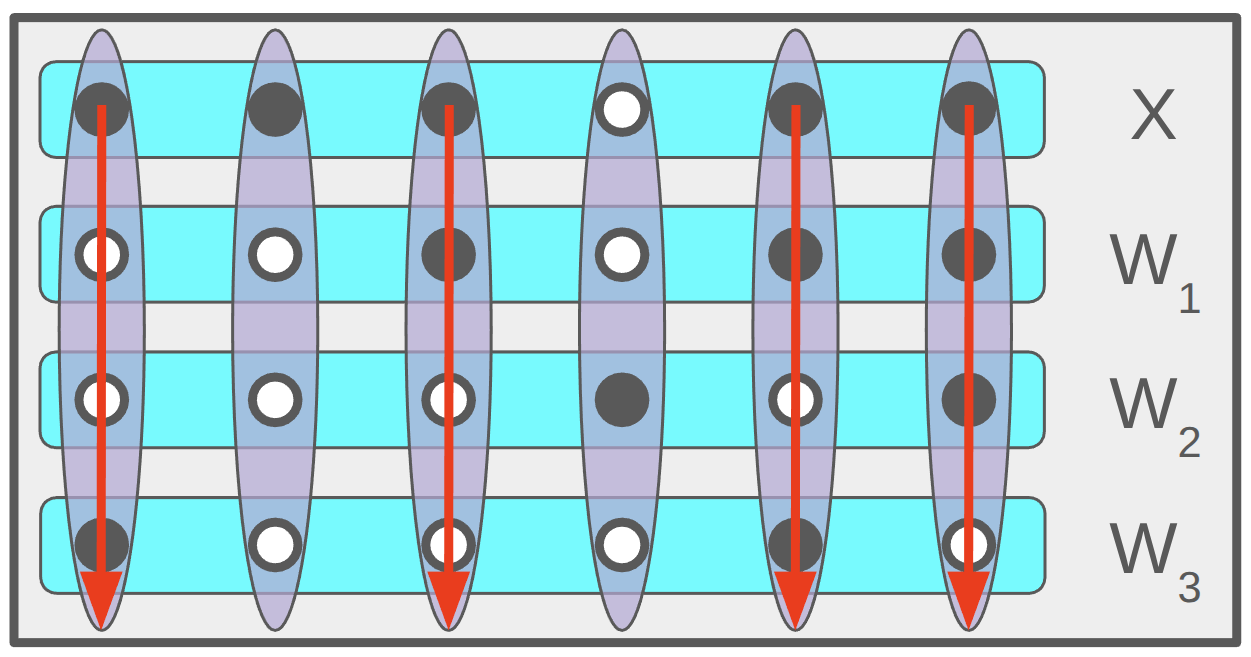}\\ \vspace{0.5em} \includegraphics[scale=0.175]{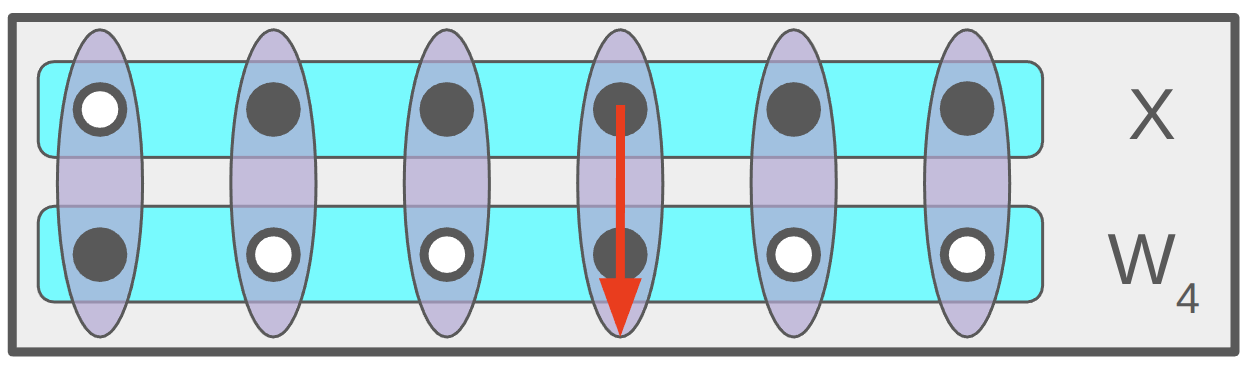}
	\caption{Tensor operation matrices for operations $1$ (top) and $2$ (bottom) used in the {\color{red}$\bm{\star}$} architecture. $\cC_{O} = 6$, $\cC_{\alpha} = 4,2$, $\cC_A = 3,2$.}
\end{figure}

\begin{figure}[h!]
	\centering
	\includegraphics[scale=0.175]{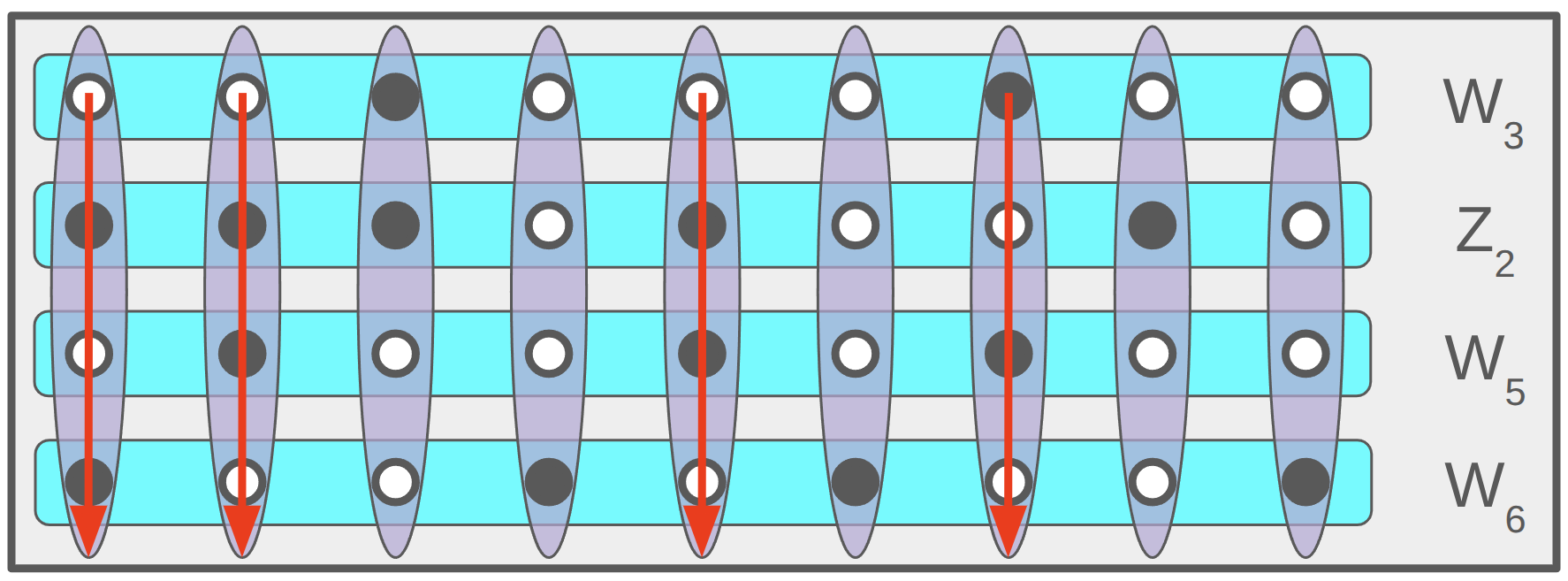}
	\caption{Tensor operation matrix for operation $3$ used in the {\color{red}$\bm{\star}$} architecture. $\cC_{O} = 9$, $\cC_{\alpha} = 4$, $\cC_A = 2$.}
\end{figure}

\begin{figure}[h!]
	\centering
	\includegraphics[scale=0.175]{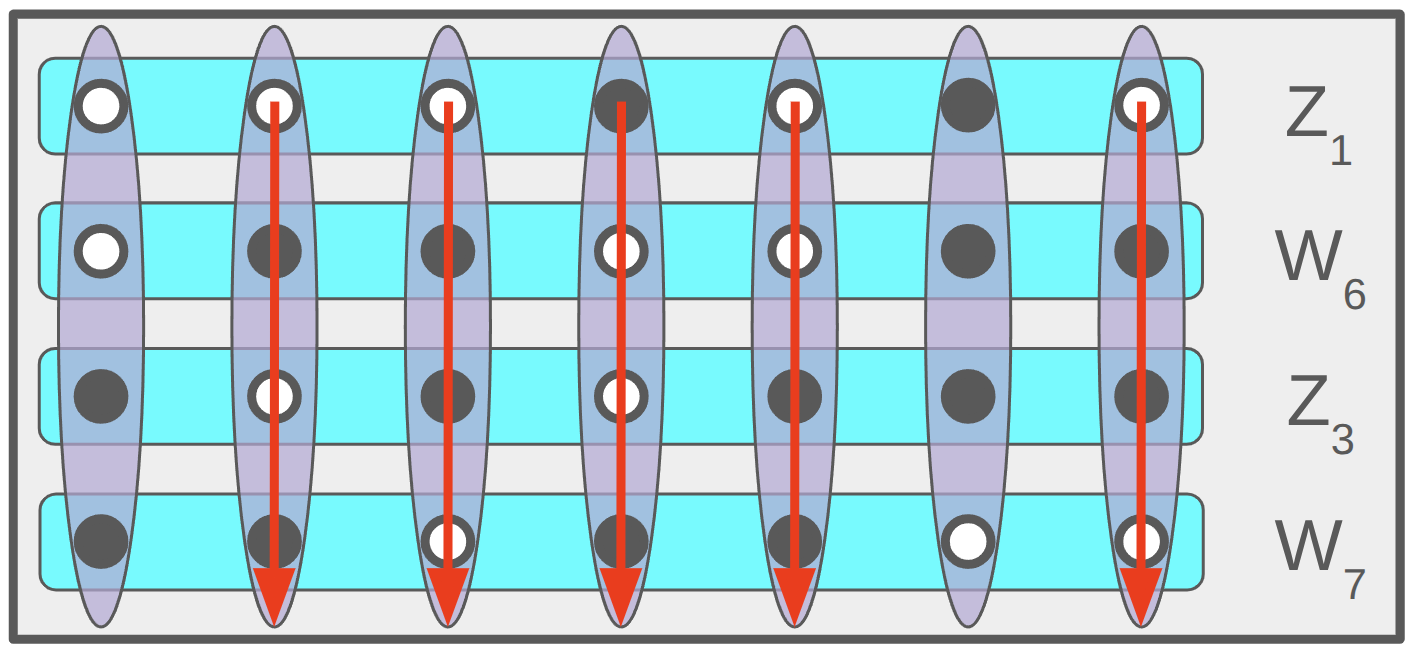}
	\caption{Tensor operation matrix for operation $4$ used in the {\color{red}$\bm{\star}$} architecture. $\cC_{O} = 7$, $\cC_{\alpha} = 4$, $\cC_A = 3$. Same TOM as in \cref{fig:SpicyTOM}.}
\end{figure}

\paragraph{Non Linear Activations.} The red star architecture uses two non linear activations. First, Layer Norm is applied across the $2^{nd}$-mode of the output from the second operation, $O_2$. Second, Layer Norm is applied across both modes of the final output from $O_4$, i.e., across both non-contracted modes of the fourth tensor operation matrix. Notably, no non-linear activations are applied to the output of the high complexity operations $O_1$ and $O_3$. 

\paragraph{Discussion.}
The tensor equation matrix involves a type of ``skip connection''. Observe how the input from stage 1, $X$, is first sent through separate operations $O_1$ and $O_2$. Later, the output from the first operation, $Z_1$, is skipped forward as input to the final operation, $O_4$. This final operation is the same high complexity example discussed in \cref{fig:SpicyTOM}, where in this case $Z_1$ and $Z_3$ are used as the inputs; $W_6$ and $W_7$ are the learned parameter tensors. Notably, ``in between'' this skip connection of $Z_1$, an even higher complexity operation, $O_3$, is performed. Interestingly, this operation takes as input a ``parameter skip connection'', meaning that $W_3$ is reused from $O_1$. This is effectively a particular type of weight tying. A similar reuse of parameters occurs between $O_3$ and $O_4$. The key takeaway is that this architecture simultaneously exhibits high architectural complexity and low parameter requirements.

\subsubsection{\label{sec:RedStarArch_Alg}Algebraic Description}
We conclude this section with the formal system of equations used to define the sampled block of the red star architecture.
\begin{equation*}
	\begin{aligned}
		Z_1[j,l] &= \sum_{i,k,m,n} X[j,k,l,m,n]W_1[k,m,n]W_2[l,n]W_3[i,m]\\
		Z_2'[i,j,k,m,n] &= \sum_{l} X[j,k,l,m,n]W_1[i,l]\\
		Z_2[i,j,k,m,n] &= \gamma_2[j] \bigg[ \frac{Z_2'[i,j,k,m,n] - E[Z_2'][j]}{\sqrt{Var[Z_2'][j] + \varepsilon}} \bigg] + \beta_2[j]\\
		Z_3[k,l,n,p,q] &= \sum_{i,j,m,o} W_3[k,o]Z_2[i,j,k,m,p]W_5[j,m,o]W_6[i,l,n,q]\\
		Z_4'[i,n] &= \sum_{j,k,l,m,o} Z_1[l,n]W_6[j,k,n,o]Z_3[i,k,m,n,o]W_7[i,j,l,m]\\
		Z_4 &= (\gamma_4 \odot Z_4') \oplus \beta_4\\
	\end{aligned}
\end{equation*}
Here $\gamma_2$, $\gamma_4$, $\beta_2$, $\beta_4$ are the learned parameters of the two layer norm activations. As the second layer norm is applied across all modes of the tensor, it reduces to a learned element-wise scale and shift.


\end{document}